\documentclass{article}
\usepackage{times}
\usepackage{natbib}
\setcitestyle{authoryear,round,citesep={;},aysep={,},yysep={;}}

\usepackage{graphicx}
\usepackage{wrapfig}
\usepackage{needspace}
\usepackage{subfiles}
\usepackage{float}
\usepackage{placeins}
\usepackage[utf8]{inputenc}
\usepackage[T1]{fontenc}
\usepackage{xr-hyper}
\usepackage[colorlinks=true,citecolor=blue,linkcolor=blue,urlcolor=blue,hypertexnames=false]{hyperref}
\usepackage{url}
\usepackage{booktabs}
\usepackage{amsfonts}
\usepackage{amssymb}
\usepackage{amsmath}
\usepackage{amsthm}
\usepackage{mathtools}
\usepackage{bm}
\usepackage{nicefrac}
\usepackage{microtype}
\usepackage{xcolor}
\usepackage{cleveref}
\usepackage{enumitem}
\usepackage{array}
\usepackage{tabularx}
\usepackage{longtable}
\usepackage{pdflscape}
\usepackage{fancyhdr}
\usepackage{tcolorbox}
\usepackage{tikz}
\usetikzlibrary{arrows.meta}

\tcbset{
  optimizerblock/.style={
    colback=black!2,
    colframe=black!45,
    boxrule=0.4pt,
    arc=0pt,
    left=6pt,
    right=6pt,
    top=5pt,
    bottom=5pt,
    before skip=0pt,
    after skip=0pt
  }
}

\floatstyle{ruled}
\newfloat{algorithm}{tbp}{loa}
\floatname{algorithm}{Algorithm}
\crefname{algorithm}{algorithm}{algorithms}
\crefname{assumption}{Assumption}{Assumptions}
\Crefname{algorithm}{Algorithm}{Algorithms}
\theoremstyle{definition}
\newtheorem{definition}{Definition}

\theoremstyle{plain}
\newtheorem{theorem}{Theorem}[section]
\newtheorem{lemma}[theorem]{Lemma}

\newtheorem{proposition}[theorem]{Proposition}

\theoremstyle{definition}
\newtheorem{assumption}[theorem]{Assumption}

\theoremstyle{remark}
\newtheorem{remark}[theorem]{Remark}

\newcommand{\E}{\mathbb{E}}

\newcommand{\Var}{\mathrm{Var}}
\newcommand{\Cov}{\mathrm{Cov}}

\newcommand{\diag}{\mathrm{diag}}

\newcommand{\grad}{\nabla}

\newcommand{\w}{\theta}

\newcommand{\m}{m}
\newcommand{\g}{g}
\newcommand{\ghat}{\widehat g}

\newcommand{\Hmat}{H}
\newcommand{\Sig}{\Sigma}

\newcommand{\lr}{\eta}
\newcommand{\mom}{\mu}
\newcommand{\wdc}{\lambda}

\newcommand{\batch}{B}
\newcommand{\kappaBatch}{\kappa}

\newcommand{\gam}{\gamma_{\mathrm{samp}}}
\newcommand{\rhoMom}{\rho}
\newcommand{\tausamp}{\tau}

\newcommand{\dd}{\,\mathrm{d}}

\newcommand{\Dir}{\Psi}

\newcommand{\normal}{\mathcal{N}}
\newcommand{\identity}{\mathcal{I}}

\usepackage{shoreline}
\shorelinestatus{Preprint}
\shorelinerunningtitle{The Best Optimizer Depends on Batch Size}
\tcbset{optimizerblock/.style={shoreline/base,unbreakable,before skip=0pt,after skip=0pt}}
\title{The Best Optimizer Depends on Batch Size}
\author{%
  \textbf{Xingyu Dang\textsuperscript{\dag}}\\
  {\small Princeton University}\\
  {\small\texttt{xingyu.dang@princeton.edu}}
  \and
  \textbf{Kaiyue Wen\textsuperscript{\dag}}\\
  {\small Stanford University}\\
  {\small\texttt{kaiyuew@stanford.edu}}
  \and
  \textbf{Sadhika Malladi}\\
  {\small University of California, San Diego}\\
  {\small\texttt{samalladi@ucsd.edu}}%
}
\date{}

\shorelineboxenvironment{definition}{definition}
\shorelineboxenvironment{theorem}{theorem}
\begin{document}

\maketitle
\begingroup
\renewcommand{\thefootnote}{\fnsymbol{footnote}}
\footnotetext[2]{Equal contribution.}
\endgroup

\begin{abstract}
A plethora of new adaptive optimizers are designed to efficiently estimate and use minibatch gradient statistics to shape parameter updates, but they are typically benchmarked at a single batch size.
Hyperparameter scaling rules promise to preserve performance as batch size and gradient noise change, suggesting that the best optimizer at one batch size should remain the best at another.
We challenge this approach to developing and evaluating optimizers by showing: (1) no principled scaling rule for Muon works consistently across training settings, and (2) the best optimizer for language model pretraining changes with batch size even after extensive hyperparameter tuning.
To understand these failures, we analyze a noisy quadratic model and show that the batch-size scaling rule that preserves performance is direction-dependent, shifting from square-root to linear as a direction's curvature-to-noise ratio rises.
We further prove that gradient noise drives a bias--variance tradeoff in preconditioning that can reverse optimizer rankings across batch sizes, even under optimal learning rate and momentum tuning.
Consistent with this, when we increase the batch size $16\times$ partway through language model pretraining, keeping small-batch updates in only the sharpest $0.001\%$ of directions removes up to $59.5\%$ of the resulting loss gap over the next $8\%$ of training, while a random subspace of the same size barely helps.
Taken together, our results show that principled optimizer design must account for batch size and empirical benchmarking should be performed across batch sizes.
\end{abstract}

\section{Introduction}
\label{sec:introduction}

Larger batches can increase training throughput, but retaining optimization performance is challenging and often requires adjusting hyperparameters~\citep{goyal2017accurate,shallue2019measuring}.
Batch-size scaling rules prescribe these adjustments without requiring additional tuning (\Cref{sec:scaling-setup}).
These rules are derived analytically and intended to apply broadly across training settings~\citep{malladi2022sdes,mlodozeniec2025completed,shulgin2026deriving}, but different derivations yield conflicting prescriptions for the same optimizer (\Cref{tab:scaling-recipes}).
We ask whether any of these prescriptions, or alternatives beyond them, can fulfill this promise:

\vspace{-0.5em}
\begin{center}
\itshape
\textbf{Q1:} Can a hyperparameter scaling rule consistently work in different training settings?
\end{center}
\vspace{-0.5em}

We study this question with Muon~\citep{jordan2024muon} (\Cref{def:muonw-adamw}).
We derive scaling rules by applying prior theoretical frameworks designed to either preserve training dynamics via SDE or minimize convergence bounds (\Cref{def:s3-sde-rule,def:s3-bound-rule}), and compare them with a broad grid of candidate rules in language modeling and image classification (\Cref{sec:scaling-results}).
Neither theoretical prescription consistently gives the best performance (\Cref{fig:muonw_transfer_comparison}) and in language modeling both fall behind the trivial baseline of leaving the hyperparameters unchanged at the largest batch sizes.
Even among the broader set of candidates, the best rule differs between tasks (\Cref{tab:scaling-recipes}) and between target batch sizes within a task (\Cref{fig:muonw_transfer_comparison}).

Beyond how to adjust Muon's hyperparameters, we ask whether batch size changes its performance relative to other optimizers.
Adaptive optimizers differ in how they shape updates using minibatch estimates of quantities such as gradient second moments~\citep{kingma2015adam,gupta2018shampoo,morwani2024shampooperspective,wang2024batchinvariant}.
Batch size controls the noise in these estimates, potentially changing which form of adaptation is most effective~\citep{wang2024batchinvariant,zhang2019algorithmicchoicesmatterbatch}.
Established benchmarks like the Modded-NanoGPT optimization benchmark~\citep{jordan2026moddednanogpt}, AlgoPerf~\citep{dahl2023algoperf} and Fantastic Optimizers~\citep{wen2025fantastic} rank optimizers without testing their performance across batch sizes, leaving open the question:

\vspace{-0.5em}
\begin{center}
\itshape
\textbf{Q2:} Do optimizer rankings persist across batch sizes after extensive retuning?
\end{center}
\vspace{-0.5em}

\begin{figure}[t]
    \centering
    \includegraphics[width=\textwidth]{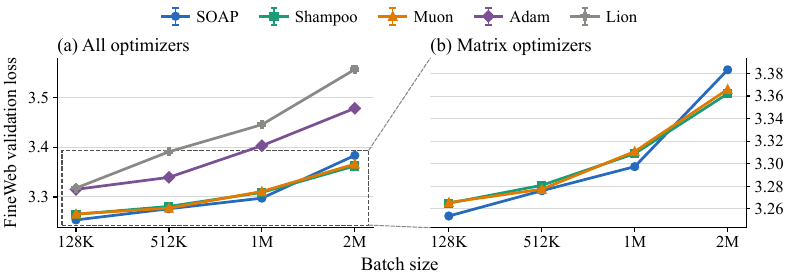}
    \caption{\textbf{The best optimizer changes with batch size.}
    FineWeb validation loss from the Modded-NanoGPT optimization benchmark, pretrained on a fixed budget of $1.7$B tokens with decoupled weight decay; every optimizer is tuned independently at every batch size.
    (a)~Validation loss with respect to batch size for all  five optimizers.
    (b)~The performance of three matrix optimizers on a zoomed loss axis.
    SOAP is the best optimizer from $128\mathrm K$ to $1\mathrm M$ tokens, ahead by up to $0.012$ nats, yet at $2\mathrm M$ it falls to third, $0.021$ nats behind Shampoo.
    Rankings cross between other pairs of optimizers too: Muon beats Shampoo by $0.0035$ nats at $512\mathrm K$, but Shampoo beats Muon by $0.0037$ nats at $2\mathrm M$.
    The HyperBall~\citep{wen2026hyperball} counterpart is \Cref{fig:track3_batch_transfer_hyperball}.}
    \label{fig:track3_batch_transfer}
    \par\vspace{-1.5em}
\end{figure}

We compare Adam~\citep{kingma2015adam}, Lion~\citep{chen2024lion}, Muon~\citep{jordan2024muon}, SOAP~\citep{vyas2024soap}, and Shampoo~\citep{gupta2018shampoo} in the Modded-NanoGPT optimization benchmark setting, independently tuning each at every batch size using coordinate descent over hyperparameters~\citep{wen2025fantastic} (\Cref{sec:batch_size_comparison}).
We observe the \emph{optimizer crossover} phenomenon: optimizer rankings reverse across batch sizes: SOAP performs best at smaller batches, whereas Shampoo performs best at the largest batch size tested.
This crossover occurs under both decoupled weight decay and HyperBall norm control~\citep{wen2026hyperball} (\Cref{fig:track3_batch_transfer,fig:track3_batch_transfer_hyperball}).

Motivated by recent evidence that quadratic models capture the local optimization dynamics of language model pretraining~\citep{meterez2026defensequadraticmodel}, we study a noisy quadratic model (NQM,~\citet{zhang2019algorithmicchoicesmatterbatch}) with direction-dependent curvature and gradient noise to understand why scaling rules do not transfer consistently and why optimizer rankings reverse (\Cref{sec:theory}).
Tuning the learning rate on SignSGD-M in one direction shows that its scaling with batch size shifts from approximately square-root toward linear as the direction's curvature-to-noise ratio increases (\Cref{sec:nqm_signsgd_scaling}).
A single scaling exponent therefore cannot match the individually tuned learning rates across directions.
When we compare SGD and Newton's method, we find that Newton's preconditioning trades faster reduction of initialization bias for greater variance from gradient noise~\citep{zhang2019algorithmicchoicesmatterbatch} (\Cref{sec:nqm_main}).
We prove that this bias--variance tradeoff can favor SGD at small batches and Newton's method at large batches, even with independently optimal learning rate and momentum (\Cref{thm:nqm_crossover_main}).

We test whether this direction-dependent scaling appears in language-model pretraining (\Cref{sec:lm_directional}).
Following \citet{merrill2025criticalbatchsizerevisited}, we branch from intermediate checkpoints of a tuned MuonW run, increasing the batch size from $128\mathrm K$ to $2\mathrm M$ tokens with square-root learning-rate scaling.
Retaining small-batch updates in the sharpest $768$ of roughly $85$M directions of a preconditioned Hessian proxy, while using large-batch updates elsewhere, removes $35$--$59.5\%$ of the loss gap to a small-batch control over the next $8\%$ of training (\Cref{fig:lm_directional}).
A random subspace of the same size changes the gap by at most $3\%$.
Thus, a tiny set of sharp directions carries a disproportionate share of the local penalty from applying one learning-rate exponent to every direction.
Because the quadratic approximation is local, we assess loss shortly after the switch rather than final training loss~\citep{wen2025riverValley}.

Altogether, our results challenge established practices in optimizer evaluation and design.
Comparisons at a single batch size are not sufficient to establish the efficacy of an optimizer, and we recommend performing extensive hyperparameter tuning across several batch sizes.
Optimizer design must likewise go beyond efficiently estimating and using landscape statistics to account for how batch size changes the benefits and costs of adaptation.

\section{Preliminaries}
\label{sec:preliminaries}
\subsection{Scaling Rules}
\label{sec:scaling-setup}
A batch-size scaling rule prescribes how to adjust hyperparameters tuned at a reference batch size $B_0$ for use at $B=\kappa B_0$.
Scaling rules are thought to be specific to a given optimizer but generalize across empirical learning settings.
For a hyperparameter $\gamma$ at $B_0$, let $\gamma^{(\kappa)}$ denote its prescribed value at $\kappa B_0$.
Typically, $\kappa>1$ due to the potential throughput gains from larger batches.
We fix the training budget $\tau$ (tokens for language modeling or examples for image classification), giving $\tau/B$ updates.
The \emph{sample time}, $s=kB$, is the data processed after $k$ updates.
\Cref{tab:scaling-recipes} compares prescriptions from two approaches; \Cref{app:related_work} discusses broader related work.

\textbf{Invariant-preservation View.}
This view adjusts hyperparameters to preserve a training property, such as an approximate parameter trajectory or the amount of data remembered by a moving average.
Different invariants can yield different scaling rules.

An SDE (stochastic differential equation) approximates an optimizer's random updates by continuous-time dynamics.
For SGD on loss $L$, let $\Sigma(x)$ be the individual-gradient covariance at parameters $x$; averaging $B$ independent gradients gives covariance $\Sigma(x)/B$.
Under suitable conditions~\citep{li2018stochastic,li2021validity}, at time $t=k\eta$, an SDE approximation of SGD is
$
\mathrm dX_t=-\nabla L(X_t)\,\mathrm dt+\sqrt{\eta/B}\,\Sigma(X_t)^{1/2}\,\mathrm d\mathcal W_t.$
Here $X_t$ denotes the parameters and $\mathcal W_t$ is Brownian motion.
Preserving $\eta/B$ gives the \emph{linear scaling rule}, $\eta^{(\kappa)}=\kappa\eta$~\citep{goyal2017accurate,smith2018dont,li2021validity}.
Since $t=(\eta/B)s$ is also preserved, the SDE predicts the same parameter distribution and expected loss at equal sample time.
Matching the coupled SDE for parameters and optimizer state (e.g. moment estimates) yields scaling rules for Adam~\citep{malladi2022sdes} and AdamW~\citep{mlodozeniec2025completed}.

However, the invariant need not be the full trajectory or even come from an SDE.
\citet{compagnoni2025adaptive} use an SDE to preserve an upper bound on long-time expected loss and the relative speeds of the parameter and moment states.
\citet{marek2025smallbatch} keep Adam's second-moment memory fixed in sample time, while \citet{bergsma2025powerlines} keep AdamW's weight-decay averaging timescale fixed relative to the training budget.

\textbf{Bound-minimization View.}
A second approach starts from a convergence bound on a specified quantity, such as suboptimality or stationarity, expressed in terms of the optimizer's hyperparameters, batch size, and number of updates \citep{kovalev2025understanding,pethick2025training}.
Minimizing this bound over the scaled hyperparameters at each batch size, with sample-time budget fixed, gives a scaling rule.
Varying the sample budgets or jointly optimizing the hyperparameters can further yield different rules \citep{shulgin2026deriving}.

\begin{table}[t]
  \centering
  \begingroup
  \defcitealias{bergsma2025powerlines}{[BDG$^{+}$25]}
  \defcitealias{shulgin2026deriving}{[SRZ$^{+}$26]}
  \defcitealias{islamov2026role}{[IML$^{+}$26]}
  \defcitealias{bu2026convexdominance}{[BXM26]}
  \defcitealias{malladi2022sdes}{[MLP$^{+}$22]}
  \defcitealias{marek2025smallbatch}{[MLS$^{+}$25]}
  \defcitealias{compagnoni2025adaptive}{[MLI$^{+}$25]}
  \defcitealias{schaipp2025surprising}{[SHT$^{+}$25]}
  \defcitealias{granziol2022learningrates}{[GZR22]}
  \defcitealias{mlodozeniec2025completed}{[MAB$^{+}$26]}
  \parbox{\textwidth}{\scriptsize
    Primes denote values at $B'=\kappa B$. Here $\eta$ is the learning rate, $\mu,\beta_1,\beta_2$ are momentum and moment retention coefficients, and $\lambda$ is the weight-decay coefficient (\Cref{def:muonw-adamw}); $p$ is a free learning-rate exponent in the Power Lines product constraint.}\par\smallskip
  \scriptsize
  \setlength{\tabcolsep}{2pt}
  \renewcommand{\arraystretch}{1.12}
  \begin{minipage}[t]{0.39\textwidth}
    \vspace{0pt}
    \centering
    \begin{tabular*}{\linewidth}{@{\extracolsep{\fill}}lccc@{}}
      \toprule
      \textbf{MuonW} & $\eta'$ & $\mu'$ & $\lambda'$ \\
    \midrule
    \multicolumn{4}{@{}l}{\textit{Invariant preservation}} \\
    \citetalias{islamov2026role} & $\kappa\eta$ & $\mu^\kappa$ & $\lambda$ \\
    \citetalias{bergsma2025powerlines}\textsuperscript{*} & $\kappa^p\eta$ & $\mu$ & $\kappa^{1-p}\lambda$ \\
    Ours & $\kappa\eta$ & $\mu^\kappa$ & $\lambda$ \\
    \midrule
    \multicolumn{4}{@{}l}{\textit{Bound minimization}} \\
    \citetalias{shulgin2026deriving} & $\sqrt\kappa\,\eta$ & $\mu$ & -- \\
    \citetalias{bu2026convexdominance} & $\sqrt\kappa\,\eta$ & -- & -- \\
    Ours & $\sqrt\kappa\,\eta$ & $\mu$ & $\lambda$ \\
    \midrule
    \multicolumn{4}{@{}l}{\textit{Empirical}} \\
    \citetalias{islamov2026role} & $\eta$ & $\mu$ & $\lambda$ \\
    LLM & $\eta$ & $\mu$ & $\kappa\lambda$ \\
    CIFAR & $\sqrt\kappa\,\eta$ & $\mu$ & $\lambda$ \\
      \bottomrule
    \end{tabular*}
  \end{minipage}\hfill
  \begin{minipage}[t]{0.59\textwidth}
    \vspace{0pt}
    \centering
    \begin{tabular*}{\linewidth}{@{\extracolsep{\fill}}lcccc@{}}
      \toprule
      \textbf{AdamW} & $\eta'$ & $\beta_1'$ & $\beta_2'$ & $\lambda'$ \\
    \midrule
    \multicolumn{5}{@{}l}{\textit{Invariant preservation}} \\
    \citetalias{malladi2022sdes} & $\sqrt\kappa\,\eta$ & $\beta_1^\kappa$ & $\beta_2^\kappa$ & -- \\
    \citetalias{mlodozeniec2025completed} & $\sqrt\kappa\,\eta$ & $\beta_1^\kappa$ & $\beta_2^\kappa$ & $\sqrt\kappa\,\lambda$ \\
    \citetalias{marek2025smallbatch} & -- & $\beta_1$ & $\beta_2^\kappa$ & -- \\
    \citetalias{bergsma2025powerlines} & $\kappa^p\eta$ & $\beta_1$ & $\beta_2$ & $\kappa^{1-p}\lambda$ \\
    \citetalias{compagnoni2025adaptive} & $\sqrt\kappa\,\eta$ & $\beta_1^{\sqrt\kappa}$ & $\beta_2^{\sqrt\kappa}$ & $\sqrt\kappa\,\lambda$ \\
    \midrule
    \multicolumn{5}{@{}l}{\textit{Bound minimization}} \\
    \citetalias{bu2026convexdominance} & $\sqrt\kappa\,\eta$ & -- & -- & -- \\
    \citetalias{schaipp2025surprising} & $\sqrt\kappa\,\eta$ & -- & -- & -- \\
    \midrule
    \multicolumn{5}{@{}l}{\textit{Stability}} \\
    \citetalias{granziol2022learningrates} & $\sqrt\kappa\,\eta$ & -- & -- & -- \\
    \midrule
    \multicolumn{5}{@{}l}{\textit{Empirical}} \\
    LLM & $\eta$ & $\beta_1$ & $\beta_2^\kappa$ & $\lambda$ \\
    CIFAR & $\sqrt\kappa\,\eta$ & $\beta_1$ & $\beta_2$ & $\lambda$ \\
      \bottomrule
    \end{tabular*}
  \end{minipage}
  \endgroup
  \caption{\textbf{Batch-size scaling prescriptions at $B'=\kappa B$.}
    Entries give target values; ``--'' means unspecified.
    We include two scaling rules derived from \citet{islamov2026role} here.
    One is derived by keeping the convergence bound constant across batch sizes.
    The other follows from their empirical experiments.
    We extend Power Lines~\citep{bergsma2025powerlines} to Muon assuming fixed momentum. The original work favors $p=0$ for AdamW, which we also found to be effective for MuonW in our empirical sweep.
    Fixed Adam moments in Power Lines and fixed $\beta_1$ in \citet{marek2025smallbatch} are empirical choices.   \citet{bu2026convexdominance,schaipp2025surprising} concern SGD on convex optimization, with empirical extensions to the optimizers shown.
    For a moment retention coefficient $\beta$ near $1$ and fixed exponent $a$, we use $\beta^a\approx1-a(1-\beta)$ to display moment transfers as powers; Appendix~\ref{sec:scaling_rule_survey} gives the details and original formulas.
    The LLM and CIFAR rows are from results in \Cref{sec:scaling-results}.}
  \label{tab:scaling-recipes}
  \par\vspace{-1em}
\end{table}

\subsection{MuonW-AdamW Optimizer}
\label{sec:hybrid-optimizer}
Following standard practice, we use Muon with decoupled weight decay (MuonW) for matrix parameters and AdamW for embeddings, the output projection, and one-dimensional parameters.
Both apply weight decay separately from the gradient update.

\begin{definition}[MuonW--AdamW]
\label{def:muonw-adamw}
Let $W_t,\theta_t$ be MuonW and AdamW parameters with minibatch gradients $G_t,g_t$, respectively.
For $j\in\{{\rm M},{\rm A}\}$, $\eta_j$ and $\lambda_j$ denote learning rates and weight-decay coefficients; $\mu,\beta_{1,\rm A},\beta_{2,\rm A}$ are momentum and moment retention coefficients. The Nesterov mixing weight $\omega$ multiplies the updated Muon momentum before orthogonalization; this implementation sets $\omega=\mu$.
Starting from $M_0=m_0=v_0=0$, for $t\geq1$,
\[
\begin{aligned} M_t &= \mu M_{t-1}+(1-\mu)G_t, & \widetilde M_t &= \omega M_t+(1-\omega)G_t,\\ m_t &= \beta_{1,\rm A}m_{t-1}+(1-\beta_{1,\rm A})g_t, & v_t &= \beta_{2,\rm A}v_{t-1}+(1-\beta_{2,\rm A})g_t^{\odot2},\\ \widehat m_t &= m_t/(1-\beta_{1,\rm A}^{t}), & \widehat v_t &= v_t/(1-\beta_{2,\rm A}^{t}). \end{aligned}
\]
\[
\begin{aligned} W_{t+1} &= (1-\eta_{\rm M}\lambda_{\rm M})W_t-\eta_{\rm M}\operatorname{Orth}(\widetilde M_t),\\ \theta_{t+1} &= (1-\eta_{\rm A}\lambda_{\rm A})\theta_t-\eta_{\rm A}\widehat m_t/(\sqrt{\widehat v_t}+\epsilon). \end{aligned}
\]
Here $\operatorname{Orth}$ is the Newton--Schulz map with fixed matrix-shape rescaling (\Cref{alg:llm_muon}), $\epsilon>0$ is a fixed stabilizer, and powers, roots, and division in AdamW are elementwise.
\end{definition}

\section{Deriving and Testing Batch-Size Scaling Rules}
\label{sec:scaling-rules}
\label{sec:s2}
\label{sec:fixed_optimizer_scaling_rules}
We apply the two approaches in Section~\ref{sec:scaling-setup} to derive prescriptions for Muon's learning rate, momentum, and weight decay.
We test these theoretically motivated prescriptions within a search of over $200$ candidate scaling rules for MuonW--AdamW, combining plausible choices for how its learning rates, weight decay, and momentum and moment-estimation coefficients change with batch size.
Surprisingly, despite the setting-agnostic promise of both derivation styles, the best scaling rule depends on the precise training setting.
Neither theoretical prescription identifies a rule that is best in both settings.
Appendix~\ref{app:scaling-rules} gives the derivations, assumptions, and detailed comparisons.

\subsection{Two Scaling Rules for Muon}
\label{sec:scaling-approaches}
\label{sec:s2-views}
\label{sec:s2-muon}
We apply both frameworks discussed in \Cref{sec:preliminaries} to derive scaling rules for the MuonW-AdamW hybrid optimizer.
We summarize our candidate scaling rules alongside ones proposed by prior works in~\Cref{tab:scaling-recipes}.

\textbf{Candidate 1: Matching a Muon SDE.}
\label{sec:s2-diffusion-sketch}
Relative to the SGD example, our MuonW update carries momentum, orthogonalizes its update direction, and applies decoupled weight decay.
We therefore model the matrix parameter $W_s$ and momentum $M_s$ together at sample time $s=kB$.
Write $\bar\eta=\eta_{\rm M}/B$ for the learning rate per sample and $\rho=-\log\mu/B$ for the momentum decay per sample.
The mean input to orthogonalization is $q_s=\omega M_s+(1-\omega)\nabla L(W_s)$.
For this calculation, $\omega$ is held fixed as $\mu$ changes; at $B_0$, it equals $\mu$.
Using the individual-gradient covariance $\Sigma$ from Section~\ref{sec:scaling-setup}, a first-order expansion of orthogonalization, with derivative $D\operatorname{Orth}$, gives
\begin{align}
\mathrm dM_s &=\rho\bigl(\nabla L(W_s)-M_s\bigr)\,\mathrm ds+\rho\Sigma(W_s)^{1/2}\,\mathrm d\mathcal W_s,\label{eq:s2-muon-sde-m}\\
\mathrm dW_s &=-\bar\eta\bigl(\operatorname{Orth}(q_s)+\lambda_{\rm M}W_s\bigr)\,\mathrm ds
-\bar\eta(1-\omega)D\operatorname{Orth}(q_s)\Sigma(W_s)^{1/2}\,\mathrm d\mathcal W_s.\label{eq:s2-muon-sde-w}
\end{align}
At a fixed parameter--momentum state, the gradient, its covariance, and the orthogonalization map do not change with $B$.
With $\omega$ fixed, we match the displayed SDEs by keeping $\rho$, $\bar\eta$, and the weight-decay coefficient $\bar\eta\lambda_{\rm M}$ fixed.
At $B=\kappa B_0$, fixed $\rho=-\log\mu/B$ gives $\mu^{(\kappa)}=\mu^\kappa$, and fixed $\bar\eta=\eta_{\rm M}/B$ gives $\eta_{\rm M}^{(\kappa)}=\kappa\eta_{\rm M}$.
Because $\bar\eta$ is fixed, $\bar\eta\lambda_{\rm M}$ stays fixed when $\lambda_{\rm M}$ does too.
\begin{definition}[SDE scaling candidate for Muon]
\label{def:s3-sde-rule}
When changing batch size from $B_0$ to $\kappa B_0$, use
\begin{equation}
\eta_{\rm M}^{(\kappa)}=\kappa\eta_{\rm M},\qquad \mu^{(\kappa)}=\mu^\kappa,\qquad \lambda_{\rm M}^{(\kappa)}=\lambda_{\rm M}.
\label{eq:s2-diffusion-rule-main}
\end{equation}
\end{definition}
Our MuonW implementation instead ties $\omega$ to $\mu$ at every batch size, so this rule is a candidate for that optimizer rather than an exact invariance guarantee.
Appendix~\ref{app:s2-diffusion} proves the coefficient-matching result and gives a finite-horizon approximation bound under explicit noise and regularity assumptions.

\textbf{Candidate 2: Minimizing the Muon convergence bound.}
\label{sec:s2-bound-sketch}
For this candidate, we compare the minimizers of a bound on Muon's Frank--Wolfe gap, a measure of stationarity.
With exact orthogonalization and decoupled weight decay, Muon selects the matrix in a spectral-norm ball most aligned with the negative momentum direction, then moves toward it by an effective step size $\gamma_{\rm FW}=\eta_{\rm M}\lambda_{\rm M}$ \citep{sfyraki2025lions}.
We hold momentum and weight decay fixed.
Let $\mathcal C$ denote this spectral-norm ball and $\mathcal G_{\mathcal C}$ its Frank--Wolfe gap, measuring the largest first-order decrease within the ball (formally defined in Appendix~\ref{app:s2-fw}).
For $W_a$ sampled uniformly from the $\tau/B$ iterates, we prove that there exist constants $c_1,c_2>0$ independent of $B$ and $\gamma_{\rm FW}$, and a remainder $r(B)$ independent of $\gamma_{\rm FW}$, such that
$
\mathbb E\mathcal G_{\mathcal C}(W_a)\leq \frac{c_1 B}{\tau\gamma_{\rm FW}}+c_2\gamma_{\rm FW}+r(B)
$.

The first term spreads the initial objective gap over only $\tau/B$ updates, so it falls as $\gamma_{\rm FW}$ grows; the second is a smoothness cost that rises with the step size.
Ignoring other constraints, their balance gives $\gamma_{{\rm FW},\star}(B)=\sqrt{c_1 B/(\tau c_2)}$, so the bound-minimizing effective step size grows by $\sqrt{\kappa}$ when $B$ grows by $\kappa$.
With weight decay fixed, the same ratio applies to the learning rate.
\begin{definition}[Bound-based scaling candidate for Muon]
\label{def:s3-bound-rule}
When changing batch size from $B_0$ to $\kappa B_0$, use
\begin{equation}
\eta_{\rm M}^{(\kappa)}=\sqrt{\kappa}\,\eta_{\rm M},\qquad \mu^{(\kappa)}=\mu,\qquad \lambda_{\rm M}^{(\kappa)}=\lambda_{\rm M}.
\label{eq:s2-bound-rule-main}
\end{equation}
\end{definition}
The constrained minimizer follows square-root scaling until the effective step-size cap at $1$ becomes active.
Appendix~\ref{app:s2-fw} gives the full bound, its constraints, and the distinction between exact orthogonalization and the finite Newton--Schulz implementation.

\textbf{Relationship to batch--sequence--token (BST) scaling.}
\citet{islamov2026role} study batch-size scaling by deriving a convergence bound for stochastic conditional gradient methods that accounts for momentum and batch size and perform experiments in a language modeling setting similar to us.
Their analysis assumes a Kurdyka--\L{}ojasiewicz (KL) condition relating the dual gradient norm to objective suboptimality and yields a different bound from our \Cref{thm:sfw_gap_bound}.
Concretely, at a fixed sample budget, we extract one rule by preserving their convergence bound, which coincides with the SDE scaling rule, and another rule from their empirical results, which coincides with no scaling (\Cref{tab:scaling-recipes}).
Both underperform the Power Lines prescription across target batches in our language-model sweep described below.
Appendix~\ref{sec:scaling_rule_survey} gives the derivations and detailed comparison. A notable difference is that we use AdamW for non-matrix parameters, whereas \citet{islamov2026role} use optimizers based on steepest descent, which we will discuss further in the next section.

\begin{figure}[t]
    \centering
    \includegraphics[width=\textwidth]{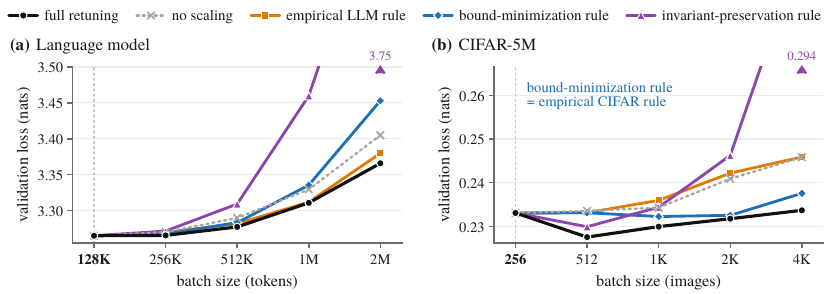}
    \caption{\textbf{No single scaling rule transfers well in both settings.}
    We transfer a tuned configuration from a reference batch to larger batches using the rules in \Cref{tab:scaling-recipes}:
    (a)~MuonW--AdamW language-model pretraining on $1.7$B tokens, with a reference batch of $128\mathrm K$ tokens;
    (b)~a ResNet with GroupNorm on CIFAR-5M, with a reference batch of $256$ images.
    The theoretical rules follow \Cref{def:s3-bound-rule,def:s3-sde-rule} for the matrix parameters, with AdamW counterparts chosen following prior work on SDE and bound minimization.
    \emph{Full retuning} retunes every hyperparameter at each batch by coordinate descent.
    In language modeling, the empirical LLM rule is consistent with Power Lines~\citep{bergsma2025powerlines} and stays closest to full retuning, while both theoretical rules fall behind no scaling from $1\mathrm M$ tokens on.
    On CIFAR-5M, the bound-minimization rule instead stays within $0.006$ nats of full retuning, while the empirical LLM rule does no better than no scaling.
    CIFAR-5M points are medians over three seeds.
    \Cref{app:scaling_experiments} gives full results.}
    \label{fig:muonw_transfer_comparison}
    \par\vspace{-1em}
\end{figure}

\subsection{Sweeping Scaling Rules}
\label{sec:scaling-results}
\label{sec:s2-empirical}
We test whether a single scaling rule transfers well across training settings and target batch sizes. For MuonW--AdamW, we evaluate all combinations of fixed, square-root, or linear scaling for the two learning-rate groups and for each weight-decay coordinate, together with fixed or decay-preserving scaling for Muon momentum and Adam's two moment coefficients. This gives $3^3\times2^3=216$ complete rules in language modeling, where auxiliary weight decay is held fixed\footnote{\Cref{app:scaling_experiments} shows that performance is insensitive to how auxiliary weight decay is scaled.}, and $3^4\times2^3=648$ on CIFAR-5M~\citep{nakkiran2021deepbootstrap}, where the matrix and auxiliary weight decays are scaled separately.

\textbf{Setting.}
We transfer a tuned reference configuration to four larger batches in language modeling and six target batches in one-pass CIFAR-5M training with a GroupNorm ResNet. Here we choose CIFAR-5M to focus on optimization effect instead of generalization effect of batch size.
The reference batches are $128\mathrm K$ tokens and $256$ examples, respectively. Within each setting, all rules share the model, data budget, and schedule shape. We select one rule across target batches by its mean batch-size regret, defined as the equally weighted mean of its validation-loss gap to the best tested rule at each batch. Appendix~\ref{app:s2-protocol} gives the full protocol and scaling conventions.

\textbf{The best rule depends on the setting.}
The selected language-model rule keeps both learning rates fixed and scales matrix weight decay linearly with batch size. The selected CIFAR-5M rule instead scales both learning rates with the square root of batch size and keeps both weight decays fixed. Thus, even for the same optimizer, the two settings favor different scaling prescriptions.
The empirically best CIFAR rule coincides with the \textbf{bound-minimization} rule and the empirically best language modeling rule at 2M is consistent with the \textbf{Power Lines} rule~\citep{bergsma2025powerlines}. One possible explanation is that the run on CIFAR-5M is much closer to convergence compared to the language model experiments, which stay far from convergence~\citep{wen2025riverValley}.
Within language modeling, the best scaling rule also changes across target batch sizes within each setting. The best rules use square-root scaling for both learning rates at $256\mathrm K$ and $512\mathrm K$ tokens, but fixed learning rates and linear matrix weight-decay scaling at $1\mathrm M$ and $2\mathrm M$ (\Cref{fig:muonw_transfer_comparison,app:scaling_experiments}).

\label{sec:scaling-discussion}
\textbf{Keeping momentum coefficients fixed is generally preferable in our sweeps.}
The best common rules in both settings keep Adam's first-moment coefficient $\beta_1$ and Muon's momentum coefficient $\mu$ fixed as batch size changes (\Cref{app:scaling_experiments}).
This suggests keeping these coefficients fixed as a default when transferring hyperparameters across batch sizes.
The observation is consistent with prior theory showing that momentum offers limited benefit over SGD in the small-batch regime when small learning rates and gradient noise govern the dynamics~\citep{wang2024marginalmomentum}, and with noisy-quadratic analyses predicting that momentum's gains emerge primarily at larger batch sizes~\citep{zhang2019algorithmicchoicesmatterbatch}.
One possible explanation is that momentum becomes useful at large batches by increasing the maximum learning rate that training can tolerate.
River-valley theory supports this interpretation by showing that heavy-ball momentum stabilizes larger learning rates in sharp directions, allowing faster progress along a flat, low-loss valley~\citep{lu2026momentumrivervalley}.
This mechanism motivates our interpretation of the Adam and Muon results, although our scaling sweeps do not directly measure their stability thresholds.

\textbf{Relationship to batch--sequence--token (BST) scaling.}
\citet{islamov2026role} study batch-size scaling by deriving a convergence bound for stochastic conditional gradient methods that accounts for momentum and batch size and perform empirical experiments in a language modelling setting similar to us.
Their analysis assumes a Kurdyka--\L{}ojasiewicz (KL) condition relating the dual gradient norm to objective suboptimality and yields a different bound from our \Cref{thm:sfw_gap_bound}.
Regardless, in our empirical comparisons of these scaling rules among hundreds of candidates, we find that the best-performing scaling rule differs fundamentally from either prescription. Concretely, at a fixed sample budget, we extract one rule by preserving their convergence bound, which coincides with the SDE scaling rule, and another from their empirical experiments, which coincides with no scaling (\Cref{tab:scaling-recipes}).
Both underperform the Power Lines prescription across target batches in our language-model sweep.
Appendix~\ref{sec:scaling_rule_survey} gives the derivations and detailed comparison. A notable difference is that we use AdamW for non-matrix parameters, whereas \citet{islamov2026role} use optimizers based on steepest descent, which we will discuss further in the next section.

\section{Batch Size Changes the Best Optimizer}
\label{sec:batch_size_comparison}
\label{sec:llm_experiments}

In this section, we test how batch size affects the ranking of optimizers in the NanoGPT benchmark.
Because the scaling rules are unreliable, we perform extensive hyperparameter tuning for each optimizer at each batch size.
We find frequent \emph{optimizer crossovers}, where the best optimizer at one batch size underperforms at another one.
Our results provide substantive evidence that optimizer benchmarking should be performed over a range of batch sizes.

\textbf{Setting.}
We follow the setting of the Modded-NanoGPT
optimization benchmark~\citep{jordan2026moddednanogpt} and vary the batch size $B\in\{128\mathrm K,512\mathrm K,1\mathrm M,2\mathrm M\}$
tokens while holding the training budget to be roughly
$1.7$B tokens.
We use coordinate descent to tune the hyperparameters extensively~\citep{wen2025fantastic} and report the details in \Cref{app:llm_experiment_details}.
As HyperBall~\citep{wen2026hyperball} performs competitively on the Modded NanoGPT benchmark, we compare two weight-norm control mechanisms applied to the hidden matrices.
With learning rate $\eta$ and weight decay $\lambda$, \textbf{weight decay (W)} shrinks a matrix parameter $W_t$ by $1-\eta\lambda$ at each step.
\textbf{HyperBall (H)}~\citep{wen2026hyperball} instead steps along the normalized update and renormalizes the result, leaving each matrix's Frobenius norm unchanged and thereby constraining optimization to a fixed-norm hypersphere.

\textbf{Experiment Results.} After independent retuning, SOAP leads at the smallest batch, while Shampoo leads at the largest. We observe that crossovers occur at different batch sizes and across different pairs of optimizers. \Cref{fig:track3_batch_transfer_hyperball} shows the same change in winner under HyperBall. We also observe the crossover from SOAP to Shampoo in nanochat setting where we perform hyperparameter transfer following the empirical LLM scaling (\Cref{app:nanochat_optimizer_ce}). These results show that comparing optimizers at a single batch size provides a limited understanding of their efficacy.

\begin{figure}[H]
    \vspace{-1em}
    \centering
    \includegraphics[width=\textwidth]{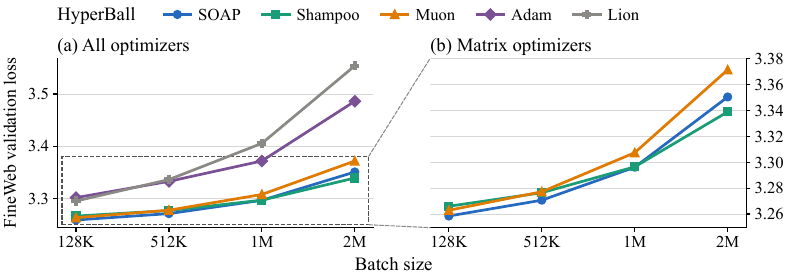}
    \caption{\textbf{The lowest-loss optimizer changes with batch size under HyperBall.}
    Language-model pretraining with an approximately \(1.7\)B-token budget
    and independent tuning of every optimizer at every batch size.
    Left: all five optimizers. Right: Muon, KL-SOAP (labeled SOAP), and
    Shampoo on a zoomed loss axis.
    KL-SOAP has the lowest reported loss from \(128\mathrm K\) through
    \(1\mathrm M\), with a gap of only about \(0.0003\) to Shampoo at
    \(1\mathrm M\); Shampoo has the lowest loss at \(2\mathrm M\).}
    \label{fig:track3_batch_transfer_hyperball}
\end{figure}

 \paragraph{Comparison with~\citet{pethick2025training}.} In this section and the preceding one, we follow common practice by using Adam for embeddings and other non-matrix parameters when benchmarking matrix optimizers. Experiments under similar setup in Figure 2 of \citet{pethick2025training} show that norm-constrained steepest descent can be preferable for these parameters for large batch size training when using matrix optimizers such as Muon or Scion. We leave a broader investigation of how this choice interacts with matrix optimizers to future work.

\suppressfloats[t]
\section{Curvature and Noise Shape Batch-Size Scaling}
\label{sec:theory}
\begin{figure}[t]
    \centering
    \setlength{\abovecaptionskip}{2pt}
    \includegraphics[width=\linewidth]{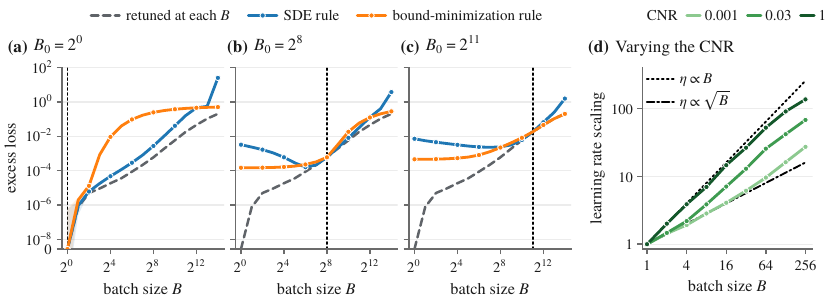}
    \caption{\textbf{Which scaling rule works depends on the reference batch size and on the curvature-to-noise ratio (CNR).}
    (a)--(c)~SignSGD with momentum on a three-direction NQM with a fixed sample budget.
    Gray: hyperparameters retuned at each batch size. Blue and orange: hyperparameters
    tuned at $B_0$ (dotted line), then scaled to each batch size using the SDE and bound-minimization
    rules, respectively. Excess loss is terminal loss minus the retuned loss at $B=1$.
    Shaded bands show one standard error across eight evaluation groups.
    (d)~Learning rates of SignSGD with momentum tuned for terminal loss on a one-dimensional NQM,
    each relative to its own value at $B=1$; we control the CNR by choosing $c$ and $h$.
    Dotted and dash-dotted lines mark linear and square-root scaling.
    See \Cref{app:nqm_oracle_transfer_protocol,app:signsgd_cnr_protocol} for details.}
    \label{fig:nqm_scaling_rules}
    \par\vspace{-1em}
\end{figure}

We study the noisy quadratic model (NQM) to isolate how curvature and gradient noise shape batch-size effects.
Prior work used this model to analyze the benefits of momentum and preconditioning across batch sizes~\citep{zhang2019algorithmicchoicesmatterbatch}, and recent work supports quadratic approximations of local language-model training dynamics~\citep{meterez2026defensequadraticmodel}.

\begin{definition}[$(c,h)$-noisy quadratic model]
\label{def:nqm}
For $\w\in\mathbb R^d$, let $h_i>0$ and $c_i\geq0$ denote the curvature and single-example gradient-noise variance in coordinate $i$.
The $(c,h)$-NQM objective and unbiased minibatch gradient at batch size $B\in\mathbb N_{>0}$ are
\begin{equation}
 L(\w)=\frac12\sum_{i=1}^d h_i\w_i^2,
 \qquad \widehat g_{B,i}(\w)=h_i\w_i+\sqrt{\frac{c_i}{B}}\,z_i,
 \qquad z_i\overset{\mathrm{iid}}{\sim}\mathcal N(0,1).
 \label{eq:overview_nqm_setting}
\end{equation}
Each update uses fresh Gaussian draws, independent of initialization and all previous draws.
\end{definition}

In the NQM setting, we note that the Muon optimizer approximately reduces to SignSGD with momentum, because orthogonalization on a $1\times 1$ block is exactly the coordinate-wise sign operation.
As such, we primarily analyze SignSGD with momentum in this section, and we ignore weight decay here for simplicity of analysis.

We first show how properties of the optimization problem can change the best scaling rule, supporting our prior finding that different training settings favor different prescriptions (\Cref{fig:muonw_transfer_comparison} in \Cref{sec:scaling-rules}).
Specifically, increasing the \emph{curvature-to-noise ratio} (CNR), defined below, shifts the best scaling rule for SignSGD with momentum from approximately square-root toward linear.

\begin{definition}[Curvature-to-noise ratio (CNR)]
\label{def:nqm-cnr}
For a $(c,h)$-NQM, the curvature-to-noise ratio in coordinate $i$ is squared curvature divided by single-example gradient-noise variance:
\begin{equation}
    \mathrm{CNR}_i := h_i^2/c_i, \qquad \text{with } \mathrm{CNR}_i := +\infty \text{ if } c_i=0.
    \label{eq:nqm_cnr}
\end{equation}
\end{definition}

We then address Section~\ref{sec:batch_size_comparison}'s finding that optimizer rankings reverse with batch size despite independent retuning.
We prove that SGD and Newton's method can exhibit such a reversal on a two-dimensional NQM, because each direction exhibits a different CNR and thus a different bias--variance tradeoff in the preconditioner.

\subsection{Scaling Rules Depend on the Training Setting}
\label{sec:nqm_directional_scaling}
\label{sec:nqm_signsgd_scaling}

\textbf{The efficacy of a scaling rule depends on the reference and target batch sizes.}
We study SignSGD with momentum on a three-dimensional noisy quadratic under a fixed sample budget.
At each reference batch size, we tune the learning rate and momentum, then apply scaling rules derived under either the SDE approximation or the convex convergence analysis (\Cref{def:s3-sde-rule,def:s3-bound-rule}).
We compare the resulting terminal loss with that obtained by independently retuning at the target batch size; \Cref{app:nqm_oracle_transfer_protocol} gives further empirical details.
\Cref{fig:nqm_scaling_rules}(a)--(c) shows that the better scaling rule depends on both the reference and target batch sizes.

\textbf{The curvature-to-noise ratio affects the correct scaling.}
On a one-dimensional noisy quadratic, we vary the noise variance while holding curvature, initialization, sample budget, and momentum fixed.
We independently tune the learning rate at each batch size, without weight decay.
\Cref{fig:nqm_scaling_rules}(d) shows that increasing CNR shifts the tuned learning rates from approximately square-root toward linear scaling.
Experimental details are given in \Cref{app:signsgd_cnr_protocol}.

To understand this behavior, recall that SignSGD takes a step of magnitude $\eta$ in the direction determined by the sign of the momentum average.
With momentum retention $\mu\in[0,1)$, the stationary momentum average at a fixed parameter value $w$ is Gaussian.
Its expected sign, $\E[\mathrm{sign}(m)]=\mathrm{erf}\!\left(hw\sqrt{\frac{1+\mu}{1-\mu}\cdot\frac{B}{2c}}\right)$, determines the expected update.
When noise dominates, this term grows as $\sqrt{B}$ and preserving displacement per processed sample motivates square-root scaling. When signal dominates, its magnitude saturates at $1$ and linear scaling compensates for fewer updates per sample.
\Cref{app:signsgd_cnr_mechanism} gives the derivation based on this \emph{movement ansatz}, that is, choosing learning rates that preserve expected displacement per processed sample at a fixed parameter value.

\Needspace{14\baselineskip}
\subsection{Batch Size Can Reverse Optimizer Rankings After Retuning}
\label{sec:nqm_main}

\begin{wrapfigure}{r}{0.42\textwidth}
    \vspace{-1.2em}
    \centering
    \includegraphics[width=\linewidth]{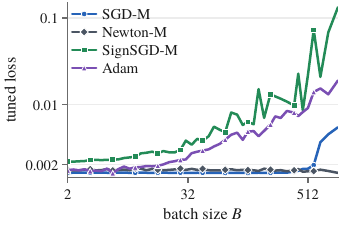}
    \vspace{-1.6em}
    \caption{\textbf{Optimizer crossover on a noisy quadratic.}
    Tuned loss on a three-direction NQM, with each optimizer retuned at every batch size; ``-M'' denotes momentum.}
    \label{fig:nqm_optimizer_crossover}
    \vspace{-1em}
\end{wrapfigure}

The preceding section shows that directions with different CNRs can require different scaling exponents to match their individually tuned learning rates.
We now show that these directional differences can also produce optimizer crossovers even after optimal hyperparameter tuning (\Cref{sec:batch_size_comparison}).
We compare SGD with Newton's method on a two-dimensional noisy quadratic, where the two directions have different CNRs because one is flat and noisy while the other is sharp and noiseless.

First consider the updates without momentum.
Both optimizers take the form
\begin{equation}
    w_{t+1,i}=(1-a_i h_i)w_{t,i}-a_i\sqrt{c_i/B}\,z_{t,i},
    \label{eq:nqm_crossover_update}
\end{equation}
with step size $a_i=\eta$ for SGD and $a_i=\eta/h_i$ for Newton's method.
At the same learning rate, SGD reduces initialization error more slowly in flatter directions, whereas Newton's method reduces it at the same rate in every direction while amplifying noise in the flat direction by $1/h_i$.
Batch size changes both the noise per update and the number of updates available, changing the balance between these effects.
The following theorem establishes that this bias-variance tradeoff can reverse optimizer rankings even after optimal tuning.

\begin{theorem}
\label{thm:nqm_crossover_main}
For every sufficiently large sample budget $T$, there exists a two-dimensional NQM on which optimally tuned SGD outperforms optimally tuned Newton at $B=1$, but the ranking reverses at $B=T$.
This holds both without momentum and with optimally tuned momentum.
\end{theorem}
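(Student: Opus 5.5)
We construct the NQM explicitly, compute each optimizer's tuned loss in closed form, and compare the two regimes $B=1$ and $B=T$. Take $d=2$ with a flat, noisy direction ($h_1=\epsilon$, $c_1=c>0$) and a sharp, noiseless direction ($h_2=1$, $c_2=0$), with initialization $w_{0}=(w_{0,1},w_{0,2})$. The parameters $\epsilon,c,w_0$ may depend on $T$. With sample budget $T$ and batch size $B$ there are $K=T/B$ updates. Iterating \eqref{eq:nqm_crossover_update} gives the exact expected terminal loss
\[
\E L(w_K)=\tfrac12\sum_i h_i\Big[(1-a_ih_i)^{2K}w_{0,i}^2+\frac{a_i^2c_i}{B}\cdot\frac{1-(1-a_ih_i)^{2K}}{1-(1-a_ih_i)^2}\Big].
\]
For SGD, $a_i=\eta$ is shared across directions. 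For Newton, $a_i=\eta/h_i$, so the contraction factor $1-\eta$ is common to both directions and the flat direction's noise is amplified by $1/\epsilon^2$.

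\textbf{Large batch $B=T$ (one step).} Newton with $\eta=1$ solves the problem exactly up to noise. Its loss is $\tfrac12\epsilon\cdot c/(\epsilon^2 T)=c/(2\epsilon T)$; optimizing over $\eta$ only improves this. SGD takes a single step $\eta$, which cannot zero both coordinates. Its loss is at least $\min_\eta \tfrac12[\epsilon(1-\eta\epsilon)^2w_{0,1}^2+(1-\eta)^2w_{0,2}^2]$. This quantity is of order $\epsilon w_{0,1}^2$ whenever $\epsilon\ll1$ and the sharp direction forces $\eta\approx 1$. We therefore choose $w_{0,1}^2\gg c/(\epsilon^2T)$, which makes Newton strictly better at $B=T$.

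\textbf{Small batch $B=1$ ($K=T$ steps).} SGD can use $\eta$ close to the stability limit of the sharp direction, say $\eta\in(0,1]$. The sharp noiseless coordinate then decays like $(1-\eta)^{2T}$, while the flat coordinate sees small step $\eta\epsilon$ and only noise variance $\eta^2c$. Newton must use a single contraction rate $1-\eta$. A small $\eta$ keeps the amplified flat-direction variance $\approx \eta c/(2\epsilon)$ small but leaves the sharp bias $(1-\eta)^{2T}w_{0,2}^2$. A large $\eta$ makes the flat variance of order $c/\epsilon$. We lower-bound Newton's tuned loss by $\min_\eta\{\tfrac12(1-\eta)^{2T}w_{0,2}^2+\tfrac{\eta c}{4\epsilon}\}$, which is of order $\frac{c}{\epsilon T}\log(\epsilon T w_{0,2}^2/c)$. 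We upper-bound SGD's loss by plugging in a specific $\eta$, for example $\eta$ tuned to the flat direction's bias--variance balance, about $\eta\approx\log(\cdot)/(\epsilon T)$. This gives flat-direction loss of order $\frac{c}{T}\cdot\frac{\log}{\epsilon}\cdot\epsilon$ rather than $/\epsilon$, since SGD's noise enters with factor $h_1\eta^2c/(2\eta h_1)=\eta c/2$ rather than $\eta c/(2\epsilon)$. The sharp bias $(1-\eta)^{2T}w_{0,2}^2$ must then be small. This requires $\eta T\gg \log w_{0,2}^2$, which is compatible with the previous choice of $\eta$ when $\epsilon$ is small.

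\textbf{Parameter selection (main obstacle).} The main obstacle is choosing a scaling of $(\epsilon,c,w_{0,1},w_{0,2})$ in $T$, for example $\epsilon=T^{-\alpha}$ with constants chosen appropriately, so that three requirements hold at once. The flat bias $w_{0,1}$ must be large enough to sink one-step SGD. It must still be removable by $T$ small SGD steps, which requires $\eta\epsilon T\gtrsim\log(w_{0,1}^2 T)$ while $\eta\le 1$. Finally, Newton's $1/\epsilon$ noise amplification must dominate at $B=1$. I would fix the exponents first and verify each inequality up to constants, using $(1-x)^{2K}\le e^{-2Kx}$ and $(1-x)^{2K}\ge e^{-4Kx}$ for $x\le1/2$. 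For SGD I would handle $\eta$ near $2/h_2$ separately, where the sharp coordinate oscillates; this is where "sufficiently large $T$" enters.

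\textbf{Momentum.} With heavy-ball momentum, each coordinate is a stable linear two-state recursion, so its stationary variance and bias decay have closed forms from the $2\times2$ companion matrix. For Newton at $B=1$ I would show a momentum-independent lower bound: the flat coordinate's effective step $\eta/((1-\mu)\epsilon)$ still multiplies the noise by $1/\epsilon$. Balancing this against the time needed to contract the sharp direction gives the same order as before. The SGD upper bound holds trivially because $\mu=0$ is allowed. At $B=T$ with one step, momentum is inert since $M_0=0$, so the large-batch comparison is unchanged.
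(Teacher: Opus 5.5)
Your architecture matches the paper's: a flat noisy and a sharp noiseless direction, closed-form risks, a one-step analysis at $B=T$ where momentum is inert, and $\mu=0$ as a feasible SGD configuration. The small-batch step, however, fails as written.

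You claim SGD's flat-direction loss is of order $cL/T$, a factor $\epsilon$ below Newton's. You get this by comparing the noise factors $\eta c/2$ and $\eta c/(2\epsilon)$ at the \emph{same} learning rate, which is not the tuned comparison. At your own choice $\eta\approx L/(\epsilon T)$, SGD's flat variance is $\approx\eta c/4=cL/(4\epsilon T)$, which is exactly Newton's order.

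More precisely, parametrize both methods by the flat contraction $x$, with $x=\eta\epsilon$ for SGD and $x=\eta$ for Newton. At $B=1$ both then have the identical flat loss $\frac12\epsilon(1-x)^{2T}w_{0,1}^2+\frac{c}{2\epsilon}\frac{x}{2-x}\bigl(1-(1-x)^{2T}\bigr)$. Newton's $1/\epsilon$ noise amplification is cancelled by the $1/\epsilon$ larger step SGD needs. The only asymmetry is that SGD contracts the sharp coordinate at rate $x/\epsilon$ and Newton at rate $x$. So SGD's advantage is exactly the sharp-bias cost Newton incurs; with $w_{0,2}=0$ the two tuned risks coincide at every batch size.

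In your regime, Newton is bounded below by roughly $\frac{c}{8\epsilon T}\bigl(1+\log(4\epsilon Tw_{0,2}^2/c)\bigr)$, and SGD above by roughly $\frac{c}{8\epsilon T}\bigl(1+\log(4\epsilon^2Tw_{0,1}^2/c)\bigr)$. What you must arrange is therefore $w_{0,2}^2\gg\epsilon w_{0,1}^2$ with an explicit margin, jointly with $\epsilon^2Tw_{0,1}^2/c>1$ for the reversal at $B=T$. Your requirement that ``Newton's $1/\epsilon$ noise amplification must dominate'' is not the operative inequality, and you never fix the parameters.

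The momentum case rests on the same heuristic. ``The effective step still multiplies the noise by $1/\epsilon$'' again compares at equal step. With momentum, the link between contraction speed and noise gain is precisely what must be controlled: heavy ball changes the constants of Newton's bias--variance frontier, and critical damping roughly halves the variance at a given decay rate. You need a lower bound that holds uniformly over all $(\eta,\mu)$, including underdamped companion matrices, and the sketch provides none.

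The missing ingredient is the paper's two-line bound. For any $(\eta,\mu)$, Newton's noise-response coefficients $b_j$ are shared by both coordinates and satisfy $\sum_{j<T}b_j=1-p_T$, where $p_T$ is the common bias factor. Cauchy--Schwarz then gives $\mathcal R\ge\frac{1+h}{2}p_T^2+\frac{c}{2hT}(1-p_T)^2$.

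The paper then compares constants with $hT\to1$, taking $h=1/(T+1)$, $c=1/(4T)$, and $w_0\sim\mathcal N(0,I_2)$:
\begin{itemize}
\item the Cauchy--Schwarz floor for Newton at $B=1$ is $\frac1{8T}+o(T^{-1})$;
\item SGD with $\eta=1,\mu=0$ kills the sharp coordinate in one step and attains $\frac{1+7e^{-2}}{16T}+o(T^{-1})$, just below that floor;
\item at $B=T$, Newton with $\eta=1$ attains $\frac1{8T}+o(T^{-1})$, while SGD is at least $\frac1{2T}+o(T^{-1})$.
\end{itemize}
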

\Cref{app:nqm_crossover_proof} provides a complete proof of the theorem as well as empirical evidence of the bias-variance tradeoff.
Beyond this construction, a three-dimensional noisy quadratic exhibits the same crossover with every optimizer independently tuned at each batch size (\Cref{fig:nqm_optimizer_crossover}), and \Cref{app:matrix_results} shows crossovers on three matrix regression tasks.

\suppressfloats[t]
\begin{figure}[t]
    \centering
    \includegraphics[width=\textwidth]{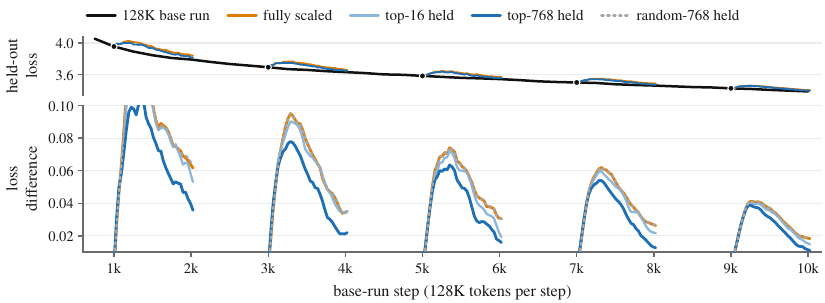}
    \caption{\textbf{Holding the sharpest $0.001\%$ of directions at $128\mathrm K$ removes up to $59.5\%$ of the $2\mathrm M$ branch penalty.}
    Top: held-out loss of the $128\mathrm K$ base run (black) and of the fully scaled and top-$768$ held $2\mathrm M$ branches started at steps $1{,}000$, $3{,}000$, $5{,}000$, $7{,}000$, and $9{,}000$.
    Bottom: held-out loss minus the control branch's loss at each evaluation for the fully scaled, top-$16$ held, top-$768$ held, and random-$768$ held (dotted) branches, evaluated every $32$ base steps on $1$M held-out tokens and smoothed over three evaluations.}
    \label{fig:lm_directional}
    \par\vspace{-1em}
\end{figure}

\section{Sharp Directions Carry Most of the Local Batch-Size Penalty in Pretraining}
\label{sec:lm_directional}

Based on our analysis in~\Cref{sec:theory}, we conjecture that the complex interaction between batch size and optimizers is an artifact of the heterogeneity of the CNR across directions. We validate this in a real language model in this Section. We show that when a scaling rule transfers the model of \Cref{sec:llm_experiments} to a larger batch, the resulting loss increase is concentrated in the sharp directions.

\textbf{Directional branching.}
Following the critical-batch-size measurement in \citet{merrill2025criticalbatchsizerevisited}, we branch from checkpoints of a base run.
The base run is the tuned MuonW run in \Cref{sec:llm_experiments} at $B=128\mathrm K$ tokens per step, and we call its checkpoints at every $1{,}000$ steps \emph{anchors}.
From each anchor, a branch trains for $1{,}024$ further steps ($134$M tokens) on the same data as the base run.
Let $w_t\in\mathbb{R}^N$ denote the $N\approx85$M hidden-matrix parameters at step $t$, flattened into one vector.
Each branch runs two copies of Muon, both initialized from the anchor's optimizer state and fed the same $B$-token gradients.
The \emph{small-batch} copy computes an update direction $u_t^{(B)}$ from each gradient.
The \emph{large-batch} copy computes an update direction $u_t^{(\kappa B)}$ from the mean of the last $\kappa=16$ gradients once every $\kappa$ steps, which emulates training at $\kappa B=2\mathrm M$ tokens; we set $u_t^{(\kappa B)}=0$ at the other steps.
Let $\eta_t$ and $\lambda_t$ be the base run's matrix learning rate and weight-decay coefficient.
Following the bound-minimization rule (\Cref{def:s3-bound-rule}), the large-batch copy uses the learning rate $\eta_t^{(\kappa B)}$, equal to $\sqrt{\kappa}$ times the mean of $\eta_t$ over those $\kappa$ steps.
Given a subspace of $\mathbb{R}^N$ with orthogonal projector $Q$, called the \emph{held subspace}, the branch updates
\[
  w_{t+1}=(1-\eta_t\lambda_t)\,w_t-\eta_t\,Q\,u_t^{(B)}-\eta_t^{(\kappa B)}\,(I-Q)\,u_t^{(\kappa B)} .
\]
The branch thus takes small-batch updates inside the held subspace and large-batch updates outside it.
The \emph{control branch} ($Q=I$) continues the base run at $128\mathrm K$, and the \emph{fully scaled branch} ($Q=0$) moves every direction to $2\mathrm M$.
The remaining parameters keep the base run's AdamW updates.

\textbf{Choosing the held subspace.}
We hold the $r$ sharpest directions at the anchor, for $r\in\{16,768\}$.
To measure sharpness, we precondition the Gauss--Newton approximation $G$ of the loss Hessian by the Jacobian $P$ of Muon's orthogonalization map at the anchor.
We then take the top-$r$ eigenvectors of $PG$ as the sharpest directions after preconditioning induced by Muon and let $Q$ be the orthogonal projector onto their span (see \Cref{app:lm_directional_protocol} for details).
We prove that a local linearization of Muon's dynamics around the anchor is a noisy quadratic model with the same curvature spectrum (\Cref{app:lm_directional_local_nqm}), so these are the highest-curvature directions of that model.
We ignore the anisotropy in gradient noise in this experiment and leave that for future work.
As a baseline, we also hold a random $768$-dimensional subspace.
The held subspace stays fixed throughout each branch.

\textbf{Result.}
We score each branch by its \emph{local branch penalty}: its validation loss at the end of the branch minus that of the control branch from the same anchor.
\Cref{app:lm_directional_protocol} details the construction and tabulates the numbers behind \Cref{fig:lm_directional}.
Keeping the base batch in only the sharpest $0.001\%$ of directions removes $35$--$59.5\%$ of the $2\mathrm M$ branch penalty for branches started between steps $1{,}000$ and $9{,}000$ (\Cref{tab:lm_directional_numbers}), indicating that the bulk of the branch penalty
was driven by sharp directions.
The recovery share grows with held-rank $r$; the top $16$ directions alone remove up to $27\%$ (at step $2{,}000$ they remove nothing, within run-to-run noise).
A random $768$-dimensional subspace, by contrast, changes the penalty by at most $3\%$ at every starting step. The effect disappears once the learning rate decays at the end of training: for branches started at steps $11{,}000$ and $12{,}000$, no held subspace significantly changes the penalty. This is potentially due to the fact that we ignore gradient noise in the direction selection.
Consistent with \Cref{sec:theory}, the results indicate that the bound-minimization rule induces different batch-size penalties across directions. Our results suggest that future work on batch-size scaling should account for variation in CNR across directions and explore how to use it.

\section{Acknowledgements}
Sadhika Malladi completed this work partially while doing a postdoc at Microsoft Research in NYC. We thank Akshay Krishnamurthy, Kaifeng Lyu, and Will Merrill for helpful discussions. We gratefully acknowledge Princeton Language and Intelligence (PLI) and Microsoft Research for providing computational resources.

\appendix
\clearpage
\section*{Appendix}
\addcontentsline{toc}{section}{Appendix}

\textbf{Appendix roadmap.}
\begin{itemize}[leftmargin=1.5em,itemsep=2pt,topsep=2pt]
  \item \Cref{app:related_work} discusses related work on batch-size scaling rules, the critical batch size, optimizer comparisons, batch-size-invariant optimizers, noisy quadratic models, and preconditioning under gradient noise.
  \item \Cref{app:scaling-rules} fixes the optimizer notation and scaling coordinates, states the invariant-preservation and bound-minimization views of a scaling rule, and proves the results used in \Cref{sec:scaling-rules}.
  \item \Cref{app:nqm_details} interleaves the noisy quadratic analysis and experiments: finite-budget risks and optimizer comparisons, finite-budget SignSGD batch-size transfer, and learning-rate scaling for momentum-SignSGD and RMSProp.
  \item \Cref{app:matrix_results} defines the matrix model and reports the additional results obtained with it.
  \item \Cref{app:cifar_experiments} gives the shared scaling-rule design, both empirical sweeps, the auxiliary weight-decay experiment, the comparison across settings, and the complete loss tables.
  \item \Cref{app:optimizer_crossover_experiments} collects the supporting language-model optimizer-crossover results, optimizer definitions, the tuning protocol, final hyperparameters, and complete one-coordinate sweep tables.
  \item \Cref{app:lm_directional_protocol} gives the directional-branching protocol and numerical results, and establishes a local noisy-quadratic correspondence for Muon with a finite-horizon approximation bound.
  \item \Cref{app:nanochat_optimizer_ce} measures optimizer compute efficiency along a batch-size scaling ladder from \(256\mathrm K\) to \(2\mathrm M\) tokens per update, using optimizer-specific loss--compute curves and a shared MuonW reference.
\end{itemize}

\section{Related Work}
\label{app:related_work}

\textbf{Batch-size scaling rules.}
A scaling rule prescribes how hyperparameters tuned at one batch size should change at another.
\citet{goyal2017accurate} popularized the linear learning-rate rule for SGD with an informal argument, which was later justified as preserving an SDE approximation of SGD \citep{smith2018bayesian,jastrzebski2017three}.
\citet{malladi2022sdes} use SDE approximations to derive a square-root learning-rate rule for RMSprop and Adam that also rescales the moment coefficients and $\epsilon$.
\citet{li2021validity} give a testable necessary condition for the SDE approximation of SGD, and hence its justification of the linear rule, to hold.
Later work preserves narrower invariants: an asymptotic expected-loss bound together with the relative speeds of AdamW's moments and iterates \citep{compagnoni2025adaptive}, the token half-life of Adam's second moment \citep{marek2025smallbatch}, or AdamW's weight-decay averaging timescale at fixed model size and training-data budget \citep{bergsma2025powerlines}; the last two are identified empirically.
A second line takes convergence bounds for LMO-based methods such as Muon \citep{kovalev2025understanding,pethick2025training} and minimizes them over the learning rate and momentum separately at each batch size \citep{shulgin2026deriving}.
\citet{sato2025muoncriticalbatch} instead minimize the sample complexity implied by a Muon convergence bound over the batch size and obtain a lower bound on the critical batch size that depends on momentum and weight decay.
Their experiments set learning rates at other batch sizes by square-root scaling without testing whether this transfer is optimal.
In a blog post, \citet{cesista2025sdcbs} extend this critical-batch-size analysis to steepest descent under arbitrary norms with Nesterov momentum and decoupled weight decay.
They also give a heuristic argument for square-root learning-rate scaling at a fixed number of steps, assuming the map from momentum to update direction is locally Lipschitz.
Their language-model experiments, run at a fixed token budget rather than a fixed number of steps, find that the optimal learning rate grows roughly as the square root of batch size for AdamW and Muon.
Related transfer results rescale hyperparameters with width through a reparameterization \citep{yang2021tensorprogramsv}, compare width-scaling prescriptions under several parameterizations and optimizers \citep{everett2024scalingexponents}, and rescale AdamW's learning rate, weight decay, $\epsilon$, and moment coefficients with batch size and training duration \citep{mlodozeniec2025completed}.
\citet{everett2024scalingexponents} compare a grid of width-scaling prescriptions at a fixed batch size.
We derive batch-size scaling prescriptions from both SDE approximations and convergence bounds for the same optimizer, include them in grids of $216$ and $648$ complete rules, and compare the grids across two training settings.
No tested rule is best in both settings, and the theoretically derived prescriptions disagree with each other.

\textbf{Critical batch size.}
The critical batch size marks where increasing the batch size stops reducing the number of updates to a target loss proportionally: below it, doubling the batch roughly halves the number of updates, and above it, returns diminish.
\citet{mccandlish2018empirical} predict it from a scalar gradient noise scale that weights gradient variance and the mean gradient by curvature; their simpler proxy is the total gradient variance divided by the squared gradient norm.
\citet{shallue2019measuring} measure steps to a target across batch sizes and find that the end of the perfect-scaling region depends on the model, the optimizer, and to a lesser extent the dataset.
\citet{zhang2019algorithmicchoicesmatterbatch} reproduce the optimizer dependence in experiments and in a noisy quadratic model: momentum and preconditioning speed up noise-free convergence but amplify the noise floor, so they extend perfect scaling to larger batches.
Recent work fits how the critical batch size grows with training-data size in language-model pretraining, finding little dependence on model size \citep{zhang2024criticalbatchsize,bergsma2025powerlines}.
For Adam-style updates, the optimal learning rate can first rise and then fall as batch size grows \citep{li2024surge}.
\citet{merrill2025criticalbatchsizerevisited} measure it along a run by branching from checkpoints at larger batch sizes and warm the batch size up accordingly; \Cref{sec:lm_directional} reuses that branch construction direction by direction.
The critical-batch-size estimates in these works summarize training by a single scalar, even when the underlying model resolves individual directions.
In our noisy quadratic analysis, directions with different curvature-to-noise ratios can favor different batch-size scaling rules.
A single prescription can then preserve behavior in some directions while changing it in others, a distinction that a total-loss measurement alone does not resolve.

\textbf{Comparing optimizers.}
Optimizer benchmarks establish rankings under a fixed protocol.
\citet{choi2019empirical} show that a reported ranking can reverse when the hyperparameter search space changes, and \citet{schmidt2021crowdedvalley} find that no optimizer dominates across tasks, even at the largest of their tuning budgets.
AlgoPerf \citep{dahl2023algoperf} standardizes the tuning budget and workload set but leaves the batch size to each submission.
The Modded-NanoGPT optimization benchmark \citep{jordan2026moddednanogpt} fixes the architecture, data, and batch size and ranks optimizers by the number of steps to a target loss.
For language-model pretraining, \citet{zhao2025deconstructing} report that Adam, Adafactor, Lion, and Signum perform comparably once tuned while SGD lags, and \citet{wen2025fantastic} find that measured speedups over AdamW shrink with model size and depend on the tuning protocol.
\citet{qiu2025matrixpreconditioned} report that Muon, SOAP, and Shampoo each keep a roughly $1.4\times$ gain in compute efficiency over AdamW from $190$M to $1.4$B parameters when the learning rate follows $\mu$P and weight decay scales inversely with width, at a fixed batch size.
Their comparison measures forward and backward FLOPs needed to reach the same loss and excludes optimizer-update costs.
Several studies vary batch size directly.
Tuned SGD approaches Adam at small batch sizes \citep{marek2025smallbatch,sreckovic2025batchsize}, and for $124$M-parameter models, sign-based methods, MARS, and Prodigy catch up with AdamW as the batch grows from $16$K to $262$K tokens \citep{semenov2025benchmarkingoptimizers}.
Muon needs fewer tokens than AdamW to reach a target loss, including beyond the critical batch size \citep{shah2025practicalefficiency}, and SOAP's advantage over both AdamW and Shampoo grows from $256$K to $2$M tokens per batch \citep{vyas2024soap}.
\citet{khona2026soapmuonbeyond} report gains over AdamW for Muon and SOAP variants in large-scale language-model pretraining at global batch sizes up to $100$M tokens.
They address SOAP's large-batch instability by updating its preconditioner eigenbasis every step using the current gradient.
Their optimizer comparisons transfer learning rates by matching the root mean square of parameter updates across optimizers.
These studies show that optimizer comparisons depend on the tuning protocol, model scale, and batch size.
We hold the model, data distribution, and token budget fixed and retune five optimizers independently at every batch size.
The best optimizer changes: KL-SOAP \citep{lin2026klshampoo}, a variant of SOAP, leads at $128$K tokens per batch and Shampoo \citep{gupta2018shampoo} leads at $2$M, under both weight decay and HyperBall.
This large-batch shift toward Shampoo runs opposite to the SOAP--Shampoo comparison of \citet{vyas2024soap}, which uses SOAP rather than KL-SOAP and a different model.

\citet{zhang2026beyond} observe shifts from SGD toward Adam as batch size increases at a fixed training-sample budget across vision, language, genomics, and graph tasks.
Their experiments show that data and architecture affect the batch size at which the preferred optimizer changes.
They model these crossovers by comparing nonconvex convergence bounds for normalized SGD and SignSGD, both with momentum, used as proxies for clipped SGD and Adam, respectively.
Our experiments examine crossovers among adaptive optimizers, including matrix preconditioners.
Our noisy quadratic analysis proves a reversal in expected loss between SGD and Newton's method under independently optimal learning rate and momentum.

\textbf{Batch-size-invariant optimizer design.}
Another response to batch size is to modify the optimizer.
\citet{wang2024batchinvariant} propose Batch Size Invariant Adam, which modifies the second-moment update to average squared fixed-microbatch gradients instead of squaring the averaged minibatch gradient.
In their small-update limit, linear scaling of the learning rate and of the moment refresh rates $1-\beta_1$ and $1-\beta_2$ yields the same first-order updates of the parameters and both moment estimates after processing the same microbatches.
\citet{johnson2020adascale} instead adapt the SGD learning rate online from estimates of the gradient variance and norm.
\citet{hilton2022batchsizeinvariance} make PPO-style policy optimization batch-size invariant by decoupling the proximal policy, which limits the update size, from the behavior policy used for importance sampling.
These approaches modify the training algorithm.
Our scaling analysis keeps the MuonW--AdamW hybrid fixed and rescales its hyperparameters across batch sizes, so Batch Size Invariant Adam is excluded from \Cref{tab:scaling-recipes}.

\textbf{Noisy quadratic models.}
The noisy quadratic model represents training as a separable quadratic with a curvature and a gradient-noise variance per direction.
\citet{schaul2013nomorepesky} use it to derive a greedy per-coordinate optimal learning rate: the inverse curvature scaled by the fraction of the expected squared gradient that comes from its mean.
\citet{wu2018shorthorizon} use it to show that when the problem is both noisy and ill-conditioned, greedily optimal learning-rate schedules decay too quickly and are suboptimal over longer horizons.
\citet{zhang2019algorithmicchoicesmatterbatch} use it to predict how momentum, preconditioning, and iterate averaging change the perfect-scaling region, comparing optimizers across batch sizes by the number of steps to a target loss.
For high-dimensional least-squares regression, \citet{lee2022trajectoryminibatch} identify a batch fraction above which minibatch heavy-ball momentum attains the full-batch accelerated rate and larger batches no longer help.
\citet{meterez2026defensequadraticmodel} argue that quadratic models capture local optimization dynamics during language-model training.
We fix the total sample budget, so batch size trades gradient noise against the number of remaining updates.
We prove that this tradeoff reverses the ordering of two optimizers under independently optimal learning rate and momentum (\Cref{thm:nqm_crossover_main}).

\textbf{Preconditioning and gradient noise.}
Adaptive optimizers such as Adam and SOAP build their preconditioners from second-moment statistics of minibatch gradients.
Adam maintains an exponential moving average of elementwise squared gradients, while SOAP also averages $GG^\top$ and $G^\top G$ for matrix-valued gradients $G$.
At a fixed parameter value, averaging $B$ independent per-example gradient vectors with mean $\mu$ and covariance $\Sigma$ gives a minibatch gradient $g_B$ satisfying $\mathbb{E}[g_Bg_B^\top]=\mu\mu^\top+\Sigma/B$.
These second moments therefore contain both the mean-gradient contribution and a noise contribution that decreases with batch size; averaging the statistics over time does not remove that dependence.
\citet{morwani2024shampooperspective} study Shampoo's approximation to the Adagrad and Gauss--Newton matrices.
When Shampoo's factors are built from real-label minibatch gradients, they find that its approximation to the Gauss--Newton matrix degrades as the batch size used to estimate the matrix increases.
These statistics need not measure curvature: \citet{kunstner2019limitations} show that the empirical Fisher does not generally approximate the Fisher, and discuss gradient-noise adaptation as an alternative interpretation of empirical-Fisher preconditioning.
Preconditioning can also hurt generalization.
\citet{wadia2021whitening} show that whitening and certain unregularized, exact second-order methods can discard information useful for generalization in models with a fully connected, isotropically initialized first layer.
\citet{amari2021preconditioning} decompose the generalization error of preconditioned gradient descent in overparameterized ridgeless regression into bias and variance.
Natural gradient minimizes the variance term within their class of preconditioners, while the bias comparison depends on how the signal aligns with the features; the overall comparison therefore depends on label noise, misspecification, and signal alignment.
\citet{kunstner2023noisenotmain} vary batch size up to the full batch and conclude that minibatch noise is not the main cause of the gap between SGD and Adam on transformers.

\section{Derivations and comparison of batch-size scaling rules}
\label{app:scaling-rules}
This appendix derives the two Muon scaling candidates in Section~\ref{sec:scaling-rules} and compares them with previously published prescriptions.
The first preserves a specified continuous-time model at matched data consumption.
The second minimizes a stationarity bound at fixed data budget.
Their different objectives lead to different learning-rate rules.

\subsection{Setup and notation}
\label{sec:scaling_notation}
\label{app:s2-ema}
We compare a reference batch size \(B\) with \(B'=\kappa B\), where \(\kappa>0\), holding the model and data budget \(\tau\) fixed.
The corresponding number of updates is \(n=\tau/B\).
The budget and batch use the same unit: tokens for language modeling and examples for image classification.
When comparing learning-rate schedules, we hold their shape fixed as a function of the fraction of the budget consumed.

We use the optimizer notation of Definition~\ref{def:muonw-adamw},
omitting the parameter-group subscripts \({\rm M}\) and \({\rm A}\) when unambiguous.

Let \(\w\) denote a generic parameter vector, let \(\xi\) be a sample from the data distribution, and let \(\ell(\w;\xi)\) be its loss.
Define \(L(\w)=\E_\xi[\ell(\w;\xi)]\) and \(\g(\w)=\grad L(\w)\).
Index updates by \(k\geq1\), with initial momentum \(\m_1=0\).
Conditional on the optimizer history \(\mathcal F_k\) before step \(k\), let \(z_k\) denote the scaled, zero-mean minibatch-gradient noise and \(\Sig(\w_k)\) its covariance.
Write the minibatch gradient as
\begin{equation} \ghat_{\batch}(\w_k)=\g(\w_k)+\batch^{-1/2}z_k, \qquad \E[z_k\mid\mathcal F_k]=0, \qquad \Cov(z_k\mid\mathcal F_k)=\Sig(\w_k). \label{eq:sde_gradient_model} \end{equation}
For matrix parameters, covariance is defined after vectorization.
Let \(\mom\in(0,1)\) be the momentum retention coefficient, so its refresh rate is \(1-\mom\).
With Nesterov mixing weight \(\omega\in[0,1]\) on the updated momentum, direction map \(\Dir\), learning rate \(\lr>0\), and weight decay \(\wdc\geq0\), the momentum state \(\m_k\), corrected input \(\widetilde\m_{k+1}\), and parameter state \(\w_k\) evolve as
\begin{align} \m_{k+1} &= \mom\m_k+(1-\mom)\ghat_{\batch}(\w_k), \label{eq:ema_momentum_update}\\ \widetilde\m_{k+1} &= \omega\m_{k+1}+(1-\omega)\ghat_{\batch}(\w_k), \label{eq:lionk_stochastic_corrected_m}\\ \w_{k+1} &= \w_k-\lr\Dir(\widetilde\m_{k+1})-\lr\wdc\w_k. \label{eq:weight_update} \end{align}
In this theoretical model, \(\omega\) and \(\mom\) are independent hyperparameters.
Setting \(\Dir=\operatorname{Orth}\) gives the Muon variant studied here.
The experimental MuonW update in Definition~\ref{def:muonw-adamw} is the restricted case \(\omega=\mom\).
That restriction is not imposed on the theoretical batch-size transport.
The generic recursion is also used in the bound calculation in \Cref{sec:convex}.

\subsection{Matching the sample-time Muon diffusion}
\label{sec:sde}
\label{app:s2-diffusion}
We state a sample-time diffusion and prove that the scaling rule below preserves its law.
One step processes \(\batch\) samples, so the sample time before update \(k\) is \(s_k=(k-1)\batch\).
Define the parameter step per sample and exact exponential-moving-average (EMA) decay rate per sample by
\begin{equation} \gam=\frac{\lr}{\batch}, \qquad \rhoMom=-\frac{\log\mom}{\batch}. \label{eq:sde_sample_coefficients} \end{equation}

We will assume \(\Dir\) is a twice continuously differentiable direction map in real arithmetic,\footnote{Note that the idealized Muon instantiation doesn't have this property but the matrix sign operation is empirically simulated by a smooth matrix polynomial.} with Jacobian \(J(q)=D\Dir(q)\).
Let \(\sigma(\w)=\Sig(\w)^{1/2}\) be a covariance square root and let \(W_s\) be standard Brownian motion in the vectorized parameter space.
At the continuous-time state \((\w_s,\m_s)\), denote the mixture of momentum and the mean gradient by \(q_s\).
Consider the sample-time diffusion
\begin{align} q_{s} &=\omega\m_{s}+(1-\omega)\g(\w_{s}), \notag\\ \dd\m_{s} &=\rhoMom(\g(\w_{s})-\m_{s})\dd s +\rhoMom\sigma(\w_{s})\dd W_{s}, \label{eq:sde_m}\\ \dd\w_{s} &=-\gam(\Dir(q_{s})+\wdc\w_{s})\dd s -\gam(1-\omega)J(q_{s})\sigma(\w_{s}) \dd W_{s}. \label{eq:sde_w} \end{align}
Both equations use the same Brownian motion, reflecting the shared minibatch noise in the parameter and momentum updates.
All coefficients are evaluated at the current state, so the model uses the It\^o convention.
\Cref{thm:sde_scaling} proves exact invariance of this SDE, while \Cref{prop:sde_approximation} bounds its error relative to the discrete recursion under additional assumptions.

\begin{proposition}[Coefficient matching with independent momentum coefficients]
\label[proposition]{thm:sde_scaling}
\label[proposition]{prop:s2-coefficient-matching}
Fix the objective, noise covariance, direction map, and initial joint law of parameters and momentum.
Suppose the coefficients of \eqref{eq:sde_m}--\eqref{eq:sde_w} are globally Lipschitz and have at most linear growth.
For \(\batch'=\kappaBatch\batch\), the transformation
\begin{equation} \lr'=\kappaBatch\lr, \qquad \mom'=\mom^{\kappaBatch}, \qquad \omega'=\omega, \qquad \wdc'=\wdc \label{eq:sde_scaling_rule} \end{equation}
preserves the law of this diffusion at matched sample time.
\end{proposition}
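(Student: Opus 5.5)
The plan is to show that the transformation \eqref{eq:sde_scaling_rule} leaves every coefficient of \eqref{eq:sde_m}--\eqref{eq:sde_w} unchanged, viewed as functions of the state \((\w,\m)\) and sample time \(s\). Equal coefficients then give equal laws by well-posedness. The drift and diffusion depend on the hyperparameters only through the per-sample quantities \(\gam=\lr/\batch\) and \(\rhoMom=-\log\mom/\batch\), the mixing weight \(\omega\), and the decay coefficient \(\wdc\). The objective \(L\), its gradient \(\g\), the covariance root \(\sigma=\Sig^{1/2}\), the direction map \(\Dir\), and its Jacobian \(J\) are all fixed by hypothesis. In particular, \(\Sig\) is the scaled covariance in \eqref{eq:sde_gradient_model}, which does not depend on \(\batch\).

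The first step is to compute the transformed per-sample coefficients at \(\batch'=\kappaBatch\batch\). The learning rate gives \(\gam'=\kappaBatch\lr/(\kappaBatch\batch)=\gam\). The momentum gives \(\rhoMom'=-\log(\mom^{\kappaBatch})/(\kappaBatch\batch)=-\kappaBatch\log\mom/(\kappaBatch\batch)=\rhoMom\). The remaining two are unchanged by definition: \(\omega'=\omega\), so \(q_s\) is the same function of the state, and \(\wdc'=\wdc\). Substituting these into \eqref{eq:sde_m}--\eqref{eq:sde_w}, the drift vector \(b(\w,\m)=\bigl(-\gam(\Dir(q)+\wdc\w),\ \rhoMom(\g(\w)-\m)\bigr)\) and the stacked diffusion matrix \(\bigl(-\gam(1-\omega)J(q)\sigma(\w);\ \rhoMom\sigma(\w)\bigr)\) are therefore identical before and after the transformation.

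The second step is to invoke strong existence and pathwise uniqueness for Itô SDEs with globally Lipschitz, linearly growing coefficients. Since the two systems have the same coefficients and the same initial joint law of \((\w_0,\m_0)\), the Yamada--Watanabe theorem upgrades pathwise uniqueness to uniqueness in law. Hence the two processes have the same law on path space, and in particular the same marginals at every sample time \(s\).

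The argument has no real obstacle. The one point to make explicit is what "matched sample time" means. The diffusion is already written in the sample-time variable \(s\), and the discrete sample time before update \(k\) is \(s_k=(k-1)\batch\). Comparing the reference run at update \(k\) with the scaled run at update \(k'\) such that \((k'-1)\batch'=(k-1)\batch\) is therefore the same as comparing the two diffusions at the same \(s\). No time change is needed.

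A second point to note is that the result genuinely needs \(\omega\) to be free of \(\mom\). If \(\omega\) were tied to \(\mom\), as in the experimental MuonW update, then \(\omega'=\mom^{\kappaBatch}\ne\omega\). This would alter \(q_s\) and the noise factor \((1-\omega)\), and the invariance would fail. The hypothesis \(\omega'=\omega\) in \eqref{eq:sde_scaling_rule} is exactly what prevents this.
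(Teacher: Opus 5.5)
Your proposal is correct and follows essentially the paper's proof: both show that \(\gamma_{\mathrm{samp}}\), \(\rho\), \(\omega\) and \(\lambda\) are unchanged under the transformation, so the joint drift and diffusion coefficients coincide, and both then conclude equality of path laws from well-posedness under the global Lipschitz and linear-growth hypotheses. The only difference is cosmetic. You invoke Yamada--Watanabe to get uniqueness in law directly for the random initial condition, whereas the paper couples the two equations through the same deterministic initial state and the same Brownian motion, applies pathwise uniqueness, and then integrates over the common initial law.
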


\begin{proof}
At a joint state \(x=(\m,\w)\), let \(q=\omega\m+(1-\omega)\g(\w)\).
The joint drift \(b\) and diffusion coefficient \(A\) of \eqref{eq:sde_m}--\eqref{eq:sde_w} are
\[
 b(x)=
 \begin{pmatrix}
   \rhoMom(\g(\w)-\m)\\
   -\gam(\Dir(q)+\wdc\w)
 \end{pmatrix},
 \qquad
 A(x)=
 \begin{pmatrix}
   \rhoMom\sigma(\w)\\
   -\gam(1-\omega)J(q)\sigma(\w)
 \end{pmatrix}.
\]
Thus the joint process \(X_s=(\m_s,\w_s)\) satisfies
\(\dd X_s=b(X_s)\dd s+A(X_s)\dd W_s\).
Write primes for the coefficients and process under the transformed hyperparameters.
Since \(\batch>0\), \(\kappaBatch>0\), and \(\mom\in(0,1)\),
\[
 \gam'=\frac{\lr'}{\batch'}
       =\frac{\kappaBatch\lr}{\kappaBatch\batch}=\gam,
 \qquad
 \rhoMom'=-\frac{\log\mom'}{\batch'}
         =-\frac{\log(\mom^{\kappaBatch})}{\kappaBatch\batch}
         =-\frac{\log\mom}{\batch}=\rhoMom.
\]
Moreover, \(\omega'=\omega\) and \(\wdc'=\wdc\).
The objective, direction map, and covariance square root are fixed, so at every state \(x\) we have \(q'=q\), \(b'(x)=b(x)\), and \(A'(x)=A(x)\).
In particular, the full joint covariance \(A'(x)A'(x)^\top=A(x)A(x)^\top\) is unchanged.

For each deterministic initial state, the global Lipschitz and linear-growth assumptions give a unique global strong solution.
Couple the reference and transformed equations using that same initial state and the same Brownian motion.
They then have identical coefficients and driving paths, so pathwise uniqueness gives \(X_s'=X_s\) for all \(s\geq0\) almost surely.
Their path laws therefore agree for every deterministic initial state.
Integrating these laws against the common initial joint distribution proves equality of the path laws for the prescribed initialization, and hence at matched sample time.
\end{proof}

\textbf{Approximation of the discrete updates.}
Let \(U_k=(\m_{k+1},\w_{k+1})\) be the joint discrete state after \(k\) updates of \eqref{eq:ema_momentum_update}--\eqref{eq:weight_update}, and let \(X_s=(\m_s,\w_s)\) solve \eqref{eq:sde_m}--\eqref{eq:sde_w}.
Write \(b,A\) for the joint drift and diffusion coefficients displayed in the preceding proof.
For a sample horizon \(\tau=nB\), define the coefficients in normalized time \(t=s/\tau\) by \(b_\tau=\tau b\) and \(A_\tau=\sqrt\tau A\).
For a random vector \(V\), write \(\|V\|_{L^2}=(\E\|V\|^2)^{1/2}\).
Vector norms are Euclidean after vectorization, and \(\|\cdot\|_{\rm F}\) denotes the Frobenius norm.

\begin{proposition}[Finite-horizon diffusion approximation]
\label[proposition]{prop:sde_approximation}
Fix \(B>0\), \(\eta>0\), \(\mu\in(0,1)\), \(\omega\in[0,1]\), \(\lambda\geq0\), and an integer \(n\geq1\).
Let \(\xi_k\) be independent standard Gaussian vectors, independent of \(U_0\), and assume the gradient noises satisfy \(z_{k+1}=\sigma(\w_{k+1})\xi_k\).
Suppose \(\|\sigma(\w)\|_{\rm F}\leq S\) for all \(\w\), and the operator norms of \(D\Dir\) and \(D^2\Dir\) are bounded by \(K\) and \(H\) on the entire parameter space.
Assume that for constants \(L,C\geq0\) and all joint states \(x,y\),
\begin{align*}
 \|b_\tau(x)-b_\tau(y)\|^2+\|A_\tau(x)-A_\tau(y)\|_{\rm F}^2
   &\leq L^2\|x-y\|^2,\\
 \|b_\tau(x)\|^2+\|A_\tau(x)\|_{\rm F}^2
   &\leq C(1+\|x\|^2).
\end{align*}
Suppose \(\E\|U_0\|^2<\infty\), and let \(Q<\infty\) bound the momentum lag over this horizon:
\[
 \max_{0\leq k<n}\|\g(\w_{k+1})-\m_{k+1}\|_{L^2}\leq Q.
\]
Define the momentum-coefficient discrepancy \(\delta_B\), the direct noise coefficient \(c\), and the direction-error bound \(\mathcal R_B\) by
\begin{align*}
 \delta_B&=\frac{1-\mu}{B}-\rhoMom,\qquad c=1-\omega\mu,\\
 \mathcal R_B&=K\omega(1-\mu)\left(Q+\frac{S}{\sqrt B}\right)
             +\frac{\sqrt3 Hc^2S^2}{2B}.
\end{align*}
There is a finite constant \(C_{\rm E}=C_{\rm E}(L,C,\E\|U_0\|^2)\) and a coupling with \(X_0=U_0\) such that
\begin{equation}
 \max_{0\leq k\leq n}\|U_k-X_{kB}\|_{L^2}
 \leq C_{\rm E}\sqrt{\frac{B}{\tau}}
 +e^{1+L+L^2}
  \sqrt{4\delta_B^2(\tau^2Q^2+\tau S^2)
            +2\tau^2\gam^2\mathcal R_B^2},
 \label{eq:sde_approximation_bound}
\end{equation}
Moreover, \(|\delta_B|\leq\rhoMom^2 B/2\).
\end{proposition}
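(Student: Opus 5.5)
The plan is to split the error through an intermediate process: the Euler--Maruyama discretization \(Y_k\) of \eqref{eq:sde_m}--\eqref{eq:sde_w} on the grid \(s_k=kB\). In normalized time \(t=s/\tau\) this grid has step \(h=B/\tau=1/n\), and the coefficients are \(b_\tau,A_\tau\). Couple everything on one probability space with \(X_0=Y_0=U_0\), using the Brownian increments \(W_{(k+1)B}-W_{kB}=\sqrt B\,\xi_k\). This reuses the same Gaussian vectors \(\xi_k\) that drive the discrete gradient noise \(z_{k+1}=\sigma(\w_{k+1})\xi_k\). The triangle inequality then gives
\[
\|U_k-X_{kB}\|_{L^2}\leq\|Y_k-X_{kB}\|_{L^2}+\|U_k-Y_k\|_{L^2}.
\]
The first term is the classical strong-order-\(1/2\) estimate for Euler--Maruyama under the stated global Lipschitz and linear-growth bounds on \((b_\tau,A_\tau)\). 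It gives \(C_{\rm E}(L,C,\E\|U_0\|^2)\sqrt h=C_{\rm E}\sqrt{B/\tau}\), which I take as standard.

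For the second term, write one step of the discrete recursion as one Euler step plus a local defect, \(U_{k+1}=U_k+b(U_k)B+A(U_k)\sqrt B\,\xi_k+E_k\), and compute \(E_k\) componentwise. The momentum component is
\[
(1-\mu)\bigl(\g-\m\bigr)+(1-\mu)B^{-1/2}z,
\]
while the Euler step uses \(\rhoMom B(\g-\m)+\rhoMom B\cdot B^{-1/2}\sigma\xi\). The defect is therefore \(\delta_B B\) times \(\bigl[(\g-\m)+B^{-1/2}z\bigr]\). Its drift part has \(L^2\) norm at most \(|\delta_B|BQ\), and its martingale part has conditional variance at most \(\delta_B^2BS^2\).

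For the parameter component, note that the discrete corrected input \(\widetilde\m_{k+1}=\omega\mu\m_k+c\,\ghat_B\) differs from \(q_k=\omega\m_k+(1-\omega)\g\) by
\[
\Delta_k=\omega(1-\mu)(\g-\m_k)+c\,B^{-1/2}z.
\]
A second-order Taylor expansion gives \(\Dir(\widetilde\m_{k+1})=\Dir(q_k)+J(q_k)\Delta_k+R_k\) with \(\|R_k\|\leq\tfrac H2 c^2\|z\|^2/B\). Gaussianity gives \(\E\|z\|^4\leq3S^4\), which is the source of the factor \(\sqrt3\). The linear noise term \(cJz/\sqrt B\) is compared with the diffusion's \((1-\omega)Jz/\sqrt B\); the mismatch \(c-(1-\omega)=\omega(1-\mu)\) contributes \(K\omega(1-\mu)S/\sqrt B\). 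The lag term contributes \(K\omega(1-\mu)Q\). Multiplying by \(\lr=\gam B\), each step's parameter defect has \(L^2\) norm at most \(\gam B\,\mathcal R_B\). I will bound this piece crudely as a drift-type defect, not separating its mean-zero part, which is what yields the \(2\tau^2\gam^2\mathcal R_B^2\) term.

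Next, accumulate the defects with a discrete Gronwall argument on \(e_k=\|U_k-Y_k\|_{L^2}^2\). The difference \(U_k-Y_k\) evolves with Euler-increment differences controlled by \(L\) in normalized time, plus the defects \(E_k\). Split the defects into a predictable part \(D_k\) and a martingale part \(M_k\). Summed over \(n=\tau/B\) steps, the predictable parts add linearly, giving \((nB)^2=\tau^2\) times the squared per-sample size. The martingale parts add in quadrature, giving \(nB\cdot B\) per-sample variance, i.e. a factor \(\tau\). This produces \(4\delta_B^2(\tau^2Q^2+\tau S^2)+2\tau^2\gam^2\mathcal R_B^2\) after \((a+b)^2\leq2a^2+2b^2\) splittings. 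The Lipschitz feedback, with drift contributing \(Lh\) and diffusion contributing \(L^2h\) per step over \(n=1/h\) steps, plus the Young-inequality slack, gives the amplification \(e^{1+L+L^2}\).

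Finally, \(|\delta_B|\leq\rhoMom^2B/2\) follows from \(1-\mu=1-e^{-\rhoMom B}\in[\rhoMom B-(\rhoMom B)^2/2,\ \rhoMom B]\).

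The main obstacle is this Gronwall bookkeeping. The cross terms between the error \(U_k-Y_k\) and the martingale defects must vanish in expectation, which uses independence of \(\xi_k\) from \(\mathcal F_k\). The drift defects must be handled by Young's inequality with weight \(h\), so that the constants come out exactly as \(e^{1+L+L^2}\) with the stated \(4\) and \(2\). I would first do the estimate in normalized time with \(h=1/n\) and only convert back to \(\tau,B\) at the end.
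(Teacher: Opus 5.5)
Your overall route is the paper's. You use the same coupling $\xi_k=(W_{(k+1)B}-W_{kB})/\sqrt B$, an intermediate Euler--Maruyama chain, per-step defects from the $\delta_B$ mismatch (a drift part and a Brownian part) and from the direction map, a Young/Gronwall recursion, the standard strong Euler estimate, and the elementary bound on $\delta_B$.

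One step is false as written. A second-order expansion of $\Dir$ about $q_k$ in the full increment $\Delta_k$ has remainder bounded only by $\tfrac H2\|\Delta_k\|^2$, not by $\tfrac H2c^2\|z\|^2/B$. For instance, if $S=0$ your bound forces $R_k=0$, yet $\Delta_k=\omega(1-\mu)(\g-\m_k)$ still produces a nonzero quadratic error. The missing pieces $\tfrac H2\omega^2(1-\mu)^2\|\g-\m_k\|^2$ and $H\omega(1-\mu)c\|\g-\m_k\|\|z\|/\sqrt B$ are not in $\mathcal R_B$. The first would even require an $L^4$ bound on the momentum lag, which is not assumed.

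The repair, which is what the paper does, is to expand in two stages. With $a=\omega(1-\mu)(\g-\m_k)$ and $b=cB^{-1/2}z$,
\[
\Dir(q_k+a+b)-\Dir(q_k)-(1-\omega)J(q_k)\frac{z}{\sqrt B}
=\bigl[\Dir(q_k+a+b)-\Dir(q_k+b)\bigr]
+\bigl[\Dir(q_k+b)-\Dir(q_k)-J(q_k)b\bigr]
+\omega(1-\mu)J(q_k)\frac{z}{\sqrt B}.
\]
The three terms are bounded as follows:
\begin{itemize}
\item the first bracket is at most $K\|a\|$ by the mean-value inequality;
\item the second bracket is at most $\tfrac H2\|b\|^2$;
\item the last term is at most $K\omega(1-\mu)\|z\|/\sqrt B$.
\end{itemize}
Minkowski and $\E\|z\|^4\le3S^4$ then give exactly $\mathcal R_B$. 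The lag contribution $K\omega(1-\mu)Q$ thus comes from a first-derivative bound, not from $J(q_k)a$.

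For the bookkeeping, note also that this direction defect depends on $\xi_k$, so it is not predictable. It should be split off by the purely algebraic inequality $\|F+hu\|^2\le(1+h)\|F\|^2+h(1+h)\|u\|^2$, after the conditional-expectation step that removes the cross terms with the Brownian increment. That is how the paper avoids a nonvanishing cross term, and it is why $u$ carries the coefficient $2$ rather than $4$.
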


\begin{proof}
Set \(h=B/\tau=1/n\), and couple the Gaussian noises to Brownian increments by
\(\xi_k=(W_{(k+1)B}-W_{kB})/\sqrt B\).
Then \(\widetilde W_t=W_{\tau t}/\sqrt\tau\) is standard Brownian motion.
Write \(\Delta\widetilde W_k=\widetilde W_{(k+1)h}-\widetilde W_{kh}\), whose conditional covariance is \(hI\).

At step \(k+1\), define the momentum lag and mean corrected input by
\(r_{k+1}=\g(\w_{k+1})-\m_{k+1}\) and
\(q_{k+1}=\omega\m_{k+1}+(1-\omega)\g(\w_{k+1})\).
The momentum recursion gives exactly
\[
 \widetilde\m_{k+2}
 =q_{k+1}+\omega(1-\mu)r_{k+1}+c z_{k+1}/\sqrt B.
\]
Define the direction remainder \(R_{k+1}\) by
\[
 \Dir(\widetilde\m_{k+2})
 =\Dir(q_{k+1})+(1-\omega)J(q_{k+1})z_{k+1}/\sqrt B+R_{k+1}.
\]
Applying the derivative bound to the momentum shift, Taylor's theorem to the noise increment, and \(c-(1-\omega)=\omega(1-\mu)\) yields
\[
 \|R_{k+1}\|
 \leq K\omega(1-\mu)\left(\|r_{k+1}\|+\frac{\|z_{k+1}\|}{\sqrt B}\right)
       +\frac{Hc^2\|z_{k+1}\|^2}{2B}.
\]
Conditional Gaussianity and \(\|\sigma\|_{\rm F}\leq S\) give
\(\E[\|z_{k+1}\|^2\mid\mathcal F_{k+1}]\leq S^2\) and
\(\E[\|z_{k+1}\|^4\mid\mathcal F_{k+1}]\leq3S^4\).
Minkowski's inequality therefore gives \(\|R_{k+1}\|_{L^2}\leq\mathcal R_B\).

Let \(f_k\), \(D_k\), and \(u_k\) denote the drift, diffusion, and direction defects relative to an Euler--Maruyama step:
\[
 f_k=\begin{pmatrix}\tau\delta_B r_{k+1}\\0\end{pmatrix},
 \quad D_k=\begin{pmatrix}\sqrt\tau\delta_B\sigma(\w_{k+1})\\0\end{pmatrix},
 \quad u_k=\begin{pmatrix}0\\-\tau\gam R_{k+1}\end{pmatrix}.
\]
Direct substitution into the discrete recursion gives
\[
 U_{k+1}=U_k+h b_\tau(U_k)+A_\tau(U_k)\Delta\widetilde W_k
             +h f_k+D_k\Delta\widetilde W_k+h u_k.
\]
In particular,
\(\E\|f_k\|^2\leq\tau^2\delta_B^2Q^2\),
\(\E\|D_k\|_{\rm F}^2\leq\tau\delta_B^2S^2\), and
\(\E\|u_k\|^2\leq\tau^2\gam^2\mathcal R_B^2\).

Let \(Y_0=U_0\) and
\(Y_{k+1}=Y_k+h b_\tau(Y_k)+A_\tau(Y_k)\Delta\widetilde W_k\).
Define \(e_k=U_k-Y_k\), \(\Delta b_k=b_\tau(U_k)-b_\tau(Y_k)\), and \(\Delta A_k=A_\tau(U_k)-A_\tau(Y_k)\).
The increment before adding the direction defect is
\(F_k=e_k+h\Delta b_k+h f_k+(\Delta A_k+D_k)\Delta\widetilde W_k\).
All its coefficients are measurable before the Brownian increment.
Conditional expectation, the Lipschitz bound, and \(h\leq1\) give
\begin{align*}
 \E\|F_k\|^2
 &\leq\bigl[(1+h)(1+2Lh)+2L^2h\bigr]\E\|e_k\|^2
       +h(1+h)\E\|f_k\|^2+2h\E\|D_k\|_{\rm F}^2,\\
 \E\|e_{k+1}\|^2
 &\leq(1+h)\E\|F_k\|^2+h(1+h)\E\|u_k\|^2\\
 &\leq e^{(2+2L+2L^2)h}\E\|e_k\|^2
       +h\bigl(4\E\|f_k\|^2+4\E\|D_k\|_{\rm F}^2+2\E\|u_k\|^2\bigr).
\end{align*}
The second inequality is algebraic and does not require \(u_k\) to be measurable before the increment.
Iterating from \(e_0=0\) and using \(nh=1\) yields
\[
 \max_{k\leq n}\E\|U_k-Y_k\|^2
 \leq e^{2+2L+2L^2}
       \bigl[4\delta_B^2(\tau^2Q^2+\tau S^2)+2\tau^2\gam^2\mathcal R_B^2\bigr].
\]

For completeness, define the continuous Euler interpolation by
\[
 \overline Y_t=U_0+\int_0^t b_\tau(Y_{\lfloor v/h\rfloor})\,\dd v
                   +\int_0^t A_\tau(Y_{\lfloor v/h\rfloor})\,\dd\widetilde W_v,
 \qquad 0\leq t\leq1.
\]
It satisfies \(\overline Y_{kh}=Y_k\).
The growth bound, It\^o isometry, and Gronwall's inequality imply
\[
 \sup_{t\leq1}\E\|\overline Y_t\|^2\leq M,
 \qquad M=3\bigl(\E\|U_0\|^2+C\bigr)e^{3C},
\]
and hence
\(\E\|\overline Y_t-Y_{\lfloor t/h\rfloor}\|^2\leq2Ch(1+M)\).
For \(Z_t=X_{\tau t}\), the integral equations and the Lipschitz bound give
\[
 \E\|Z_t-\overline Y_t\|^2
 \leq4L^2\int_0^t\E\|Z_v-\overline Y_v\|^2\,\dd v
       +8L^2C(1+M)h.
\]
A further application of Gronwall's inequality gives
\(\sup_{t\leq1}\E\|Z_t-\overline Y_t\|^2\leq C_{\rm E}^2h\), with
\(C_{\rm E}^2=8L^2C(1+M)e^{4L^2}\).
The triangle inequality at the grid points proves \eqref{eq:sde_approximation_bound}.
Finally, \(\mu=e^{-\rhoMom B}\) gives
\[
 0\leq-\delta_B
 =\frac1B\int_0^{\rhoMom B}(1-e^{-u})\,\dd u
 \leq\frac1B\int_0^{\rhoMom B}u\,\dd u
 =\frac{\rhoMom^2B}{2}.
\]
The proof is then complete.
\end{proof}

\subsection{Minimizing a stochastic Frank--Wolfe bound}
\label{sec:convex}
\label{app:s2-fw}
A second prescription follows from minimizing an upper bound on the Frank-Wolfe gap after a fixed data budget.
We minimize this bound over the learning rate while holding the weight-decay coefficient \(\wdc>0\) fixed as batch size varies.
We adapt the stochastic Frank--Wolfe analysis of \citet{sfyraki2025lions} to the recursion in \Cref{sec:scaling_notation}.

Let \(\mathcal C\) be a compact convex set with diameter \(D>0\).
\begin{assumption}[Exact linear minimization]
\label[assumption]{ass:fw_direction}
Fix \(\wdc>0\). For every input \(z\), the direction map \(\Dir\) satisfies
\[
 u(z):=-\frac{\Dir(z)}{\wdc}
 \in\arg\min_{v\in\mathcal C}\langle z,v\rangle.
\]
\end{assumption}

When \(z=\g(\w)\), the point \(u(z)\) minimizes the first-order model
\(L(\w)+\langle z,v-\w\rangle\) over \(v\in\mathcal C\).
It therefore gives the best destination in \(\mathcal C\) according to the local linear approximation of the loss.
For a gradient or momentum estimate \(z\), the same interpretation holds for the linear model defined by that estimate.
Writing \(\gamma=\lr\wdc\), the update with weight decay becomes
\[
 \w_{k+1}=(1-\gamma)\w_k+\gamma u(\widetilde\m_{k+1}).
\]
Thus the existing optimizer update moves a fraction \(\gamma\) of the way toward this destination, which is a stochastic Frank--Wolfe step.
Initialization in \(\mathcal C\) and \(0<\gamma\leq1\) keep all iterates in \(\mathcal C\).

\textbf{Interpretation for Muon.}
Consider one matrix parameter block and a fixed shape factor \(a_{\rm shape}>0\).
For a matrix input \(Z\) of rank \(r\), write its compact singular value decomposition as
\(Z=U\operatorname{diag}(s_1,\ldots,s_r)V^\top\), with \(s_i>0\).
Replacing each positive singular value by one gives the polar factor \(UV^\top\).
The idealized Muon direction is \(\Dir(Z)=a_{\rm shape}UV^\top\); take \(\Dir(0)=0\).
Let \(\|\cdot\|_{\rm op}\) denote the spectral norm and use the Frobenius inner product
\(\langle Z,X\rangle=\operatorname{tr}(Z^\top X)\).
Choose
\[
 \mathcal C=\{X:\|X\|_{\rm op}\leq a_{\rm shape}/\wdc\},
 \qquad u(Z)=-\frac{a_{\rm shape}}{\wdc}UV^\top.
\]
The point \(u(Z)\) belongs to \(\mathcal C\), and every \(X\in\mathcal C\) satisfies
\[
 \langle Z,X\rangle\geq-\frac{a_{\rm shape}}{\wdc}\sum_{i=1}^r s_i
 =\langle Z,u(Z)\rangle,
\]
so this direction satisfies \Cref{ass:fw_direction}.
Geometrically, the negative polar direction minimizes the linear model over a spectral-norm ball, and weight decay sets the ball's radius.
For several matrix blocks, the same argument applies to the Cartesian product of their respective balls.
Finite Newton--Schulz iterations and rounding only approximate this exact direction; the theorem below ignores that implementation error.
For fixed matrix shapes and shape factors, the diameter is fixed when weight decay is fixed, but changes if weight decay is optimized.
Define the Frank--Wolfe gap by
\begin{equation} \mathcal G_{\mathcal C}(\w) = \max_{u\in\mathcal C} \langle u-\w,-\g(\w)\rangle. \label{eq:fw_gap} \end{equation}
For a convex objective this quantity bounds the constrained suboptimality.
For a nonconvex objective it is the stationarity measure controlled below.

\citet{islamov2026role} analyze stochastic conditional gradient methods under a Kurdyka--\L{}ojasiewicz (KL) condition relating the dual norm of the gradient to objective suboptimality.
Their analysis yields a different bound from \Cref{thm:sfw_gap_bound}, which controls the Frank--Wolfe gap without a KL assumption.

\begin{theorem}[Stochastic Frank--Wolfe bound]
\label{thm:sfw_gap_bound}
\label{thm:s2-fw}
Suppose \Cref{ass:fw_direction} holds and \(L\) has
\(L_{\mathrm{sm}}>0\) Lipschitz gradient on \(\mathcal C\).  Conditional on the history, assume the minibatch gradient
is unbiased and, for a scalar noise bound \(\sigma\geq0\), satisfies
\(\E[\|\ghat_B(\w_k)-\g(\w_k)\|^2\mid\mathcal F_k]\le\sigma^2/B\).
Run \eqref{eq:ema_momentum_update}--\eqref{eq:weight_update} with
\(\Dir(q)=-\wdc u(q)\), \(\w_1\in\mathcal C\), \(\m_1=0\),
\(\wdc>0\), fixed \(\mom\in(0,1)\) and
\(\omega\in[0,1]\), and \(0<\gamma=\lr\wdc\le1\).
Let \(\tau\) be the sample budget, assume \(n=\tau/B\) is an integer, and
choose \(a\) independently and uniformly from \(\{1,\ldots,n\}\).
For \(\Delta_1=L(\w_1)-\inf_{\w\in\mathcal C}L(\w)\),
\begin{align}
    \E\mathcal G_{\mathcal C}(\w_a)\le{}&
      \frac{B\Delta_1}{\tau\gamma}
      +L_{\mathrm{sm}}D^2\gamma
          \left(\frac12+
              \frac{\omega\mom}{1-\mom}\right)
      +\frac{D\sigma}{\sqrt B}\notag\\
      &+\frac{DB\omega}{(1-\mom)\tau}
          \left((1-\mom)\frac{\sigma}{\sqrt B}
                   +\mom\|\g(\w_1)\|\right).
    \label{eq:sfw_gap_bound}
\end{align}
\end{theorem}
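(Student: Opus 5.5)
Following \citet{sfyraki2025lions}, the plan is to combine a one-step descent inequality for the Frank--Wolfe step with a bound on the error between the corrected momentum and the true gradient. Telescoping over the $n=\tau/B$ iterates and dividing by $n\gamma$ then gives the averaged gap.

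\textbf{Step 1 (descent).} Write $\w_{k+1}-\w_k=\gamma(u_k-\w_k)$ with $u_k=u(\widetilde\m_{k+1})\in\mathcal C$, so $\|u_k-\w_k\|\le D$. Smoothness on $\mathcal C$ gives
\[
L(\w_{k+1})\le L(\w_k)+\gamma\langle \g(\w_k),u_k-\w_k\rangle+\tfrac{L_{\mathrm{sm}}}{2}\gamma^2D^2 .
\]
Let $v_k\in\arg\max_{v\in\mathcal C}\langle v-\w_k,-\g(\w_k)\rangle$. Optimality of $u_k$ for the linear model with input $\widetilde\m_{k+1}$ gives $\langle\widetilde\m_{k+1},u_k\rangle\le\langle\widetilde\m_{k+1},v_k\rangle$. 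Adding and subtracting $\widetilde\m_{k+1}$ then yields
\[
\langle \g(\w_k),u_k-\w_k\rangle\le-\mathcal G_{\mathcal C}(\w_k)+D\,\|\g(\w_k)-\widetilde\m_{k+1}\|.
\]
Summing over $k$, using $L(\w_{n+1})\ge\inf_{\mathcal C}L$, and dividing by $n\gamma$ gives
\[
\E\mathcal G_{\mathcal C}(\w_a)\le \frac{B\Delta_1}{\tau\gamma}+\frac{L_{\mathrm{sm}}D^2\gamma}{2}+\frac{D}{n}\sum_k\E\|\epsilon_k\|,
\qquad \epsilon_k:=\widetilde\m_{k+1}-\g(\w_k).
\]

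\textbf{Step 2 (momentum error).} Decompose $\epsilon_k=\omega(\m_{k+1}-\g(\w_k))+(1-\omega)(\ghat_B(\w_k)-\g(\w_k))$. Unrolling the EMA gives
\[
\m_{k+1}-\g(\w_k)=\sum_{j\le k}(1-\mom)\mom^{k-j}\bigl(\ghat_B(\w_j)-\g(\w_j)\bigr)+\sum_{j<k}\mom^{k-j}\bigl(\g(\w_j)-\g(\w_{j+1})\bigr)-\mom^k\g(\w_1).
\]
These three pieces are handled separately.
\begin{itemize}
\item \emph{Noise sum.} The terms are martingale differences, so by orthogonality the sum has $L^2$ norm at most $(1-\mom)\sigma/\sqrt{B}\cdot(\sum\mom^{2i})^{1/2}\le\sigma/\sqrt B$. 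The same bound holds for the $(1-\omega)$ term, and combining the two gives the $D\sigma/\sqrt B$ term.
\item \emph{Drift sum.} Smoothness gives $\|\g(\w_j)-\g(\w_{j+1})\|\le L_{\mathrm{sm}}\gamma D$, so this piece is at most $L_{\mathrm{sm}}\gamma D\,\mom/(1-\mom)$. Multiplied by $D\omega$, it produces the $\omega\mom/(1-\mom)$ part of the smoothness term.
\item \emph{Initialization bias.} The term $\mom^k\|\g(\w_1)\|$ sums over $k$ to at most $\mom\|\g(\w_1)\|/(1-\mom)$ (counting from $k\ge1$), and dividing by $n$ gives $B\mom\|\g(\w_1)\|/((1-\mom)\tau)$.
\end{itemize}

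\textbf{Main obstacle.} The difficulty is matching the exact remaining term $\frac{DB\omega}{(1-\mom)\tau}(1-\mom)\frac{\sigma}{\sqrt B}$. This looks like a transient start-up contribution of the noise when $\m_1=0$, where the first few noise weights do not yet form a full EMA. I would first try to split off the $j=1$ (or $k=1$) contributions separately, bounding the steady-state noise part more crudely if needed, and then check the indexing conventions ($\m_1=0$, first update uses $\m_2$) so that the bias and transient terms land with exactly the stated constants. If the constants do not match on the first attempt, I would adjust by bounding $\E\|\epsilon_k\|$ by its $L^2$ norm via Jensen and allocating the cross-terms via the triangle inequality rather than orthogonality.
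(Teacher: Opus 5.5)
Your argument is correct, and it proves a slightly stronger inequality, so the ``main obstacle'' is not an obstacle. Your unrolling identity is right (including $k=1$). Step~1 together with your three bullets gives
\[
\E\mathcal G_{\mathcal C}(\w_a)\le\frac{B\Delta_1}{\tau\gamma}+L_{\mathrm{sm}}D^2\gamma\Bigl(\frac12+\frac{\omega\mu}{1-\mu}\Bigr)+\frac{D\sigma}{\sqrt B}+\frac{DB\omega\mu\|\g(\w_1)\|}{(1-\mu)\tau},
\]
which is the stated right-hand side minus the nonnegative quantity $D\omega\sigma\sqrt B/\tau$. Since the theorem is an upper bound, you are finished. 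There is no need to split off $j=1$, re-index, or abandon orthogonality. The plain triangle inequality, which gives $(1-\mu^k)\sigma/\sqrt B$ for the noise sum, also suffices and does not use the martingale structure.

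The paper takes a different route. Instead of unrolling the EMA, it tracks $e_k=\E\|\m_{k+1}-\g(\w_k)\|$ through the one-step recursion $e_{k+1}\le\mu e_k+\mu\gamma L_{\mathrm{sm}}D+(1-\mu)\sigma/\sqrt B$. It folds this into the potential $V_k=\E L(\w_k)+\gamma D\omega e_k/(1-\mu)$, whose per-step decrease telescopes directly. The last term of the bound comes from the initial potential excess $\gamma D\omega e_1/(1-\mu)$, with $e_1\le(1-\mu)\sigma/\sqrt B+\mu\|\g(\w_1)\|$.

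That bookkeeping charges a momentum-noise unit of size $\gamma D\omega\sigma/\sqrt B$ a total of $n+1$ times:
\begin{itemize}
\item once through $e_1$;
\item $n-1$ times through the recursion;
\item once more as slack at the final step, where $(1-\omega)\sigma/\sqrt B$ is rounded up to $\sigma/\sqrt B$.
\end{itemize}
Your direct unrolling gives each of the $n$ noise draws total weight at most one, which is why you land below the stated bound. Orthogonality further improves the momentum-noise constant to $\sqrt{(1-\mu)/(1+\mu)}$.

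What the potential argument buys in exchange is a short proof with no convolution sums, built from a single one-step error recursion. That template extends most easily to step-size schedules or state-dependent drift and noise bounds.
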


\begin{proof}
The pre-update initialization \(\m_1=0\) corresponds to \(M_0=0\) in Definition~\ref{def:muonw-adamw}.
The exact linear minimization oracle keeps every iterate
in \(\mathcal C\).  The proof couples objective descent to the momentum
tracking error, cancelling that error in a single potential.

Let \(v_k=u(\widetilde\m_{k+1})\) be the point selected by the optimizer,
and let \(v_k^\star=u(\g(\w_k))\) minimize the linear model defined by the true gradient.
Both points belong to \(\mathcal C\), and
\(\w_{k+1}=\w_k+\gamma(v_k-\w_k)\).
Convexity of \(\mathcal C\) and \(0<\gamma\leq1\) therefore give
\[
 \w_{k+1}\in\mathcal C,\qquad
 \|\w_{k+1}-\w_k\|=\gamma\|v_k-\w_k\|\leq\gamma D.
\]
By \Cref{ass:fw_direction} and the definition of the Frank--Wolfe gap,
\[
 \langle\widetilde\m_{k+1},v_k-v_k^\star\rangle\leq0,
 \qquad
 \langle\g(\w_k),v_k^\star-\w_k\rangle
 =-\mathcal G_{\mathcal C}(\w_k).
\]
Adding and subtracting the true-gradient minimizer gives
\begin{align*}
 \langle\g(\w_k),v_k-\w_k\rangle
 &=\langle\g(\w_k),v_k^\star-\w_k\rangle
   +\langle\g(\w_k),v_k-v_k^\star\rangle\\
 &=-\mathcal G_{\mathcal C}(\w_k)
   +\langle\widetilde\m_{k+1},v_k-v_k^\star\rangle
   +\langle\g(\w_k)-\widetilde\m_{k+1},v_k-v_k^\star\rangle\\
 &\leq-\mathcal G_{\mathcal C}(\w_k)
   +D\|\widetilde\m_{k+1}-\g(\w_k)\|.
\end{align*}
Smoothness then yields
\begin{align*}
 L(\w_{k+1})
 &\leq L(\w_k)+\langle\g(\w_k),\w_{k+1}-\w_k\rangle
       +\frac{L_{\mathrm{sm}}}{2}\|\w_{k+1}-\w_k\|^2\\
 &\leq L(\w_k)-\gamma\mathcal G_{\mathcal C}(\w_k)
       +\gamma D\|\widetilde\m_{k+1}-\g(\w_k)\|
       +\frac{L_{\mathrm{sm}}D^2\gamma^2}{2}.
\end{align*}

Define the minibatch error \(\varepsilon_k=\ghat_B(\w_k)-\g(\w_k)\)
and the expected momentum error \(e_k=\E\|\m_{k+1}-\g(\w_k)\|\).
The conditional second-moment bound and Cauchy--Schwarz imply
\[
 \E\|\varepsilon_k\|
 \leq\E\!\left[\sqrt{\E[\|\varepsilon_k\|^2\mid\mathcal F_k]}\right]
 \leq\frac{\sigma}{\sqrt B}.
\]
Using \eqref{eq:lionk_stochastic_corrected_m}, we obtain
\begin{align*}
 \widetilde\m_{k+1}-\g(\w_k)
 &=\omega(\m_{k+1}-\g(\w_k))+(1-\omega)\varepsilon_k,\\
 \E\|\widetilde\m_{k+1}-\g(\w_k)\|
 &\leq\omega e_k+(1-\omega)\frac{\sigma}{\sqrt B}.
\end{align*}
Taking expectations in the descent inequality therefore gives
\begin{equation}
\begin{split}
 \E L(\w_{k+1})\leq{}&\E L(\w_k)-\gamma\E\mathcal G_{\mathcal C}(\w_k)
       +\gamma D\omega e_k\\
 &+\gamma D(1-\omega)\frac{\sigma}{\sqrt B}
       +\frac{L_{\mathrm{sm}}D^2\gamma^2}{2}.
\end{split}
\label{eq:sfw_descent_step}
\end{equation}
For \(k<n\), the momentum recursion also gives the exact decomposition
\[
 \m_{k+2}-\g(\w_{k+1})
 =\mom(\m_{k+1}-\g(\w_k))
  +\mom(\g(\w_k)-\g(\w_{k+1}))
  +(1-\mom)\varepsilon_{k+1}.
\]
The triangle inequality and Lipschitz continuity of the gradient imply
\begin{align*}
 e_{k+1}
 &\leq\mom e_k+\mom\E\|\g(\w_k)-\g(\w_{k+1})\|
             +(1-\mom)\E\|\varepsilon_{k+1}\|\\
 &\leq\mom e_k+\mom L_{\mathrm{sm}}\E\|\w_{k+1}-\w_k\|
             +(1-\mom)\frac{\sigma}{\sqrt B}\\
 &\leq\mom e_k+\mom\gamma L_{\mathrm{sm}}D
             +(1-\mom)\frac{\sigma}{\sqrt B}.
\end{align*}

Define the potential
\(V_k=\E L(\w_k)+\gamma D\omega e_k/(1-\mom)\).
For \(k<n\), adding the weighted momentum-error bound to
\eqref{eq:sfw_descent_step} yields
\begin{align*}
 V_{k+1}
 &\leq\E L(\w_k)-\gamma\E\mathcal G_{\mathcal C}(\w_k)
       +\gamma D\omega e_k
       +\gamma D(1-\omega)\frac{\sigma}{\sqrt B}
       +\frac{L_{\mathrm{sm}}D^2\gamma^2}{2}\\
 &\quad+\frac{\gamma D\omega}{1-\mom}
       \left(\mom e_k+\mom\gamma L_{\mathrm{sm}}D
                       +(1-\mom)\frac{\sigma}{\sqrt B}\right)\\
 &=V_k-\gamma\E\mathcal G_{\mathcal C}(\w_k)
       +L_{\mathrm{sm}}D^2\gamma^2
           \left(\frac12+\frac{\omega\mom}{1-\mom}\right)
       +\frac{\gamma D\sigma}{\sqrt B}.
\end{align*}
Here the coefficients of \(e_k\) and the noise combine according to
\[
 \gamma D\omega+\frac{\gamma D\omega\mom}{1-\mom}
 =\frac{\gamma D\omega}{1-\mom},
 \qquad
 \gamma D(1-\omega)+\frac{\gamma D\omega}{1-\mom}(1-\mom)
 =\gamma D.
\]
Rearranging and summing over \(k=1,\ldots,n-1\) gives
\begin{align*}
 \gamma\sum_{k=1}^{n-1}\E\mathcal G_{\mathcal C}(\w_k)
 &\leq\sum_{k=1}^{n-1}(V_k-V_{k+1})
       +(n-1)L_{\mathrm{sm}}D^2\gamma^2
          \left(\frac12+\frac{\omega\mom}{1-\mom}\right)
       +\frac{(n-1)\gamma D\sigma}{\sqrt B}\\
 &=V_1-V_n+(n-1)L_{\mathrm{sm}}D^2\gamma^2
          \left(\frac12+\frac{\omega\mom}{1-\mom}\right)
       +\frac{(n-1)\gamma D\sigma}{\sqrt B}.
\end{align*}
For the final step, nonnegativity of \(e_n\) gives
\[
 \gamma D\omega e_n
 \leq\frac{\gamma D\omega}{1-\mom}e_n
 =V_n-\E L(\w_n).
\]
Applying \eqref{eq:sfw_descent_step} at \(k=n\) therefore yields
\begin{align*}
 \E L(\w_{n+1})
 &\leq V_n-\gamma\E\mathcal G_{\mathcal C}(\w_n)
       +\gamma D(1-\omega)\frac{\sigma}{\sqrt B}
       +\frac{L_{\mathrm{sm}}D^2\gamma^2}{2}\\
 &\leq V_n-\gamma\E\mathcal G_{\mathcal C}(\w_n)
       +\frac{\gamma D\sigma}{\sqrt B}
       +L_{\mathrm{sm}}D^2\gamma^2
          \left(\frac12+\frac{\omega\mom}{1-\mom}\right).
\end{align*}
Combining this with the preceding sum gives
\begin{align*}
 \gamma\sum_{k=1}^{n}\E\mathcal G_{\mathcal C}(\w_k)
 &\leq V_1-\E L(\w_{n+1})
       +nL_{\mathrm{sm}}D^2\gamma^2
          \left(\frac12+\frac{\omega\mom}{1-\mom}\right)
       +\frac{n\gamma D\sigma}{\sqrt B}\\
 &\leq\Delta_1+\frac{\gamma D\omega}{1-\mom}e_1
       +nL_{\mathrm{sm}}D^2\gamma^2
          \left(\frac12+\frac{\omega\mom}{1-\mom}\right)
       +\frac{n\gamma D\sigma}{\sqrt B},
\end{align*}
where the second inequality uses
\(\E L(\w_{n+1})\geq\inf_{\w\in\mathcal C}L(\w)\).
The same argument applies when \(n=1\), with the preceding sum empty.
Since \(\m_1=0\), the initial momentum error satisfies
\begin{align*}
 e_1
 &=\E\|(1-\mom)\ghat_B(\w_1)-\g(\w_1)\|\\
 &=\E\|(1-\mom)\varepsilon_1-\mom\g(\w_1)\|\\
 &\leq(1-\mom)\E\|\varepsilon_1\|+\mom\|\g(\w_1)\|\\
 &\leq(1-\mom)\frac{\sigma}{\sqrt B}+\mom\|\g(\w_1)\|.
\end{align*}
Finally, uniform independent sampling of \(a\), division by \(n\gamma\),
and \(n=\tau/B\) give
\begin{align*}
 \E\mathcal G_{\mathcal C}(\w_a)
 &=\frac1n\sum_{k=1}^n\E\mathcal G_{\mathcal C}(\w_k)\\
 &\leq\frac{\Delta_1}{n\gamma}
       +\frac{D\omega e_1}{n(1-\mom)}
       +L_{\mathrm{sm}}D^2\gamma
          \left(\frac12+\frac{\omega\mom}{1-\mom}\right)
       +\frac{D\sigma}{\sqrt B}\\
 &\leq\frac{B\Delta_1}{\tau\gamma}
       +L_{\mathrm{sm}}D^2\gamma
          \left(\frac12+\frac{\omega\mom}{1-\mom}\right)
       +\frac{D\sigma}{\sqrt B}\\
 &\quad+\frac{DB\omega}{(1-\mom)\tau}
       \left((1-\mom)\frac{\sigma}{\sqrt B}+\mom\|\g(\w_1)\|\right),
\end{align*}
which is \eqref{eq:sfw_gap_bound}.
\end{proof}

Only the first two terms in \eqref{eq:sfw_gap_bound} depend on \(\gamma\).
To make the tradeoff described in Section~\ref{sec:scaling-approaches} explicit, write \(c_1=\Delta_1\), \(c_2=L_{\mathrm{sm}}D^2[\tfrac12+\omega\mom/(1-\mom)]\), and let \(r\) collect the noise and initialization terms.
At batch size \(\kappa B_0\), the bound is then
\begin{equation}
\mathbb E\mathcal G_{\mathcal C}(\w_a)\leq \frac{c_1\kappa B_0}{\tau\gamma}+c_2\gamma+r(\kappa B_0).
\label{eq:sfw_balance_appendix}
\end{equation}
For \(c_1,c_2>0\), differentiation before imposing the step-size constraints gives \(c_2\gamma^2=c_1\kappa B_0/\tau\), hence
\[
\gamma_\star(\kappa B_0)=\sqrt{\frac{c_1\kappa B_0}{\tau c_2}}=\sqrt{\kappa}\,\gamma_\star(B_0).
\]
At fixed weight decay, \(\gamma=\lr\wdc\) is proportional to learning rate, so the same scaling holds for the learning-rate minimizer.
For \(\Delta_1>0\) and \(L_{\mathrm{sm}}>0\), minimizing them subject to the theorem gives
\begin{equation} \gamma_{\mathrm{cvx}}(\batch) = \min\left\{ \sqrt{ \frac{\batch\Delta_1}{ \tausamp L_{\mathrm{sm}}D^2 \left( \frac12+ \frac{\omega\mom}{1-\mom} \right)} }, 1 \right\}. \label{eq:convex_scaling_rule} \end{equation}
Thus \(\lr_{\mathrm{cvx}}=\gamma_{\mathrm{cvx}}/\wdc\) grows as \(\sqrt{\batch}\) until the cap \(\gamma\leq1\) becomes active.
If \(\Delta_1=0\), the bound is minimized in the limit \(\gamma\downarrow0\); the nontrivial square-root rule concerns \(\Delta_1>0\).
This calculation keeps the momentum parameters fixed.
It gives a learning rate rule and does not imply that an optimizer state statistic is preserved and transferring a tuned reference learning rate by this rule does not assert that the reference rate minimizes the bound.

\subsection{Comparison with the published prescriptions}
\label{sec:scaling_rule_survey}
The MuonW and Adam/W tables in the main text (\Cref{tab:scaling-recipes}) report finite-change formulas at \(B'=\kappa B\), distinguishing published prescriptions from our conversions and empirical selections.
A dash means that the cited criterion does not specify the coordinate.
For a generic retention coefficient \(q\in(0,1)\), exact sample-memory preservation gives \(q'=q^\kappa\), whereas linear refresh scaling gives \(q'=1-\kappa(1-q)\).
The former preserves the half-life measured in processed samples, \(B\log 2/(-\log q)\).
The two formulas agree only to first order in \(1-q\), for fixed \(\kappa\).
For a fixed exponent \(r\), the expansion \(q^r=1-r(1-q)+\tfrac12r(r-1)(1-q)^2+O((1-q)^3)\) as \(q\to1\) gives the power forms displayed in \Cref{tab:scaling-recipes}, with \(r=\kappa\) or \(r=\sqrt\kappa\).
More generally, for an exponent \(a\), affine refresh scaling \(q'=1-\kappa^a(1-q)\) requires \(0<\kappa^a(1-q)<1\) to keep the target retention in \((0,1)\).
The Power Lines entries use a free learning-rate exponent \(p\), with the associated product constraint derived below in \eqref{eq:powerline_scaling_family}.
\Cref{tab:five-recipes} uses the source formulas rather than replacing affine refresh rules by exact powers. We will discuss the related work in more detail below.

\begin{table}[!t]
  \centering
  \scriptsize
  \setlength{\tabcolsep}{2pt}
  \renewcommand{\arraystretch}{1.2}
  \resizebox{\textwidth}{!}{%
  \begin{tabular}{@{}lccccccc@{}}
    \toprule
    \textbf{Method} & \multicolumn{3}{c}{\textbf{MuonW}} & \multicolumn{4}{c}{\textbf{Adam / AdamW}} \\
    \cmidrule(lr){2-4}\cmidrule(lr){5-8}
    & $\eta'$ & $\mu'$ & $\lambda'$ & $\eta'$ & $\beta_1'$ & $\beta_2'$ & $\lambda'$ \\
    \midrule
    \multicolumn{8}{@{}l}{\textit{Invariant preservation}} \\
    \citet{islamov2026role}\textsuperscript{$\ddagger$} & $\kappa\eta$ & $1-\kappa(1-\mu)$ & $\lambda$ & -- & -- & -- & -- \\
    Adam SDE~\citep{malladi2022sdes} & -- & -- & -- & $\sqrt\kappa\,\eta$ & $1-\kappa(1-\beta_1)$ & $1-\kappa(1-\beta_2)$ & -- \\
    Completed transfer~\citep{mlodozeniec2025completed} & -- & -- & -- & $\sqrt\kappa\,\eta$ & $1-\kappa(1-\beta_1)$ & $1-\kappa(1-\beta_2)$ & $\sqrt\kappa\,\lambda$ \\
    Second-moment memory~\citep{marek2025smallbatch} & -- & -- & -- & -- & $\beta_1$ & $\beta_2^\kappa$ & -- \\
    Power Lines~\citep{bergsma2025powerlines}\textsuperscript{*} & $\kappa^p\eta$ & $\mu$ & $\kappa^{1-p}\lambda$ & $\kappa^p\eta$ & $\beta_1$ & $\beta_2$ & $\kappa^{1-p}\lambda$ \\
    Loss and relative speeds~\citep{compagnoni2025adaptive} & -- & -- & -- & $\sqrt\kappa\,\eta$ & $1-\sqrt\kappa(1-\beta_1)$ & $1-\sqrt\kappa(1-\beta_2)$ & $\sqrt\kappa\,\lambda$ \\
    Heuristic noise preservation~\citep{cesista2025sdcbs}\textsuperscript{$\dagger$} & $\kappa\eta$ & -- & -- & $\kappa\eta$ & -- & -- & -- \\
    Our Muon diffusion (\Cref{thm:sde_scaling}) & $\kappa\eta$ & $\mu^\kappa$ & $\lambda$ & -- & -- & -- & -- \\
    \midrule
    \multicolumn{8}{@{}l}{\textit{Bound minimization}} \\
    \citet{shulgin2026deriving} & $\sqrt\kappa\,\eta$ & $\mu$ & -- & -- & -- & -- & -- \\
    \citet{bu2026convexdominance} & $\sqrt\kappa\,\eta$ & -- & -- & $\sqrt\kappa\,\eta$ & -- & -- & -- \\
    \citet{schaipp2025surprising} & -- & -- & -- & $\sqrt\kappa\,\eta$ & -- & -- & -- \\
    Our Frank--Wolfe bound (\Cref{sec:convex}) & $\sqrt\kappa\,\eta$ & $\mu$ & $\lambda$ & -- & -- & -- & -- \\
    \midrule
    \multicolumn{8}{@{}l}{\textit{Stability}} \\
    \citet{granziol2022learningrates} & -- & -- & -- & $\sqrt\kappa\,\eta$ & -- & -- & -- \\
    \midrule
    \multicolumn{8}{@{}l}{\textit{Empirical selections}} \\
    \citet{islamov2026role}\textsuperscript{\S} & $\eta$ & $\mu$ & $\lambda$ & -- & -- & -- & -- \\
    LLM: $216$-rule selection & $\eta$ & $\mu$ & $\kappa\lambda$ & $\eta$ & $\beta_1$ & $\beta_2^\kappa$ & $\lambda$ \\
    CIFAR-5M: $648$-rule selection & $\sqrt\kappa\,\eta$ & $\mu$ & $\lambda$ & $\sqrt\kappa\,\eta$ & $\beta_1$ & $\beta_2$ & $\lambda$ \\
    \bottomrule
  \end{tabular}%
  }
  \caption{\textbf{Finite-change batch-size prescriptions at $B'=\kappa B$.}
    Each cell gives the target value in terms of the reference value for its parameter group; dashes mean unspecified.
    Adam SDE and Completed transfer also scale $\epsilon'=\epsilon/\sqrt\kappa$.
    Our experimental SDE candidate instead uses $\beta_i'=\beta_i^\kappa$ and fixed $\epsilon$.
    $*$: the Power Lines product constraint leaves $p$ free; its favored choice is $p=0$.
    Its MuonW entries are our extension with fixed momentum, while fixed Adam moments are an empirical choice adopted from the paper.
    Fixed $\beta_1$ in the second-moment row is also an empirical choice adopted from the paper.
    $\dagger$: our fixed-token-budget conversion of the heuristic in \citet{cesista2025sdcbs}, assuming per-step update-noise variance is proportional to $\eta^2/B$ with a fixed proportionality factor; the source's empirical square-root rule is distinct from this conversion.
    Our Muon diffusion theorem holds $\omega$ fixed independently of $\mu$; using it for tied MuonW is an additional approximation.
    Our bound rule assumes an exact oracle and fixed $\omega$, before the effective step-size cap is active.
    $\ddagger$: our fixed-budget conversion preserves a common target bound on expected suboptimality, with $0<\kappa(1-\mu)<1$ and the source theorem's assumptions, budget, and initialization conditions satisfied at both batches; the oracle is exact and Nesterov mixing is absent.
    \S: the same learning rate is best in the source's Scion grid at $512$--$4096$ sequences per batch, with sequence length $1024$ and a $1.3$B-token budget; momentum and weight decay are fixed by design, with the entries expressed in MuonW coordinates.
    \citet{shulgin2026deriving} use a large-horizon bound at fixed momentum for an exact oracle without Nesterov or weight decay.
    \Cref{sec:scaling_rule_survey} details these conditions.
  }
  \label{tab:five-recipes}
\end{table}

The Adam SDE prescription of \citet{malladi2022sdes} scales \(\eta'=\sqrt\kappa\,\eta\), \(\beta_i'=1-\kappa(1-\beta_i)\) for \(i=1,2\), and the denominator stabilizer \(\epsilon'=\epsilon/\sqrt\kappa\).
Completed Hyperparameter Transfer adds the AdamW rule \(\lambda'=\sqrt\kappa\,\lambda\) at fixed model size and token budget \citep{mlodozeniec2025completed}.
These prescriptions match a continuous-time model; their use for discrete trajectories depends on the respective SDE approximation conditions.
Our experimental SDE candidate instead uses the exact-decay variant \(\beta_i'=\beta_i^\kappa\) and keeps \(\epsilon\) fixed.
Its moment transport agrees with the source prescription up to first order in the refresh rates and keeps the $\beta_i$ positive, and it does not apply the prescribed \(\epsilon\) transport.
The tables omit \(\epsilon\).

\citet{compagnoni2025adaptive} preserve an asymptotic-loss upper bound and the relative speeds of the parameter and moment states when the SDE is run for infinite continuous time.
Their prescription is \(\eta'=\sqrt\kappa\,\eta\), \(\lambda'=\sqrt\kappa\,\lambda\), and \(\beta_i'=1-\sqrt\kappa(1-\beta_i)\).
The supporting bound assumes a strongly convex, smooth objective with optimum at zero and isotropic noise in a noise-dominated regime where squared gradient coordinates are \(O(\eta)\).
It therefore supplies a different, conditional invariant from the trajectory criterion above.

\citet[Section~5]{cesista2025sdcbs} propose preserving the accumulated update-noise variance for steepest descent under arbitrary norms.
Their argument assumes a locally Lipschitz map from momentum to update direction, a heuristic second-moment noise estimate, and additive noise variance across steps.
The assumed per-step update-noise variance is proportional to \(\eta^2/B\), with the proportionality factor held fixed.
For \(T\) optimizer steps, preserving the accumulated variance therefore holds \(T\eta^2/B\) fixed, giving
\[
    \frac{\eta'}{\eta}=\sqrt{\frac{B'/B}{T'/T}}.
\]
At fixed step count, this gives \(\eta'=\sqrt\kappa\,\eta\).
At fixed token budget \(\tau=BT\), however, \(B'=\kappa B\) implies \(T'=T/\kappa\), so the same heuristic gives \(\eta'=\kappa\eta\).
The corresponding row in \Cref{tab:five-recipes} is our fixed-budget conversion, not a separate theorem for AdamW or a joint prescription for momentum, moment coefficients, and weight decay.
The blog's experiments instead support square-root learning-rate scaling for AdamW and Muon at fixed token budgets \citep[Section~6]{cesista2025sdcbs}; that empirical prescription should be distinguished from the fixed-budget consequence of its heuristic invariant.

For batch-size transfer, \citet{marek2025smallbatch} prescribe \(\beta_2'=\beta_2^\kappa\) to preserve second-moment memory in processed tokens.
When batch size counts sequences, this conversion assumes fixed sequence length.
They keep \(\beta_1\) fixed in their experiments; that choice does not follow from the second-moment invariant.
The rule does not specify a universal learning-rate or weight-decay scaling.

\phantomsection\label{sec:powerline_invariant}
Power Lines constrains the learning-rate--weight-decay product through parameter averaging \citep{bergsma2025powerlines}.
Writing AdamW's adaptive direction as \(u_t=\widehat m_t/(\sqrt{\widehat v_t}+\epsilon)\), with bias-corrected first and second moments \(\widehat m_t,\widehat v_t\), its update for \(0<\eta\lambda<1\) is
\begin{equation}
    \theta_{t+1}=(1-\eta\lambda)\theta_t
                 +\eta\lambda(-u_t/\lambda).
    \label{eq:powerline_parameter_ema}
\end{equation}
This is an exponential moving average of rescaled updates with nominal timescale \(1/(\eta\lambda)\) steps.
Relative to the \(\tau/B\) training steps, the timescale is
\begin{equation}
    \tau_{\text{Power Lines}}=\frac{B}{\eta\lambda\tau}.
    \label{eq:powerline_timescale}
\end{equation}
Preserving it at fixed model size and budget requires \(\eta'\lambda'=\kappa\eta\lambda\).
We express this product constraint as the family
\begin{equation}
    \eta'=\kappa^p\eta,\qquad \lambda'=\kappa^{1-p}\lambda,
    \label{eq:powerline_scaling_family}
\end{equation}
where \(p\) is a free exponent; Power Lines favors \(p=0\), or fixed learning rate and linearly scaled weight decay.
The prescription uses peak learning rate with a common schedule shape.
Its nominal timescale approximates discrete decay when \(\eta\lambda\) is small; exact retention involves products of \(1-\eta_t\lambda\).
The preferred timescale also drifts empirically at large batches and with tokens per model parameter.
The same averaging identity applies to MuonW after substituting its direction for \(u_t\).
The MuonW table entries are our extension of this identity with fixed \(\mu\).

Optimization-bound prescriptions depend on the optimizer and budget assumptions used to derive them.
At fixed momentum, the large-horizon analysis with an exact linear minimization oracle by \citet{shulgin2026deriving} gives \(\eta\propto\sqrt B\) at fixed data budget.
Its analyzed recursion has neither Nesterov mixing nor decoupled weight decay; holding momentum fixed is a condition of this comparison, not a derived momentum optimum.
\citet{kovalev2025understanding,pethick2025training} provide related normalized-gradient and linear-minimization-oracle analyses.
For \citet{bu2026convexdominance,schaipp2025surprising}, the table's dagger marks our conversion of an asymptotic peak-learning-rate rule \(\eta_{\rm peak}\propto n^{-1/2}\) using \(n=\tau/B\).
This yields \(\eta_{\rm peak}\propto\sqrt B\) only when the schedule shape and the coefficients in the bound or fitted loss model are independent of batch size.
Both works prove bounds for convex SGD and test predictions with AdamW; \citet{bu2026convexdominance} also study Muon with normalized SGD on non-matrix parameters.

We obtain two fixed-budget prescriptions from \citet{islamov2026role}, one by preserving a target convergence guarantee and the other from their experiments.
Their convergence analysis assumes a Kurdyka--\L{}ojasiewicz condition.

To compare both prescriptions with our rules, we first express their parameters in MuonW coordinates.
Their batch size and sequence length product, denoted $BS$ in their paper, corresponds to our token batch size $B$.
Let $\gamma_{\rm FW}$ be their Frank--Wolfe step size, $\alpha$ their momentum refresh rate, and $R$ the radius of the spectral-norm ball used for a matrix update.
For an exact spectral-norm oracle, a fixed shape factor $a_{\rm shape}$, and no Nesterov mixing, the MuonW parameters are
\begin{equation}
    \eta=\frac{R\gamma_{\rm FW}}{a_{\rm shape}},\qquad
    \lambda=\frac{a_{\rm shape}}{R},\qquad
    \mu=1-\alpha.
    \label{eq:islamov_coordinates}
\end{equation}

For the theoretical prescription, fix the sample budget $\tau$ and a target upper bound $\varepsilon$ on expected suboptimality.
Let $\Delta$ be the initial suboptimality, and let $c,C_\alpha,C_R>0$ collect constants held fixed across batch sizes.
With $\tau/B$ updates and gradient-noise variance proportional to $1/B$, the full theorem in their Appendix~D prescribes
\begin{equation}
    \gamma_{\rm FW}=\frac{cB}{\tau},\qquad
    \alpha=\min\{1,C_\alpha B\varepsilon^2\},\qquad
    R=C_R\log(2\Delta/\varepsilon).
    \label{eq:islamov_fixed_budget_parameters}
\end{equation}
Holding $\tau$, $\varepsilon$, and $\Delta$ fixed and increasing the batch size to $B'=\kappa B$ therefore gives, before the cap on $\alpha$ is active,
\begin{equation}
    \eta'=\kappa\eta,\qquad
    \mu'=1-\kappa(1-\mu),\qquad
    \lambda'=\lambda.
    \label{eq:islamov_fixed_budget_transfer}
\end{equation}
This rule requires the original theorem's assumptions, iteration-budget conditions, and initialization conditions to hold at both batch sizes, with $0<\kappa(1-\mu)<1$.
It preserves the same target error guarantee; it does not require equality of the tightest bounds or of the training trajectories.
This is the sense in which we classify it as invariant preservation.

The rule scales the learning rate and momentum refresh rate $1-\mu$ linearly with batch size, agreeing to first order with our SDE prescription.
\Cref{tab:scaling-recipes} displays the momentum rule as $\mu^\kappa\approx1-\kappa(1-\mu)$, while \Cref{tab:five-recipes} gives the exact affine formula.
In their analysis, momentum reduces gradient noise, so larger batches allow less momentum averaging while retaining the same error guarantee.
This prediction differs from our empirical preference for fixed momentum (\Cref{sec:scaling-discussion}).

Their theoretical recommendation to keep momentum fixed concerns a different comparison, in which the sample budget also grows.
If the target error changes with batch size, \Cref{eq:islamov_fixed_budget_parameters} instead gives $\alpha'/\alpha=\kappa(\varepsilon'/\varepsilon)^2$ before saturation.
The update radius also changes with the target error.
Along their growing-budget rule $B\propto\tau^{2/3}$, the target error scales as $\tau^{-1/3}$ up to logarithmic factors, motivating fixed momentum.
Their joint batch-size and token-budget experiments in Section~6.7 study this setting.

The empirical prescription comes from their fixed-budget Scion experiments \citep[Section~6.3 and Table~1]{islamov2026role}.
They sweep the Frank--Wolfe step size for a $124$M-parameter language model trained on $1.3$B tokens with sequence length $1024$, fixing the momentum refresh rate at $0.1$ and the update radii.
At each of $512$, $1024$, $2048$, and $4096$ sequences per batch, the best tested step size is $3.6\times10^{-4}$.
By \Cref{eq:islamov_coordinates}, these choices give $\eta'=\eta$, $\mu'=\mu$, and $\lambda'=\lambda$, so the empirical prescription amounts to no scaling over this range.
Here fixed momentum and weight decay are experimental settings, while the fixed learning rate summarizes the minima of the tested grid.
The grid selects different step sizes at smaller batches.
A common best tested step size also does not imply equal performance across batches.

Our language-model sweep includes both prescriptions, and both underperform the Power Lines prescription across target batches.
The latter scales only weight decay for matrix parameters.

The stability analysis of \citet{granziol2022learningrates} bounds a different learning-rate quantity: the maximal stable learning rate.
For Adam, small damping means a small additive denominator stabilizer \(\epsilon\).
The square-root rule applies when this damping is small and directions near the edge of the Hessian spectral bulk dominate the stability restriction \citep[Section~11]{granziol2022learningrates}.
Writing \(N_{\rm data}\) for the dataset size, the batch dependence is through \(B/(1-B/N_{\rm data})\).
The table's \(\sqrt\kappa\) approximation additionally assumes \(B,B'\ll N_{\rm data}\).
This result neither specifies AdamW weight-decay transport nor minimizes the stationarity bound in \Cref{sec:convex}.

\section{Noisy Quadratic Model: Analysis and Experiments}
\label{app:nqm_details}
\label{app:nqm_experiments}

This appendix pairs the noisy quadratic analyses with their numerical
protocols and supporting figures. We begin with the exact finite-budget
risks and the proof of \Cref{thm:nqm_crossover_main}, followed by
finite-budget SignSGD batch-size transfer, then learning-rate
scaling for momentum-SignSGD and RMSProp.

\subsection{Finite-budget risks and optimizer comparisons}
\label{sec:optimizer_choice_batch_noise}
\label{sec:noisyquadratic}

We now turn from transporting a fixed optimizer to choosing an optimizer at a
given batch size, focusing on the noisy quadratic model of
\citet{zhang2019algorithmicchoicesmatterbatch}.  A batch size determines the
signal-to-noise ratio of the minibatch gradient in each direction of the
landscape.  At a fixed training horizon \(T\), it also determines the number of
optimizer updates through \(T/\batch\).  The natural comparison is therefore
not whether one scaling rule preserves a metric, but which optimizer family
achieves the smallest tuned finite-budget risk at that batch size.  The
diagonal noisy quadratic is deliberately minimal, but it already separates
these two questions because each coordinate has its own curvature, noise level,
and signal-to-noise ratio.

\subsubsection{Setup, noise ratios, and tuned finite-budget risk}

\textbf{Noisy quadratic model.}
Let \(f\) be a diagonal quadratic objective with Hessian \(\Hmat\), and let
\(h_i>0\) be the curvature in coordinate \(i\):
\begin{equation}
    f(\w) = \frac{1}{2}\w^\top \Hmat \w
    =
    \frac{1}{2}\sum_{i=1}^d h_i \w_i^2,
\end{equation}
Write \(\ghat_{\batch}(\w_k)\) for the minibatch gradient query at batch size
\(\batch\), and write \(\Sig\) for the single-example gradient-noise
covariance.  With
\(\sigma_{\batch}:=\batch^{-1/2}\), the query has the form
\begin{equation}
    \ghat_{\batch}(\w_k)
    =
    \Hmat\w_k+\sigma_{\batch} z_k,
    \qquad
    \E[z_k]=0,
    \qquad
    \Cov(z_k)=\Sig.
\end{equation}
Assume that \(\Hmat\) and \(\Sig\) are simultaneously diagonalizable.  In their
common eigenbasis, write \(h_i\) and \(c_i\) for the diagonal entries:
\[
    \Hmat=\diag(h_1,\ldots,h_d),
    \qquad
    \Sig=\diag(c_1,\ldots,c_d),
\]
Writing \(z_{k,i}=\sqrt{c_i}\zeta_{k,i}\), where
\(\zeta_{k,i}\) is standardized, the \(i\)-th minibatch gradient coordinate is
therefore
\begin{equation}
    [\ghat_{\batch}(\w_k)]_i
    =
    h_i \w_{k,i}
    +
    \sqrt{\frac{c_i}{\batch}}\,\zeta_{k,i},
    \qquad
    \E[\zeta_{k,i}]=0,
    \qquad
    \Var(\zeta_{k,i})=1.
\end{equation}
The coordinate minibatch-noise variance is \(c_i/\batch\).  Define the
curvature-normalized noise ratio by
\begin{equation}
    \chi_i:=\frac{c_i}{h_i^2}.
    \label{eq:nqm_noise_ratios}
\end{equation}
For an initial point \(\w_0\), the initial coordinate signal-to-noise ratio is
\begin{equation}
    \mathrm{SNR}_{i,0}(\batch)
    :=
    \frac{\batch h_i^2\w_{0,i}^2}{c_i}.
    \label{eq:nqm_initial_snr}
\end{equation}
Increasing the batch size raises this initial signal-to-noise ratio, but it
does so unevenly across coordinates when the ratios \(c_i/h_i^2\) differ.

\textbf{Tuned finite-budget risk.}
\label{sec:nqm_tuned_risk}

Let \(T\) denote the fixed training horizon, measured in
examples or tokens.  At batch size \(\batch\), define the update count as \(S:=T/\batch\).
We derive the finite-step formulas for integer \(S\) and ignore floor/ceiling
effects when evaluating them in batch-size sweeps.  Let \(A\) be an optimizer
family, such as SGD, SignSGD, RMSProp, or a stochastic Newton method, and let
\(\nu\in\Lambda_A\) denote its tunable hyperparameters.  Write
\(\w_S^{A,\nu}(\batch)\) for the iterate produced by running \(A\) at
batch size \(\batch\) for \(S\) minibatch-gradient queries.  The
batch-\(\batch\) tuned risk of optimizer family \(A\) at horizon \(T\) is
\begin{equation}
    \mathcal R_A^\star(\batch;T)
    :=
    \inf_{\nu\in\Lambda_A}
    \E\!\left[
        f\!\left(\w_S^{A,\nu}(\batch)\right)
    \right]
    =
    \inf_{\nu\in\Lambda_A}
    \frac{1}{2}
    \sum_{i=1}^d
    h_i
    \E\!\left[
        \left(\w_{S,i}^{A,\nu}(\batch)\right)^2
    \right].
    \label{eq:nqm_tuned_finite_budget_risk}
\end{equation}
Whenever the minimum is attained, write an optimal hyperparameter choice as
\begin{equation}
    \nu_A^*(\batch;T)
    \in
    \operatorname*{arg\,min}_{\nu\in\Lambda_A}
    \E\!\left[
        f\!\left(\w_S^{A,\nu}(\batch)\right)
    \right],
    \label{eq:nqm_tuned_hyperparameter}
\end{equation}
so that
\(\mathcal R_A^\star(\batch;T)
=
\E[f(\w_S^{A,\nu_A^*(\batch;T)}(\batch))]\).
The optimizer-choice problem is to compare
\(\mathcal R_A^\star(\batch;T)\) across optimizer families at the same
\((\batch,T)\).  This criterion deliberately allows the hyperparameters to be
retuned at each batch size.

\subsubsection{Risk decomposition and crossover proof}
\label{app:nqm_crossover_proof}

We prove \Cref{thm:nqm_crossover_main} for the initialization
\(\w_0\sim\normal(0,\identity_2)\) and independent Gaussian oracle draws
specified in the main text.
The two-dimensional construction is $H=\diag(1/(T+1),1)$ and $\Sig=\diag(1/(4T),0)$.
For a positive diagonal preconditioner $P=\diag(p_i)$, use
\begin{equation}
 m_{t+1}=\mu m_t+P\widehat g_B(\w_t),\qquad
 \w_{t+1}=\w_t-\eta m_{t+1},\qquad m_0=0.
 \label{eq:nqm_main_update}
\end{equation}
Here $m_t$ is an unnormalized heavy-ball buffer, with unit coefficient on the preconditioned gradient.
SGD uses $P=I$ and Newton uses $P=H^{-1}$.

To express this preconditioning as a change of coordinates, let
$\widetilde\w=P^{-1/2}\w$ and
$\widetilde m=P^{-1/2}m$.
The transformed gradient oracle is
$\widetilde g_B(\widetilde\w)=P^{1/2}\widehat g_B(P^{1/2}\widetilde\w)$,
so the transformed curvature and single-example noise variance in direction $i$
are $\widetilde h_i=p_i h_i$ and $\widetilde c_i=p_i c_i$.
For $c_i>0$, the transformed curvature-to-noise ratio is therefore
\begin{equation}
 \widetilde{\mathrm{CNR}}_i
 =\frac{\widetilde h_i^2}{\widetilde c_i}
 =p_i\frac{h_i^2}{c_i}
 =p_i\,\mathrm{CNR}_i.
\end{equation}
Thus each preconditioned optimizer can be represented as ordinary momentum SGD
on its corresponding transformed quadratic.
The original-coordinate ratios remain unchanged.
The initialization also transforms, with
$\E[\widetilde\w_{0,i}^2]=1/p_i$ for the initialization above.

We will first derive the risk without momentum.
We then bound Newton's noise response uniformly over learning rate and
momentum, compare it with one feasible SGD configuration at \(\batch=1\),
and show the comparison reverse at the one-update batch \(\batch=T\).

We overload the optimizer-family subscript with \(P\) to denote
preconditioned stochastic gradient descent (PSGD)~\citep{li2018preconditioned}
with fixed preconditioner \(P\) and no momentum (\(\mu=0\)):
\[
    \w_{t+1}=\w_t-\eta P\ghat_{\batch}(\w_t),
    \qquad \eta>0.
\]
Write \(\mathcal R_P(\batch,S,\eta)\) for its expected loss after \(S\) updates
at learning rate \(\eta\).  Tuning only the learning rate at fixed budget \(T\) gives
\[
    \mathcal R_P^\star(\batch;T)
    :=\inf_{\eta>0}\mathcal R_P(\batch,S,\eta),
\]

\begin{lemma}[Risk of diagonal preconditioning]
\label{lem:nqm_preconditioned_risk}
For \(P=\diag(p_i)\), \(p_i>0\), and \(\mu=0\), define
\(q_i=p_i h_i\), \(x_i=\eta q_i\),
\(a_S(x)=(1-x)^{2S}\), and
\(v_S(x)=x^2\sum_{j=0}^{S-1}(1-x)^{2j}\).
The expected loss after \(S\) updates is, for every \(\eta>0\),
\begin{equation}
 \mathcal R_P(B,S,\eta)=\frac12\sum_i\left[h_i a_S(x_i)+\frac{c_i}{Bh_i}v_S(x_i)\right].
 \label{eq:nqm_preconditioned_risk}
\end{equation}
In particular, at fixed budget \(T\),
\begin{equation}
    \mathcal R_P^\star(\batch;T)
      =\inf_{\eta>0}\frac12\sum_i
        \left[h_i a_{T/\batch}(\eta q_i)
          +\frac{c_i}{\batch h_i}v_{T/\batch}(\eta q_i)\right].
    \label{eq:nqm_preconditioned_tuned_risk}
\end{equation}
\end{lemma}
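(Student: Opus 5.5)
The plan is a direct coordinatewise computation. Since $P$ and $H$ are diagonal and the noise coordinates are independent, the PSGD recursion decouples across coordinates. Substituting the oracle gives, in coordinate $i$,
\[
 \w_{t+1,i}=(1-\eta p_ih_i)\w_{t,i}-\eta p_i\sqrt{c_i/B}\,\zeta_{t,i}=(1-x_i)\w_{t,i}-\eta p_i\sqrt{c_i/B}\,\zeta_{t,i}.
\]
This is a scalar linear recursion driven by fresh zero-mean, unit-variance noise that is independent of $\w_{t,i}$. That independence holds because each update uses new draws, independent of initialization and of the past.

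\textbf{Unrolling the recursion.} First, unroll over $S$ steps:
\[
 \w_{S,i}=(1-x_i)^S\w_{0,i}-\eta p_i\sqrt{c_i/B}\sum_{t=0}^{S-1}(1-x_i)^{S-1-t}\zeta_{t,i}.
\]

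\textbf{Second moment.} Next, take the second moment. The cross terms vanish: each $\zeta_{t,i}$ has mean zero and is independent of $\w_{0,i}$ and of $\zeta_{s,i}$ for $s\neq t$. Using $\E[\w_{0,i}^2]=1$ for the standard Gaussian initialization fixed in this subsection, this gives
\[
 \E[\w_{S,i}^2]=(1-x_i)^{2S}+\eta^2p_i^2\frac{c_i}{B}\sum_{j=0}^{S-1}(1-x_i)^{2j}.
\]

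\textbf{Rewriting in terms of $v_S$.} The only point needing care is expressing the noise term through $v_S(x_i)$. Since $\eta p_i=x_i/h_i$, we have $\eta^2p_i^2 c_i/B=x_i^2c_i/(Bh_i^2)$. Multiplying by $h_i/2$ in the loss $\tfrac12\sum_i h_i\E[\w_{S,i}^2]$ then yields exactly
\[
 \tfrac12\bigl[h_ia_S(x_i)+\tfrac{c_i}{Bh_i}v_S(x_i)\bigr].
\]
Summing over $i$ gives \eqref{eq:nqm_preconditioned_risk}. Nothing here requires $x_i<1$ or even $x_i<2$: the formula is a finite sum valid for every $\eta>0$, since divergent choices simply give large values.

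\textbf{Tuned risk.} The tuned-risk identity \eqref{eq:nqm_preconditioned_tuned_risk} follows by substituting $S=T/B$ and taking the infimum over $\eta>0$, which is just the definition of $\mathcal R_P^\star$.

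\textbf{Main obstacle.} The main (mild) obstacle is bookkeeping of the initialization second moment. If a general initialization were intended, the bias term would carry a factor $\E[\w_{0,i}^2]$. I would state explicitly that the subsection's $\w_0\sim\normal(0,\identity_2)$ gives the factor one, and likewise that the oracle noise is independent of the current iterate, so the variances add.
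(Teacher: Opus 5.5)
Your proposal is correct and follows the paper's proof essentially step for step. Both arguments use the decoupled scalar recursion, drop the cross terms by independence of the fresh noise from the initialization and past draws, and use $\mathbb{E}[\theta_{0,i}^2]=1$. Both then multiply by $h_i/2$ with $\eta^2p_i^2=x_i^2/h_i^2$ and take the infimum over $\eta>0$ at $S=T/B$; your explicit unrolling only writes out bookkeeping the paper leaves implicit.
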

\begin{proof}
The coordinate recursion and its second moment are
\begin{align*}
    \w_{t+1,i}&=(1-x_i)\w_{t,i}
                  -\eta p_i\sqrt{c_i/\batch}\,\zeta_{t,i},\\
    \E[\w_{S,i}^2]&=(1-x_i)^{2S}
       +\eta^2p_i^2\frac{c_i}{\batch}
          \sum_{j=0}^{S-1}(1-x_i)^{2j}.
\end{align*}
Independence removes the cross terms, and \(\E[\w_{0,i}^2]=1\).
Multiplying by \(h_i/2\), using
\(\eta^2p_i^2=x_i^2/h_i^2\), and summing proves the risk formula.
Taking the infimum over the stated tuning domain gives
\eqref{eq:nqm_preconditioned_tuned_risk}.
\end{proof}

The variance term has a useful geometric interpretation.  Define the
noise-response vector
\(\kappa_S(x)=x(1,1-x,\ldots,(1-x)^{S-1})^\top\).
Its squared norm is \(v_S(x)\), and its entries sum to
\(1-(1-x)^S\).  Cauchy--Schwarz therefore gives
\begin{equation}
    v_S(x)\ge\frac{[1-(1-x)^S]^2}{S}.
    \label{eq:noise_response_cs}
\end{equation}
Reducing the initial bias requires a nonzero total response, which forces
nonzero noise variance.  The following proof extends this argument to runs with
momentum.

\begin{proof}[Proof of \Cref{thm:nqm_crossover_main}]
Set \(h=1/(T+1)\) and \(c=1/(4T)\), so the two coordinates have
curvatures \((h,1)\) and noise variances \((c,0)\).

\textbf{Small batch: \(\batch=1\).}
For Newton, every coordinate has effective curvature one.  For any
\(\eta>0\) and \(0\le\mu<1\), let \(b_j\) describe the response to a single
noise perturbation in the preconditioned gradient: an additive error
\(\varepsilon\) at update \(t\) contributes \(-b_j\varepsilon\) to that
parameter coordinate at time \(t+1+j\).
These noise-response coefficients are computed through the auxiliary recursion
\[
    r_{-1}=0,\quad r_0=1,\quad
    r_{j+1}=(1+\mu-\eta)r_j-\mu r_{j-1},\qquad
    b_j=\eta r_j.
\]
In particular, \(b_0=\eta\) is the immediate response.
Without gradient noise, the parameter after \(j\) updates is
\(\w_{j,i}=p_j\w_{0,i}\), where \(p_j=r_j-\mu r_{j-1}\).
Substitution into the recursion gives \(p_j-p_{j+1}=b_j\), hence
\(\sum_{j=0}^{T-1}b_j=1-p_T\).  Unrolling the Newton update gives
\[
    \w_{T,i}=p_T\w_{0,i}
        -\frac{\sqrt{c_i}}{h_i}
            \sum_{j=0}^{T-1}b_j\zeta_{T-1-j,i}.
\]
The same coefficients apply to both coordinates.
The independent noise draws contribute variance with weights \(b_j^2\).
Cauchy--Schwarz therefore gives
\begin{align}
    \mathcal R_{\mathrm{newton+mom}}(1,T;\eta,\mu)
       &=\frac{1+h}{2}p_T^2+\frac{c}{2h}\sum_{j=0}^{T-1}b_j^2\notag\\
       &\ge\frac{1+h}{2}p_T^2+\frac{c}{2hT}(1-p_T)^2\notag\\
       &\ge\frac{(1+h)c/(hT)}{2(1+h+c/(hT))}
        =\frac1{8T}+o(T^{-1}).
    \label{eq:newton_small_batch_simple_lb}
\end{align}
The last inequality minimizes a quadratic over all real \(p_T\).
The bound is uniform in \((\eta,\mu)\), so it applies to the tuned
risk and to the \(\mu=0\) subfamily.

For SGD, choose \(\eta=1\), \(\mu=0\).  The sharp-coordinate bias
vanishes in one step, while the flat coordinate has effective step \(h\).
By \eqref{eq:nqm_preconditioned_risk},
\begin{align*}
    \mathcal R_{\mathrm{sgd}}(1,T;1)
       &=\frac h2(1-h)^{2T}+\frac{c}{2(2-h)}[1-(1-h)^{2T}]\\
       &=\frac{1+7e^{-2}}{16T}+o(T^{-1}).
\end{align*}
Since \((1+7e^{-2})/16<1/8\), this feasible SGD risk is strictly below
the Newton lower bound for all sufficiently large \(T\).  Both SGD
families contain this configuration, proving both small-batch inequalities.

\textbf{Large batch: \(\batch=T\).}
Only one update is available.  Because \(m_0=0\), the momentum coefficient
has no effect on either method's iterate.  Newton with \(\eta=1\)
eliminates the initial bias, giving
\[
    \mathcal R_{\mathrm{newton}}^\star(T;T)
      =\mathcal R_{\mathrm{newton+mom}}^\star(T;T)
      \le\frac{c}{2hT}=\frac1{8T}+o(T^{-1}).
\]
Dropping SGD's nonnegative noise term and optimizing its bias over all real
learning rates gives the lower bound
\begin{align*}
    \mathcal R_{\mathrm{sgd}}^\star(T;T)
      &=\mathcal R_{\mathrm{sgd+mom}}^\star(T;T)\\
      &\ge\frac12\min_{\eta\in\mathbb R}
             \{h(1-\eta h)^2+(1-\eta)^2\}\\
      &=\frac{h(1-h)^2}{2(1+h^3)}
       =\frac1{2T}+o(T^{-1}).
\end{align*}
The quadratic is minimized at \(\eta=(1+h^2)/(1+h^3)\).
Its leading coefficient exceeds Newton's, proving both large-batch
inequalities for all sufficiently large \(T\).
\end{proof}

For a dyadic budget \(T=2^k\) in the theorem's range, at least one adjacent
pair of dyadic batches satisfies
\(\mathcal R_{\mathrm{sgd+mom}}^\star(B;T)
\le\mathcal R_{\mathrm{newton+mom}}^\star(B;T)\) and
\(\mathcal R_{\mathrm{sgd+mom}}^\star(2B;T)
>\mathcal R_{\mathrm{newton+mom}}^\star(2B;T)\).
Choose \(2B\) as the first dyadic batch where the difference is positive.
The preceding batch has nonpositive difference, allowing an intermediate tie.

\subsubsection{Exact risks for general initializations}

The SGD and Newton learning-rate tuning experiments in
\Cref{app:nqm_tuning_experiments} use a fixed initialization \(\w_0\).
We give the corresponding exact expected losses below, using the functions
\(a_S\) and \(v_S\) defined above.
For SGD with learning rate \(\eta\), coordinate \(i\) satisfies
\begin{equation}
    \E[\w_{S,i}^2]
    =
    a_S(\eta h_i)\w_{0,i}^2
    +
    \frac{c_i}{\batch h_i^2}v_S(\eta h_i).
    \label{eq:nqm_sgd_bias_variance}
\end{equation}
For Newton, the same calculation gives
\begin{equation}
    \E[\w_{S,i}^2]
    =
    a_S(\eta)\w_{0,i}^2
    +
    \frac{c_i}{\batch h_i^2}v_S(\eta).
    \label{eq:nqm_newton_bias_variance}
\end{equation}
Consequently, if
\[
    A_0=\sum_{i=1}^d h_i\w_{0,i}^2,
    \qquad
    V_N(\batch)=\frac1\batch\sum_{i=1}^d\frac{c_i}{h_i},
\]
then the exact Newton risk is
\begin{equation}
    \mathcal R_{\mathrm{newton}}(\batch,S;\eta)
    =
    \frac12\left[A_0a_S(\eta)+V_N(\batch)v_S(\eta)\right].
    \label{eq:nqm_newton_tuned_objective}
\end{equation}

\subsubsection{Learning-rate scaling under a finite budget}
\label{app:nqm_tuning_experiments}

\Cref{fig:nqm_learning_rate_scaling}
shows how the tuned learning rate changes with batch size for SGD, Newton,
SignSGD, and RMSProp at a fixed training budget \(T\). We compare these tuned rates with
the linear scaling rule anchored at \(\batch=2\).

\begin{figure}[t]
    \vspace{-1.5em}
    \centering
    \includegraphics[width=\textwidth]{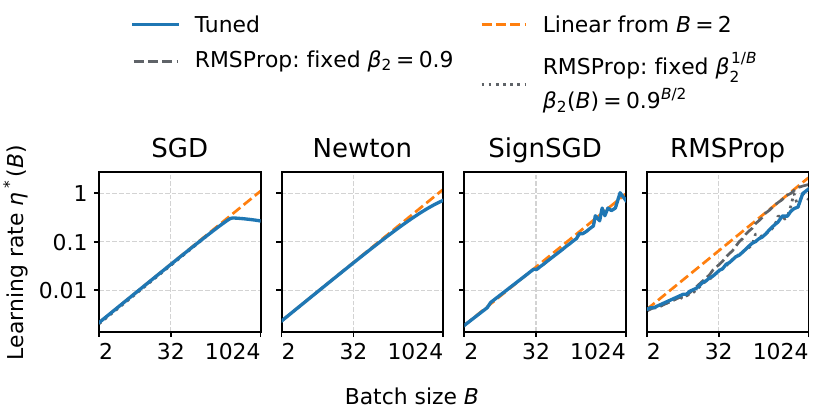}
    \caption{Tuned learning rates for SGD, Newton, SignSGD, and RMSProp
    (left to right) on the noisy quadratic with \(h=(0.2,1,5)\),
    \(c=(10,1,0.1)\), \(\w_0=(0.3,1,3)\), and \(T=4096\).
    Blue curves show per-batch tuned learning rates; orange dashed curves
    show linear scaling anchored at \(\batch=2\).
    The dotted gray SGD curve is the large-\(T\) approximation.
    For RMSProp, the blue curve jointly tunes \((\lr,\beta_2)\);
    the dashed gray curve fixes \(\beta_2=.9\), and the dotted gray curve
    keeps \(\beta_2^{1/\batch}\) fixed using \(\beta_2(\batch)=.9^{\batch/2}\).
    The RMSProp linear reference is anchored to the jointly tuned rate.}
    \label{fig:nqm_learning_rate_scaling}
\end{figure}

For SGD and Newton, we use the exact finite-\(S\) expressions in
\eqref{eq:nqm_sgd_bias_variance} and \eqref{eq:nqm_newton_tuned_objective},
respectively. The SignSGD panel uses no momentum
and no weight decay.  In coordinate \(i\), its scalar update is
\[
    \w_{k+1,i}=\w_{k,i}-\lr\operatorname{sign}
    \!\left(h_i\w_{k,i}+\sqrt{c_i/\batch}\,\zeta_{k,i}\right),
\]
where the \(\zeta_{k,i}\) are independent standard Gaussian draws.
A deterministic finite-volume recursion evaluates the terminal risk
\(\frac12\sum_i h_i\E[\w_{S,i}^2]\), with \(S=T/\batch\).
For the RMSProp panel, we use the deterministic mean-Fisher closure
\[
    \bar r_{k+1,i}
    =
    \beta_2\bar r_{k,i}
    +
    (1-\beta_2)
    \left(
        h_i^2 M_{k,i}
        +
        \frac{c_i}{\batch}
    \right),
    \qquad
    M_{k,i}:=\E[\w_{k,i}^2],
\]
together with the corresponding linear second-moment update under the
preconditioner \(\bar r_{k+1,i}^{-1/2}\).  We then tune RMSProp by a
two-dimensional grid sweep,
\[
    \inf_{\substack{\lr>0\\0\le \beta_2<1}}
    \mathcal R_{\mathrm{rms}}\!\left(\batch,\frac{T}{\batch};\lr,\beta_2\right),
\]
and plot two one-dimensional smoothing prescriptions as baselines:
\(\beta_2(\batch)\equiv .9\) and
\(\beta_2(\batch)^{1/\batch}=.9^{1/2}\), equivalently
\(\beta_2(\batch)=.9^{\batch/2}\).  Even in these simplified diagnostics, the
tuned steps do not remain on the linear reference anchored at \(\batch=2\):
Newton bends smoothly through the horizon \(T/\batch\), while SignSGD and
RMSProp show additional bends from sign reliability and the empirical-Fisher
feedback.

\subsubsection{Optimizer crossover in the noisy quadratic model}
\label{app:nqm_crossover_experiments}

We now compare optimizer families by their tuned terminal risk at a fixed
training budget \(T\). Their relative ranking changes with batch size,
exhibiting optimizer crossover even after each family is retuned.

\Cref{fig:nqm_momentum_loss_vs_batch} compares tuned SGD-M,
Newton-M, SignSGD-M, and Adam at the baseline NQM noise level;
\Cref{fig:nqm_optimizer_crossover} gives the corresponding
four-times-lower-noise variant.  In both experiments, each family is retuned at
every batch size over \(\lr\) and first-moment retention (\(\mom\) for SGD, Newton, and SignSGD; \(\beta_1\) for Adam), with Adam additionally tuning
\(\beta_2\).  Momentum improves the small-batch SignSGD behavior.

\begin{figure}[t]
    \vspace{-1.5em}
    \centering
    \includegraphics[width=\textwidth]{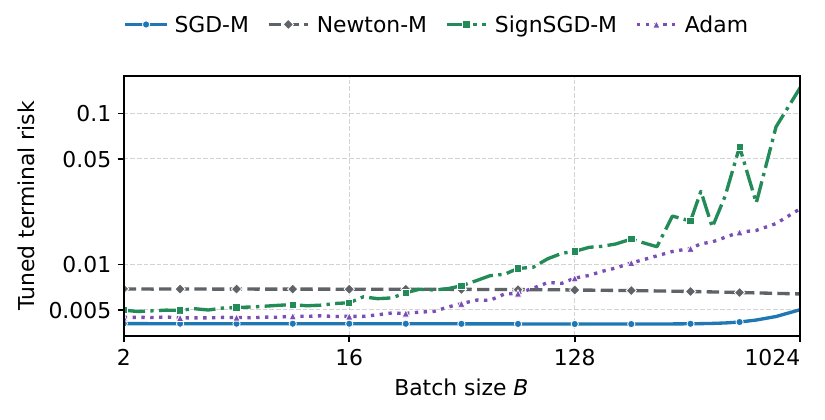}
    \caption{
    Tuned finite-budget risk versus batch size for momentum variants on the
    same three-coordinate noisy quadratic as
    \Cref{fig:nqm_learning_rate_scaling}, with \(T=4096\).  SGD-M,
    Newton-M, and SignSGD-M tune both \(\lr\) and \(\mom\) at each batch
    size; Adam tunes \(\lr\), \(\beta_1\), and \(\beta_2\).  The plot uses a dense
    local refinement around the broad-sweep winners; the linear methods are
    evaluated by the exact second-moment recursion, and the SignSGD/Adam
    candidates are independently checked by high-chain Monte Carlo validation.
    }
    \label{fig:nqm_momentum_loss_vs_batch}
\end{figure}

\FloatBarrier
\subsection{Finite-budget SignSGD batch-size transfer}

A fixed batch-size scaling rule transports one optimizer family along a
low-dimensional hyperparameter path.  This is useful when the goal is to
preserve a chosen invariant, but it is not the same as solving
\eqref{eq:nqm_tuned_finite_budget_risk}.  In the noisy quadratic model, the
target risk is a sum over coordinates whose effective tuples
\((h_i,c_i,\w_{0,i},\batch,T/\batch)\) can demand different behavior.  The SGD and
Newton formulas \eqref{eq:nqm_sgd_bias_variance}--\eqref{eq:nqm_newton_bias_variance}
already show two different bias-variance tradeoffs; SignSGD depends on
sign reliability; and RMSProp estimates an empirical Fisher whose mean depends on the stationary law
of \(\w_i\), not just on the explicit noise variance.

Equivalently, a scaling rule \(R\) from \(\batch_0\) to \(\batch\) compares
\[
    \E f\!\left(\w_{T/\batch}^{A,R_{\batch_0\to\batch}(\nu_0)}(\batch)\right)
\]
against the tuned quantity \(\mathcal R_A^\star(\batch;T)\).  It is ideal only
if the transported hyperparameters stay close to the optimizer-family optimum
for every relevant noise ratio \(\chi_i=c_i/h_i^2\) and signal-to-noise ratio
\(\mathrm{SNR}_{i,0}(\batch)\).  The SDE and convex prescriptions can both miss this target.

\subsubsection{Transfer from every reference batch against the per-batch optimum}
\label{app:nqm_oracle_transfer_protocol}

\Cref{fig:nqm_scaling_rules}(a)--(c) uses a three-directional NQM with
$H=\diag(1,0.1,0.01)$ and $\Sig=\diag(0.01,0.01,0.01)$.
Each trajectory starts from $\w_0\sim\normal(0,I_3)$ and $m_0=0$.
Momentum SignSGD uses $m_{t+1}=\mu m_t+(1-\mu)\widehat g_B(\w_t)$
and $\w_{t+1}=\w_t-\eta\operatorname{sign}(m_{t+1})$, with no weight decay.
Every run processes $T=16384$ samples.
We evaluate batches $B\in\{2^j:j=0,\ldots,14\}$, giving exactly $T/B$ updates.
The largest batch is $B=T=16384$, for which there is one update.

At each batch we tune shared hyperparameters by minimizing the sum of the
three directional terminal losses.
The coarse search uses $33$ logarithmically spaced learning rates in
$[10^{-5},2]$ and $17$ logarithmically spaced refresh coefficients
$1-\mu$ in $[10^{-5},1]$.
An additional candidate is $(\eta,1-\mu)=(0.01,0.01)$, for $562$ candidates in total.
Each candidate is evaluated with $192$ trajectories.
A local search around six promising regions uses $768$ fresh trajectories
per candidate, followed by an independent sample of $4096$ trajectories
to select among the finalists.
All candidates within a stage share initialization and oracle draws to reduce
noise in their comparison; search seeds differ across batch sizes.
Each selected recipe is the best configuration found by this search.
At $B_0=1$, the local search evaluates $318$ distinct candidates and the final
selection compares $29$ finalists, returning $\eta_0=0.0004574766$ and
$\mu_0=0.90503702$.

From each reference batch $B_0$ we transport its recipe to every target batch $B$
by $\eta=(B/B_0)\eta_0$, $\mu=\mu_0^{B/B_0}$ for the SDE prescription and by
$\eta=\sqrt{B/B_0}\,\eta_0$, $\mu=\mu_0$ for the convex one.
The latter uses the square-root prescription motivated by the convex analysis.
All transported recipes for one target batch, together with that batch's own selected
recipe, are evaluated on shared initialization and oracle draws, using eight
independent groups of $8192$ trajectories with seeds separate from tuning and selection.
Transporting a recipe to its own batch is the identity, so both rules reproduce the
selected recipe exactly at $B=B_0$.
The plotted quantity is the excess loss, the total terminal loss minus its
selected value at $B=1$, and the bands give one standard error across the eight groups.

\FloatBarrier
\subsection{Learning-rate scaling from the movement ansatz}
\label{app:nqm_movement_scaling}

\subsubsection{Shared movement ansatz}
\label{app:nqm_movement_ansatz}

For scalar parameter $w$, curvature $h>0$, noise variance $c>0$, and independent standard-normal $z_t$, the minibatch gradient at a fixed state $w$ is $g_t=hw+\sqrt{c/B}\,z_t$.
We evaluate both optimizer responses below at $w=1$, keeping their decay coefficients fixed as batch size varies.
Write an optimizer step as $w_{t+1}=w_t-\eta q_t$, where $q_t$ is the update direction computed from the gradient and optimizer state.
Holding $w$ fixed while the optimizer state evolves, let $A(B)=\E[q_t]$ under its stationary law.
The expected parameter displacement per processed sample is then $-\eta(B)A(B)/B$.
For $A(B)\ne0$, matching this displacement to its value at reference batch one gives the movement ansatz
\begin{equation}
 \frac{\eta(B)}{\eta(1)}=\frac{B A(1)}{A(B)}.
 \label{eq:nqm_movement_ansatz}
\end{equation}
This criterion matches mean movement at the frozen state, without imposing equality of update variance or terminal training loss.
The two optimizers differ in their response $A(B)$, which we compute below.

\subsubsection{Momentum-SignSGD response}
\label{app:signsgd_cnr_mechanism}

With retention $\mu\in[0,1)$ the moving average $m_t=\mu m_{t-1}+(1-\mu)g_t$ is a linear function of Gaussian variables, so its stationary law is Gaussian with
\begin{equation}
 \E[m]=hw,
 \qquad
 \Var(m)=\frac{1-\mu}{1+\mu}\cdot\frac{c}{B}.
 \label{eq:signsgd_stationary_ema}
\end{equation}
For momentum SignSGD, $q_t=\mathrm{sign}(m_t)$, so the response is
\begin{equation}
 A(B)=\E[\mathrm{sign}(m)]=\mathrm{erf}\!\left(\frac{hw}{\sqrt{2\Var(m)}}\right)
 =\mathrm{erf}\!\left(hw\sqrt{\frac{1+\mu}{1-\mu}\cdot\frac{B}{2c}}\right).
 \label{eq:signsgd_response}
\end{equation}
The Gaussian sign expectation gives this response in closed form.

Since $\mathrm{erf}(x)\approx 2x/\sqrt\pi$ for small $x$ and $\mathrm{erf}(x)\to1$ for large $x$, the response grows as $\sqrt B$ when noise dominates and saturates when signal dominates, so \eqref{eq:nqm_movement_ansatz} interpolates between $\eta\propto\sqrt B$ and $\eta\propto B$.
The noise-dominated limit requires $\Var(m)\gg h^2w^2$ at the largest batch, that is $\mathrm{CNR}\ll\frac{1-\mu}{1+\mu}B^{-1}$; at $B=256$ this is $\mathrm{CNR}\ll3.9\times10^{-3}$ for $\mu=0$ but $\mathrm{CNR}\ll2.1\times10^{-4}$ for $\mu=0.9$.
Momentum therefore does not change the two limits, only how much noise is needed to reach the square-root one.

Fitting \eqref{eq:nqm_movement_ansatz} over $B=1,\ldots,256$ at $\mu=0.9$ predicts exponents $0.591$, $0.911$, and $1.000$ for $\mathrm{CNR}=0.001$, $0.03$, and $1$, against measured $0.588$, $0.794$, and $0.904$.
The ansatz is accurate in the noise-dominated condition and overpredicts as CNR rises.
This is expected: the ansatz freezes $w$ at its initial scale, whereas a tuned trajectory spends much of the budget near the optimum, where $hw$ is small and the sign is correspondingly more noise-driven than the frozen-state calculation assumes.
The ansatz therefore bounds the measured exponents from the linear side rather than tracking them.

\subsubsection{Momentum-SignSGD learning-rate scaling across CNR conditions}
\label{app:signsgd_cnr_protocol}

\textbf{Model and tuning.}
\Cref{fig:nqm_scaling_rules}(d) uses $L(w)=w^2/2$ with $h=1$ and $c\in\{1000,100/3,1\}$, giving $\mathrm{CNR}\in\{0.001,0.03,1\}$.
The budget is $T=4096$ samples and batches are $B=2^j$, $j=0,\ldots,8$.
At update $t$, the optimizer forms $g_t=w_{t-1}+\sqrt{c/B}\,z_t$ and $m_t=\mu m_{t-1}+(1-\mu)g_t$, then updates $w_t=w_{t-1}-\eta\,\mathrm{sign}(m_t)$, with $m_0=0$ and no weight decay.
The main comparison uses $\mu=0.9$; the no-momentum control uses $\mu=0$.

The initialization is $w_0\sim\normal(0,1)$, shared across learning-rate candidates and noise conditions.
A fixed $w_0$ cannot be used here: because SignSGD moves a constant distance $\eta$ per update, the iterate stays on the lattice $w_0-k\eta$, and every $\eta$ that divides $w_0$ lands exactly on the optimum.
At $\mathrm{CNR}=100$ and $B=8$ this makes the tuning objective multimodal, with $26$ local minima over the search range and terminal loss exactly zero at $\eta=1/2$; a random $w_0$ leaves a single local minimum.
This is a structural feature of constant-step methods rather than a search failure, and it does not arise for the normalized updates of \Cref{app:rmsprop_cnr_protocol}.

At each batch and noise condition, a coarse grid of $121$ learning rates from $10^{-6}$ to $2$ is evaluated on 512 trajectories.
We refine three separated candidate neighborhoods and select a learning rate with 2048 fresh trajectories.
All refined candidates are then evaluated on eight independent groups of 1024 held-out trajectories.
Shared Gaussian draws reduce comparison noise across candidates and conditions.
The largest ratio of a selected candidate's held-out loss to the minimum on its validation grid is $1.0516$.

\textbf{Selected learning rates.}
For the three CNR conditions in increasing order, the selected learning rate increases by factors of $27.2$, $67.7$, and $135.8$ from $B=1$ to $B=256$ at $\mu=0.9$.
Fitting $\log\eta$ against $\log B$ over the nine batch sizes gives exponents $0.588$, $0.794$, and $0.904$.
The no-momentum control gives $0.505$, $0.591$, and $0.853$ in the same conditions.
Because the noise levels enter only through $\sqrt{c/B}$ applied to a shared standard normal draw, adding a noise condition leaves the others bitwise unchanged, which we verified across all cached cells.

\textbf{Frozen-state check.}
The response $A(B)$ of \Cref{app:signsgd_cnr_mechanism} is available in closed form because the moving average of Gaussian gradients is Gaussian.
We verified it against a frozen-state simulation at $w=1$ that discards 3000 updates and averages 3000 further updates over $2\times10^{5}$ trajectories; the largest absolute discrepancy is $2.0\times10^{-4}$ at $\mu=0.9$.

\subsubsection{RMSProp response}
\label{app:rmsprop_cnr_mechanism}

Write $\beta_2$ for the squared-gradient accumulator's decay coefficient, $v_t$ for that accumulator, and $\epsilon>0$ for the numerical stabilizer.
The frozen-state experiment evolves $v_t=\beta_2v_{t-1}+(1-\beta_2)g_t^2$ while holding $w=1$.
Here $q_t=g_t/(\sqrt{v_t}+\epsilon)$, so the response is the stationary mean normalized gradient $A(B)=\E[g_t/(\sqrt{v_t}+\epsilon)]$.

The stationary accumulator has mean $h^2w^2+c/B$.
Replacing the random accumulator by this mean and neglecting $\epsilon$ gives a mean-denominator approximation.
At $w=1$, with $\mathrm{CNR}=h^2/c$, it predicts
\begin{equation}
 \frac{\eta(B)}{\eta(1)}\approx
 B\sqrt{\frac{\mathrm{CNR}+1/B}{\mathrm{CNR}+1}}.
 \label{eq:rmsprop_movement_approximation}
\end{equation}
Its local logarithmic slope is $1-[2(1+B\,\mathrm{CNR})]^{-1}$, approaching $1/2$ when noise dominates and $1$ when signal dominates.
In general, $\E[g/\sqrt v]\neq\E[g]/\sqrt{\E[v]}$.
The dashed curves in \Cref{fig:rmsprop_cnr_scaling} therefore use measured stationary responses in \eqref{eq:nqm_movement_ansatz}, rather than assuming the mean-denominator approximation is exact.
The conditional gradient signal-to-noise ratio along training is $Bw^2\,\mathrm{CNR}$, which changes even though CNR remains fixed.
\subsubsection{RMSProp learning-rate scaling across CNR conditions}
\label{app:rmsprop_cnr_protocol}

The same CNR dependence appears for adaptive optimizers.
We repeat the experiment of \Cref{sec:nqm_signsgd_scaling} with RMSProp, which divides each gradient by the square root of a moving average of squared gradients.
Its one-dimensional conditional second moment is $\E[\widehat g_B(w)^2\mid w]=h^2w^2+c/B$: the signal term stays fixed at a given parameter value while the noise term falls with batch size.
When noise dominates, a larger batch shrinks the normalization denominator through $c/B$, motivating square-root scaling; when signal dominates, the denominator barely changes and linear scaling compensates for fewer updates per sample.

In \Cref{fig:rmsprop_cnr_scaling} the exponent rises from $0.48$ at $\mathrm{CNR}=0.01$ through $0.75$ at $\mathrm{CNR}=1$ to $1.02$ at $\mathrm{CNR}=100$.
Adam shows the same trend once its first moment is enabled, with the exponent rising from $0.47$ to $0.94$ at $\beta_2=0.9$ and from $0.48$ to $0.95$ at $\beta_2=0.99$ across the same three conditions (\Cref{fig:adam_cnr_scaling}).
Tuned SGD stays between $0.96$ and $0.99$ in all three, showing that the dependence comes from the adaptiveness introduced by the second-moment normalization.

\begin{figure}[b]
    \vspace{-1.5em}
\centering
\begin{minipage}[c]{0.60\textwidth}
\centering
\includegraphics[width=\linewidth]{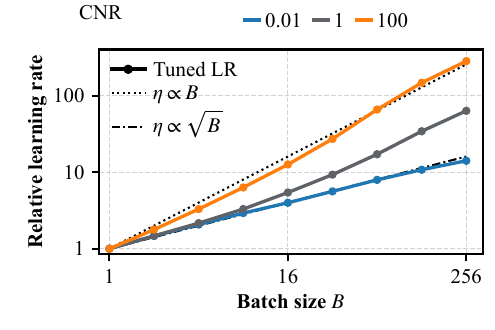}
\end{minipage}\hfill
\begin{minipage}[c]{0.36\textwidth}
\caption{\textbf{CNR changes RMSProp learning-rate scaling.}
Colors indicate CNR. Curves show learning rates tuned for terminal loss,
each relative to its own value at $B=1$.
Dotted and dash-dotted lines mark linear and square-root scaling.}
\label{fig:rmsprop_cnr_scaling}
\end{minipage}
\end{figure}

\textbf{Training and independent tuning.}
\Cref{fig:rmsprop_cnr_scaling} uses $L(w)=w^2/2$, $w_0=1$, and $c\in\{100,1,0.01\}$, giving $\mathrm{CNR}\in\{0.01,1,100\}$.
The budget is $T=4096$ samples and batches are $B=2^j$, $j=0,\ldots,8$.
At update $t$, RMSProp forms $g_t=w_{t-1}+\sqrt{c/B}\,z_t$ and $v_t=\beta_2v_{t-1}+(1-\beta_2)g_t^2$, then updates $w_t=w_{t-1}-\eta g_t/(\sqrt{v_t}+\epsilon)$.
Here $v_0=0$, $\epsilon=10^{-8}$, and the main comparison uses $\beta_2=0.9$ without bias correction, first-moment momentum, or weight decay.
The controls divide $v_t$ by $1-\beta_2^t$ before normalization, with $\beta_2=0.9$ or $0.99$.

At each batch and noise condition, a coarse grid of 101 learning rates from $10^{-5}$ to $1$ is evaluated on 512 trajectories.
We refine three separated candidate neighborhoods and select a learning rate with 2048 fresh trajectories.
All refined candidates are then evaluated on eight independent groups of 1024 held-out trajectories.
Shared Gaussian draws reduce comparison noise across candidates and conditions; trajectory indices remain independent within each candidate.
The largest ratio of a selected candidate's held-out loss to the minimum on its validation grid is $1.00838$.

For the three CNR conditions in increasing order, the selected learning rate increases by factors of $14.09$, $63.26$, and $281.84$ from $B=1$ to $B=256$.
The corresponding endpoint slopes $\log[\eta(256)/\eta(1)]/\log256$ are $0.477$, $0.748$, and $1.017$.
Candidates within $5\%$ of the minimum held-out loss at each endpoint give slope ranges $[0.422,0.518]$, $[0.720,0.768]$, and $[1.004,1.031]$.
These are descriptive loss-tolerance ranges conditional on the searched grid, rather than confidence intervals.
With bias correction, the slopes are $(0.533,0.831,1.100)$ at $\beta_2=0.9$ and $(0.554,0.865,1.128)$ at $\beta_2=0.99$.
The SGD control selects the minimum exact expected terminal loss on a dense learning-rate grid and gives slopes $(0.986,0.974,0.962)$.

\textbf{Frozen-state response measurement.}
The RMSProp response in \Cref{app:rmsprop_cnr_mechanism} is measured at $w=1$ for both decay coefficients.
We initialize the accumulator at its fixed-state mean, discard 2048 updates, and average 512 further updates in each of eight independent groups of 512 antithetic pairs.
The state remains fixed while the squared-gradient accumulator evolves.
Uncertainty is assessed across the eight groups, accounting for dependence between successive updates within a trajectory.
The measured learning-rate ratios differ from the mean-denominator approximation by at most $4.31\%$ across tested settings; their largest relative group-based standard error is $0.023\%$.
An independent scalar implementation matches the vectorized training evaluator to absolute error below $2\times10^{-16}$.
In a fresh 1024-trajectory check of the main selected configurations, the stabilizer is at most $0.017\%$ of the normalization denominator.

\begin{figure}[t]
    \vspace{-1.5em}
\centering
\includegraphics[width=\textwidth]{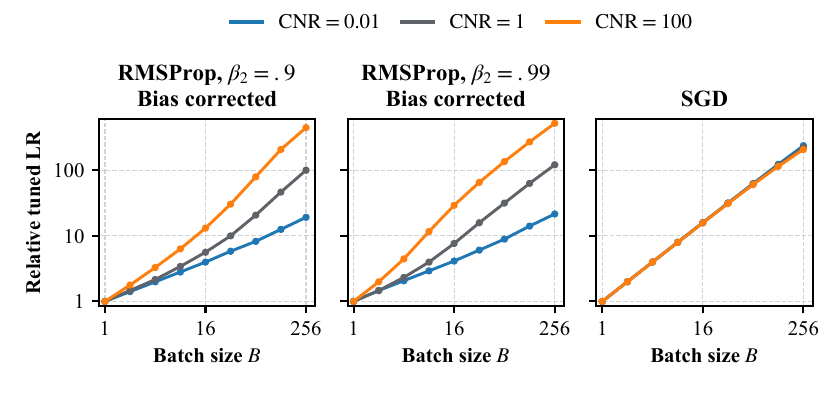}
\caption{\textbf{The dependence on the curvature-to-noise ratio (CNR) survives accumulator bias correction.}
The two RMSProp panels divide the squared-gradient exponential moving average by $1-\beta_2^t$ at
update $t$, with second-moment decay $0.9$ and $0.99$, respectively.
The SGD panel selects learning rates using the exact expected terminal loss.
All panels use the same one-dimensional NQM, initial point, sample budget,
noise variances, and batch sizes as \Cref{fig:rmsprop_cnr_scaling}.
Learning rates are normalized separately at $B=1$.
SGD's relative learning-rate curves remain close to linear in batch size in
these conditions, while RMSProp's curves separate with CNR.}
\label{fig:rmsprop_cnr_controls}
\end{figure}

\textbf{Adam.}
\Cref{fig:adam_cnr_scaling} repeats the sweep with Adam, adding a first moment
$m_t=\beta_1m_{t-1}+(1-\beta_1)g_t$ at $\beta_1=0.9$ and dividing both moments by their
bias-correction factors before the update.
The model, budget, batch grid, search grid, trajectory counts, and selection protocol are
unchanged, and search seeds are drawn separately from the RMSProp sweep.
For the three CNR conditions in increasing order, the endpoint slopes are
$(0.471,0.713,0.941)$ at $\beta_2=0.9$ and $(0.478,0.727,0.948)$ at $\beta_2=0.99$.

\begin{figure}[t]
    \vspace{-1em}
\centering
\includegraphics[width=\textwidth]{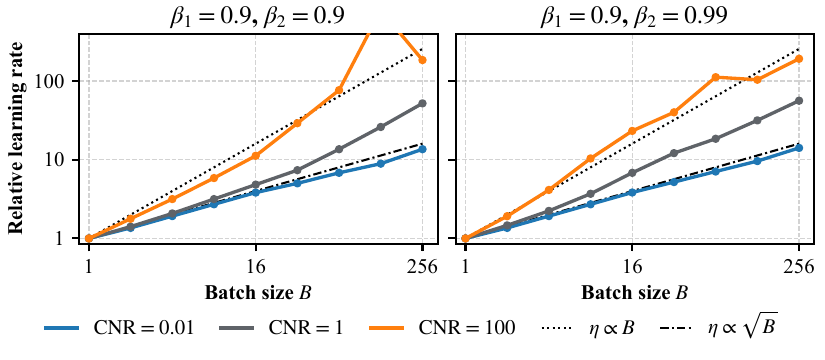}
\caption{\textbf{Adam shows the same CNR dependence as RMSProp.}
The tuning protocol, model, budget, and batch grid match \Cref{fig:rmsprop_cnr_scaling},
with Adam's first moment enabled and bias correction applied to both moments.
Colors indicate CNR; curves show learning rates tuned for terminal loss, each relative to its own value at $B=1$.
Dotted and dash-dotted lines mark linear and square-root scaling.}
\label{fig:adam_cnr_scaling}
\end{figure}

\clearpage
\section{Additional Matrix-Model Results}
\label{app:matrix_results}

The main text shows optimizer crossover on a noisy quadratic
(\Cref{fig:nqm_optimizer_crossover}).  Crossovers also occur on three matrix
regression tasks: at the higher noise level, the best optimizer changes from Muon
at small batches to EMA-Shampoo at large batches in matrix linear and logistic
regression (\Cref{fig:matrix_models_high_noise_tuned_loss_vs_batch}).  The
following figures report all tuned optimizer families on all three matrix tasks
at the baseline and higher noise levels.

\subsection{Matrix-model definitions}
\label{app:nqm_matrix_model_definitions}

\textbf{Matrix linear regression.}
Let \(X=ab^\top\) be a rank-one matrix covariate, where \(a\) and \(b\) are
independent, centered random vectors satisfying
\(\E[aa^\top]=\mathsf A\) and \(\E[bb^\top]=\mathsf C\).  Under the linear
teacher $y=\langle\Theta_\star,X\rangle+\epsilon$ and squared loss,
\begin{equation}
    \begin{aligned}
        \ell_{\mathrm{lin}}(\Theta;X,y) &=\frac12\bigl(\langle\Theta,X\rangle-y\bigr)^2
    \end{aligned}
    \label{eq:overview_matrix_linear_setting}
\end{equation}
with \(\E[\epsilon\mid X]=0\).  This model preserves a quadratic population
objective but exposes matrix-aware optimizers to left--right feature geometry
and rank-deficient per-example gradients to better simulate the neural network landscape.

\textbf{Matrix logistic regression.}
Using the same rank-one covariates, define
\(\varsigma(u)=(1+e^{-u})^{-1}\), a temperature-controlled teacher
\(p_\tau(X)=\varsigma(\langle\Theta_\star,X\rangle/\tau)\), and
a student \(p_\Theta(X)=\varsigma(\langle\Theta,X\rangle)\).
The labels and population excess risk are
\begin{equation}
    y\mid X\sim\operatorname{Bernoulli}\!\left(p_\tau(X)\right),
    \qquad
    L_{\mathrm{log}}(\Theta)-L_{\mathrm{log}}^\star
    =\E_X\!\left[
        D_{\mathrm{KL}}\!\left(
            \operatorname{Bern}\!\left(p_\tau(X)\right)
            \,\middle\|\,
            \operatorname{Bern}\!\left(p_\Theta(X)\right)
        \right)
    \right].
    \label{eq:overview_matrix_logistic_setting}
\end{equation}
Unlike the preceding objectives, its curvature depends on the current
predictive probabilities and is therefore state dependent.

\textbf{Normalized linear regression.}
Finally, let the predictor depend only on the direction of the matrix
parameter through
\(Q(\Theta)=\Theta/\|\Theta\|_{\mathrm F}\) and
\(Q_\star=\Theta_\star/\|\Theta_\star\|_{\mathrm F}\).  With the same
rank-one design and a linear teacher,
\begin{equation}
    y=\langle Q_\star,X\rangle+\epsilon,
    \qquad
    L_{\mathrm{nlin}}(\Theta)-L_{\mathrm{nlin}}^\star
    =\frac12\left\|
        \mathsf A^{1/2}\bigl(Q(\Theta)-Q_\star\bigr)\mathsf C^{1/2}
    \right\|_{\mathrm F}^2.
    \label{eq:overview_normalized_linear_setting}
\end{equation}
The resulting objective is invariant to positive rescaling of \(\Theta\),
introducing a radial null direction and optimizer-dependent angular dynamics.

\begin{figure}[t]
    \vspace{-1.5em}
    \centering
    \includegraphics[width=\textwidth]{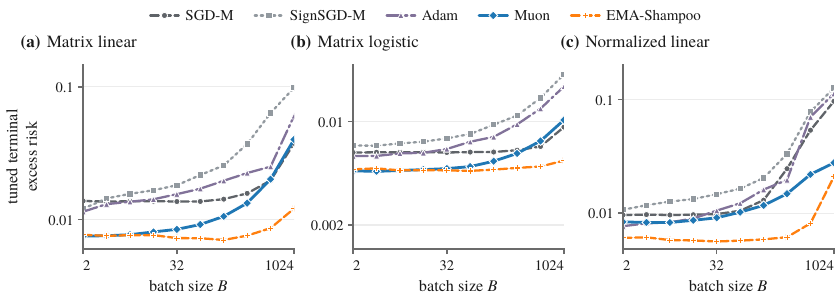}
    \caption{
    Tuned finite-budget terminal excess risk in the \(4\times4\) matrix
    prediction models.  Every point uses \(T=4096\) examples and
    \(T/\batch\) updates; each optimizer is retuned separately at every batch
    size.  Minibatch sufficient statistics are sampled exactly from the
    finite-support design, while the expectation over training trajectories is
    estimated by independent float64 validation.  Shaded bands show \(95\%\)
    Monte Carlo intervals for the exact population excess risk.
    }
    \label{fig:matrix_models_tuned_loss_vs_batch}
\end{figure}

\begin{figure}[t]
    \vspace{-1.5em}
    \centering
    \includegraphics[width=\textwidth]{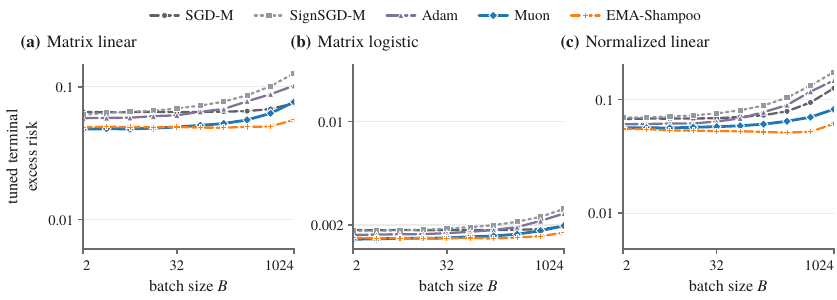}
    \caption{
    Higher-noise counterpart of
    \Cref{fig:matrix_models_tuned_loss_vs_batch}.  The squared-loss models use
    \(\sigma_\epsilon=4\), and the logistic teacher uses
    \(\tau_{\mathrm{log}}=4\).  The remaining settings and protocols, including the \(95\%\) Monte Carlo
    intervals, are unchanged.
    }
    \label{fig:matrix_models_high_noise_tuned_loss_vs_batch}
\end{figure}

\clearpage

\begingroup
\raggedbottom
\section{Scaling-rule sweeps on CIFAR-5M and language modeling}
\label{app:cifar_experiments}
\label{app:scaling_experiments}

We test how well a scaling rule transfers a tuned hyperparameter configuration across batch sizes at a fixed data budget.
The CIFAR-5M and language-model sweeps favor different rules, and the best rule also changes across target batches within each setting.
We first define the common comparison, then report each sweep, the additional language-model weight-decay experiment, and the comparison across settings.
The complete loss tables follow the results.

\subsection{Shared design and evaluation}
\label{app:s2-protocol}

\textbf{Scaling coordinates.}
Both experiments apply MuonW to hidden weight matrices and AdamW to the remaining parameters, which we call auxiliary parameters.
We write their learning rates as \(\eta_{\rm M}\) and \(\eta_{\rm A}\), their weight decays as \(\lambda_{\rm M}\) and \(\lambda_{\rm A}\), Muon momentum as \(\mu\), and Adam's first- and second-moment retention coefficients as \(\beta_1\) and \(\beta_2\).
The MuonW update is that of \Cref{def:muonw-adamw}, with \(\omega=\mu\).
In language modeling, embeddings, the output projection, and scalar parameters have distinct auxiliary learning rates; one scaling choice multiplies all three by the same factor.
Let \(B_0\) be the reference batch, \(B\) a target batch, and \(\kappaBatch=B/B_0\) their ratio.
For each learning-rate or weight-decay coordinate \(h\), with reference value \(h_0\), the choices are
\[
  h(B)=h_0\kappaBatch^p,\qquad p\in\{0,\tfrac12,1\},
\]
corresponding to fixed, square-root, and linear scaling.
For a retention coefficient \(q\in\{\mu,\beta_1,\beta_2\}\), with reference value \(q_0\), we either keep \(q(B)=q_0\) or preserve its decay in data units.
On CIFAR-5M, the latter choice is \(q(B)=q_0^{\kappaBatch}\), preserving the half-life in processed examples.
For language modeling, let \(\tau\) be the token budget and \(S(B)=\lceil\tau/B\rceil\) the number of updates.
We use \(q(B)=q_0^{S(B_0)/S(B)}\), so \(q(B)^{S(B)}=q_0^{S(B_0)}\) preserves retention over the complete run.
The two exponents agree when the step-count ratio equals the batch ratio; \Cref{app:s2-llm} specifies the rounding exception.

\textbf{Complete grids.}
CIFAR-5M varies both learning rates and both weight decays independently, giving \(3^4 2^3=648\) rules.
The language-model grid holds auxiliary weight decay at \(0.001\) and varies the other six coordinates, giving \(3^3 2^3=216\) rules.
Rule selection and the comparisons below use the complete grid in each setting, with auxiliary weight decay varied independently on CIFAR-5M.
In tables, LR and WD denote learning rate and weight decay; F, S, and L denote fixed, square-root, and linear scaling.
E, also labeled EMA in the figures and tables, denotes the retention-preserving choice defined above for the relevant experiment.
Subscripts \(\mathrm{mat}\) and \(\mathrm{aux}\) in the tables correspond to \(\mathrm M\) and \(\mathrm A\).

\textbf{Evaluation and rule selection.}
Let \(\mathcal B\) be the set of target batches and \(L_r(B)\) the validation loss of rule \(r\) at batch \(B\), using the metric specified for each experiment below.
Write \(L_{\min}(B)=\min_{r'}L_{r'}(B)\) for the smallest loss in the tested grid.
For these scaling-rule sweeps, the batch-size (BSZ) regret is the validation-loss gap to the best tested rule at the same batch size.
We reserve excess loss for the loss gap to the retuned \(B=1\) reference in \Cref{fig:nqm_scaling_rules}(a)--(c).
For each rule, we report its mean and maximum BSZ regret over the target batch-size choices,
\[
  \Delta_r(B)=L_r(B)-L_{\min}(B),\qquad
  \overline\Delta(r)=\frac{1}{|\mathcal B|}\sum_{B\in\mathcal B}\Delta_r(B),\qquad
  \Delta_{\max}(r)=\max_{B\in\mathcal B}\Delta_r(B).
\]
We select a common rule by the smallest mean BSZ regret, weighting target batch-size choices equally.
Ranking by mean loss gives the same order because all rules share the same target batches.
The coordinate panels fix one coordinate choice and average loss over every combination of the remaining choices.

\subsection{CIFAR-5M sweep}
\label{sec:cifar5m_scaling}
\label{app:s2-cifar}
\label{app:cifar5m_scaling}

\textbf{Protocol and reference configuration.}
We train a ResNet with GroupNorm in one pass over CIFAR-5M~\citep{nakkiran2021deepbootstrap}.
GroupNorm prevents changing batch size from changing the normalization statistics compared to BatchNorm.
Every run processes the same number of examples with constant learning rates; schedule quantities are expressed as fractions of processed examples.
The reference batch is \(B_0=256\), and the targets are \(B\in\{64,128,512,1024,2048,4096\}\).
Validation loss is per-example cross-entropy averaged over the final ten evaluations, equally spaced in processed examples.
We tune the hyperparameters at the reference batch size by coordinate descent with $\lambda_M = \lambda_A$, obtaining the reference configuration
\begin{align*}
  (\eta_{\rm M},\eta_{\rm A},\mu,\beta_1,\beta_2,\lambda_{\rm M},\lambda_{\rm A})_0
  ={}&(0.00681567,\,0.000980392,\,0.93002885,\\
     &0.6561,\,0.999911571,\,0.00125,\,0.00125),
\end{align*}
and its validation loss is \(0.2314310\).
We evaluate all \(648\) rules at all six targets using the same code, data stream, and schedule, giving \(3{,}888\) runs with seed \(42\).

\textbf{Square-root learning rates with the other coordinates fixed give the best common rule.}
This rule minimizes mean BSZ regret as defined in \Cref{app:s2-protocol}.
Its mean and maximum BSZ regrets are \(0.00492\) and \(0.00913\).
At \(B=(64,128,512,1024,2048,4096)\), its losses are
\[
  (0.23370,\,0.23136,\,0.23373,\,0.22778,\,0.23290,\,0.23683).
\]
\Cref{fig:cifar5m_transfer_axis_marginals} shows the mean loss across rules with each coordinate choice.
The individual winners still change with batch size.
At \(B=64\), the best rule scales both learning rates and weight decays linearly, transports Muon momentum and Adam \(\beta_2\), and keeps \(\beta_1\) fixed.
At \(B=4096\), it scales the Muon learning rate and auxiliary weight decay by the square root, transports \(\beta_2\), and keeps the remaining coordinates fixed.
\Cref{tab:cifar5m_transfer_raw} gives the complete grid.

\begin{figure}[h]
    \vspace{-1em}
    \centering
    \includegraphics[width=\textwidth]{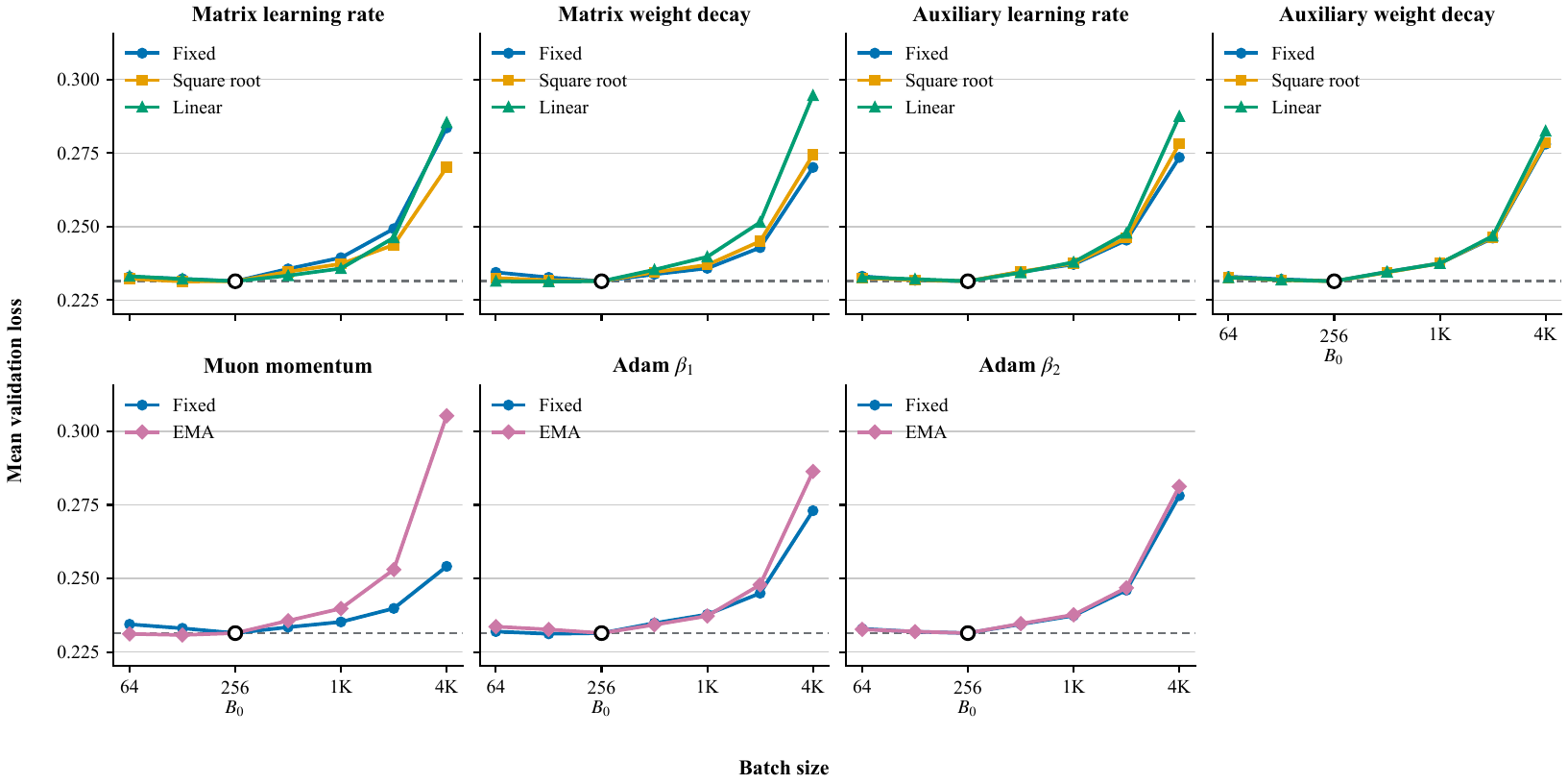}
    \caption{\textbf{All \(648\) CIFAR-5M scaling rules, summarized one
    coordinate at a time.}  In each panel, a point at a target batch is the
    arithmetic mean of the final-window validation losses over all rules that make the
    indicated choice for that coordinate, weighting all combinations of the other six equally.
    The outlined marker and dashed line mark the shared reference at
    \(B_0=256\).  Fixed, square root, and linear denote exponents
    \(0\), \(\tfrac12\), and \(1\); EMA preserves the half-life in examples.
    Lower is better; all panels share the same vertical range.}
    \label{fig:cifar5m_transfer_axis_marginals}
\end{figure}

\textbf{Independent retuning provides a comparison beyond the scaling grid.}
For the full-retuning curve in \Cref{fig:muonw_transfer_comparison}, we tune all seven hyperparameters independently at each target from \(512\) to \(4096\).
The coordinate-descent procedure, detailed in \Cref{app:llm_cd_protocol}, starts from the best grid configuration at that batch.
Learning rates and weight decays move by factors \(\{\tfrac12,\tfrac1{\sqrt2},\sqrt2,2\}\); retention coefficients use the same factors on \(1-q\).
A candidate improving loss by more than \(0.002\) on seed \(42\) is repeated on seed \(43\), and also on seed \(44\) when the gain is below \(0.006\).
We accept it only if its median improvement exceeds \(0.002\), and stop after all seven coordinates pass without an accepted move.
\Cref{tab:cifar5m_retune} reports the selected configurations.
The comparison curves add seeds \(43\) and \(44\) to the grid's seed \(42\) and report three-seed medians.
At retuned configurations, the three losses span up to \(0.0064\), so a single-seed grid minimum can lie below the repeated-run result for the same configuration.
Standard deviations over the final evaluation window describe variation during one run; they are not confidence intervals across seeds.

\begin{table}[h]
  \centering
  \small
  \setlength{\tabcolsep}{4pt}
  \begin{tabular}{rrrrrrrrr}
    \toprule
    $B$ & $\eta_{\rm M}$ & $\eta_{\rm A}$ & $\mu$ & $\beta_1$ & $\beta_2$ & $\lambda_{\rm M}$ & $\lambda_{\rm A}$ & median loss \\
    \midrule
    $256$ & $0.006816$ & $0.0009804$ & $0.93$ & $0.6561$ & $0.999912$ & $0.00125$ & $0.00125$ & $0.2331$ \\
    $512$ & $0.01363$ & $0.002773$ & $0.93$ & $0.4305$ & $0.999823$ & $0.000625$ & $0.001768$ & $0.2275$ \\
    $1024$ & $0.02726$ & $0.001961$ & $0.93$ & $0.6561$ & $0.999912$ & $0.00125$ & $0.00125$ & $0.2299$ \\
    $2048$ & $0.03856$ & $0.007843$ & $0.93$ & $0.6561$ & $0.999912$ & $0.00125$ & $0.003536$ & $0.2318$ \\
    $4096$ & $0.02726$ & $0.0009804$ & $0.9505$ & $0.6561$ & $0.998586$ & $0.00125$ & $0.005$ & $0.2337$ \\
    \bottomrule
  \end{tabular}
  \caption{\textbf{CIFAR-5M configurations selected by coordinate descent.} Losses are medians over seeds $42$--$44$. At the reference batch $B_0=256$, the hyperparameter search ties the two weight decays, $\lambda_{\rm M}=\lambda_{\rm A}$. Displayed hyperparameters are rounded.}
  \label{tab:cifar5m_retune}
\end{table}

\subsection{Language-model sweep}
\label{app:llm_scaling_experiments}
\label{sec:llm_scaling_rule_sweep}
\label{app:s2-llm}
\label{app:muonw_batch_transfer}

\textbf{Protocol and reference configuration.}
We train a decoder-only Transformer on the FineWeb10B token stream.
The model has \(12\) layers, width \(768\), \(6\) attention heads, context length \(1024\), squared-ReLU feed-forward layers of width \(3072\), and a vocabulary of \(50{,}304\) tokens.
MuonW updates the two-dimensional matrices within each Transformer block using Nesterov momentum and twelve Newton--Schulz iterations.
Embeddings, the output projection, and one-dimensional parameters use AdamW.
The grid uses bfloat16 on eight GPUs and seed \(1\).
The metric is terminal token-averaged validation cross-entropy on a fixed set of \(10{,}485{,}760\) tokens.
\Cref{tab:muonw_transfer_anchor} gives the reference configuration, whose reference audit run has loss \(3.26547\).
Batch labels use binary multiples, with \(128\mathrm K=131{,}072\) reference tokens and targets \(256\mathrm K,512\mathrm K,1\mathrm M,2\mathrm M\), or \(262{,}144\), \(524{,}288\), \(1{,}048{,}576\), and \(2{,}097{,}152\) tokens.
The budget \(\tau=1{,}703{,}936{,}000\) tokens gives \(13{,}000\) reference steps and \(6{,}500\), \(3{,}250\), \(1{,}625\), and \(813\) target steps.
Learning rates remain at their peak for the first \(20\%\) of training and decrease linearly to zero over the remaining \(80\%\).
Both weight decays start at full strength without warmup; auxiliary decay remains \(0.001\) in the main grid.
All \(216\) rules use this schedule shape at each target, giving \(864\) grid runs.
An additional sweep at \(1\mathrm M\) and \(2\mathrm M\) tokens shows that scaling auxiliary weight decay offers little benefit.

\begin{table}[h]
\centering
\small
\setlength{\tabcolsep}{4pt}
\begin{tabular}{rrrrrrrrrr}
\toprule
$B$ & $\eta_{\rm M}$ & $\eta_{\rm Emb.}$ & $\eta_{\rm Out.}$ & $\eta_{\rm Scalar}$ & $\mu$ & $\beta_1$ & $\beta_2$ & $\lambda_{\rm M}$ & $\lambda_{\rm A}$ \\
\midrule
$128\mathrm K$ & $0.0176777$ & $0.494974$ & $0.00282842$ & $0.0106066$ & $0.95$ & $0.8$ & $0.993629$ & $0.025$ & $0.001$ \\
\bottomrule
\end{tabular}
\caption{\textbf{Language-model reference configuration.} Batch size $B$ counts tokens. The auxiliary learning rates $\eta_{\rm Emb.}$, $\eta_{\rm Out.}$, and $\eta_{\rm Scalar}$ apply to embeddings, the output projection, and scalar parameters, respectively. Learning rates are peak values, and the cooldown fraction is $0.8$. Displayed values are rounded.}
\label{tab:muonw_transfer_anchor}
\end{table}

\textbf{The best learning-rate scaling changes with the target batch.}
The best rules scale both learning-rate groups by the square root at \(256\mathrm K\) and \(512\mathrm K\), but keep them fixed and scale matrix weight decay linearly at \(1\mathrm M\) and \(2\mathrm M\).
All four winners keep Muon momentum and Adam \(\beta_1\) fixed and transport Adam \(\beta_2\).
\Cref{tab:muonw_transfer_batch_best} reports their losses and choices, and \Cref{fig:muonw_transfer_axis_marginals} shows the mean loss across rules with each coordinate choice.

\begin{table}[H]
\centering
\small
\setlength{\tabcolsep}{3.2pt}
\renewcommand{\arraystretch}{1.06}
\begin{tabular}{@{}r r l l l l l l@{}}
\toprule
& & \multicolumn{3}{c}{Parameter scaling} & \multicolumn{3}{c}{EMA scaling} \\
\cmidrule(lr){3-5}\cmidrule(lr){6-8}
Batch & Loss & Matrix LR & Matrix WD & Auxiliary LR & Muon $\mu$ & Adam $\beta_1$ & Adam $\beta_2$ \\
\midrule
256K & 3.26553 & Square root & Fixed & Square root & Fixed & Fixed & EMA \\
512K & 3.28004 & Square root & Square root & Square root & Fixed & Fixed & EMA \\
1M & 3.31183 & Fixed & Linear & Fixed & Fixed & Fixed & EMA \\
2M & 3.37977 & Fixed & Linear & Fixed & Fixed & Fixed & EMA \\
\bottomrule
\end{tabular}
\caption{Lowest validation loss observed independently at each target batch size. The choice at one batch size is not required to transfer to another batch size.}
\label{tab:muonw_transfer_batch_best}
\end{table}

\begin{figure}[t]
    \vspace{-1.5em}
\centering
\includegraphics[width=\textwidth]{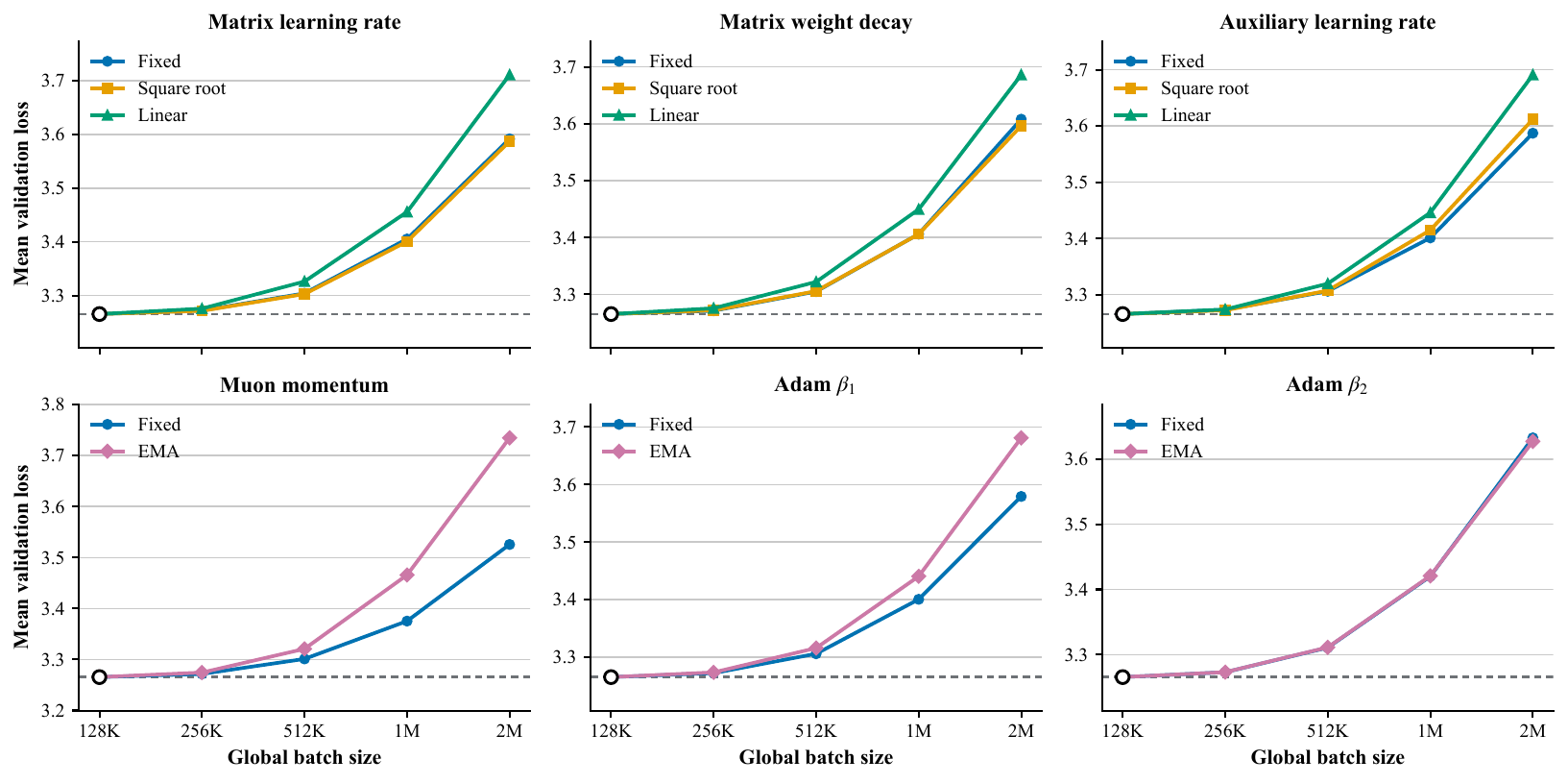}
\caption{\textbf{All \(216\) language-model scaling rules, summarized one coordinate at a time.} Each point is the arithmetic mean of the target-batch validation losses over all rules using the indicated choice on that coordinate, weighting all combinations of the other five choices equally.
Fixed, square root, and linear denote exponents \(0,\tfrac12,1\); E/EMA preserves decay over the complete token budget.
Lower is better; each panel uses its own focused vertical range.
Every rule shares the reference run at \(\batch_0=128\mathrm K\), drawn as the outlined marker and the dashed horizontal line.
The full grid is reported in \Cref{app:muonw_batch_transfer}.}
\label{fig:muonw_transfer_axis_marginals_draft}
\label{fig:muonw_transfer_axis_marginals}
\end{figure}

\textbf{Fixed learning rates and linear matrix weight decay give the smallest mean BSZ regret.}
The selected common rule keeps Muon momentum and Adam \(\beta_1\) fixed and transports only Adam \(\beta_2\).
Its mean BSZ regret is \(0.00151\) and its maximum BSZ regret is \(0.00416\).
It attains the grid minimum at \(1\mathrm M\) and \(2\mathrm M\); its gaps at \(256\mathrm K\) and \(512\mathrm K\) are \(0.00416\) and \(0.00188\).
\Cref{tab:muonw_transfer_resolved} gives the resulting hyperparameters; Emb. and Out. denote the embedding and output-projection learning rates.
\Cref{tab:muonw_transfer_coordinate} changes one choice at a time around this rule.
Transporting Muon momentum or Adam \(\beta_1\) increases loss substantially.
The rule with \(\beta_2\) also fixed ranks second by mean BSZ regret, \(0.00240\), and first by maximum BSZ regret, \(0.00399\).
Transporting \(\beta_2\) lowers mean loss by \(0.000895\), but raises loss by \(0.00017\) at \(256\mathrm K\).
It lowers loss by \(0.00022\), \(0.00089\), and \(0.00264\) at the other three targets.
\Cref{fig:muonw_transfer_regret_scatter} displays both criteria for all rules.

\begin{table}[H]
\centering
\scriptsize
\setlength{\tabcolsep}{2.8pt}
\renewcommand{\arraystretch}{1.06}
\begin{tabular}{@{}r r r r r r r r r r@{}}
\toprule
& & \multicolumn{3}{c}{Matrix hyperparameter configuration} & \multicolumn{5}{c}{Auxiliary hyperparameter configuration} \\
\cmidrule(lr){3-5}\cmidrule(lr){6-10}
Batch & Steps & LR & WD & $\mu$ & Emb. LR & Out. LR & Scalar LR & $\beta_1$ & $\beta_2$ \\
\midrule
128K & 13000 & 0.0176777 & 0.025 & 0.95 & 0.494974 & 0.00282842 & 0.0106066 & 0.8 & 0.993629 \\
256K & 6500 & 0.0176777 & 0.05 & 0.95 & 0.494974 & 0.00282842 & 0.0106066 & 0.8 & 0.987299 \\
512K & 3250 & 0.0176777 & 0.1 & 0.95 & 0.494974 & 0.00282842 & 0.0106066 & 0.8 & 0.97476 \\
1M & 1625 & 0.0176777 & 0.2 & 0.95 & 0.494974 & 0.00282842 & 0.0106066 & 0.8 & 0.950156 \\
2M & 813 & 0.0176777 & 0.4 & 0.95 & 0.494974 & 0.00282842 & 0.0106066 & 0.8 & 0.902854 \\
\bottomrule
\end{tabular}
\caption{Resolved numerical hyperparameters for the selected common rule. Values are rounded for readability. Auxiliary weight decay is 0.001 at every batch size, and the learning rate cooldown occupies the final 80 percent of training.}
\label{tab:muonw_transfer_resolved}
\end{table}

\begin{table}[H]
\centering
\small
\setlength{\tabcolsep}{5pt}
\renewcommand{\arraystretch}{1.05}
\begin{tabular}{@{}l r r r r r@{}}
\toprule
Only change from selected rule & 256K & 512K & 1M & 2M & \shortstack{Mean BSZ\\regret} \\
\midrule
\textbf{Selected common rule} & \textbf{3.26969} & \textbf{3.28192} & \textbf{3.31183} & \textbf{3.37977} & \textbf{0.00151} \\
\addlinespace[2pt]
Matrix LR to square root & 3.27140 & 3.29573 & 3.35026 & 3.46358 & 0.03595 \\
Matrix LR to linear & 3.28325 & 3.33974 & 3.44900 & 3.66492 & 0.12493 \\
Matrix WD to fixed & 3.27030 & 3.28906 & 3.32787 & 3.40266 & 0.01318 \\
Matrix WD to square root & 3.26862 & 3.28334 & 3.31700 & 3.38662 & 0.00460 \\
Auxiliary LR to square root & 3.26763 & 3.28119 & 3.31645 & 3.41001 & 0.00953 \\
Auxiliary LR to linear & 3.26985 & 3.28710 & 3.35634 & 3.45897 & 0.03377 \\
Muon momentum to EMA & 3.27375 & 3.30073 & 3.39503 & 3.58196 & 0.07858 \\
Adam $\beta_1$ to EMA & 3.27078 & 3.28823 & 3.33705 & 3.45730 & 0.02905 \\
Adam $\beta_2$ to fixed & 3.26952 & 3.28214 & 3.31272 & 3.38241 & 0.00240 \\
\bottomrule
\end{tabular}
\caption{One coordinate changes around the selected common rule. Every row changes exactly one entry and keeps the other five entries fixed. Mean BSZ regret is the equally weighted mean over the four target batch-size choices \(256\mathrm K,512\mathrm K,1\mathrm M,2\mathrm M\) of each row's loss gap to the best tested rule at the same batch size.}
\label{tab:muonw_transfer_coordinate}
\end{table}

\textbf{The selected matrix scaling agrees with a Power Lines prescription.}
It matches the fixed-learning-rate member of the Power Lines family in \eqref{eq:powerline_scaling_family}, preserving its nominal parameter-averaging timescale at fixed token budget~\citep{bergsma2025powerlines}.
The choice to transport Adam \(\beta_2\) does not follow from that timescale.

\begin{figure}[H]
    \vspace{-1em}
    \centering
    \includegraphics[width=0.72\textwidth]{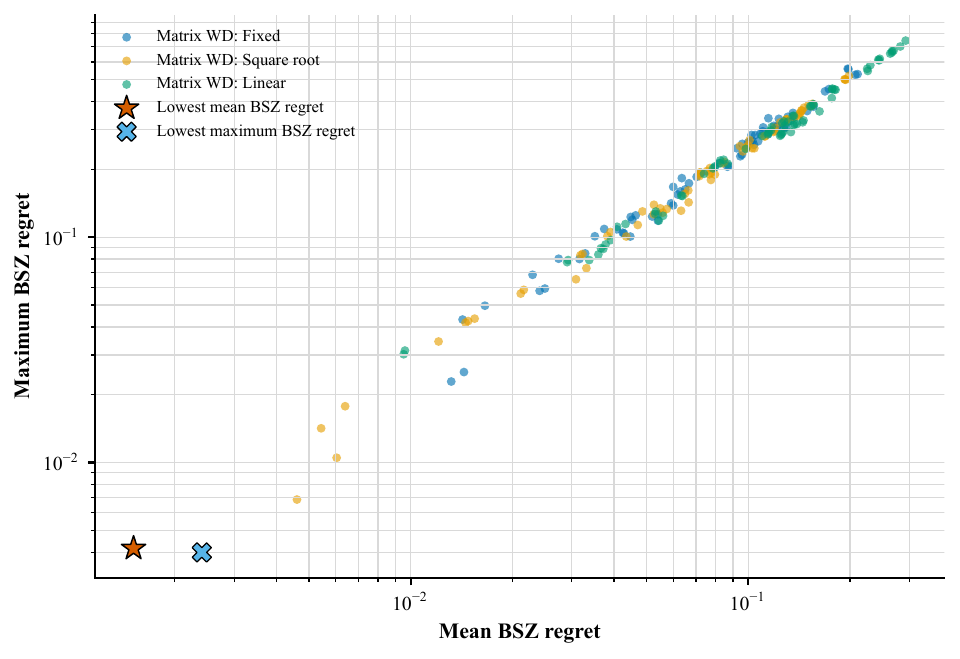}
    \caption{Mean and maximum BSZ regret for all \(216\) transfer rules.
    At each batch size, BSZ regret is the rule's validation loss minus
    the lowest validation loss among the tested rules at that same batch size.
    Each point represents one rule; the mean (horizontal axis, equal weights)
    and maximum (vertical axis) are taken over the four target batch-size
    choices \(256\mathrm K,512\mathrm K,1\mathrm M,2\mathrm M\) tokens.
    The star has the lowest mean BSZ regret, and the cross has the lowest
    maximum BSZ regret. Lower is better; both axes use logarithmic scales.}
    \label{fig:muonw_transfer_regret_scatter}
\end{figure}

\textbf{The retuning comparison combines the grid with additional searches and repeated runs.}
The language-model full-retuning curve in \Cref{fig:muonw_transfer_comparison} reports medians over \(2\)--\(4\) seeds, except at \(256\mathrm K\), where it uses the best grid loss as described below.
Other points are single runs.

At \(256\mathrm K\), we used the coordinate-descent procedure detailed in \Cref{app:llm_cd_protocol}, with an acceptance threshold of \(0.001\).
The initial matrix learning rate was \(0.0210224\), with embedding, output-projection, and scalar learning rates \((0.588627,0.00336358,0.0126134)\).
The remaining initial values were \(\mu=0.95\), \(\beta_1=0.8\), \(\beta_2=0.984992\), matrix weight decay \(0.05\), and auxiliary cooldown fraction \(0.8\).
No move passed the threshold.
Using the same seed, the search configuration's loss \(3.2678\) matched the \(3.2677\) of square-root learning-rate scaling with all other coordinates fixed.
In the original grid, that scaling rule gave \(3.2660\), within \(0.0005\) of the best grid loss \(3.2655\), which additionally transports \(\beta_2\).
The search found no improvement exceeding its threshold, and the plotted full-retuning point uses the best grid loss \(3.2655\).

\textbf{Scaling auxiliary weight decay offers little benefit.}
At \(1\mathrm M\) and \(2\mathrm M\) tokens, we extend each of the original \(216\) rules with square-root and linear scaling of auxiliary weight decay from its reference value \(0.001\), adding \(432\) single-seed runs per target.
For the original common rule, square-root scaling reduces validation loss by only \(0.0037\) and \(0.0027\) nats at these two targets, respectively.
Across all \(216\) rules, it instead raises loss by medians of \(0.001\) and \(0.005\) nats relative to fixed auxiliary decay.
Linear scaling raises loss by medians of \(0.012\) and \(0.033\) nats, respectively.
\Cref{tab:muonw_transfer_raw} reports the full measurements alongside the original rules.

\subsection{Comparison across settings}
\label{app:s2-rank-comparison}

\textbf{The complete grids favor different common rules.}
We select a common rule from all \(216\) language-model rules and all \(648\) CIFAR-5M rules, using each experiment's target batches.
The language-model sweep selects fixed learning rates and linear matrix weight decay, whereas CIFAR-5M selects square-root learning rates and fixed weight decays.
Both selected rules keep Muon momentum and Adam \(\beta_1\) fixed; the language-model rule additionally transports \(\beta_2\).

\textbf{The tested bound-minimizing Muon prescription ranks better in both settings.}
Both prescriptions fix matrix weight decay.
The bound-minimizing prescription uses square-root learning-rate scaling with fixed momentum; trajectory matching uses linear scaling with retention-preserving momentum.
Varying the unspecified AdamW choices gives \(12\) rules per family for language modeling and \(36\) for CIFAR-5M.
We rank each rule by its mean loss over \(\kappaBatch\in\{2,4,8,16\}\) within the complete grid, averaging tied ranks.
We then average ranks within each family; lower is better.
The bound-minimizing and trajectory-matching families rank \(52.1\) versus \(141.9\) among \(216\) language-model rules, and \(59.9\) versus \(347.7\) among \(648\) CIFAR-5M rules.

\textbf{The rankings depend on the setting and batch range.}
The main grids use one seed and one reference configuration per setting.
The repeated runs described above provide checks for selected configurations.
A family rank averages over its AdamW completions, whereas a coordinate panel averages losses over all compatible rules.
The dependence on setting and batch range motivates independent retuning of every optimizer at every batch size in the optimizer comparisons of \Cref{sec:llm_experiments}.

\subsection{Complete measurements}
\label{app:scaling_complete_measurements}

\Cref{tab:cifar5m_transfer_raw} reports all \(3{,}888\) CIFAR-5M measurements, with one row per rule and six target-batch losses.
Its entries are cross-entropies averaged over the final ten evaluations.
\Cref{tab:muonw_transfer_raw} combines the \(864\) original language-model terminal losses with the \(864\) additional auxiliary weight-decay measurements.
Each original rule is followed by its square-root and linear auxiliary-decay variants, which were evaluated only at \(1\mathrm M\) and \(2\mathrm M\).
All \(864\) original language-model runs reached their terminal step with finite loss.
The CIFAR-5M rows and language-model groups retain their original order by mean BSZ regret and highlight the selected common rule in each setting.
Language-model ranks and mean BSZ regrets use the four-batch original grid; the partial auxiliary-decay variants have no four-batch rank or BSZ regret.
\clearpage
\begingroup
\scriptsize
\setlength{\tabcolsep}{2.8pt}
\renewcommand{\arraystretch}{1.02}
\setlength{\LTpre}{0pt}
\setlength{\LTpost}{0pt}
\setlength{\LTleft}{\fill}
\setlength{\LTright}{\fill}
\setlength{\LTcapwidth}{\textwidth}
\begin{longtable}{@{}rcccccccrrrrrrr@{}}
\caption{Complete CIFAR-5M validation losses for all \(648\) scaling rules at six target batches, ordered by mean BSZ regret \(\overline\Delta\). This is the equally weighted mean over the six target batch-size choices of the loss gap to the best tested rule at the same batch size. Each loss is cross-entropy in nats averaged over the final ten evaluations of one run with seed \(42\); lower is better. Batch sizes count examples. Rule codes are F (fixed), S (square-root), L (linear), and E (retention preservation in processed examples), with \(q(B)=q(256)^{B/256}\). Bold marks the selected common rule. Ordering and BSZ regret use unrounded losses; displayed values have five decimal places.}\label{tab:cifar5m_transfer_raw}\\
\toprule
Rank & \multicolumn{4}{c}{Parameter scaling} & \multicolumn{3}{c}{EMA scaling} & \multicolumn{6}{c}{Validation loss} & \shortstack{Mean BSZ\\regret} \\
\cmidrule(lr){2-5}\cmidrule(lr){6-8}\cmidrule(lr){9-14}
& $\eta_{\rm mat}$ & $\lambda_{\rm mat}$ & $\eta_{\rm aux}$ & $\lambda_{\rm aux}$ & $\mu$ & $\beta_1$ & $\beta_2$ & 64 & 128 & 512 & 1024 & 2048 & 4096 & $\overline{\Delta}$ \\
\midrule
\endfirsthead
\multicolumn{15}{c}{\tablename\ \thetable\ continued}\\
\toprule
Rank & \multicolumn{4}{c}{Parameter scaling} & \multicolumn{3}{c}{EMA scaling} & \multicolumn{6}{c}{Validation loss} & \shortstack{Mean BSZ\\regret} \\
\cmidrule(lr){2-5}\cmidrule(lr){6-8}\cmidrule(lr){9-14}
& $\eta_{\rm mat}$ & $\lambda_{\rm mat}$ & $\eta_{\rm aux}$ & $\lambda_{\rm aux}$ & $\mu$ & $\beta_1$ & $\beta_2$ & 64 & 128 & 512 & 1024 & 2048 & 4096 & $\overline{\Delta}$ \\
\midrule
\endhead
\midrule
\multicolumn{15}{r}{Continued on the next page}\\
\endfoot
\bottomrule
\endlastfoot
\textbf{1} & \textbf{S} & \textbf{F} & \textbf{S} & \textbf{F} & \textbf{F} & \textbf{F} & \textbf{F} & \textbf{0.23370} & \textbf{0.23136} & \textbf{0.23373} & \textbf{0.22778} & \textbf{0.23290} & \textbf{0.23683} & \textbf{0.00492} \\
2 & S & F & F & S & F & F & E & 0.22970 & 0.23049 & 0.23440 & 0.23301 & 0.23772 & 0.23346 & 0.00533 \\
3 & L & F & S & S & F & E & E & 0.23435 & 0.23346 & 0.22739 & 0.22892 & 0.23043 & 0.24545 & 0.00553 \\
4 & S & F & S & L & F & F & F & 0.23374 & 0.23289 & 0.23135 & 0.23350 & 0.23206 & 0.23692 & 0.00561 \\
5 & S & F & S & F & F & F & E & 0.23201 & 0.23276 & 0.23128 & 0.23387 & 0.23190 & 0.23986 & 0.00581 \\
6 & L & F & F & L & F & E & E & 0.23920 & 0.23106 & 0.23005 & 0.23157 & 0.22957 & 0.24050 & 0.00586 \\
7 & S & S & S & F & F & F & F & 0.23225 & 0.23205 & 0.22913 & 0.23220 & 0.23285 & 0.24356 & 0.00587 \\
8 & L & F & S & L & F & E & F & 0.23617 & 0.23393 & 0.22979 & 0.22867 & 0.23112 & 0.24471 & 0.00626 \\
\end{longtable}
\endgroup

\clearpage
\begingroup
\scriptsize
\setlength{\tabcolsep}{2.8pt}
\renewcommand{\arraystretch}{1.02}
\setlength{\LTpre}{0pt}
\setlength{\LTpost}{0pt}
\setlength{\LTleft}{\fill}
\setlength{\LTright}{\fill}
\setlength{\LTcapwidth}{\textwidth}
\begin{longtable}{@{}rcccccccrrrrr@{}}
\caption{Terminal validation losses for the 216 original language-model rules and their auxiliary weight-decay extensions. Each group contains fixed (F), square-root (S), and linear (L) auxiliary decay; E denotes retention preservation over the complete run. Groups retain the original ranking by mean BSZ regret, the equally weighted mean over the four target batch-size choices of the loss gap to the original grid minimum at the same batch size. Dashes mark unmeasured losses and unavailable four-batch ranks and BSZ regrets. Bold marks the common rule selected from the original grid.}\label{tab:muonw_transfer_raw}\\
\toprule
Rank & \multicolumn{4}{c}{Parameter scaling} & \multicolumn{3}{c}{EMA scaling} & \multicolumn{4}{c}{Terminal validation loss} & \shortstack{Mean BSZ\\regret} \\
\cmidrule(lr){2-5}\cmidrule(lr){6-8}\cmidrule(lr){9-12}
& $\eta_{\rm mat}$ & $\lambda_{\rm mat}$ & $\eta_{\rm aux}$ & $\lambda_{\rm aux}$ & $\mu$ & $\beta_1$ & $\beta_2$ & 256K & 512K & 1M & 2M & $\overline{\Delta}$ \\
\midrule
\endfirsthead
\multicolumn{13}{c}{\tablename\ \thetable\ continued}\\
\toprule
Rank & \multicolumn{4}{c}{Parameter scaling} & \multicolumn{3}{c}{EMA scaling} & \multicolumn{4}{c}{Terminal validation loss} & \shortstack{Mean BSZ\\regret} \\
\cmidrule(lr){2-5}\cmidrule(lr){6-8}\cmidrule(lr){9-12}
& $\eta_{\rm mat}$ & $\lambda_{\rm mat}$ & $\eta_{\rm aux}$ & $\lambda_{\rm aux}$ & $\mu$ & $\beta_1$ & $\beta_2$ & 256K & 512K & 1M & 2M & $\overline{\Delta}$ \\
\midrule
\endhead
\midrule
\multicolumn{13}{r}{Continued on the next page}\\
\endfoot
\bottomrule
\endlastfoot
\textbf{1} & \textbf{F} & \textbf{L} & \textbf{F} & \textbf{F} & \textbf{F} & \textbf{F} & \textbf{E} & \textbf{3.26969} & \textbf{3.28192} & \textbf{3.31183} & \textbf{3.37977} & \textbf{0.00151} \\*
-- & F & L & F & S & F & F & E & -- & -- & 3.30815 & 3.37707 & -- \\*
-- & F & L & F & L & F & F & E & -- & -- & 3.30896 & 3.37751 & -- \\
\addlinespace[1pt]
2 & F & L & F & F & F & F & F & 3.26952 & 3.28214 & 3.31272 & 3.38241 & 0.00240 \\*
-- & F & L & F & S & F & F & F & -- & -- & 3.30956 & 3.38014 & -- \\*
-- & F & L & F & L & F & F & F & -- & -- & 3.30969 & 3.38111 & -- \\
\addlinespace[1pt]
3 & F & S & F & F & F & F & E & 3.26862 & 3.28334 & 3.31700 & 3.38662 & 0.00460 \\*
-- & F & S & F & S & F & F & E & -- & -- & 3.31425 & 3.38392 & -- \\
\end{longtable}
\endgroup

\clearpage
\endgroup

\clearpage
\begingroup
\raggedbottom
\section{Language-Model Optimizer Crossover Experiments}
\label{app:optimizer_crossover_experiments}
\label{app:llm_experiment_details}

We compare Adam, Lion, Muon, KL-SOAP, and Shampoo in the language-model
setting of \Cref{sec:llm_experiments} at batch sizes
\(B\in\{128\mathrm K,512\mathrm K,1\mathrm M,2\mathrm M\}\) tokens and a
training budget of approximately \(1.7\)B tokens.
Each optimizer is tuned independently at every batch size under both
decoupled weight decay and HyperBall norm control.
We rank the selected configurations by terminal validation loss, with lower
values indicating better performance.
The SOAP variant we use throughout is KL-SOAP~\citep{lin2026klshampoo}, which
the main text calls SOAP for brevity.
An optimizer crossover is a reversal of this ordering across batch sizes
after retuning.

\subsection{Implementations and common protocol}

We follow the notation of \Cref{app:s2-protocol}:
$\eta_{\rm M},\lambda_{\rm M}$ are the hidden-matrix learning rate and
weight decay, and $\eta_{\rm A},\lambda_{\rm A}$ are their auxiliary-Adam
counterparts.  Here the subscript $\mathrm M$ applies to whichever
hidden-matrix optimizer is being compared; in \Cref{app:s2-protocol} it
is MuonW.  The auxiliary learning rates are
$\eta_{\rm Emb.},\eta_{\rm Out.},\eta_{\rm Scalar}$ for embeddings,
the output projection, and scalar parameters, respectively.
The search coordinate $\eta_{\rm A}$ scales all three together.
As in \Cref{app:s2-protocol}, $\mu$ is Muon momentum and
$\beta_1,\beta_2$ are the auxiliary Adam retention coefficients.
We distinguish the matrix optimizer's coefficients as
$\beta_1^{\rm M},\beta_2^{\rm M}$ and write $\beta_{\rm sh}$ for
KL-SOAP's preconditioner retention coefficient.
The auxiliary learning-rate cooldown occupies the fraction $f_{\rm cd}$
of training.  We write terminal validation loss as $L_{\rm val}$.
Learning rates and weight decays in the tables are peak values;
an additional subscript $t$ denotes their values at update $t$.

All methods use the same model, token stream, validation set, precision,
distributed training code, and learning-rate schedule shape.  For a hidden
matrix parameter $W$, the WD family applies decoupled weight decay as
\[
    W_{t+1}=(1-\eta_{{\rm M},t}\lambda_{{\rm M},t})W_t-\eta_{{\rm M},t} U_t,
\]
where $U_t$ is the update produced by the named optimizer.  The HyperBall
family instead applies the corresponding norm-preserving matrix projection
and uses no matrix weight decay.  We do not tune the preconditioner frequency,
preconditioner initialization, or schedule family.

\subsection{Optimizer definitions}

For a hidden matrix $W\in\mathbb{R}^{m\times n}$ with gradient $G_t$, the
boxes define its raw direction $U_t$ and wrapper; scalar operations are
entrywise and distributed details are omitted.  Other parameter groups use
bias-corrected Adam with their tabulated learning rates and absolute decay
$\lambda_{\rm A}$.  HyperBall acts only on hidden matrices.

\begin{algorithm}[H]
\caption{WD and HyperBall matrix wrappers}
\label{alg:llm_matrix_wrappers}
\begin{tcolorbox}[optimizerblock]
\small
Let $(U_t,S_{t+1})=\mathcal{D}_{\mathsf O}(G_t,S_t)$ and
$\epsilon_H=10^{-10}$.  The two matrix families differ only in the final map:
\begin{align*}
\textsc{WD:}\quad
  W_{t+1}&=(1-\eta_{{\rm M},t}\lambda_{{\rm M},t})W_t-\eta_{{\rm M},t}U_t,\\
\textsc{HyperBall:}\quad
  r_t&=\lVert W_t\rVert_F,\qquad
  Z_t=W_t-\eta_{{\rm M},t} r_t\frac{U_t}{\max(\lVert U_t\rVert_F,\epsilon_H)},\\
  W_{t+1}&=r_t\frac{Z_t}{\max(\lVert Z_t\rVert_F,\epsilon_H)}.
\end{align*}
\end{tcolorbox}
\end{algorithm}

\begin{algorithm}[H]
\caption{Adam and Lion raw directions}
\label{alg:llm_adam_lion}
\begin{tcolorbox}[optimizerblock]
\small
\textbf{Adam} ($M_0=V_0=0$):
\begin{align*}
  M_t&=\beta_{1}^{\rm M}M_{t-1}+(1-\beta_{1}^{\rm M})G_t,\qquad
  V_t=\beta_{2}^{\rm M}V_{t-1}+(1-\beta_{2}^{\rm M})G_t^{\odot2},\\
  U_t&=\frac{M_t/(1-(\beta_{1}^{\rm M})^t)}
            {\sqrt{V_t/(1-(\beta_{2}^{\rm M})^t)}+\epsilon}.
\end{align*}
\textbf{Lion} ($M_0=0$):
\begin{align*}
  U_t&=\operatorname{sign}\!\left(\beta_{1}^{\rm M}M_{t-1}+(1-\beta_{1}^{\rm M})G_t\right),
  \qquad M_t=\beta_{2}^{\rm M}M_{t-1}+(1-\beta_{2}^{\rm M})G_t.
\end{align*}
\end{tcolorbox}
\end{algorithm}

\begin{algorithm}[H]
\caption{Muon raw direction with the Newton--Schulz map}
\label{alg:llm_muon}
\begin{tcolorbox}[optimizerblock]
\small
\begin{enumerate}[label=\arabic*.,leftmargin=1.6em,itemsep=1pt,topsep=3pt]
\item Set $\omega=\mu$ and $M_0=0$. Form $M_t=\mu M_{t-1}+(1-\mu)G_t$ and
      $X=\omega M_t+(1-\omega)G_t$.
\item Cast $X$ to bfloat16, transpose if $m>n$, and normalize
      $X\leftarrow X/(\lVert X\rVert_F+10^{-7})$.
\item For 12 iterations with $(a,b,c)=(2,-1.5,0.5)$, set
      $A=XX^\top$, $B=bA+cA^2$, and $X\leftarrow aX+BX$.
\item Undo the transpose and return
      $U_t=\sqrt{\max(1,m/n)}\,X$.
\end{enumerate}
\end{tcolorbox}
\end{algorithm}

MuonW and MuonH share this Nesterov--Newton--Schulz direction
\citep{jordan2024muon}.

\begin{algorithm}[H]
\caption{KL-SOAP raw direction}
\label{alg:llm_klsoap}
\begin{tcolorbox}[optimizerblock]
\small
\textbf{Initialize from $G_1$ and set $U_1=0$:}
$M_1=V_1=0$, $C_L=G_1G_1^\top/n$, $C_R=G_1^\top G_1/m$, and
$(Q_s,\rho_s)=(\operatorname{eigvec}_{\rm rev}(C_s),
0.1^{-1/2}\mathbf 1)$ for $s\in\{L,R\}$.

\textbf{Direction for $t\geq2$:}
\begin{align*}
\widehat G_t&=Q_L^\top G_tQ_R,\\
M_t&=\beta_{1}^{\rm M}M_{t-1}+(1-\beta_{1}^{\rm M})\widehat G_t,\\
V_t&=\beta_{2}^{\rm M}V_{t-1}+(1-\beta_{2}^{\rm M})\widehat G_t^{\odot2},\\
U_t&=Q_L\frac{M_t}{\sqrt{V_t}+\epsilon}Q_R^\top
      \qquad\text{(no bias correction)}.
\end{align*}

\textbf{Refresh.}
After forming $U_t$, update $C_L,C_R$ from the right- and left-whitened
gradient covariances and $\rho_L,\rho_R$ from
$\operatorname{diag}(\widehat G_t)$; then set
$Q_s\leftarrow\operatorname{qr}(C_sQ_s)$ and re-express $M_t$ in the new basis
for $s\in\{L,R\}$.  This refresh is performed every step.
\end{tcolorbox}
\end{algorithm}

\begin{algorithm}[H]
\caption{Distributed Shampoo raw direction}
\label{alg:llm_shampoo}
\begin{tcolorbox}[optimizerblock]
\small
For each distributed matrix block, let $A_t$ be the library's
Adam-preconditioned graft direction.  The Shampoo direction is
\begin{align*}
 M_t&=\beta_{1}^{\rm M}M_{t-1}+(1-\beta_{1}^{\rm M})G_t,\\
 C_L&\leftarrow\beta_{2}^{\rm M}C_L+(1-\beta_{2}^{\rm M})G_tG_t^\top/n,\qquad
 C_R\leftarrow\beta_{2}^{\rm M}C_R+(1-\beta_{2}^{\rm M})G_t^\top G_t/m,\\
 S_t&=C_L^{-1/4}M_tC_R^{-1/4},\qquad
 U_t=\frac{\lVert A_t\rVert_F}{\lVert S_t\rVert_F}\,S_t.
\end{align*}
\end{tcolorbox}
\end{algorithm}

Here $C_s^{-1/4}$ is the eigen-based pseudoinverse fourth root.  The fixed
Shampoo implementation hyperparameters are
\[
\begin{aligned}
\texttt{precondition\_frequency}&=1, &
\epsilon_{\rm factor}&=0,\\
\texttt{start\_preconditioning\_step}&=-1, &
\epsilon_{\rm graft}&=10^{-15},\\
\texttt{max\_preconditioner\_dim}&=8192.
\end{aligned}
\]
\subsection{Coordinate-descent protocol}
\label{app:llm_cd_protocol}

The immediate-neighbor audit below applies to the WD campaign.  Let
$B_0=512\mathrm K$ and $\kappaBatch=B/B_0$, as in \Cref{app:s2-protocol}.
For each learning-rate coordinate $\eta$ with reference value $\eta_0$,
the initial transfer uses $\eta(B)=\eta_0\sqrt{\kappaBatch}$.
First-moment coefficients are held constant
under this initial transfer, while second-moment and preconditioner
coefficients, denoted generically by $q$ with reference value $q_0=q(B_0)$,
use the time-scale anchor $q(B)=q_0^{\kappaBatch}$.  After this transfer, a one-coordinate
sweep of an ordinary positive multiplicative coordinate $x$, such as a
learning rate, evaluates
\[
  x'\in x\{1/2,1/\sqrt{2},1,\sqrt{2},2\}.
\]
The factors $1/\sqrt{2}$ and $\sqrt{2}$ are the adjacent grid points, while
$1/2$ and $2$ provide a wider search around the same center.  For any
momentum or EMA coefficient $q$ (including $\mu$ and
$\beta_{\rm sh}$), the same factors act on its refresh rate rather than on
$q$ directly:
\[
  1-q'=m(1-q),
  \qquad m\in\{1/2,1/\sqrt{2},1,\sqrt{2},2\}.
\]
The cooldown coordinate instead uses the absolute grid
\[
  f_{\rm cd}\in\{0,0.2,0.4,0.6,0.8\},
\]
and the matrix weight-decay grid is
\begin{multline*}
  \lambda_{\rm M}\in
  \{0,0.01,0.025,0.05,0.1,0.15,0.2,0.3,0.4,0.5,0.6,0.8,1.0,1.2,\\
  1.4,1.6,1.8,2.0,2.4,2.8,3.2,3.6,4.0,4.4,4.8,5.2,5.6,6.0,6.4,6.8,7.2,8.0\}.
\end{multline*}
The grid is deliberately denser near zero.  Weight decay is searched in these
absolute units rather than multiplicatively.

At each iteration, all coordinates except one are fixed at the current
hyperparameter configuration.  A candidate replaces the current value only if its median validation
loss improves by more than $\Delta=0.002$. Gains in
$[0.002,0.006)$ require three matching repeats; larger gains require two, and
every selected center has at least two repeats.  After the last accepted move,
we evaluate the immediate lower and upper grid neighbors of every active
coordinate.  We call an entry \emph{locally closed} when none of these
neighbors exceeds the acceptance threshold under the multi-seed repeat
policy.

\begin{table}[H]
\centering
\small
\setlength{\tabcolsep}{5pt}
\renewcommand{\arraystretch}{1.08}
\begin{tabularx}{\textwidth}{@{}l>{\raggedright\arraybackslash}Xc@{}}
\toprule
Method & Tuned coordinates & Count \\
\midrule
MuonW & $\eta_{\rm M},\mu,\lambda_{\rm M},\eta_{\rm A},\beta_1,\beta_2,f_{\rm cd}$ & 7 \\
\addlinespace[2pt]
KL-SOAP-W & $\eta_{\rm M},\beta_{1}^{\rm M},\beta_{2}^{\rm M},\beta_{\rm sh},\lambda_{\rm M},\eta_{\rm A},\beta_1,\beta_2,f_{\rm cd}$ & 9 \\
\addlinespace[2pt]
ShampooW & $\eta_{\rm M},\lambda_{\rm M},\eta_{\rm A},\beta_1,\beta_2,f_{\rm cd}$ & 6 \\
\addlinespace[2pt]
AdamW & $\eta_{\rm M},\beta_{1}^{\rm M},\beta_{2}^{\rm M},\lambda_{\rm M},\eta_{\rm A},\beta_1,\beta_2,f_{\rm cd}$ & 8 \\
\addlinespace[2pt]
LionW & $\eta_{\rm M},\beta_{1}^{\rm M},\beta_{2}^{\rm M},\lambda_{\rm M},\eta_{\rm A},\beta_1,\beta_2,f_{\rm cd}$ & 8 \\
\bottomrule
\end{tabularx}
\caption{Coordinates tuned in the WD campaign.}
\label{tab:llm_cd_coordinates}
\end{table}

The complete one-coordinate audit is collected in
\Cref{app:llm_cd_sweep_tables}.

\textbf{HyperBall search schedule.}
HyperBall uses the same coordinate-descent move criterion, with
optimizer-specific learning rates, momenta, and auxiliary cooldown swept  around each center.

\subsection{Final weight-decay hyperparameters}

\textbf{The lowest-loss optimizer changes from KL-SOAP to Shampoo as batch size increases.}
Under decoupled weight decay, KL-SOAP-W attains the lowest terminal validation
loss at \(128\mathrm K\), \(3.2536\), compared with \(3.2650\) for ShampooW.
At \(2\mathrm M\), ShampooW attains the lowest loss, \(3.3622\), while
KL-SOAP-W reaches \(3.3833\).
This reversal persists despite independent hyperparameter tuning at each
batch size (\Cref{fig:track3_batch_transfer,tab:llm_matrix_hparams}).

\Cref{tab:llm_matrix_hparams,tab:llm_aux_hparams} together specify the
selected configurations in absolute hyperparameter units.
The losses are terminal validation losses for the selected configurations,
and boldface marks the numerical minimum at each batch size.
Shampoo's matrix moment coefficients are fixed rather than tuned;
blank entries denote coefficients that are not part of an optimizer.
Hyperparameters are shown to three significant figures and losses to four
decimal places.

\begin{table}[H]
\centering
\footnotesize
\setlength{\tabcolsep}{2.0pt}
\renewcommand{\arraystretch}{1.10}
\begin{tabular}{@{}llrrrrrrrr@{}}
\toprule
& & & & \multicolumn{4}{c}{Momentum / preconditioner} & & \\
\cmidrule(lr){5-8}
$B$ & Method & Steps & $\eta_{\rm M}$ & $\mu$ & $\beta_{1}^{\rm M}$ & $\beta_{2}^{\rm M}$ & $\beta_{\rm sh}$ & $\lambda_{\rm M}$ & $L_{\rm val}$ \\
\midrule
128K & MuonW & 13000 & 0.0177 & 0.95 & -- & -- & -- & 0.025 & 3.2656 \\
128K & KL-SOAP-W & 13000 & 0.0045 & -- & 0.965 & 0.974 & 0.948 & 0.1 & \textbf{3.2536} \\
128K & ShampooW & 13000 & 0.005 & -- & 0.9 & 0.9 & -- & 0.05 & 3.2650 \\
128K & AdamW & 13000 & 0.00075 & -- & 0.94 & 0.949 & -- & 0.3 & 3.3151 \\
128K & LionW & 13000 & $5\!\times\!10^{-5}$ & -- & 0.9 & 0.99 & -- & 8.6 & 3.3176 \\
\addlinespace[3pt]
512K & MuonW & 3250 & 0.025 & 0.95 & -- & -- & -- & 0.05 & 3.2774 \\
512K & KL-SOAP-W & 3250 & 0.0127 & -- & 0.9 & 0.859 & 0.859 & 0.15 & \textbf{3.2761} \\
512K & ShampooW & 3250 & 0.00707 & -- & 0.9 & 0.9 & -- & 0.15 & 3.2809 \\
512K & AdamW & 3250 & 0.0015 & -- & 0.9 & 0.859 & -- & 0.6 & 3.3392 \\
512K & LionW & 3250 & 0.0001 & -- & 0.95 & 0.96 & -- & 0.05 & 3.3907 \\
\addlinespace[3pt]
1M & MuonW & 1625 & 0.025 & 0.95 & -- & -- & -- & 0.1 & 3.3108 \\
1M & KL-SOAP-W & 1625 & 0.00636 & -- & 0.929 & 0.866 & 0.866 & 0.6 & \textbf{3.2974} \\
1M & ShampooW & 1625 & 0.00707 & -- & 0.9 & 0.9 & -- & 0.3 & 3.3090 \\
1M & AdamW & 1625 & 0.0015 & -- & 0.859 & 0.931 & -- & 0.8 & 3.4028 \\
1M & LionW & 1625 & 0.000141 & -- & 0.95 & 0.944 & -- & 0.15 & 3.4452 \\
\addlinespace[3pt]
2M & MuonW & 813 & 0.025 & 0.95 & -- & -- & -- & 0.1 & 3.3659 \\
2M & KL-SOAP-W & 813 & 0.036 & -- & 0.929 & 0.828 & 0.878 & 0.4 & 3.3833 \\
2M & ShampooW & 813 & 0.01 & -- & 0.9 & 0.9 & -- & 0.5 & \textbf{3.3622} \\
2M & AdamW & 813 & 0.00212 & -- & 0.9 & 0.869 & -- & 1.8 & 3.4781 \\
2M & LionW & 813 & 0.0002 & -- & 0.9 & 0.889 & -- & 0.3 & 3.5562 \\
\bottomrule
\end{tabular}
\caption{Final WD hidden-matrix hyperparameters and terminal validation loss.}
\label{tab:llm_matrix_hparams}
\end{table}

\begin{table}[H]
\centering
\footnotesize
\setlength{\tabcolsep}{2.5pt}
\renewcommand{\arraystretch}{1.10}
\begin{tabular}{@{}llrrrrrrr@{}}
\toprule
& & \multicolumn{3}{c}{Auxiliary learning rates} & \multicolumn{2}{c}{Adam coefficients} & & \\
\cmidrule(lr){3-5}\cmidrule(lr){6-7}
$B$ & Method & $\eta_{\rm Emb.}$ & $\eta_{\rm Out.}$ & $\eta_{\rm Scalar}$ & $\beta_1$ & $\beta_2$ & $f_{\rm cd}$ & $\lambda_{\rm A}$ \\
\midrule
128K & MuonW & 0.495 & 0.00283 & 0.0106 & 0.8 & 0.994 & 0.8 & 0.001 \\
128K & KL-SOAP-W & 0.15 & 0.00156 & 0.005 & 0.434 & 0.995 & 0.4 & 0 \\
128K & ShampooW & 1.2 & 0.0125 & 0.04 & 0.52 & 0.995 & 0.6 & 0 \\
128K & AdamW & 0.849 & 0.00884 & 0.0283 & 0.717 & 0.995 & 0.4 & 0 \\
128K & LionW & 0.287 & 0.00299 & 0.00958 & 0.859 & 0.995 & 0.6 & 0 \\
\addlinespace[3pt]
512K & MuonW & 0.7 & 0.004 & 0.015 & 0.8 & 0.965 & 0.7 & 0.001 \\
512K & KL-SOAP-W & 0.849 & 0.00884 & 0.0283 & 0.8 & 0.975 & 0.4 & 0 \\
512K & ShampooW & 0.849 & 0.00884 & 0.0283 & 0.8 & 0.982 & 0.6 & 0 \\
512K & AdamW & 0.849 & 0.00884 & 0.0283 & 0.859 & 0.975 & 0.4 & 0 \\
512K & LionW & 0.212 & 0.00221 & 0.00707 & 0.717 & 0.95 & 0.2 & 0 \\
\addlinespace[3pt]
1M & MuonW & 0.99 & 0.00566 & 0.0212 & 0.8 & 0.931 & 0.8 & 0.001 \\
1M & KL-SOAP-W & 0.6 & 0.00625 & 0.02 & 0.8 & 0.951 & 0.6 & 0 \\
1M & ShampooW & 0.849 & 0.00884 & 0.0283 & 0.8 & 0.971 & 0.6 & 0 \\
1M & AdamW & 0.6 & 0.00625 & 0.02 & 0.859 & 0.902 & 0.4 & 0 \\
1M & LionW & 0.3 & 0.00313 & 0.01 & 0.6 & 0.902 & 0.2 & 0 \\
\addlinespace[3pt]
2M & MuonW & 0.7 & 0.004 & 0.015 & 0.859 & 0.907 & 0.4 & 0.001 \\
2M & KL-SOAP-W & 0.6 & 0.00625 & 0.02 & 0.8 & 0.934 & 0.4 & 0 \\
2M & ShampooW & 0.6 & 0.00625 & 0.02 & 0.8 & 0.944 & 0.6 & 0 \\
2M & AdamW & 3.39 & 0.0354 & 0.113 & 0.929 & 0.967 & 0.6 & 0 \\
2M & LionW & 0.3 & 0.00313 & 0.01 & 0.8 & 0.954 & 0.2 & 0 \\
\bottomrule
\end{tabular}
\caption{Final WD auxiliary-Adam hyperparameters.}
\label{tab:llm_aux_hparams}
\end{table}

\subsection{Optimizer crossover under HyperBall}
\label{app:llm_hyperball_crossover}

\textbf{The change in winner also occurs under HyperBall.}
With the norm-preserving map of \Cref{alg:llm_matrix_wrappers} and independent
retuning, KL-SOAP-H has the lowest loss at \(128\mathrm K\), \(3.2585\),
whereas ShampooH has the lowest loss at \(2\mathrm M\), \(3.3390\).
At \(1\mathrm M\), the two are nearly tied, with reported losses of
\(3.2962\) and \(3.2965\), respectively.
Thus the same endpoint reversal appears under both weight decay and
HyperBall (\Cref{fig:track3_batch_transfer_hyperball,tab:llm_hyperball_matrix_hparams}).

\textbf{Other optimizer pairs also reverse their ordering.}
MuonH has lower loss than ShampooH at \(128\mathrm K\), but ShampooH has
lower loss at all three larger batches.
Their difference at \(512\mathrm K\) is only about \(0.0007\).
LionH has lower loss than AdamH at \(128\mathrm K\), while AdamH has lower
loss at every larger batch.
Both remain above MuonH, KL-SOAP-H, and ShampooH in validation loss at every
tested batch size.

\subsection{Final HyperBall hyperparameters}

\Cref{tab:llm_hyperball_matrix_hparams,tab:llm_hyperball_aux_hparams}
specify the selected HyperBall configurations in absolute hyperparameter
units.
The loss column gives the terminal validation losses plotted in \Cref{fig:track3_batch_transfer_hyperball}.
These configurations use no matrix or auxiliary weight decay.
As in the weight-decay tables, hyperparameters are shown to three significant
figures and losses to four decimal places, with boldface marking the
numerical minimum at each batch size.

\begin{table}[H]
\centering
\footnotesize
\setlength{\tabcolsep}{3.5pt}
\renewcommand{\arraystretch}{1.10}
\begin{tabular}{@{}llrrrrrrr@{}}
\toprule
& & & & \multicolumn{4}{c}{Momentum / preconditioner} & \\
\cmidrule(lr){5-8}
$B$ & Method & Steps & $\eta_{\rm M}$ & $\mu$ & $\beta_{1}^{\rm M}$ & $\beta_{2}^{\rm M}$ & $\beta_{\rm sh}$ & $L_{\rm val}$ \\
\midrule
128K & MuonH & 13000 & 0.009 & 0.95 & -- & -- & -- & 3.2629 \\
128K & KL-SOAP-H & 13000 & 0.009 & -- & 0.929 & 0.974 & 0.948 & \textbf{3.2585} \\
128K & ShampooH & 13000 & 0.009 & -- & 0.922 & 0.853 & -- & 3.2660 \\
128K & AdamH & 13000 & 0.009 & -- & 0.94 & 0.949 & -- & 3.3013 \\
128K & LionH & 13000 & 0.00636 & -- & 0.929 & 0.987 & -- & 3.2948 \\
\addlinespace[3pt]
512K & MuonH & 3250 & 0.018 & 0.95 & -- & -- & -- & 3.2772 \\
512K & KL-SOAP-H & 3250 & 0.018 & -- & 0.929 & 0.859 & 0.9 & \textbf{3.2707} \\
512K & ShampooH & 3250 & 0.018 & -- & 0.929 & 0.9 & -- & 3.2765 \\
512K & AdamH & 3250 & 0.018 & -- & 0.929 & 0.9 & -- & 3.3325 \\
512K & LionH & 3250 & 0.018 & -- & 0.929 & 0.95 & -- & 3.3360 \\
\addlinespace[3pt]
1M & MuonH & 1625 & 0.0255 & 0.95 & -- & -- & -- & 3.3074 \\
1M & KL-SOAP-H & 1625 & 0.0255 & -- & 0.929 & 0.866 & 0.866 & \textbf{3.2962} \\
1M & ShampooH & 1625 & 0.0255 & -- & 0.929 & 0.848 & -- & 3.2965 \\
1M & AdamH & 1625 & 0.018 & -- & 0.965 & 0.902 & -- & 3.3716 \\
1M & LionH & 1625 & 0.036 & -- & 0.9 & 0.902 & -- & 3.4053 \\
\addlinespace[3pt]
2M & MuonH & 813 & 0.036 & 0.95 & -- & -- & -- & 3.3715 \\
2M & KL-SOAP-H & 813 & 0.036 & -- & 0.929 & 0.656 & 0.878 & 3.3505 \\
2M & ShampooH & 813 & 0.036 & -- & 0.929 & 0.725 & -- & \textbf{3.3390} \\
2M & AdamH & 813 & 0.0255 & -- & 0.965 & 0.907 & -- & 3.4861 \\
2M & LionH & 813 & 0.036 & -- & 0.9 & 0.907 & -- & 3.5540 \\
\bottomrule
\end{tabular}
\caption{Final HyperBall hidden-matrix hyperparameters and terminal validation loss.}
\label{tab:llm_hyperball_matrix_hparams}
\end{table}

\begin{table}[H]
\centering
\footnotesize
\setlength{\tabcolsep}{4.5pt}
\renewcommand{\arraystretch}{1.10}
\begin{tabular}{@{}llrrrrrr@{}}
\toprule
& & \multicolumn{3}{c}{Auxiliary learning rates} & \multicolumn{2}{c}{Adam coefficients} & \\
\cmidrule(lr){3-5}\cmidrule(lr){6-7}
$B$ & Method & $\eta_{\rm Emb.}$ & $\eta_{\rm Out.}$ & $\eta_{\rm Scalar}$ & $\beta_1$ & $\beta_2$ & $f_{\rm cd}$ \\
\midrule
128K & MuonH & 0.3 & 0.00313 & 0.01 & 0.8 & 0.994 & 0.4 \\
128K & KL-SOAP-H & 0.3 & 0.00313 & 0.01 & 0.434 & 0.994 & 0.4 \\
128K & ShampooH & 1.2 & 0.0125 & 0.04 & 0.52 & 0.995 & 0.4 \\
128K & AdamH & 0.424 & 0.00442 & 0.0141 & 0.717 & 0.995 & 0.4 \\
128K & LionH & 0.3 & 0.00313 & 0.01 & 0.717 & 0.994 & 0.4 \\
\addlinespace[3pt]
512K & MuonH & 0.6 & 0.00625 & 0.02 & 0.859 & 0.975 & 0.4 \\
512K & KL-SOAP-H & 0.6 & 0.00625 & 0.02 & 0.8 & 0.975 & 0.8 \\
512K & ShampooH & 0.849 & 0.00884 & 0.0283 & 0.8 & 0.982 & 0.4 \\
512K & AdamH & 0.6 & 0.00625 & 0.02 & 0.8 & 0.975 & 0.4 \\
512K & LionH & 0.424 & 0.00442 & 0.0141 & 0.8 & 0.983 & 0.4 \\
\addlinespace[3pt]
1M & MuonH & 0.6 & 0.00625 & 0.02 & 0.9 & 0.951 & 0.4 \\
1M & KL-SOAP-H & 0.6 & 0.00625 & 0.02 & 0.8 & 0.951 & 0.6 \\
1M & ShampooH & 0.849 & 0.00884 & 0.0283 & 0.8 & 0.971 & 0.4 \\
1M & AdamH & 0.3 & 0.00313 & 0.01 & 0.8 & 0.959 & 0.25 \\
1M & LionH & 0.6 & 0.00625 & 0.02 & 0.9 & 0.976 & 0.4 \\
\addlinespace[3pt]
2M & MuonH & 0.6 & 0.00625 & 0.02 & 0.9 & 0.934 & 0.4 \\
2M & KL-SOAP-H & 0.6 & 0.00625 & 0.02 & 0.8 & 0.907 & 0.6 \\
2M & ShampooH & 0.6 & 0.00625 & 0.02 & 0.8 & 0.944 & 0.4 \\
2M & AdamH & 0.424 & 0.00442 & 0.0141 & 0.8 & 0.921 & 0.25 \\
2M & LionH & 0.6 & 0.00625 & 0.02 & 0.9 & 0.954 & 0.2 \\
\bottomrule
\end{tabular}
\caption{Final HyperBall auxiliary-Adam hyperparameters.}
\label{tab:llm_hyperball_aux_hparams}
\end{table}

\clearpage
\subsection{Complete One-Coordinate Sweep Results}
\label{app:llm_cd_sweep_tables}

The tables below report 643 completed one-coordinate trials across 40
optimizer--batch settings, complementing the 492-run evaluation of the final
configurations.  Each trial varies one hyperparameter while keeping all other
active hyperparameters at their accepted final values, up to numerical precision
in the stored values.  We use the most recent sweep with this configuration and,
for repeated trials, the latest completed result; neither choice depends on
validation loss.  We include only trials with a recorded validation result at the
configured final step.  The first row of each optimizer--batch block gives the accepted
configuration in absolute hyperparameter units.  Each subsequent row changes
only the displayed coordinate; a dash means unchanged from the first row.

The three auxiliary learning rates are tied during search, so the tables show
the absolute embedding rate $\eta_{\rm Emb.}$ while the output-projection and
scalar rates, $\eta_{\rm Out.}$ and $\eta_{\rm Scalar}$, change by the same
factor.  The reported quantity is terminal validation loss, not the
best loss observed earlier in training.

These tables are a local sensitivity record rather than a new configuration-selection
rule.  In particular, a numerically lower single-seed arm is not accepted
unless it also satisfies the multi-seed threshold policy in
\Cref{app:llm_cd_protocol}.

\Needspace{10\baselineskip}

\subsubsection{HyperBall sweeps}

\medskip

\textbf{MuonH}

\smallskip

\begingroup
\small
\renewcommand{\arraystretch}{1.06}
\setlength{\tabcolsep}{3pt}
\setlength{\LTpre}{0pt}
\setlength{\LTpost}{0pt}
\setlength{\LTleft}{\fill}
\setlength{\LTright}{\fill}
\setlength{\LTcapwidth}{\textwidth}
\begin{longtable}{rrrrrrr}
\caption{Hyperparameter ablation for MuonH at batch size $B=128K$.}
\label{tab:llm_cd_muonh_128k} \\
\toprule
$\eta_{\rm M}$ & $\mu$ & $\eta_{\rm Emb.}$ & $\beta_1$ & $\beta_2$ & $f_{\rm cd}$ & $L_{\rm val}$ \\
\midrule
\endfirsthead
\multicolumn{7}{l}{\itshape MuonH, $B=128K$} \\
\toprule
$\eta_{\rm M}$ & $\mu$ & $\eta_{\rm Emb.}$ & $\beta_1$ & $\beta_2$ & $f_{\rm cd}$ & $L_{\rm val}$ \\
\midrule
\endhead
\midrule
\endfoot
\bottomrule
\endlastfoot
0.009 & 0.95 & 0.3 & 0.8 & 0.99363 & 0.4 & 3.26346 \\
\midrule
0.0045 & -- & -- & -- & -- & -- & 3.29906 \\
0.006364 & -- & -- & -- & -- & -- & 3.27465 \\
0.012728 & -- & -- & -- & -- & -- & 3.27016 \\
0.018 & -- & -- & -- & -- & -- & 3.29629 \\
\addlinespace[1.5pt]
-- & 0.975 & -- & -- & -- & -- & 3.27140 \\
-- & 0.96464 & -- & -- & -- & -- & 3.26435 \\
-- & 0.92929 & -- & -- & -- & -- & 3.26798 \\
-- & 0.9 & -- & -- & -- & -- & 3.27793 \\
\addlinespace[1.5pt]
-- & -- & 0.15 & -- & -- & -- & 3.26878 \\
-- & -- & 0.21213 & -- & -- & -- & 3.26640 \\
-- & -- & 0.42426 & -- & -- & -- & 3.26341 \\
-- & -- & 0.6 & -- & -- & -- & 3.26543 \\
-- & -- & 0.84853 & -- & -- & -- & 3.26400 \\
\addlinespace[1.5pt]
-- & -- & -- & 0.9 & -- & -- & 3.26636 \\
-- & -- & -- & 0.85858 & -- & -- & 3.26464 \\
-- & -- & -- & 0.71716 & -- & -- & 3.26304 \\
-- & -- & -- & 0.6 & -- & -- & 3.26358 \\
\addlinespace[1.5pt]
-- & -- & -- & -- & 0.99681 & -- & 3.26598 \\
-- & -- & -- & -- & 0.9955 & -- & 3.26300 \\
-- & -- & -- & -- & 0.99099 & -- & 3.26375 \\
-- & -- & -- & -- & 0.98726 & -- & 3.26441 \\
-- & -- & -- & -- & 0.98198 & -- & 3.26634 \\
-- & -- & -- & -- & 0.97452 & -- & 3.27010 \\
\addlinespace[1.5pt]
-- & -- & -- & -- & -- & 0 & 3.29470 \\
-- & -- & -- & -- & -- & 0.2 & 3.26584 \\
-- & -- & -- & -- & -- & 0.6 & 3.26324 \\
-- & -- & -- & -- & -- & 0.8 & 3.26470 \\
\end{longtable}
\endgroup

\medskip

\Needspace{8\baselineskip}

\begingroup
\small
\renewcommand{\arraystretch}{1.06}
\setlength{\tabcolsep}{3pt}
\setlength{\LTpre}{0pt}
\setlength{\LTpost}{0pt}
\setlength{\LTleft}{\fill}
\setlength{\LTright}{\fill}
\setlength{\LTcapwidth}{\textwidth}
\begin{longtable}{rrrrrrr}
\caption{Hyperparameter ablation for MuonH at batch size $B=512K$.}
\label{tab:llm_cd_muonh_512k} \\
\toprule
$\eta_{\rm M}$ & $\mu$ & $\eta_{\rm Emb.}$ & $\beta_1$ & $\beta_2$ & $f_{\rm cd}$ & $L_{\rm val}$ \\
\midrule
\endfirsthead
\multicolumn{7}{l}{\itshape MuonH, $B=512K$} \\
\toprule
$\eta_{\rm M}$ & $\mu$ & $\eta_{\rm Emb.}$ & $\beta_1$ & $\beta_2$ & $f_{\rm cd}$ & $L_{\rm val}$ \\
\midrule
\endhead
\midrule
\endfoot
\bottomrule
\endlastfoot
0.018 & 0.95 & 0.6 & 0.85858 & 0.975 & 0.4 & 3.27768 \\
\midrule
0.009 & -- & -- & -- & -- & -- & 3.31502 \\
0.012728 & -- & -- & -- & -- & -- & 3.28879 \\
0.025456 & -- & -- & -- & -- & -- & 3.28412 \\
0.036 & -- & -- & -- & -- & -- & 3.31009 \\
\addlinespace[1.5pt]
-- & 0.975 & -- & -- & -- & -- & 3.28664 \\
-- & 0.96464 & -- & -- & -- & -- & 3.27986 \\
-- & 0.92929 & -- & -- & -- & -- & 3.28260 \\
-- & 0.9 & -- & -- & -- & -- & 3.29317 \\
\addlinespace[1.5pt]
-- & -- & 0.3 & -- & -- & -- & 3.28083 \\
-- & -- & 0.42426 & -- & -- & -- & 3.27904 \\
-- & -- & 0.84853 & -- & -- & -- & 3.27802 \\
-- & -- & 1.2 & -- & -- & -- & 3.28040 \\
\addlinespace[1.5pt]
-- & -- & -- & 0.95 & -- & -- & 3.27875 \\
-- & -- & -- & 0.92929 & -- & -- & 3.27751 \\
-- & -- & -- & 0.9 & -- & -- & 3.27760 \\
-- & -- & -- & 0.8 & -- & -- & 3.27826 \\
-- & -- & -- & 0.71716 & -- & -- & 3.28116 \\
\addlinespace[1.5pt]
-- & -- & -- & -- & 0.9875 & -- & 3.27784 \\
-- & -- & -- & -- & 0.98232 & -- & 3.27735 \\
-- & -- & -- & -- & 0.96464 & -- & 3.27817 \\
-- & -- & -- & -- & 0.95 & -- & 3.27851 \\
-- & -- & -- & -- & 0.92929 & -- & 3.28065 \\
-- & -- & -- & -- & 0.9 & -- & 3.28232 \\
\addlinespace[1.5pt]
-- & -- & -- & -- & -- & 0 & 3.30830 \\
-- & -- & -- & -- & -- & 0.2 & 3.28022 \\
-- & -- & -- & -- & -- & 0.6 & 3.27805 \\
-- & -- & -- & -- & -- & 0.8 & 3.27882 \\
\end{longtable}
\endgroup

\medskip

\Needspace{8\baselineskip}

\begingroup
\small
\renewcommand{\arraystretch}{1.06}
\setlength{\tabcolsep}{3pt}
\setlength{\LTpre}{0pt}
\setlength{\LTpost}{0pt}
\setlength{\LTleft}{\fill}
\setlength{\LTright}{\fill}
\setlength{\LTcapwidth}{\textwidth}
\begin{longtable}{rrrrrrr}
\caption{Hyperparameter ablation for MuonH at batch size $B=1M$.}
\label{tab:llm_cd_muonh_1m} \\
\toprule
$\eta_{\rm M}$ & $\mu$ & $\eta_{\rm Emb.}$ & $\beta_1$ & $\beta_2$ & $f_{\rm cd}$ & $L_{\rm val}$ \\
\midrule
\endfirsthead
\multicolumn{7}{l}{\itshape MuonH, $B=1M$} \\
\toprule
$\eta_{\rm M}$ & $\mu$ & $\eta_{\rm Emb.}$ & $\beta_1$ & $\beta_2$ & $f_{\rm cd}$ & $L_{\rm val}$ \\
\midrule
\endhead
\midrule
\endfoot
\bottomrule
\endlastfoot
0.025456 & 0.95 & 0.6 & 0.9 & 0.95125 & 0.4 & 3.30679 \\
\midrule
0.012728 & -- & -- & -- & -- & -- & 3.34188 \\
0.018 & -- & -- & -- & -- & -- & 3.31511 \\
0.036 & -- & -- & -- & -- & -- & 3.31483 \\
0.050912 & -- & -- & -- & -- & -- & 3.33885 \\
\addlinespace[1.5pt]
-- & 0.975 & -- & -- & -- & -- & 3.31714 \\
-- & 0.96464 & -- & -- & -- & -- & 3.31027 \\
-- & 0.92929 & -- & -- & -- & -- & 3.31138 \\
-- & 0.9 & -- & -- & -- & -- & 3.32204 \\
\addlinespace[1.5pt]
-- & -- & 0.21213 & -- & -- & -- & 3.31597 \\
-- & -- & 0.3 & -- & -- & -- & 3.31402 \\
-- & -- & 0.42426 & -- & -- & -- & 3.30766 \\
-- & -- & 0.84853 & -- & -- & -- & 3.30833 \\
-- & -- & 1.2 & -- & -- & -- & 3.31302 \\
\addlinespace[1.5pt]
-- & -- & -- & 0.95 & -- & -- & 3.31025 \\
-- & -- & -- & 0.92929 & -- & -- & 3.30805 \\
-- & -- & -- & 0.85858 & -- & -- & 3.30808 \\
-- & -- & -- & 0.8 & -- & -- & 3.31023 \\
\addlinespace[1.5pt]
-- & -- & -- & -- & 0.98276 & -- & 3.30706 \\
-- & -- & -- & -- & 0.97562 & -- & 3.30668 \\
-- & -- & -- & -- & 0.96553 & -- & 3.30731 \\
-- & -- & -- & -- & 0.93106 & -- & 3.30810 \\
-- & -- & -- & -- & 0.9025 & -- & 3.30767 \\
\addlinespace[1.5pt]
-- & -- & -- & -- & -- & 0 & 3.33445 \\
-- & -- & -- & -- & -- & 0.2 & 3.30956 \\
-- & -- & -- & -- & -- & 0.6 & 3.30758 \\
-- & -- & -- & -- & -- & 0.8 & 3.31045 \\
\end{longtable}
\endgroup

\medskip

\Needspace{8\baselineskip}

\begingroup
\small
\renewcommand{\arraystretch}{1.06}
\setlength{\tabcolsep}{3pt}
\setlength{\LTpre}{0pt}
\setlength{\LTpost}{0pt}
\setlength{\LTleft}{\fill}
\setlength{\LTright}{\fill}
\setlength{\LTcapwidth}{\textwidth}
\begin{longtable}{rrrrrrr}
\caption{Hyperparameter ablation for MuonH at batch size $B=2M$.}
\label{tab:llm_cd_muonh_2m} \\
\toprule
$\eta_{\rm M}$ & $\mu$ & $\eta_{\rm Emb.}$ & $\beta_1$ & $\beta_2$ & $f_{\rm cd}$ & $L_{\rm val}$ \\
\midrule
\endfirsthead
\multicolumn{7}{l}{\itshape MuonH, $B=2M$} \\
\toprule
$\eta_{\rm M}$ & $\mu$ & $\eta_{\rm Emb.}$ & $\beta_1$ & $\beta_2$ & $f_{\rm cd}$ & $L_{\rm val}$ \\
\midrule
\endhead
\midrule
\endfoot
\bottomrule
\endlastfoot
0.036 & 0.95 & 0.6 & 0.9 & 0.93442 & 0.4 & 3.37098 \\
\midrule
0.018 & -- & -- & -- & -- & -- & 3.40605 \\
0.025456 & -- & -- & -- & -- & -- & 3.38018 \\
0.050912 & -- & -- & -- & -- & -- & 3.38112 \\
0.072 & -- & -- & -- & -- & -- & 3.40408 \\
\addlinespace[1.5pt]
-- & 0.96464 & -- & -- & -- & -- & 3.37684 \\
-- & 0.92929 & -- & -- & -- & -- & 3.37373 \\
\addlinespace[1.5pt]
-- & -- & 0.42426 & -- & -- & -- & 3.37348 \\
-- & -- & 0.84853 & -- & -- & -- & 3.37688 \\
\addlinespace[1.5pt]
-- & -- & -- & 0.92929 & -- & -- & 3.37308 \\
-- & -- & -- & 0.85858 & -- & -- & 3.37239 \\
\addlinespace[1.5pt]
-- & -- & -- & -- & 0.95363 & -- & 3.37119 \\
-- & -- & -- & -- & 0.90725 & -- & 3.37120 \\
-- & -- & -- & -- & 0.86884 & -- & 3.37315 \\
\addlinespace[1.5pt]
-- & -- & -- & -- & -- & 0.2 & 3.37237 \\
-- & -- & -- & -- & -- & 0.6 & 3.37317 \\
\end{longtable}
\endgroup

\medskip

\Needspace{10\baselineskip}

\medskip

\textbf{KL-SOAP-H}

\smallskip

\begingroup
\footnotesize
\renewcommand{\arraystretch}{1.08}
\setlength{\tabcolsep}{2.5pt}
\setlength{\LTpre}{0pt}
\setlength{\LTpost}{0pt}
\setlength{\LTleft}{\fill}
\setlength{\LTright}{\fill}
\setlength{\LTcapwidth}{\textwidth}
\begin{longtable}{rrrrrrrrr}
\caption{Hyperparameter ablation for KL-SOAP-H at batch size $B=128K$.}
\label{tab:llm_cd_klsoaph_128k} \\
\toprule
$\eta_{\rm M}$ & $\beta_{1}^{\rm M}$ & $\beta_{2}^{\rm M}$ & $\beta_{\rm sh}$ & $\eta_{\rm Emb.}$ & $\beta_1$ & $\beta_2$ & $f_{\rm cd}$ & $L_{\rm val}$ \\
\midrule
\endfirsthead
\multicolumn{9}{l}{\itshape KL-SOAP-H, $B=128K$} \\
\toprule
$\eta_{\rm M}$ & $\beta_{1}^{\rm M}$ & $\beta_{2}^{\rm M}$ & $\beta_{\rm sh}$ & $\eta_{\rm Emb.}$ & $\beta_1$ & $\beta_2$ & $f_{\rm cd}$ & $L_{\rm val}$ \\
\midrule
\endhead
\midrule
\endfoot
\bottomrule
\endlastfoot
0.009 & 0.92929 & 0.974 & 0.94801 & 0.3 & 0.43431 & 0.99363 & 0.4 & 3.25728 \\
\midrule
0.0045 & -- & -- & -- & -- & -- & -- & -- & 3.29786 \\
0.006364 & -- & -- & -- & -- & -- & -- & -- & 3.27114 \\
0.012728 & -- & -- & -- & -- & -- & -- & -- & 3.26005 \\
0.018 & -- & -- & -- & -- & -- & -- & -- & 3.27850 \\
\addlinespace[1.5pt]
-- & 0.96464 & -- & -- & -- & -- & -- & -- & 3.25890 \\
-- & 0.95 & -- & -- & -- & -- & -- & -- & 3.25503 \\
-- & 0.9 & -- & -- & -- & -- & -- & -- & 3.26334 \\
\end{longtable}
\endgroup

\medskip

\Needspace{8\baselineskip}

\begingroup
\footnotesize
\renewcommand{\arraystretch}{1.08}
\setlength{\tabcolsep}{2.5pt}
\setlength{\LTpre}{0pt}
\setlength{\LTpost}{0pt}
\setlength{\LTleft}{\fill}
\setlength{\LTright}{\fill}
\setlength{\LTcapwidth}{\textwidth}
\begin{longtable}{rrrrrrrrr}
\caption{Hyperparameter ablation for KL-SOAP-H at batch size $B=512K$.}
\label{tab:llm_cd_klsoaph_512k} \\
\toprule
$\eta_{\rm M}$ & $\beta_{1}^{\rm M}$ & $\beta_{2}^{\rm M}$ & $\beta_{\rm sh}$ & $\eta_{\rm Emb.}$ & $\beta_1$ & $\beta_2$ & $f_{\rm cd}$ & $L_{\rm val}$ \\
\midrule
\endfirsthead
\multicolumn{9}{l}{\itshape KL-SOAP-H, $B=512K$} \\
\toprule
$\eta_{\rm M}$ & $\beta_{1}^{\rm M}$ & $\beta_{2}^{\rm M}$ & $\beta_{\rm sh}$ & $\eta_{\rm Emb.}$ & $\beta_1$ & $\beta_2$ & $f_{\rm cd}$ & $L_{\rm val}$ \\
\midrule
\endhead
\midrule
\endfoot
\bottomrule
\endlastfoot
0.018 & 0.92929 & 0.85858 & 0.9 & 0.6 & 0.8 & 0.975 & 0.8 & 3.26964 \\
\midrule
0.009 & -- & -- & -- & -- & -- & -- & -- & 3.30243 \\
0.012728 & -- & -- & -- & -- & -- & -- & -- & 3.28109 \\
0.025456 & -- & -- & -- & -- & -- & -- & -- & 3.27408 \\
0.036 & -- & -- & -- & -- & -- & -- & -- & 3.29737 \\
\addlinespace[1.5pt]
-- & 0.975 & -- & -- & -- & -- & -- & -- & 3.29182 \\
-- & 0.96464 & -- & -- & -- & -- & -- & -- & 3.27817 \\
-- & 0.95 & -- & -- & -- & -- & -- & -- & 3.27292 \\
\end{longtable}
\endgroup

\medskip

\Needspace{8\baselineskip}

\begingroup
\footnotesize
\renewcommand{\arraystretch}{1.08}
\setlength{\tabcolsep}{2.5pt}
\setlength{\LTpre}{0pt}
\setlength{\LTpost}{0pt}
\setlength{\LTleft}{\fill}
\setlength{\LTright}{\fill}
\setlength{\LTcapwidth}{\textwidth}
\begin{longtable}{rrrrrrrrr}
\caption{Hyperparameter ablation for KL-SOAP-H at batch size $B=1M$.}
\label{tab:llm_cd_klsoaph_1m} \\
\toprule
$\eta_{\rm M}$ & $\beta_{1}^{\rm M}$ & $\beta_{2}^{\rm M}$ & $\beta_{\rm sh}$ & $\eta_{\rm Emb.}$ & $\beta_1$ & $\beta_2$ & $f_{\rm cd}$ & $L_{\rm val}$ \\
\midrule
\endfirsthead
\multicolumn{9}{l}{\itshape KL-SOAP-H, $B=1M$} \\
\toprule
$\eta_{\rm M}$ & $\beta_{1}^{\rm M}$ & $\beta_{2}^{\rm M}$ & $\beta_{\rm sh}$ & $\eta_{\rm Emb.}$ & $\beta_1$ & $\beta_2$ & $f_{\rm cd}$ & $L_{\rm val}$ \\
\midrule
\endhead
\midrule
\endfoot
\bottomrule
\endlastfoot
0.025456 & 0.92929 & 0.86565 & 0.86565 & 0.6 & 0.8 & 0.95125 & 0.6 & 3.29536 \\
\midrule
0.012728 & -- & -- & -- & -- & -- & -- & -- & 3.32611 \\
0.018 & -- & -- & -- & -- & -- & -- & -- & 3.30198 \\
0.036 & -- & -- & -- & -- & -- & -- & -- & 3.30162 \\
0.050912 & -- & -- & -- & -- & -- & -- & -- & 3.32237 \\
\addlinespace[1.5pt]
-- & 0.95 & -- & -- & -- & -- & -- & -- & 3.30005 \\
-- & 0.9 & -- & -- & -- & -- & -- & -- & 3.29841 \\
\addlinespace[1.5pt]
-- & -- & 0.93282 & -- & -- & -- & -- & -- & 3.29737 \\
\end{longtable}
\endgroup

\medskip

\Needspace{8\baselineskip}

\begingroup
\footnotesize
\renewcommand{\arraystretch}{1.08}
\setlength{\tabcolsep}{2.5pt}
\setlength{\LTpre}{0pt}
\setlength{\LTpost}{0pt}
\setlength{\LTleft}{\fill}
\setlength{\LTright}{\fill}
\setlength{\LTcapwidth}{\textwidth}
\begin{longtable}{rrrrrrrrr}
\caption{Hyperparameter ablation for KL-SOAP-H at batch size $B=2M$.}
\label{tab:llm_cd_klsoaph_2m} \\
\toprule
$\eta_{\rm M}$ & $\beta_{1}^{\rm M}$ & $\beta_{2}^{\rm M}$ & $\beta_{\rm sh}$ & $\eta_{\rm Emb.}$ & $\beta_1$ & $\beta_2$ & $f_{\rm cd}$ & $L_{\rm val}$ \\
\midrule
\endfirsthead
\multicolumn{9}{l}{\itshape KL-SOAP-H, $B=2M$} \\
\toprule
$\eta_{\rm M}$ & $\beta_{1}^{\rm M}$ & $\beta_{2}^{\rm M}$ & $\beta_{\rm sh}$ & $\eta_{\rm Emb.}$ & $\beta_1$ & $\beta_2$ & $f_{\rm cd}$ & $L_{\rm val}$ \\
\midrule
\endhead
\midrule
\endfoot
\bottomrule
\endlastfoot
0.036 & 0.92929 & 0.6561 & 0.87841 & 0.6 & 0.8 & 0.90725 & 0.6 & 3.35062 \\
\midrule
0.018 & -- & -- & -- & -- & -- & -- & -- & 3.38592 \\
0.025456 & -- & -- & -- & -- & -- & -- & -- & 3.36252 \\
0.050912 & -- & -- & -- & -- & -- & -- & -- & 3.35583 \\
0.072 & -- & -- & -- & -- & -- & -- & -- & 3.38261 \\
\addlinespace[1.5pt]
-- & 0.95 & -- & -- & -- & -- & -- & -- & 3.35416 \\
-- & 0.9 & -- & -- & -- & -- & -- & -- & 3.35290 \\
\addlinespace[1.5pt]
-- & -- & 0.75683 & -- & -- & -- & -- & -- & 3.34943 \\
\end{longtable}
\endgroup

\medskip

\Needspace{10\baselineskip}

\medskip

\textbf{ShampooH}

\smallskip

\begingroup
\small
\renewcommand{\arraystretch}{1.06}
\setlength{\tabcolsep}{3pt}
\setlength{\LTpre}{0pt}
\setlength{\LTpost}{0pt}
\setlength{\LTleft}{\fill}
\setlength{\LTright}{\fill}
\setlength{\LTcapwidth}{\textwidth}
\begin{longtable}{rrrrrrrr}
\caption{Hyperparameter ablation for ShampooH at batch size $B=128K$.}
\label{tab:llm_cd_shampooh_128k} \\
\toprule
$\eta_{\rm M}$ & $\beta_{1}^{\rm M}$ & $\beta_{2}^{\rm M}$ & $\eta_{\rm Emb.}$ & $\beta_1$ & $\beta_2$ & $f_{\rm cd}$ & $L_{\rm val}$ \\
\midrule
\endfirsthead
\multicolumn{8}{l}{\itshape ShampooH, $B=128K$} \\
\toprule
$\eta_{\rm M}$ & $\beta_{1}^{\rm M}$ & $\beta_{2}^{\rm M}$ & $\eta_{\rm Emb.}$ & $\beta_1$ & $\beta_2$ & $f_{\rm cd}$ & $L_{\rm val}$ \\
\midrule
\endhead
\midrule
\endfoot
\bottomrule
\endlastfoot
0.009 & 0.92222 & 0.85294 & 1.2 & 0.52 & 0.9955 & 0.4 & 3.26617 \\
\midrule
0.0045 & -- & -- & -- & -- & -- & -- & 3.30292 \\
0.006364 & -- & -- & -- & -- & -- & -- & 3.27724 \\
0.012728 & -- & -- & -- & -- & -- & -- & 3.26965 \\
0.018 & -- & -- & -- & -- & -- & -- & 3.29102 \\
\addlinespace[1.5pt]
-- & 0.9725 & -- & -- & -- & -- & -- & 3.27934 \\
-- & 0.96111 & -- & -- & -- & -- & -- & 3.27001 \\
-- & 0.945 & -- & -- & -- & -- & -- & 3.26581 \\
-- & 0.89 & -- & -- & -- & -- & -- & 3.27140 \\
\addlinespace[1.5pt]
-- & -- & 0.92647 & -- & -- & -- & -- & 3.26603 \\
-- & -- & 0.89601 & -- & -- & -- & -- & 3.26571 \\
-- & -- & 0.79203 & -- & -- & -- & -- & 3.26656 \\
\addlinespace[1.5pt]
-- & -- & -- & 0.84853 & -- & -- & -- & 3.26672 \\
-- & -- & -- & 1.6971 & -- & -- & -- & 3.26749 \\
\addlinespace[1.5pt]
-- & -- & -- & -- & 0.66059 & -- & -- & 3.26792 \\
-- & -- & -- & -- & 0.32118 & -- & -- & 3.26924 \\
\addlinespace[1.5pt]
-- & -- & -- & -- & -- & 0.99681 & -- & 3.26945 \\
-- & -- & -- & -- & -- & 0.99363 & -- & 3.26595 \\
-- & -- & -- & -- & -- & 0.99099 & -- & 3.26633 \\
\addlinespace[1.5pt]
-- & -- & -- & -- & -- & -- & 0.2 & 3.26838 \\
-- & -- & -- & -- & -- & -- & 0.6 & 3.26544 \\
-- & -- & -- & -- & -- & -- & 0.8 & 3.26565 \\
\end{longtable}
\endgroup

\medskip

\Needspace{8\baselineskip}

\begingroup
\footnotesize
\renewcommand{\arraystretch}{1.08}
\setlength{\tabcolsep}{2.5pt}
\setlength{\LTpre}{0pt}
\setlength{\LTpost}{0pt}
\setlength{\LTleft}{\fill}
\setlength{\LTright}{\fill}
\setlength{\LTcapwidth}{\textwidth}
\begin{longtable}{rrrrrrrr}
\caption{Hyperparameter ablation for ShampooH at batch size $B=512K$.}
\label{tab:llm_cd_shampooh_512k} \\
\toprule
$\eta_{\rm M}$ & $\beta_{1}^{\rm M}$ & $\beta_{2}^{\rm M}$ & $\eta_{\rm Emb.}$ & $\beta_1$ & $\beta_2$ & $f_{\rm cd}$ & $L_{\rm val}$ \\
\midrule
\endfirsthead
\multicolumn{8}{l}{\itshape ShampooH, $B=512K$} \\
\toprule
$\eta_{\rm M}$ & $\beta_{1}^{\rm M}$ & $\beta_{2}^{\rm M}$ & $\eta_{\rm Emb.}$ & $\beta_1$ & $\beta_2$ & $f_{\rm cd}$ & $L_{\rm val}$ \\
\midrule
\endhead
\midrule
\endfoot
\bottomrule
\endlastfoot
0.018 & 0.92929 & 0.9 & 0.84853 & 0.8 & 0.98232 & 0.4 & 3.27578 \\
\midrule
0.009 & -- & -- & -- & -- & -- & -- & 3.31081 \\
0.012728 & -- & -- & -- & -- & -- & -- & 3.28723 \\
0.025456 & -- & -- & -- & -- & -- & -- & 3.28195 \\
0.036 & -- & -- & -- & -- & -- & -- & 3.30230 \\
\addlinespace[1.5pt]
-- & 0.96464 & -- & -- & -- & -- & -- & 3.28247 \\
-- & 0.95 & -- & -- & -- & -- & -- & 3.27802 \\
-- & 0.9 & -- & -- & -- & -- & -- & 3.28012 \\
\end{longtable}
\endgroup

\medskip

\Needspace{8\baselineskip}

\begingroup
\footnotesize
\renewcommand{\arraystretch}{1.08}
\setlength{\tabcolsep}{2.5pt}
\setlength{\LTpre}{0pt}
\setlength{\LTpost}{0pt}
\setlength{\LTleft}{\fill}
\setlength{\LTright}{\fill}
\setlength{\LTcapwidth}{\textwidth}
\begin{longtable}{rrrrrrrr}
\caption{Hyperparameter ablation for ShampooH at batch size $B=1M$.}
\label{tab:llm_cd_shampooh_1m} \\
\toprule
$\eta_{\rm M}$ & $\beta_{1}^{\rm M}$ & $\beta_{2}^{\rm M}$ & $\eta_{\rm Emb.}$ & $\beta_1$ & $\beta_2$ & $f_{\rm cd}$ & $L_{\rm val}$ \\
\midrule
\endfirsthead
\multicolumn{8}{l}{\itshape ShampooH, $B=1M$} \\
\toprule
$\eta_{\rm M}$ & $\beta_{1}^{\rm M}$ & $\beta_{2}^{\rm M}$ & $\eta_{\rm Emb.}$ & $\beta_1$ & $\beta_2$ & $f_{\rm cd}$ & $L_{\rm val}$ \\
\midrule
\endhead
\midrule
\endfoot
\bottomrule
\endlastfoot
0.025456 & 0.92929 & 0.848 & 0.84853 & 0.8 & 0.97075 & 0.4 & 3.29641 \\
\midrule
0.012728 & -- & -- & -- & -- & -- & -- & 3.33305 \\
0.018 & -- & -- & -- & -- & -- & -- & 3.30733 \\
0.036 & -- & -- & -- & -- & -- & -- & 3.30217 \\
0.050912 & -- & -- & -- & -- & -- & -- & 3.32518 \\
\addlinespace[1.5pt]
-- & 0.96464 & -- & -- & -- & -- & -- & 3.30594 \\
-- & 0.95 & -- & -- & -- & -- & -- & 3.29847 \\
-- & 0.9 & -- & -- & -- & -- & -- & 3.30112 \\
\end{longtable}
\endgroup

\medskip

\Needspace{8\baselineskip}

\begingroup
\small
\renewcommand{\arraystretch}{1.06}
\setlength{\tabcolsep}{3pt}
\setlength{\LTpre}{0pt}
\setlength{\LTpost}{0pt}
\setlength{\LTleft}{\fill}
\setlength{\LTright}{\fill}
\setlength{\LTcapwidth}{\textwidth}
\begin{longtable}{rrrrrrrr}
\caption{Hyperparameter ablation for ShampooH at batch size $B=2M$.}
\label{tab:llm_cd_shampooh_2m} \\
\toprule
$\eta_{\rm M}$ & $\beta_{1}^{\rm M}$ & $\beta_{2}^{\rm M}$ & $\eta_{\rm Emb.}$ & $\beta_1$ & $\beta_2$ & $f_{\rm cd}$ & $L_{\rm val}$ \\
\midrule
\endfirsthead
\multicolumn{8}{l}{\itshape ShampooH, $B=2M$} \\
\toprule
$\eta_{\rm M}$ & $\beta_{1}^{\rm M}$ & $\beta_{2}^{\rm M}$ & $\eta_{\rm Emb.}$ & $\beta_1$ & $\beta_2$ & $f_{\rm cd}$ & $L_{\rm val}$ \\
\midrule
\endhead
\midrule
\endfoot
\bottomrule
\endlastfoot
0.036 & 0.92929 & 0.72488 & 0.6 & 0.8 & 0.94435 & 0.4 & 3.33928 \\
\midrule
0.018 & -- & -- & -- & -- & -- & -- & 3.37605 \\
0.025456 & -- & -- & -- & -- & -- & -- & 3.34886 \\
0.050912 & -- & -- & -- & -- & -- & -- & 3.34763 \\
0.072 & -- & -- & -- & -- & -- & -- & 3.37374 \\
\addlinespace[1.5pt]
-- & 0.95 & -- & -- & -- & -- & -- & 3.34432 \\
-- & 0.9 & -- & -- & -- & -- & -- & 3.34161 \\
\addlinespace[1.5pt]
-- & -- & 0.86244 & -- & -- & -- & -- & 3.34115 \\
-- & -- & 0.80546 & -- & -- & -- & -- & 3.34013 \\
-- & -- & 0.61092 & -- & -- & -- & -- & 3.34285 \\
\addlinespace[1.5pt]
-- & -- & -- & 0.42426 & -- & -- & -- & 3.34033 \\
-- & -- & -- & 0.84853 & -- & -- & -- & 3.34150 \\
-- & -- & -- & 1.6971 & -- & -- & -- & 3.36828 \\
-- & -- & -- & 2.4 & -- & -- & -- & 3.35629 \\
\addlinespace[1.5pt]
-- & -- & -- & -- & 0.9 & -- & -- & 3.34651 \\
-- & -- & -- & -- & 0.85858 & -- & -- & 3.34282 \\
-- & -- & -- & -- & 0.71716 & -- & -- & 3.34073 \\
\addlinespace[1.5pt]
-- & -- & -- & -- & -- & 0.97218 & -- & 3.33963 \\
-- & -- & -- & -- & -- & 0.96065 & -- & 3.33901 \\
-- & -- & -- & -- & -- & 0.9213 & -- & 3.33973 \\
\addlinespace[1.5pt]
-- & -- & -- & -- & -- & -- & 0.2 & 3.34120 \\
-- & -- & -- & -- & -- & -- & 0.6 & 3.34076 \\
\end{longtable}
\endgroup

\medskip

\Needspace{10\baselineskip}

\medskip

\textbf{AdamH}

\smallskip

\begingroup
\scriptsize
\renewcommand{\arraystretch}{1.04}
\setlength{\tabcolsep}{2.5pt}
\setlength{\LTpre}{0pt}
\setlength{\LTpost}{0pt}
\setlength{\LTleft}{\fill}
\setlength{\LTright}{\fill}
\setlength{\LTcapwidth}{\textwidth}
\begin{longtable}{rrrrrrrr}
\caption{Hyperparameter ablation for AdamH at batch size $B=128K$.}
\label{tab:llm_cd_adamh_128k} \\
\toprule
$\eta_{\rm M}$ & $\beta_{1}^{\rm M}$ & $\beta_{2}^{\rm M}$ & $\eta_{\rm Emb.}$ & $\beta_1$ & $\beta_2$ & $f_{\rm cd}$ & $L_{\rm val}$ \\
\midrule
\endfirsthead
\multicolumn{8}{l}{\itshape AdamH, $B=128K$} \\
\toprule
$\eta_{\rm M}$ & $\beta_{1}^{\rm M}$ & $\beta_{2}^{\rm M}$ & $\eta_{\rm Emb.}$ & $\beta_1$ & $\beta_2$ & $f_{\rm cd}$ & $L_{\rm val}$ \\
\midrule
\endhead
\midrule
\endfoot
\bottomrule
\endlastfoot
0.009 & 0.94 & 0.94903 & 0.42426 & 0.71716 & 0.9955 & 0.4 & 3.30144 \\
\midrule
0.0045 & -- & -- & -- & -- & -- & -- & 3.32743 \\
0.006364 & -- & -- & -- & -- & -- & -- & 3.30696 \\
0.012728 & -- & -- & -- & -- & -- & -- & 3.31867 \\
0.018 & -- & -- & -- & -- & -- & -- & 3.37667 \\
\addlinespace[1.5pt]
-- & 0.96464 & -- & -- & -- & -- & -- & 3.30978 \\
-- & 0.95757 & -- & -- & -- & -- & -- & 3.30506 \\
-- & 0.955 & -- & -- & -- & -- & -- & 3.30335 \\
\end{longtable}
\endgroup

\medskip

\Needspace{8\baselineskip}

\begingroup
\scriptsize
\renewcommand{\arraystretch}{1.04}
\setlength{\tabcolsep}{2.5pt}
\setlength{\LTpre}{0pt}
\setlength{\LTpost}{0pt}
\setlength{\LTleft}{\fill}
\setlength{\LTright}{\fill}
\setlength{\LTcapwidth}{\textwidth}
\begin{longtable}{rrrrrrrr}
\caption{Hyperparameter ablation for AdamH at batch size $B=512K$.}
\label{tab:llm_cd_adamh_512k} \\
\toprule
$\eta_{\rm M}$ & $\beta_{1}^{\rm M}$ & $\beta_{2}^{\rm M}$ & $\eta_{\rm Emb.}$ & $\beta_1$ & $\beta_2$ & $f_{\rm cd}$ & $L_{\rm val}$ \\
\midrule
\endfirsthead
\multicolumn{8}{l}{\itshape AdamH, $B=512K$} \\
\toprule
$\eta_{\rm M}$ & $\beta_{1}^{\rm M}$ & $\beta_{2}^{\rm M}$ & $\eta_{\rm Emb.}$ & $\beta_1$ & $\beta_2$ & $f_{\rm cd}$ & $L_{\rm val}$ \\
\midrule
\endhead
\midrule
\endfoot
\bottomrule
\endlastfoot
0.018 & 0.92929 & 0.9 & 0.6 & 0.8 & 0.975 & 0.4 & 3.33227 \\
\midrule
0.009 & -- & -- & -- & -- & -- & -- & 3.36880 \\
0.012728 & -- & -- & -- & -- & -- & -- & 3.34160 \\
0.025456 & -- & -- & -- & -- & -- & -- & 3.34476 \\
0.036 & -- & -- & -- & -- & -- & -- & 3.38899 \\
\addlinespace[1.5pt]
-- & 0.96464 & -- & -- & -- & -- & -- & 3.35050 \\
-- & 0.95 & -- & -- & -- & -- & -- & 3.33188 \\
-- & 0.9 & -- & -- & -- & -- & -- & 3.34178 \\
\end{longtable}
\endgroup

\medskip

\Needspace{8\baselineskip}

\begingroup
\scriptsize
\renewcommand{\arraystretch}{1.04}
\setlength{\tabcolsep}{2.5pt}
\setlength{\LTpre}{0pt}
\setlength{\LTpost}{0pt}
\setlength{\LTleft}{\fill}
\setlength{\LTright}{\fill}
\setlength{\LTcapwidth}{\textwidth}
\begin{longtable}{rrrrrrrr}
\caption{Hyperparameter ablation for AdamH at batch size $B=1M$.}
\label{tab:llm_cd_adamh_1m} \\
\toprule
$\eta_{\rm M}$ & $\beta_{1}^{\rm M}$ & $\beta_{2}^{\rm M}$ & $\eta_{\rm Emb.}$ & $\beta_1$ & $\beta_2$ & $f_{\rm cd}$ & $L_{\rm val}$ \\
\midrule
\endfirsthead
\multicolumn{8}{l}{\itshape AdamH, $B=1M$} \\
\toprule
$\eta_{\rm M}$ & $\beta_{1}^{\rm M}$ & $\beta_{2}^{\rm M}$ & $\eta_{\rm Emb.}$ & $\beta_1$ & $\beta_2$ & $f_{\rm cd}$ & $L_{\rm val}$ \\
\midrule
\endhead
\midrule
\endfoot
\bottomrule
\endlastfoot
0.018 & 0.96464 & 0.9025 & 0.3 & 0.8 & 0.95863 & 0.25 & 3.37114 \\
\midrule
0.006364 & -- & -- & -- & -- & -- & -- & 3.45697 \\
0.009 & -- & -- & -- & -- & -- & -- & 3.41310 \\
0.012728 & -- & -- & -- & -- & -- & -- & 3.38258 \\
0.025456 & -- & -- & -- & -- & -- & -- & 3.37975 \\
0.036 & -- & -- & -- & -- & -- & -- & 3.41154 \\
\addlinespace[1.5pt]
-- & 0.98232 & -- & -- & -- & -- & -- & 3.54534 \\
-- & 0.975 & -- & -- & -- & -- & -- & 3.38105 \\
\end{longtable}
\endgroup

\medskip

\Needspace{8\baselineskip}

\begingroup
\footnotesize
\renewcommand{\arraystretch}{1.04}
\setlength{\tabcolsep}{2.5pt}
\setlength{\LTpre}{0pt}
\setlength{\LTpost}{0pt}
\setlength{\LTleft}{\fill}
\setlength{\LTright}{\fill}
\setlength{\LTcapwidth}{\textwidth}
\begin{longtable}{rrrrrrrr}
\caption{Hyperparameter ablation for AdamH at batch size $B=2M$.}
\label{tab:llm_cd_adamh_2m} \\
\toprule
$\eta_{\rm M}$ & $\beta_{1}^{\rm M}$ & $\beta_{2}^{\rm M}$ & $\eta_{\rm Emb.}$ & $\beta_1$ & $\beta_2$ & $f_{\rm cd}$ & $L_{\rm val}$ \\
\midrule
\endfirsthead
\multicolumn{8}{l}{\itshape AdamH, $B=2M$} \\
\toprule
$\eta_{\rm M}$ & $\beta_{1}^{\rm M}$ & $\beta_{2}^{\rm M}$ & $\eta_{\rm Emb.}$ & $\beta_1$ & $\beta_2$ & $f_{\rm cd}$ & $L_{\rm val}$ \\
\midrule
\endhead
\midrule
\endfoot
\bottomrule
\endlastfoot
0.025456 & 0.96464 & 0.90725 & 0.42426 & 0.8 & 0.9213 & 0.25 & 3.48723 \\
\midrule
0.012728 & -- & -- & -- & -- & -- & -- & 3.52999 \\
0.018 & -- & -- & -- & -- & -- & -- & 3.49427 \\
0.036 & -- & -- & -- & -- & -- & -- & 3.50275 \\
0.050912 & -- & -- & -- & -- & -- & -- & 3.54708 \\
\addlinespace[1.5pt]
-- & 0.975 & -- & -- & -- & -- & -- & 3.50189 \\
-- & 0.95 & -- & -- & -- & -- & -- & 3.49129 \\
\addlinespace[1.5pt]
-- & -- & 0.93442 & -- & -- & -- & -- & 3.48752 \\
-- & -- & 0.86884 & -- & -- & -- & -- & 3.50311 \\
-- & -- & 0.73767 & -- & -- & -- & -- & 3.57150 \\
-- & -- & 0.62901 & -- & -- & -- & -- & 3.63153 \\
\addlinespace[1.5pt]
-- & -- & -- & 0.3 & -- & -- & -- & 3.48943 \\
-- & -- & -- & 0.6 & -- & -- & -- & 3.49050 \\
\addlinespace[1.5pt]
-- & -- & -- & -- & 0.9 & -- & -- & 3.48878 \\
-- & -- & -- & -- & 0.85858 & -- & -- & 3.48518 \\
-- & -- & -- & -- & 0.71716 & -- & -- & 3.49649 \\
\addlinespace[1.5pt]
-- & -- & -- & -- & -- & 0.98033 & -- & 3.49149 \\
-- & -- & -- & -- & -- & 0.97218 & -- & 3.48463 \\
-- & -- & -- & -- & -- & 0.94435 & -- & 3.48546 \\
-- & -- & -- & -- & -- & 0.8887 & -- & 3.48618 \\
\addlinespace[1.5pt]
-- & -- & -- & -- & -- & -- & 0 & 3.50407 \\
-- & -- & -- & -- & -- & -- & 0.2 & 3.48478 \\
-- & -- & -- & -- & -- & -- & 0.4 & 3.48505 \\
\end{longtable}
\endgroup

\medskip

\Needspace{10\baselineskip}

\medskip

\textbf{LionH}

\smallskip

\begingroup
\footnotesize
\renewcommand{\arraystretch}{1.08}
\setlength{\tabcolsep}{2.5pt}
\setlength{\LTpre}{0pt}
\setlength{\LTpost}{0pt}
\setlength{\LTleft}{\fill}
\setlength{\LTright}{\fill}
\setlength{\LTcapwidth}{\textwidth}
\begin{longtable}{rrrrrrrr}
\caption{Hyperparameter ablation for LionH at batch size $B=128K$.}
\label{tab:llm_cd_lionh_128k} \\
\toprule
$\eta_{\rm M}$ & $\beta_{1}^{\rm M}$ & $\beta_{2}^{\rm M}$ & $\eta_{\rm Emb.}$ & $\beta_1$ & $\beta_2$ & $f_{\rm cd}$ & $L_{\rm val}$ \\
\midrule
\endfirsthead
\multicolumn{8}{l}{\itshape LionH, $B=128K$} \\
\toprule
$\eta_{\rm M}$ & $\beta_{1}^{\rm M}$ & $\beta_{2}^{\rm M}$ & $\eta_{\rm Emb.}$ & $\beta_1$ & $\beta_2$ & $f_{\rm cd}$ & $L_{\rm val}$ \\
\midrule
\endhead
\midrule
\endfoot
\bottomrule
\endlastfoot
0.006364 & 0.92929 & 0.98726 & 0.3 & 0.71716 & 0.99363 & 0.4 & 3.29369 \\
\midrule
0.003182 & -- & -- & -- & -- & -- & -- & 3.33450 \\
0.0045 & -- & -- & -- & -- & -- & -- & 3.30646 \\
0.009 & -- & -- & -- & -- & -- & -- & 3.31381 \\
0.012728 & -- & -- & -- & -- & -- & -- & 3.35250 \\
\addlinespace[1.5pt]
-- & 0.96464 & -- & -- & -- & -- & -- & 3.30461 \\
-- & 0.95 & -- & -- & -- & -- & -- & 3.29967 \\
-- & 0.9 & -- & -- & -- & -- & -- & 3.30068 \\
\end{longtable}
\endgroup

\medskip

\Needspace{8\baselineskip}

\begingroup
\footnotesize
\renewcommand{\arraystretch}{1.08}
\setlength{\tabcolsep}{2.5pt}
\setlength{\LTpre}{0pt}
\setlength{\LTpost}{0pt}
\setlength{\LTleft}{\fill}
\setlength{\LTright}{\fill}
\setlength{\LTcapwidth}{\textwidth}
\begin{longtable}{rrrrrrrr}
\caption{Hyperparameter ablation for LionH at batch size $B=512K$.}
\label{tab:llm_cd_lionh_512k} \\
\toprule
$\eta_{\rm M}$ & $\beta_{1}^{\rm M}$ & $\beta_{2}^{\rm M}$ & $\eta_{\rm Emb.}$ & $\beta_1$ & $\beta_2$ & $f_{\rm cd}$ & $L_{\rm val}$ \\
\midrule
\endfirsthead
\multicolumn{8}{l}{\itshape LionH, $B=512K$} \\
\toprule
$\eta_{\rm M}$ & $\beta_{1}^{\rm M}$ & $\beta_{2}^{\rm M}$ & $\eta_{\rm Emb.}$ & $\beta_1$ & $\beta_2$ & $f_{\rm cd}$ & $L_{\rm val}$ \\
\midrule
\endhead
\midrule
\endfoot
\bottomrule
\endlastfoot
0.018 & 0.92929 & 0.95 & 0.42426 & 0.8 & 0.98333 & 0.4 & 3.33575 \\
\midrule
0.009 & -- & -- & -- & -- & -- & -- & 3.37584 \\
0.012728 & -- & -- & -- & -- & -- & -- & 3.34782 \\
0.025456 & -- & -- & -- & -- & -- & -- & 3.34516 \\
0.036 & -- & -- & -- & -- & -- & -- & 3.38208 \\
\addlinespace[1.5pt]
-- & 0.96464 & -- & -- & -- & -- & -- & 3.37497 \\
-- & 0.95 & -- & -- & -- & -- & -- & 3.33810 \\
-- & 0.9 & -- & -- & -- & -- & -- & 3.34398 \\
\end{longtable}
\endgroup

\medskip

\Needspace{8\baselineskip}

\begingroup
\footnotesize
\renewcommand{\arraystretch}{1.08}
\setlength{\tabcolsep}{2.5pt}
\setlength{\LTpre}{0pt}
\setlength{\LTpost}{0pt}
\setlength{\LTleft}{\fill}
\setlength{\LTright}{\fill}
\setlength{\LTcapwidth}{\textwidth}
\begin{longtable}{rrrrrrrr}
\caption{Hyperparameter ablation for LionH at batch size $B=1M$.}
\label{tab:llm_cd_lionh_1m} \\
\toprule
$\eta_{\rm M}$ & $\beta_{1}^{\rm M}$ & $\beta_{2}^{\rm M}$ & $\eta_{\rm Emb.}$ & $\beta_1$ & $\beta_2$ & $f_{\rm cd}$ & $L_{\rm val}$ \\
\midrule
\endfirsthead
\multicolumn{8}{l}{\itshape LionH, $B=1M$} \\
\toprule
$\eta_{\rm M}$ & $\beta_{1}^{\rm M}$ & $\beta_{2}^{\rm M}$ & $\eta_{\rm Emb.}$ & $\beta_1$ & $\beta_2$ & $f_{\rm cd}$ & $L_{\rm val}$ \\
\midrule
\endhead
\midrule
\endfoot
\bottomrule
\endlastfoot
0.036 & 0.9 & 0.9025 & 0.6 & 0.9 & 0.97562 & 0.4 & 3.40475 \\
\midrule
0.018 & -- & -- & -- & -- & -- & -- & 3.43203 \\
0.025456 & -- & -- & -- & -- & -- & -- & 3.40845 \\
0.050912 & -- & -- & -- & -- & -- & -- & 3.43559 \\
0.072 & -- & -- & -- & -- & -- & -- & 3.48983 \\
\addlinespace[1.5pt]
-- & 0.95 & -- & -- & -- & -- & -- & 3.66610 \\
-- & 0.92929 & -- & -- & -- & -- & -- & 3.49354 \\
-- & 0.85858 & -- & -- & -- & -- & -- & 3.43192 \\
\end{longtable}
\endgroup

\medskip

\Needspace{8\baselineskip}

\begingroup
\small
\renewcommand{\arraystretch}{1.06}
\setlength{\tabcolsep}{3pt}
\setlength{\LTpre}{0pt}
\setlength{\LTpost}{0pt}
\setlength{\LTleft}{\fill}
\setlength{\LTright}{\fill}
\setlength{\LTcapwidth}{\textwidth}
\begin{longtable}{rrrrrrrr}
\caption{Hyperparameter ablation for LionH at batch size $B=2M$.}
\label{tab:llm_cd_lionh_2m} \\
\toprule
$\eta_{\rm M}$ & $\beta_{1}^{\rm M}$ & $\beta_{2}^{\rm M}$ & $\eta_{\rm Emb.}$ & $\beta_1$ & $\beta_2$ & $f_{\rm cd}$ & $L_{\rm val}$ \\
\midrule
\endfirsthead
\multicolumn{8}{l}{\itshape LionH, $B=2M$} \\
\toprule
$\eta_{\rm M}$ & $\beta_{1}^{\rm M}$ & $\beta_{2}^{\rm M}$ & $\eta_{\rm Emb.}$ & $\beta_1$ & $\beta_2$ & $f_{\rm cd}$ & $L_{\rm val}$ \\
\midrule
\endhead
\midrule
\endfoot
\bottomrule
\endlastfoot
0.036 & 0.9 & 0.90725 & 0.6 & 0.9 & 0.95363 & 0.2 & 3.55651 \\
\midrule
0.018 & -- & -- & -- & -- & -- & -- & 3.60350 \\
0.025456 & -- & -- & -- & -- & -- & -- & 3.56984 \\
0.050912 & -- & -- & -- & -- & -- & -- & 3.57392 \\
0.072 & -- & -- & -- & -- & -- & -- & 3.62350 \\
\addlinespace[1.5pt]
-- & 0.92929 & -- & -- & -- & -- & -- & 3.64818 \\
-- & 0.85858 & -- & -- & -- & -- & -- & 3.61206 \\
\addlinespace[1.5pt]
-- & -- & 0.93442 & -- & -- & -- & -- & 3.59923 \\
-- & -- & 0.86884 & -- & -- & -- & -- & 3.64545 \\
\addlinespace[1.5pt]
-- & -- & -- & 0.42426 & -- & -- & -- & 3.56105 \\
-- & -- & -- & 0.84853 & -- & -- & -- & 3.56483 \\
\addlinespace[1.5pt]
-- & -- & -- & -- & 0.92929 & -- & -- & 3.55956 \\
-- & -- & -- & -- & 0.85858 & -- & -- & 3.56042 \\
\addlinespace[1.5pt]
-- & -- & -- & -- & -- & 0.97681 & -- & 3.55751 \\
-- & -- & -- & -- & -- & 0.96721 & -- & 3.55283 \\
-- & -- & -- & -- & -- & 0.93442 & -- & 3.55904 \\
\addlinespace[1.5pt]
-- & -- & -- & -- & -- & -- & 0 & 3.57606 \\
-- & -- & -- & -- & -- & -- & 0.4 & 3.55957 \\
\end{longtable}
\endgroup

\medskip

\Needspace{10\baselineskip}

\subsubsection{Weight decay sweeps}

\medskip

\textbf{MuonW}

\smallskip

\begingroup
\small
\renewcommand{\arraystretch}{1.06}
\setlength{\tabcolsep}{3pt}
\setlength{\LTpre}{0pt}
\setlength{\LTpost}{0pt}
\setlength{\LTleft}{\fill}
\setlength{\LTright}{\fill}
\setlength{\LTcapwidth}{\textwidth}
\begin{longtable}{rrrrrrrr}
\caption{Hyperparameter ablation for MuonW at batch size $B=128K$.}
\label{tab:llm_cd_muonw_128k} \\
\toprule
$\eta_{\rm M}$ & $\mu$ & $\lambda_{\rm M}$ & $\eta_{\rm Emb.}$ & $\beta_1$ & $\beta_2$ & $f_{\rm cd}$ & $L_{\rm val}$ \\
\midrule
\endfirsthead
\multicolumn{8}{l}{\itshape MuonW, $B=128K$} \\
\toprule
$\eta_{\rm M}$ & $\mu$ & $\lambda_{\rm M}$ & $\eta_{\rm Emb.}$ & $\beta_1$ & $\beta_2$ & $f_{\rm cd}$ & $L_{\rm val}$ \\
\midrule
\endhead
\midrule
\endfoot
\bottomrule
\endlastfoot
0.017678 & 0.95 & 0.025 & 0.49497 & 0.8 & 0.99363 & 0.8 & 3.26547 \\
\midrule
0.00625 & -- & -- & -- & -- & -- & -- & 3.27885 \\
0.0088388 & -- & -- & -- & -- & -- & -- & 3.27196 \\
0.0125 & -- & -- & -- & -- & -- & -- & 3.26909 \\
0.025 & -- & -- & -- & -- & -- & -- & 3.27012 \\
0.035355 & -- & -- & -- & -- & -- & -- & 3.28074 \\
\addlinespace[1.5pt]
-- & 0.975 & -- & -- & -- & -- & -- & 3.26698 \\
-- & 0.96464 & -- & -- & -- & -- & -- & 3.26617 \\
-- & 0.92929 & -- & -- & -- & -- & -- & 3.26707 \\
-- & 0.9 & -- & -- & -- & -- & -- & 3.27014 \\
\addlinespace[1.5pt]
-- & -- & 0.01 & -- & -- & -- & -- & 3.26771 \\
-- & -- & 0.05 & -- & -- & -- & -- & 3.27446 \\
\addlinespace[1.5pt]
-- & -- & -- & 0.35 & -- & -- & -- & 3.26966 \\
-- & -- & -- & 0.7 & -- & -- & -- & 3.26812 \\
-- & -- & -- & 0.98995 & -- & -- & -- & 3.27181 \\
\addlinespace[1.5pt]
-- & -- & -- & -- & 0.9 & -- & -- & 3.26904 \\
-- & -- & -- & -- & 0.85858 & -- & -- & 3.26685 \\
-- & -- & -- & -- & 0.71716 & -- & -- & 3.26623 \\
-- & -- & -- & -- & 0.6 & -- & -- & 3.26888 \\
\addlinespace[1.5pt]
-- & -- & -- & -- & -- & 0.99681 & -- & 3.26637 \\
-- & -- & -- & -- & -- & 0.9955 & -- & 3.26512 \\
-- & -- & -- & -- & -- & 0.99099 & -- & 3.26567 \\
-- & -- & -- & -- & -- & 0.98726 & -- & 3.26673 \\
\addlinespace[1.5pt]
-- & -- & -- & -- & -- & -- & 0.6 & 3.26940 \\
-- & -- & -- & -- & -- & -- & 1 & 3.26561 \\
\end{longtable}
\endgroup

\medskip

\Needspace{8\baselineskip}

\begingroup
\small
\renewcommand{\arraystretch}{1.06}
\setlength{\tabcolsep}{3pt}
\setlength{\LTpre}{0pt}
\setlength{\LTpost}{0pt}
\setlength{\LTleft}{\fill}
\setlength{\LTright}{\fill}
\setlength{\LTcapwidth}{\textwidth}
\begin{longtable}{rrrrrrrr}
\caption{Hyperparameter ablation for MuonW at batch size $B=512K$.}
\label{tab:llm_cd_muonw_512k} \\
\toprule
$\eta_{\rm M}$ & $\mu$ & $\lambda_{\rm M}$ & $\eta_{\rm Emb.}$ & $\beta_1$ & $\beta_2$ & $f_{\rm cd}$ & $L_{\rm val}$ \\
\midrule
\endfirsthead
\multicolumn{8}{l}{\itshape MuonW, $B=512K$} \\
\toprule
$\eta_{\rm M}$ & $\mu$ & $\lambda_{\rm M}$ & $\eta_{\rm Emb.}$ & $\beta_1$ & $\beta_2$ & $f_{\rm cd}$ & $L_{\rm val}$ \\
\midrule
\endhead
\midrule
\endfoot
\bottomrule
\endlastfoot
0.025 & 0.95 & 0.05 & 0.7 & 0.8 & 0.96464 & 0.7 & 3.27788 \\
\midrule
0.0125 & -- & -- & -- & -- & -- & -- & 3.28710 \\
0.017678 & -- & -- & -- & -- & -- & -- & 3.28196 \\
0.035355 & -- & -- & -- & -- & -- & -- & 3.28059 \\
0.05 & -- & -- & -- & -- & -- & -- & 3.28835 \\
\addlinespace[1.5pt]
-- & 0.96464 & -- & -- & -- & -- & -- & 3.27811 \\
-- & 0.92929 & -- & -- & -- & -- & -- & 3.27926 \\
\addlinespace[1.5pt]
-- & -- & 0.025 & -- & -- & -- & -- & 3.28061 \\
-- & -- & 0.1 & -- & -- & -- & -- & 3.28498 \\
\addlinespace[1.5pt]
-- & -- & -- & 0.49497 & -- & -- & -- & 3.27872 \\
-- & -- & -- & 0.98995 & -- & -- & -- & 3.27890 \\
\addlinespace[1.5pt]
-- & -- & -- & -- & 0.9 & -- & -- & 3.27871 \\
-- & -- & -- & -- & 0.85858 & -- & -- & 3.27724 \\
-- & -- & -- & -- & 0.71716 & -- & -- & 3.27936 \\
\addlinespace[1.5pt]
-- & -- & -- & -- & -- & 0.98232 & -- & 3.27671 \\
-- & -- & -- & -- & -- & 0.975 & -- & 3.27618 \\
-- & -- & -- & -- & -- & 0.95 & -- & 3.27863 \\
-- & -- & -- & -- & -- & 0.92929 & -- & 3.28128 \\
-- & -- & -- & -- & -- & 0.9 & -- & 3.28350 \\
\addlinespace[1.5pt]
-- & -- & -- & -- & -- & -- & 0.6 & 3.27873 \\
-- & -- & -- & -- & -- & -- & 0.8 & 3.27792 \\
\end{longtable}
\endgroup

\medskip

\Needspace{8\baselineskip}

\begingroup
\small
\renewcommand{\arraystretch}{1.06}
\setlength{\tabcolsep}{3pt}
\setlength{\LTpre}{0pt}
\setlength{\LTpost}{0pt}
\setlength{\LTleft}{\fill}
\setlength{\LTright}{\fill}
\setlength{\LTcapwidth}{\textwidth}
\begin{longtable}{rrrrrrrr}
\caption{Hyperparameter ablation for MuonW at batch size $B=1M$.}
\label{tab:llm_cd_muonw_1m} \\
\toprule
$\eta_{\rm M}$ & $\mu$ & $\lambda_{\rm M}$ & $\eta_{\rm Emb.}$ & $\beta_1$ & $\beta_2$ & $f_{\rm cd}$ & $L_{\rm val}$ \\
\midrule
\endfirsthead
\multicolumn{8}{l}{\itshape MuonW, $B=1M$} \\
\toprule
$\eta_{\rm M}$ & $\mu$ & $\lambda_{\rm M}$ & $\eta_{\rm Emb.}$ & $\beta_1$ & $\beta_2$ & $f_{\rm cd}$ & $L_{\rm val}$ \\
\midrule
\endhead
\midrule
\endfoot
\bottomrule
\endlastfoot
0.025 & 0.95 & 0.1 & 0.98995 & 0.8 & 0.93106 & 0.8 & 3.31108 \\
\midrule
0.0125 & -- & -- & -- & -- & -- & -- & 3.32240 \\
0.017678 & -- & -- & -- & -- & -- & -- & 3.31407 \\
0.035355 & -- & -- & -- & -- & -- & -- & 3.31274 \\
0.05 & -- & -- & -- & -- & -- & -- & 3.32043 \\
0.070711 & -- & -- & -- & -- & -- & -- & 3.33872 \\
\addlinespace[1.5pt]
-- & 0.96464 & -- & -- & -- & -- & -- & 3.31200 \\
-- & 0.92929 & -- & -- & -- & -- & -- & 3.31255 \\
\addlinespace[1.5pt]
-- & -- & 0.05 & -- & -- & -- & -- & 3.31761 \\
-- & -- & 0.15 & -- & -- & -- & -- & 3.31150 \\
\addlinespace[1.5pt]
-- & -- & -- & 0.49497 & -- & -- & -- & 3.30991 \\
-- & -- & -- & 0.7 & -- & -- & -- & 3.30942 \\
-- & -- & -- & 1.4 & -- & -- & -- & 3.31786 \\
\addlinespace[1.5pt]
-- & -- & -- & -- & 0.9 & -- & -- & 3.31144 \\
-- & -- & -- & -- & 0.85858 & -- & -- & 3.30962 \\
-- & -- & -- & -- & 0.71716 & -- & -- & 3.31435 \\
\addlinespace[1.5pt]
-- & -- & -- & -- & -- & 0.97562 & -- & 3.31019 \\
-- & -- & -- & -- & -- & 0.96553 & -- & 3.30994 \\
-- & -- & -- & -- & -- & 0.95125 & -- & 3.31033 \\
-- & -- & -- & -- & -- & 0.9025 & -- & 3.31181 \\
\addlinespace[1.5pt]
-- & -- & -- & -- & -- & -- & 0.4 & 3.31609 \\
-- & -- & -- & -- & -- & -- & 0.6 & 3.30963 \\
-- & -- & -- & -- & -- & -- & 1 & 3.31617 \\
\end{longtable}
\endgroup

\medskip

\Needspace{8\baselineskip}

\begingroup
\footnotesize
\renewcommand{\arraystretch}{1.08}
\setlength{\tabcolsep}{2.5pt}
\setlength{\LTpre}{0pt}
\setlength{\LTpost}{0pt}
\setlength{\LTleft}{\fill}
\setlength{\LTright}{\fill}
\setlength{\LTcapwidth}{\textwidth}
\begin{longtable}{rrrrrrrr}
\caption{Hyperparameter ablation for MuonW at batch size $B=2M$.}
\label{tab:llm_cd_muonw_2m} \\
\toprule
$\eta_{\rm M}$ & $\mu$ & $\lambda_{\rm M}$ & $\eta_{\rm Emb.}$ & $\beta_1$ & $\beta_2$ & $f_{\rm cd}$ & $L_{\rm val}$ \\
\midrule
\endfirsthead
\multicolumn{8}{l}{\itshape MuonW, $B=2M$} \\
\toprule
$\eta_{\rm M}$ & $\mu$ & $\lambda_{\rm M}$ & $\eta_{\rm Emb.}$ & $\beta_1$ & $\beta_2$ & $f_{\rm cd}$ & $L_{\rm val}$ \\
\midrule
\endhead
\midrule
\endfoot
\bottomrule
\endlastfoot
0.025 & 0.95 & 0.1 & 0.7 & 0.85858 & 0.90725 & 0.4 & 3.36617 \\
\midrule
0.0125 & -- & -- & -- & -- & -- & -- & 3.37566 \\
0.017678 & -- & -- & -- & -- & -- & -- & 3.36828 \\
0.035355 & -- & -- & -- & -- & -- & -- & 3.37356 \\
0.05 & -- & -- & -- & -- & -- & -- & 3.38618 \\
\addlinespace[1.5pt]
-- & 0.975 & -- & -- & -- & -- & -- & 3.37805 \\
-- & 0.96464 & -- & -- & -- & -- & -- & 3.36938 \\
-- & 0.92929 & -- & -- & -- & -- & -- & 3.36714 \\
\end{longtable}
\endgroup

\medskip

\Needspace{10\baselineskip}

\medskip

\textbf{KL-SOAP-W}

\smallskip

\begingroup
\footnotesize
\renewcommand{\arraystretch}{1.08}
\setlength{\tabcolsep}{2.5pt}
\setlength{\LTpre}{0pt}
\setlength{\LTpost}{0pt}
\setlength{\LTleft}{\fill}
\setlength{\LTright}{\fill}
\setlength{\LTcapwidth}{\textwidth}
\begin{longtable}{rrrrrrrrrr}
\caption{Hyperparameter ablation for KL-SOAP-W at batch size $B=128K$.}
\label{tab:llm_cd_klsoapw_128k} \\
\toprule
$\eta_{\rm M}$ & $\beta_{1}^{\rm M}$ & $\beta_{2}^{\rm M}$ & $\beta_{\rm sh}$ & $\lambda_{\rm M}$ & $\eta_{\rm Emb.}$ & $\beta_1$ & $\beta_2$ & $f_{\rm cd}$ & $L_{\rm val}$ \\
\midrule
\endfirsthead
\multicolumn{10}{l}{\itshape KL-SOAP-W, $B=128K$} \\
\toprule
$\eta_{\rm M}$ & $\beta_{1}^{\rm M}$ & $\beta_{2}^{\rm M}$ & $\beta_{\rm sh}$ & $\lambda_{\rm M}$ & $\eta_{\rm Emb.}$ & $\beta_1$ & $\beta_2$ & $f_{\rm cd}$ & $L_{\rm val}$ \\
\midrule
\endhead
\midrule
\endfoot
\bottomrule
\endlastfoot
0.0045 & 0.96464 & 0.974 & 0.94801 & 0.1 & 0.15 & 0.43431 & 0.9955 & 0.4 & 3.25388 \\
\midrule
0.00225 & -- & -- & -- & -- & -- & -- & -- & -- & 3.26253 \\
0.003182 & -- & -- & -- & -- & -- & -- & -- & -- & 3.25685 \\
0.006364 & -- & -- & -- & -- & -- & -- & -- & -- & 3.25702 \\
0.009 & -- & -- & -- & -- & -- & -- & -- & -- & 3.26629 \\
\addlinespace[1.5pt]
-- & 0.975 & -- & -- & -- & -- & -- & -- & -- & 3.25474 \\
-- & 0.95 & -- & -- & -- & -- & -- & -- & -- & 3.25607 \\
-- & 0.92929 & -- & -- & -- & -- & -- & -- & -- & 3.25725 \\
\end{longtable}
\endgroup

\medskip

\Needspace{8\baselineskip}

\begingroup
\footnotesize
\renewcommand{\arraystretch}{1.08}
\setlength{\tabcolsep}{2.5pt}
\setlength{\LTpre}{0pt}
\setlength{\LTpost}{0pt}
\setlength{\LTleft}{\fill}
\setlength{\LTright}{\fill}
\setlength{\LTcapwidth}{\textwidth}
\begin{longtable}{rrrrrrrrrr}
\caption{Hyperparameter ablation for KL-SOAP-W at batch size $B=512K$.}
\label{tab:llm_cd_klsoapw_512k} \\
\toprule
$\eta_{\rm M}$ & $\beta_{1}^{\rm M}$ & $\beta_{2}^{\rm M}$ & $\beta_{\rm sh}$ & $\lambda_{\rm M}$ & $\eta_{\rm Emb.}$ & $\beta_1$ & $\beta_2$ & $f_{\rm cd}$ & $L_{\rm val}$ \\
\midrule
\endfirsthead
\multicolumn{10}{l}{\itshape KL-SOAP-W, $B=512K$} \\
\toprule
$\eta_{\rm M}$ & $\beta_{1}^{\rm M}$ & $\beta_{2}^{\rm M}$ & $\beta_{\rm sh}$ & $\lambda_{\rm M}$ & $\eta_{\rm Emb.}$ & $\beta_1$ & $\beta_2$ & $f_{\rm cd}$ & $L_{\rm val}$ \\
\midrule
\endhead
\midrule
\endfoot
\bottomrule
\endlastfoot
0.012728 & 0.9 & 0.85858 & 0.85858 & 0.15 & 0.84853 & 0.8 & 0.975 & 0.4 & 3.27594 \\
\midrule
0.006364 & -- & -- & -- & -- & -- & -- & -- & -- & 3.28188 \\
0.009 & -- & -- & -- & -- & -- & -- & -- & -- & 3.27597 \\
0.018 & -- & -- & -- & -- & -- & -- & -- & -- & 3.28161 \\
0.025456 & -- & -- & -- & -- & -- & -- & -- & -- & 3.28798 \\
\addlinespace[1.5pt]
-- & 0.96464 & -- & -- & -- & -- & -- & -- & -- & 3.96353 \\
-- & 0.95 & -- & -- & -- & -- & -- & -- & -- & 3.29349 \\
-- & 0.92929 & -- & -- & -- & -- & -- & -- & -- & 3.27927 \\
\end{longtable}
\endgroup

\medskip

\Needspace{8\baselineskip}

\begingroup
\footnotesize
\renewcommand{\arraystretch}{1.08}
\setlength{\tabcolsep}{2.5pt}
\setlength{\LTpre}{0pt}
\setlength{\LTpost}{0pt}
\setlength{\LTleft}{\fill}
\setlength{\LTright}{\fill}
\setlength{\LTcapwidth}{\textwidth}
\begin{longtable}{rrrrrrrrrr}
\caption{Hyperparameter ablation for KL-SOAP-W at batch size $B=1M$.}
\label{tab:llm_cd_klsoapw_1m} \\
\toprule
$\eta_{\rm M}$ & $\beta_{1}^{\rm M}$ & $\beta_{2}^{\rm M}$ & $\beta_{\rm sh}$ & $\lambda_{\rm M}$ & $\eta_{\rm Emb.}$ & $\beta_1$ & $\beta_2$ & $f_{\rm cd}$ & $L_{\rm val}$ \\
\midrule
\endfirsthead
\multicolumn{10}{l}{\itshape KL-SOAP-W, $B=1M$} \\
\toprule
$\eta_{\rm M}$ & $\beta_{1}^{\rm M}$ & $\beta_{2}^{\rm M}$ & $\beta_{\rm sh}$ & $\lambda_{\rm M}$ & $\eta_{\rm Emb.}$ & $\beta_1$ & $\beta_2$ & $f_{\rm cd}$ & $L_{\rm val}$ \\
\midrule
\endhead
\midrule
\endfoot
\bottomrule
\endlastfoot
0.006364 & 0.92929 & 0.86565 & 0.86565 & 0.6 & 0.6 & 0.8 & 0.95125 & 0.6 & 3.29629 \\
\midrule
0.003182 & -- & -- & -- & -- & -- & -- & -- & -- & 3.30629 \\
0.0045 & -- & -- & -- & -- & -- & -- & -- & -- & 3.30860 \\
0.009 & -- & -- & -- & -- & -- & -- & -- & -- & 3.30143 \\
0.012728 & -- & -- & -- & -- & -- & -- & -- & -- & 3.30874 \\
\addlinespace[1.5pt]
-- & 0.96464 & -- & -- & -- & -- & -- & -- & -- & 3.32143 \\
-- & 0.95 & -- & -- & -- & -- & -- & -- & -- & 3.31439 \\
-- & 0.9 & -- & -- & -- & -- & -- & -- & -- & 3.29697 \\
\end{longtable}%
\endgroup

\clearpage
\endgroup
\section{Directional Branching in the Language Model}
\label{app:lm_directional_protocol}

This appendix describes how the branches of \Cref{sec:lm_directional} are constructed and evaluated, and lists every endpoint behind \Cref{fig:lm_directional}.
\Cref{app:lm_directional_local_nqm} establishes a local noisy-quadratic correspondence for the polar-Jacobian choice of diagnostic geometry.

\textbf{Base run and anchors.}
The base run is the tuned $128\mathrm K$ MuonW--AdamW configuration of \Cref{app:llm_experiment_details}, trained for $13{,}000$ steps of $128\mathrm K$ tokens.
Its matrix learning rate is constant until step $2{,}601$ and then decays linearly to zero at step $13{,}000$.
Anchors are placed every $1{,}000$ steps from $1{,}000$ to $12{,}000$.
A branch restores the full optimizer state at its anchor, sees the same data in the same order as the base run, and follows the base run's learning-rate schedule by global step.
Every branch trains for $1{,}024$ base steps ($134$M tokens).

\textbf{Gauss--Newton matrix.}
Let $\w$ denote the vector of hidden-matrix parameters, with all other parameters fixed at the anchor.
For a batch of $m$ prediction tokens, let $z_i(\w)$ be the vocabulary logits for token $i$, $p_i = \operatorname{softmax}(z_i(\w))$ its predicted probability vector, and $J_i = \partial z_i / \partial\w$ the logit Jacobian.
For the mean token cross-entropy loss, the generalized Gauss--Newton matrix is
\[
  G = \frac{1}{m}\sum_{i=1}^{m} J_i^\top
  \bigl(\operatorname{diag}(p_i)-p_i p_i^\top\bigr)J_i,
\]
where $\operatorname{diag}(p_i)-p_i p_i^\top$ is the Hessian of the token cross-entropy with respect to the logits, and all quantities are evaluated at the anchor.
This positive-semidefinite matrix retains the loss curvature through the logit Jacobian and omits the terms involving second derivatives of the logits with respect to the parameters.
We refer to it as the Gauss--Newton matrix below.

\textbf{Held subspace.}
Let $\Dir$ be Muon's ideal direction map on the flattened hidden matrices, including the fixed shape factors of \Cref{sec:convex}.
For matrix block $\ell$ with compact singular value decomposition $M_\ell=U_\ell\Sigma_\ell V_\ell^\top$, this map returns $a_{\mathrm{shape},\ell}U_\ell V_\ell^\top$.
Let $q$ denote the flattened momentum input to this map, including the Nesterov correction of \Cref{eq:lionk_stochastic_corrected_m}.
For anchor parameters $\w_\star$ and momentum buffer $\m_\star$, let $q_\star$ be the reference input formed using the mean gradient $\g(\w_\star)$:
\[
  q_\star=\omega\mom\m_\star+(1-\omega\mom)\g(\w_\star).
\]
Assume that all matrix blocks of this reference input have full rank.
On the full-rank domain, define the local preconditioner $P(q)$ as the derivative of the direction map with respect to this input, and freeze it at the anchor:
\begin{equation}
  P(q)=D\Dir(q),\qquad P=P(q_\star).
  \label{eq:directional_preconditioner}
\end{equation}
Thus $P\,\delta q$ is the first-order change in the update direction caused by an input perturbation $\delta q$ at the anchor.
For full-rank matrix inputs, $P$ is symmetric positive semidefinite; \Cref{app:lm_directional_local_nqm} proves this property and the resulting local noisy-quadratic correspondence.
Empirically, we evaluate $P$ at the anchor's momentum buffer $\m_\star$.
We also damp every singular value $\sigma$ of the input to $\sigma+10^{-3}$ in the closed form of $D\Dir$; the damped operator remains symmetric positive semidefinite.
At each anchor we compute the top $768$ positive-eigenvalue modes of the preconditioned Gauss--Newton matrix $P^{1/2} G P^{1/2}$ restricted to the hidden matrices.
The eigenvectors are computed iteratively from Gauss--Newton--vector products on $128\mathrm K$-token batches.
An eigenvector $e$ with positive eigenvalue $h$ corresponds to the parameter-space direction $P^{1/2}e$, since $PG(P^{1/2}e)=hP^{1/2}e$.
This mapping is the coordinate transformation of the local noisy-quadratic model, whose momentum dynamics are given in \Cref{app:lm_directional_local_nqm}.
The rank-$r$ construction requires at least $r$ positive eigenvalues.
We map the top $r$ corresponding eigenvectors this way and orthonormalize the images with two Cholesky--QR passes, giving the basis $V_r \in \mathbb{R}^{N \times r}$ of the top-$r$ held subspace, where $N$ is the number of hidden-matrix parameters and all hidden matrices are flattened into one vector.
The random control instead uses an orthonormal basis of $768$ Gaussian directions, drawn once per anchor.

\textbf{Branch update.}
Each branch forms two Muon updates from the same stream of gradients.
The \emph{base-clock} update takes one step per $128\mathrm K$-token base step at the scheduled learning rate.
The \emph{large-batch} update accumulates sixteen base gradients and takes one $2\mathrm M$ step with $\sqrt{16}$ times the mean scheduled learning rate over those sixteen steps; the momentum coefficient per update is unchanged.
The applied update is the base-clock update projected onto the held subspace plus the large-batch update projected onto its orthogonal complement.
Matrix weight decay is applied once per base step to the current parameters, independently of the two projected updates, so the decay per token matches the base run.
The fully scaled branch applies the large-batch update everywhere, and the control branch applies the base-clock update everywhere, which recovers the unmodified optimizer.
In every branch the auxiliary AdamW parameters stay on the base clock at $128\mathrm K$.
\Cref{alg:lm_branch} summarizes one branch.

\begin{algorithm}[H]
\caption{Directional branching}
\label{alg:lm_branch}
\begin{tcolorbox}[optimizerblock]
\small
\begin{enumerate}[label=\arabic*.,leftmargin=1.6em,itemsep=1pt,topsep=3pt]
\item \emph{Basis.} Freeze $P=D\Dir(q_\star)$, the derivative of Muon's ideal direction map at the anchor momentum input, as in \Cref{eq:directional_preconditioner}. In practice $P$ is evaluated at $\m_\star$ with singular values damped by $10^{-3}$. Compute the top-$r$ positive-eigenvalue modes $e_1,\dots,e_r$ of $P^{1/2} G P^{1/2}$ over the flattened hidden matrices and set $V = \operatorname{orth}(P^{1/2}e_1,\dots,P^{1/2}e_r)$.
      For the random control, $V$ is an orthonormal basis of $768$ Gaussian directions; the fully scaled branch uses $V = 0$ and the control branch $V = I$.
\item \emph{State.} Restore the anchor's parameters and optimizer state, and make two copies of the Muon state: a base-clock copy and a large-batch copy.
\item For base steps $t = 1, \dots, 1{,}024$, with the base run's batch $t$, learning rate $\eta_t$, and matrix weight-decay coefficient $\lambda_t$:
  \begin{enumerate}[label=(\alph*),leftmargin=1.4em,itemsep=0pt,topsep=1pt]
  \item Compute the $128\mathrm K$ gradient $g_t$ of the hidden matrices at the current parameters.
  \item Base clock: $\Delta_{128\mathrm K} = \eta_t\, \mathrm{Muon}_{\text{base}}(g_t)$.
  \item Large batch: add $g_t$ to an accumulator; when $t$ is a multiple of $16$, set $\Delta_{2\mathrm M} = 4\bar\eta\, \mathrm{Muon}_{\text{large}}(\bar g)$ with $\bar g$ the mean of the sixteen accumulated gradients and $\bar\eta$ their mean learning rate, and clear the accumulator; otherwise $\Delta_{2\mathrm M} = 0$.
  \item Apply $W \leftarrow (1-\eta_t\lambda_t)W - V V^\top \Delta_{128\mathrm K} - (I - V V^\top)\Delta_{2\mathrm M}$.
  \item Update the auxiliary AdamW parameters on the base clock, as in the base run.
  \end{enumerate}
\item Evaluate the validation loss on $10.5$M held-out tokens; the branch penalty is this loss minus the control branch's.
\end{enumerate}
\end{tcolorbox}
\end{algorithm}

\textbf{Evaluation.}
The branch penalty is the validation loss on $10.5$M held-out tokens at the end of a branch, minus the same quantity for the control branch from the same anchor.
The loss curves in \Cref{fig:lm_directional} are instead evaluated on $1$M held-out tokens every $32$ base steps.
To estimate run-to-run noise, we ran one branch twice (anchor $3{,}000$, $512\mathrm K$, top-$768$ held); the two penalties were $+18.9$ and $+13.5 \times 10^{-3}$ nats.
\Cref{tab:lm_directional_numbers} lists all endpoints, and \Cref{fig:lm_directional_curves} shows the trajectory of every branch.

\begin{table}[h]
  \centering
  \small
  \setlength{\tabcolsep}{5pt}
  \begin{tabular}{lrrrrrrr}
    \toprule
    anchor & fully scaled & top-$16$ & top-$64$ & top-$128$ & top-$256$ & top-$768$ & random-$768$ \\
    \midrule
    $1{,}000$ & $+68.7$ & $+58.9$ & $+55.9$ & $+53.7$ & $+42.2$ & $+41.6$ & $+70.4$ \\
    $2{,}000$ & $+50.9$ & $+52.1$ & $+45.8$ & $+39.4$ & $+38.6$ & $+30.5$ & $+50.8$ \\
    $3{,}000$ & $+36.1$ & $+34.3$ & $+27.6$ & $+28.6$ & $+21.9$ & $+18.3$ & $+36.0$ \\
    $4{,}000$ & $+40.3$ & $+29.3$ & $+27.3$ & $+25.5$ & $+23.2$ & $+19.0$ & $+39.7$ \\
    $5{,}000$ & $+28.8$ & $+21.2$ & $+18.4$ & $+17.0$ & $+14.8$ & $+11.7$ & $+28.6$ \\
    $6{,}000$ & $+25.1$ & $+21.9$ & $+16.7$ & $+14.1$ & $+12.5$ & $+10.2$ & $+24.6$ \\
    $7{,}000$ & $+25.4$ & $+20.8$ & $+16.7$ & -- & $+14.0$ & $+12.5$ & $+25.5$ \\
    $8{,}000$ & $+20.6$ & $+18.1$ & $+15.6$ & $+14.2$ & $+13.0$ & $+11.7$ & $+20.6$ \\
    $9{,}000$ & $+16.6$ & $+13.5$ & $+12.5$ & -- & $+10.9$ & $+10.8$ & $+16.6$ \\
    $10{,}000$ & $+12.6$ & $+11.5$ & $+11.3$ & $+11.3$ & $+11.2$ & $+11.1$ & $+12.6$ \\
    $11{,}000$ & $+14.3$ & $+14.3$ & $+14.1$ & -- & $+14.1$ & $+14.1$ & $+14.3$ \\
    $12{,}000$ & $+12.9$ & $+12.9$ & $+12.9$ & $+12.9$ & $+13.0$ & $+12.9$ & $+12.9$ \\
    \bottomrule
  \end{tabular}
  \caption{\textbf{Branch penalty at $2\mathrm M$ for every anchor and held subspace}, in units of $10^{-3}$ nats: the validation loss at the end of a $1{,}024$-step branch minus that of the $128\mathrm K$ control branch from the same anchor. A dash marks a branch that was not run.}
  \label{tab:lm_directional_numbers}
\end{table}

\begin{figure}[h]
  \centering
  \includegraphics[width=\textwidth]{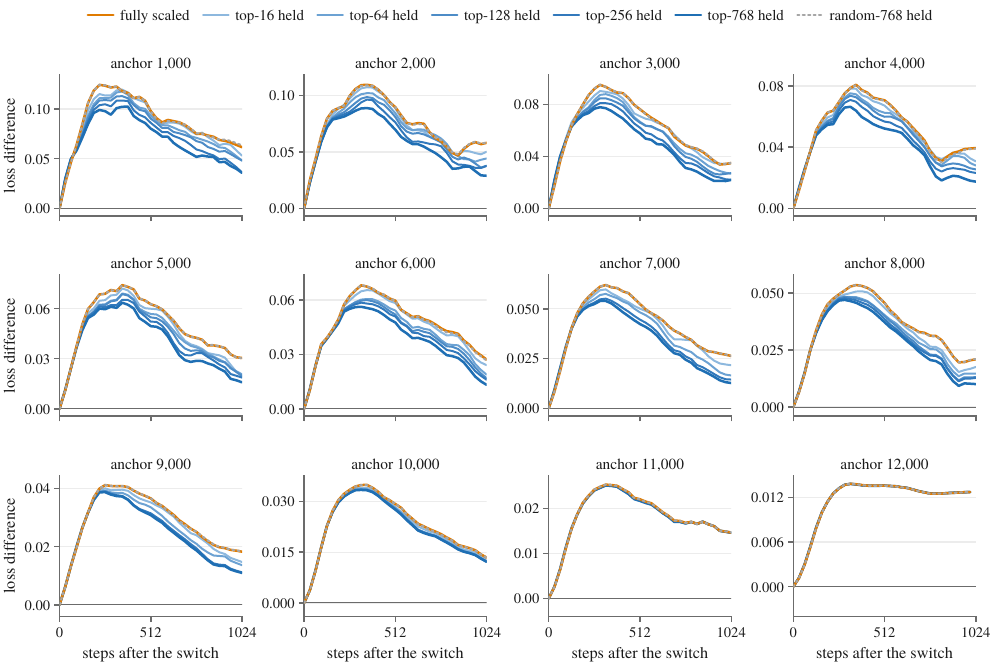}
  \caption{\textbf{Branch-penalty trajectories at every anchor.} Held-out loss of each $2\mathrm M$ branch minus that of the $128\mathrm K$ control branch from the same anchor, evaluated every $32$ base steps on $1$M held-out tokens and smoothed over three evaluations, for the fully scaled, top-$r$ held ($r = 16, 64, 128, 256, 768$), and random-$768$ held (dotted) branches. Each panel has its own vertical range. The endpoints in \Cref{tab:lm_directional_numbers} are measured on $10.5$M validation tokens instead. Top-$128$ was not run at anchors $7{,}000$, $9{,}000$, and $11{,}000$.}
  \label{fig:lm_directional_curves}
\end{figure}

\FloatBarrier
\subsection{A local noisy-quadratic model for Muon}
\label{app:lm_directional_local_nqm}

\textbf{Setup.}
We use the objective $L$, mean gradient $\g$, and minibatch oracle $\ghat_{\batch}$ of \Cref{eq:sde_gradient_model}, and apply the optimizer recursion in \Cref{eq:ema_momentum_update,eq:lionk_stochastic_corrected_m,eq:weight_update}.
The parameter vector $\w$ contains the flattened hidden matrices, with auxiliary parameters fixed, and steps are indexed from the anchor by $t=0,1,\ldots$.
We allow schedules $\lr_t,\wdc_t\geq0$ and retain $\mom\in[0,1)$ and $\omega\in[0,1]$.
Specialize the direction map $\Dir$ to the ideal blockwise polar map of \Cref{sec:convex}, including its fixed positive shape factors $a_{\mathrm{shape},\ell}$ for matrix block $\ell$.
Use the local preconditioner $P(q)=D\Dir(q)$ defined in \Cref{eq:directional_preconditioner}.
The results below are local to full-rank polar inputs and do not assume that the ideal polar map is smooth everywhere.
All vector norms are Euclidean, matrix blocks have the equivalent Frobenius norm, and operator norms are induced norms.

Use the additive oracle in \Cref{eq:sde_gradient_model}, with independent scaled noises $z_t$ of mean zero and fixed single-example covariance $\Sig_0\succeq0$.
Thus the minibatch noise is $\batch^{-1/2}z_t$.
The initial state and frozen operators below are fixed independently of these noises.
Write $q_t$ for the Nesterov input denoted $\widetilde\m_{t+1}$ in \Cref{eq:lionk_stochastic_corrected_m}.
We reserve tildes below for the preconditioned coordinates of \Cref{app:nqm_crossover_proof}.
Let $(\bar\w_t,\bar\m_t)$ be the noiseless reference trajectory with the same initial state and schedules, and let $\bar q_t$ be its Nesterov input, so $\bar q_0=q_\star$ from the subspace construction.
For each of $\w,\m,q$, use $\delta$ to denote the deviation from its reference value.
In particular, $\delta\w_0=\delta\m_0=0$.

Assume that all matrix blocks of the anchor input $\bar q_0$ have full rank.
Freeze the Gauss--Newton matrix $G_0=G$ defined above and the polar Jacobian $P_0=P(\bar q_0)$.
The frozen linear model has deviations $(\delta\w_t^{\rm lin},\delta\m_t^{\rm lin})$, zero initial state, and the same noises $z_t$.
Its linearized gradient and Nesterov-input deviations, $\delta g_t^{\rm lin}$ and $\delta q_t^{\rm lin}$, are specified by
\begin{equation}
\begin{aligned}
 \delta g_t^{\rm lin}&=G_0\delta\w_t^{\rm lin}+\batch^{-1/2}z_t,\\
 \delta\m_{t+1}^{\rm lin}&=\mom\delta\m_t^{\rm lin}+(1-\mom)\delta g_t^{\rm lin},\\
 \delta q_t^{\rm lin}&=\omega\delta\m_{t+1}^{\rm lin}+(1-\omega)\delta g_t^{\rm lin},\\
 \delta\w_{t+1}^{\rm lin}&=(1-\lr_t\wdc_t)\delta\w_t^{\rm lin}-\lr_tP_0\delta q_t^{\rm lin}.
\end{aligned}
\label{eq:local_muon_deviations}
\end{equation}
This model approximates centered trajectory deviations.
\begin{theorem}[Local noisy-quadratic correspondence for ideal Muon]
\label{thm:local_muon_nqm}
Under the setup above, $P_0$ is positive semidefinite.
Writing $P_0^{\dagger/2}$ for the positive square root of its Moore--Penrose pseudoinverse, the coordinates
\[
 \widetilde\w_t=P_0^{\dagger/2}\delta\w_t^{\rm lin},\qquad
 \widetilde\m_t=P_0^{1/2}\delta\m_t^{\rm lin}
\]
transform the frozen model in \Cref{eq:local_muon_deviations} exactly into momentum/Nesterov SGD on $\operatorname{range}P_0$, with the same schedules and decoupled weight decay, quadratic curvature $\widetilde H$, and minibatch noise covariance $\widetilde\Sig/\batch$, where
\[
 \widetilde H=P_0^{1/2}G_0P_0^{1/2},\qquad
 \widetilde\Sig=P_0^{1/2}\Sig_0P_0^{1/2}.
\]
\end{theorem}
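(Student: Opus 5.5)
The plan has two parts: first show that $P_0$ is positive semidefinite by computing the derivative of the blockwise polar map explicitly, then verify the coordinate change by direct substitution into \Cref{eq:local_muon_deviations}.

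\textbf{Positive semidefiniteness.} Since $\Dir$ acts blockwise with positive shape factors, $P_0$ is block diagonal, and it suffices to treat one full-rank block $Z=U\Sigma V^\top$ with $m\le n$ (transpose otherwise). Complete $V$ to an orthonormal basis $[V,V_\perp]$ and write a perturbation as $\delta Z = U A V^\top + U C V_\perp^\top$. The standard formula for the derivative of the polar factor has three ingredients.
\begin{itemize}
\item The symmetric part of $A$ is annihilated.
\item The skew part maps as $A_{ij}-A_{ji}\mapsto 2(A_{ij}-A_{ji})/(s_i+s_j)$.
\item The off-range part maps as $C\mapsto \Sigma^{-1}C$.
\end{itemize}
In these orthonormal coordinates the Frobenius inner product $\langle \delta Z, D\Dir(Z)\delta Z\rangle$ therefore equals
\[
a_{\rm shape}\Bigl(\sum_{i<j}\tfrac{(A_{ij}-A_{ji})^2}{s_i+s_j}+\sum_{i,k}\tfrac{C_{ik}^2}{s_i}\Bigr)\ge 0 .
\]
The same decomposition shows that $D\Dir(Z)$ is self-adjoint, being diagonal in an orthonormal basis of symmetric, skew and off-range modes. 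Alternatively, self-adjointness follows because $\Dir=\nabla \|\cdot\|_{*}$ (scaled nuclear norm) is a gradient map, so its Jacobian is a Hessian of a function that is convex and smooth near full-rank points. I expect this step to be the main obstacle: getting the derivative formula right (including the rectangular $V_\perp$ part) and justifying differentiability on the full-rank domain. I would lean on the convex-function Hessian argument to get symmetry and semidefiniteness at once, and use the explicit formula only to confirm smoothness near full rank.

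\textbf{Coordinate change.} Let $\Pi=P_0^{1/2}P_0^{\dagger/2}$ be the orthogonal projector onto $\operatorname{range}P_0$. Every update of $\delta\w^{\rm lin}$ is $(1-\lr_t\wdc_t)$ times the previous value plus an element of $\operatorname{range}P_0$, and $\delta\w_0^{\rm lin}=0$. By induction $\delta\w_t^{\rm lin}\in\operatorname{range}P_0$, hence $\delta\w_t^{\rm lin}=P_0^{1/2}\widetilde\w_t$ with $\widetilde\w_t\in\operatorname{range}P_0$.

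Define the transformed gradient $\widetilde g_t=P_0^{1/2}\delta g_t^{\rm lin}=\widetilde H\widetilde\w_t+\batch^{-1/2}P_0^{1/2}z_t$. Its noise has covariance $\widetilde\Sig/\batch$, and the coordinates are independent across $t$ because $z_t$ are independent and $P_0$ is frozen. Multiplying the momentum and Nesterov lines of \Cref{eq:local_muon_deviations} by $P_0^{1/2}$ gives
\[
\widetilde\m_{t+1}=\mom\widetilde\m_t+(1-\mom)\widetilde g_t,\qquad \widetilde q_t=\omega\widetilde\m_{t+1}+(1-\omega)\widetilde g_t ,
\]
where $\widetilde q_t=P_0^{1/2}\delta q_t^{\rm lin}$.

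For the parameter line, apply $P_0^{\dagger/2}$ and use $P_0^{\dagger/2}P_0=P_0^{1/2}$:
\[
\widetilde\w_{t+1}=(1-\lr_t\wdc_t)\widetilde\w_t-\lr_t\widetilde q_t .
\]
This is exactly the momentum/Nesterov SGD recursion on the quadratic with Hessian $\widetilde H$, with the same schedules and decoupled weight decay. All transformed quantities lie in $\operatorname{range}P_0$, since each is an image of $P_0^{1/2}$ or $P_0^{\dagger/2}$.

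Finally, the map is invertible on that range: $\delta\w^{\rm lin}=P_0^{1/2}\widetilde\w$. The momentum is recovered only through $\Pi\delta\m^{\rm lin}$, but this is the only part of the momentum that affects $\w$, since the update multiplies $\delta q$ by $P_0$.
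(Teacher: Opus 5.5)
Your proposal is correct and follows essentially the same route as the paper. Like the paper, you get positive semidefiniteness by viewing the blockwise polar map as the gradient of a weighted sum of nuclear norms, which is convex and smooth near full rank. You then use zero initialization to keep $\delta\w_t^{\rm lin}$ in $\operatorname{range}P_0$, and multiply the momentum and parameter recursions by $P_0^{1/2}$ and $P_0^{\dagger/2}$ using $P_0^{\dagger/2}P_0=P_0^{1/2}$. Your explicit eigen-decomposition of the polar Jacobian, with eigenvalues $0$, $2a_{\rm shape}/(s_i+s_j)$ and $a_{\rm shape}/s_i$ on symmetric, skew and off-range modes, is a correct extra check that the paper does not spell out.
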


\begin{proof}
We verify the geometry of the ideal update Jacobian and then transform the frozen dynamics.
For a full-rank matrix block with compact singular value decomposition $M=U\Sigma V^\top$, the differential of its nuclear norm in a perturbation $E$ is $\operatorname{tr}(U^\top EV)=\langle UV^\top,E\rangle_F$.
Thus its nuclear-norm gradient is its polar factor.
The ideal direction map $\Dir$ is therefore the gradient of the convex function given by the sum of the block nuclear norms weighted by $a_{\mathrm{shape},\ell}$.
This function is smooth on the full-rank neighborhood, so $P_0$ is symmetric positive semidefinite.

Every parameter increment in \Cref{eq:local_muon_deviations} is a scalar shrinkage of the current deviation plus a vector in $\operatorname{range}P_0$.
Zero initialization implies that $\delta\w_t^{\rm lin}$ stays in this range and equals $P_0^{1/2}\widetilde\w_t$.
The momentum buffer averages unpreconditioned gradients, so its coordinate transformation multiplies by $P_0^{1/2}$.
Define the transformed scaled noise by $\widetilde z_t=P_0^{1/2}z_t$.
Multiply the linear momentum recursion by $P_0^{1/2}$ and the parameter recursion by $P_0^{\dagger/2}$.
The identity $P_0^{\dagger/2}P_0=P_0^{1/2}$ then gives
\begin{equation}
\begin{aligned}
 \widetilde\m_{t+1}
 &=\mom\widetilde\m_t+(1-\mom)
       (\widetilde H\widetilde\w_t+\batch^{-1/2}\widetilde z_t),\\
 \widetilde\w_{t+1}
 &=(1-\lr_t\wdc_t)\widetilde\w_t
   -\lr_t\bigl[\omega\widetilde\m_{t+1}
       +(1-\omega)(\widetilde H\widetilde\w_t+\batch^{-1/2}\widetilde z_t)\bigr].
\end{aligned}
\label{eq:local_muon_nqm}
\end{equation}
These operations establish the claimed coordinate correspondence, which is invertible for parameter deviations in $\operatorname{range}P_0$.
The quadratic objective is $\widetilde\w^\top\widetilde H\widetilde\w/2$.
Momentum components in $\ker P_0$ do not affect the frozen parameter updates and are omitted by this reduction.
If $P_0=0$, the parameter deviations stay zero and the reduced model has dimension zero.

A fixed linear transformation preserves the independence and zero means of $z_t$, and its covariance is $\operatorname{Cov}(\widetilde z_t)=P_0^{1/2}\Sig_0P_0^{1/2}$.
This covariance identity holds for the unconditioned frozen process.
\end{proof}

\begin{remark}[Curvature modes and CNR]
For a unit eigenvector $e_i$ of $\widetilde H$ with positive eigenvalue $h_i$, the noise variance is $c_i=e_i^\top\widetilde\Sig e_i$ and the parameter-space direction is $P_0^{1/2}e_i$.
The CNR convention of \Cref{def:nqm-cnr} applies with these $h_i,c_i$.
If the transformed noises are Gaussian and $\widetilde H,\widetilde\Sig$ are simultaneously diagonalizable on $\operatorname{range}P_0$, their common eigenbasis gives the independent-coordinate NQM of \Cref{def:nqm} on its positive-curvature modes.
Without these conditions, the coordinate noises can be correlated.
With $P=P_0$ at the anchor, this is the eigenvector-to-parameter construction in \Cref{alg:lm_branch}.
Large $h_i$ alone does not imply large CNR, since $c_i$ also enters its definition.
\end{remark}

\textbf{Approximation error.}
Let $H_0=\nabla^2L(\bar\w_0)$ be the anchor Hessian.
Define the Gauss--Newton discrepancy and the two reference-trajectory drifts by
\[
 \varepsilon_G=\|H_0-G_0\|,\qquad
 d_{H,t}=\|\nabla^2L(\bar\w_t)-H_0\|,\qquad
 d_{P,t}=\|P(\bar q_t)-P_0\|.
\]
Finite-batch estimation error in $G_0$ is included in $\varepsilon_G$.
Assume that the Hessian is Lipschitz with constant $L_H$ on the parameter segments $[\bar\w_t,\w_t]$.
Assume that every matrix block remains full rank on the input segments $[\bar q_t,q_t]$, where $P$ is Lipschitz with constant $L_P$.
These assumptions are local conditions on a noise realization over the $S$ steps being compared.
They do not assert that unbounded noises stay in such a neighborhood.
The noise distribution of the frozen linear model is not conditioned on this local event.

Let the full-state error be $\mathcal E_t=(\delta\w_t-\delta\w_t^{\rm lin},\delta\m_t-\delta\m_t^{\rm lin})$, let $F_t$ be the frozen state-transition matrix, and define the one-step error bound $\epsilon_t$ by
\[
 F_t=
 \begin{pmatrix}
 (1-\lr_t\wdc_t)I-\lr_t(1-\omega\mom)P_0G_0 & -\lr_t\omega\mom P_0\\
 (1-\mom)G_0 & \mom I
 \end{pmatrix},
\]
\[
\begin{aligned}
 \epsilon_t={}&\bigl[\lr_t(1-\omega\mom)\|P_0\|+1-\mom\bigr]
 \left[(\varepsilon_G+d_{H,t})\|\delta\w_t\|+\frac{L_H}{2}\|\delta\w_t\|^2\right]\\
 &+\lr_t\left[d_{P,t}\|\delta q_t\|+\frac{L_P}{2}\|\delta q_t\|^2\right].
\end{aligned}
\]
\begin{theorem}[Finite-horizon approximation error for ideal Muon]
\label{thm:local_muon_error}
For every integer $S\geq1$ and every noise realization satisfying the local segment assumptions for $0\leq t<S$, the original Muon deviations and the frozen model satisfy
\begin{equation}
 \|\mathcal E_S\|
 \leq\sum_{t=0}^{S-1}
 \left(\prod_{j=t+1}^{S-1}\|F_j\|\right)\epsilon_t,
 \label{eq:local_muon_accumulated_error}
\end{equation}
where an empty product is one.
\end{theorem}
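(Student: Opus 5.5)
The plan is to write the exact nonlinear deviation recursion in the same linear form as \Cref{eq:local_muon_deviations}, isolate a remainder, and then unroll a linear error recursion. Subtracting the noiseless reference trajectory from the noisy one gives the exact deviations
\[
\delta g_t=\g(\w_t)-\g(\bar\w_t)+\batch^{-1/2}z_t,\qquad
\delta\m_{t+1}=\mom\delta\m_t+(1-\mom)\delta g_t,
\]
\[
\delta q_t=\omega\delta\m_{t+1}+(1-\omega)\delta g_t,\qquad
\delta\w_{t+1}=(1-\lr_t\wdc_t)\delta\w_t-\lr_t\bigl[\Dir(q_t)-\Dir(\bar q_t)\bigr].
\]
The mean-gradient difference is then written as $G_0\delta\w_t+r^g_t$. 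By Taylor's theorem with the Lipschitz Hessian on $[\bar\w_t,\w_t]$, it equals $\nabla^2L(\bar\w_t)\delta\w_t$ plus an error of norm at most $\tfrac{L_H}{2}\|\delta\w_t\|^2$. The triangle inequality through $H_0$ gives $\|r^g_t\|\le(\varepsilon_G+d_{H,t})\|\delta\w_t\|+\tfrac{L_H}{2}\|\delta\w_t\|^2$. In the same way, the full-rank segment assumption and the Lipschitz Jacobian $P$ on $[\bar q_t,q_t]$ give $\Dir(q_t)-\Dir(\bar q_t)=P_0\delta q_t+r^P_t$ with $\|r^P_t\|\le d_{P,t}\|\delta q_t\|+\tfrac{L_P}{2}\|\delta q_t\|^2$.

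Next, substitute $\delta\m_{t+1}$ into $\delta q_t$ to get $\delta q_t=\omega\mom\delta\m_t+(1-\omega\mom)\delta g_t$, which explains the coefficients in $F_t$. The exact state $(\delta\w,\delta\m)$ then satisfies $x_{t+1}=F_tx_t+n_t+\rho_t$. The noise term $n_t$ is identical to the one driving the frozen model, because both use the same $z_t$ with the same coefficients. The remainder is
\[
\rho_t=\bigl(-\lr_t(1-\omega\mom)P_0r^g_t-\lr_tr^P_t,\;(1-\mom)r^g_t\bigr).
\]
The frozen model satisfies the same recursion without $\rho_t$. Subtracting the two recursions gives $\mathcal E_{t+1}=F_t\mathcal E_t+\rho_t$ with $\mathcal E_0=0$.

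The norm of $\rho_t$ on the stacked space is bounded by the sum of its block norms. This gives $\|\rho_t\|\le[\lr_t(1-\omega\mom)\|P_0\|+1-\mom]\|r^g_t\|+\lr_t\|r^P_t\|\le\epsilon_t$. Unrolling, $\mathcal E_S=\sum_{t<S}F_{S-1}\cdots F_{t+1}\rho_t$, and submultiplicativity yields \eqref{eq:local_muon_accumulated_error}.

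The main obstacle is making the remainder bookkeeping exact. The key point is that the noise enters linearly and identically in both models, so it cancels rather than appearing in $\epsilon_t$. The other requirement is that the Taylor steps are justified pathwise on the assumed segments. The ideal polar map is only smooth on the full-rank set, and this is why the statement is phrased per noise realization rather than in expectation.
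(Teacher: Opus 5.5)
Your proposal is correct and follows the paper's proof essentially step for step. It uses the same exact deviation identities, with the remainders $r^g_t$ and your $r^P_t$ (the paper's $r^u_t$) bounded through the Lipschitz Hessian and Jacobian plus the triangle inequality through $H_0$ and $P_0$. It then forms the same error recursion $\mathcal E_{t+1}=F_t\mathcal E_t+\rho_t$, in which the shared noise cancels, and unrolls it from $\mathcal E_0=0$ using block-norm bounds and submultiplicativity of operator norms.
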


\begin{proof}
Let the gradient deviation be $\delta g_t=\ghat_{\batch}(\w_t)-\g(\bar\w_t)$.
Define the gradient and update remainders by
\begin{equation}
\begin{aligned}
 r^g_t&=\delta g_t-G_0\delta\w_t-\batch^{-1/2}z_t,\\
 r^u_t&=\Dir(q_t)-\Dir(\bar q_t)-P_0\delta q_t.
\end{aligned}
\label{eq:local_muon_remainders}
\end{equation}
For the error comparison, subtract the reference recursion from the noisy recursion.
The definitions of the remainders give the exact identities
\[
\begin{aligned}
 \delta g_t&=G_0\delta\w_t+\batch^{-1/2}z_t+r^g_t,\\
 \delta\m_{t+1}&=\mom\delta\m_t+(1-\mom)\delta g_t,\\
 \delta q_t&=\omega\delta\m_{t+1}+(1-\omega)\delta g_t,\\
 \delta\w_{t+1}&=(1-\lr_t\wdc_t)\delta\w_t-\lr_tP_0\delta q_t-\lr_t r^u_t.
\end{aligned}
\]
Reference subtraction cancels the baseline update $\Dir(\bar q_t)$.
This is essential because positive-scale invariance implies $P(q)q=0$ for the ideal polar map, whereas $\Dir(q)$ need not vanish.
The theorem therefore concerns perturbations, not the replacement of the absolute update by $P_0q_t$.

On each realization satisfying the local segment assumptions, the nonlinear remainders satisfy
\begin{equation}
\begin{aligned}
 \|r^g_t\|&\leq(\varepsilon_G+d_{H,t})\|\delta\w_t\|
                  +\frac{L_H}{2}\|\delta\w_t\|^2,\\
 \|r^u_t\|&\leq d_{P,t}\|\delta q_t\|+\frac{L_P}{2}\|\delta q_t\|^2.
\end{aligned}
\label{eq:local_muon_error_bounds}
\end{equation}
Integrating the Hessian along the parameter segment yields
\[
 \left\|\g(\w_t)-\g(\bar\w_t)-\nabla^2L(\bar\w_t)\delta\w_t\right\|
 \leq\frac{L_H}{2}\|\delta\w_t\|^2.
\]
Adding and subtracting $H_0\delta\w_t$ and $G_0\delta\w_t$ proves the gradient bound in \Cref{eq:local_muon_error_bounds}.
In particular, the Gauss--Newton discrepancy contributes a first-order error, not a quadratic Taylor remainder.
The Lipschitz assumption on $P$ likewise gives
\[
 \|\Dir(q_t)-\Dir(\bar q_t)-P(\bar q_t)\delta q_t\|
 \leq\frac{L_P}{2}\|\delta q_t\|^2.
\]
Adding and subtracting $P_0\delta q_t$ proves the update bound.

Let $b_t$ denote the remainder forcing in the full-state error recursion.
Since $\delta q_t=\omega\mom\delta\m_t+(1-\omega\mom)\delta g_t$, subtracting the frozen recursion gives $\mathcal E_{t+1}=F_t\mathcal E_t+b_t$, where
\[
 b_t=
 \begin{pmatrix}
 -\lr_t\bigl[(1-\omega\mom)P_0r^g_t+r^u_t\bigr]\\
 (1-\mom)r^g_t
 \end{pmatrix},\qquad
 \|b_t\|\leq\epsilon_t.
\]
Starting from $\mathcal E_0=0$, iterate this identity and take operator norms to obtain \Cref{eq:local_muon_accumulated_error}.
The bound makes no contraction assumption.
If the transition norms admit a uniform upper bound, their product is bounded by the corresponding power of that bound.
Small one-step errors alone do not imply a uniform-in-time approximation without further stability control.
The comparison uses the original state coordinates because nonlinear remainders need not remain in $\operatorname{range}P_0$.

For the finite Newton--Schulz variant, let $\Dir_{\rm NS}$ denote its direction map and let its map defect be $d=\Dir_{\rm NS}-\Dir$.
If both trajectories use $\Dir_{\rm NS}$, retain the ideal-map Jacobian $P_0=P(\bar q_0)$ along the resulting reference trajectory.
Replacing $\Dir$ by $\Dir+d$ in both trajectories adds exactly $d(q_t)-d(\bar q_t)$ to the update remainder.
The triangle inequality adds its norm to the update-remainder bound and $\lr_t\|d(q_t)-d(\bar q_t)\|$ to $\epsilon_t$.
A quantitative implementation guarantee requires a bound on that defect.
The quadratic model describes centered gradient responses, not the full loss difference from a nonstationary reference, which also has a linear term.
Moving auxiliary parameters would introduce additional gradient-coupling terms.
The local correspondence consequently motivates the diagnostic geometry without proving a branch-penalty prediction or transferring the SignSGD scaling exponents of \Cref{sec:nqm_signsgd_scaling} to Muon.
\end{proof}

\clearpage
\begingroup
\raggedbottom
\section{Batch Size and Optimizer Compute Efficiency}
\label{app:nanochat_optimizer_ce}

The optimizer crossovers in \Cref{sec:batch_size_comparison} show that batch
size changes which optimizer achieves the lowest loss at a fixed training
budget.  Here we measure the compute efficiency cost of increasing batch
size in a separate nanochat experiment with MuonW, KL-SOAP-W, ShampooW,
AdamW, and LionW.  We tune each optimizer at a reference batch of 256K tokens
and transfer its hyperparameters to larger batches.  We compare each result
with the compute needed to reach the same loss using a 256K batch, first
with the same optimizer and then with MuonW as a shared reference.

\subsection{Model, data, and training budgets}

This experiment uses the January 7
\href{https://github.com/karpathy/nanochat/commit/ccf4b7f9bf91a250aa398a0cecab270bcea56050}{nanochat snapshot}.
The decoder has depth 8, width 512, sequence length 2,048, and vocabulary
size 32,768.  The scaling parameter count, used to set the training token
budget, is 58,720,256.  Training uses FineWeb-Edu shards 0 through 238
\citep{penedo2024fineweb}.  Shard 239 is reserved for validation.  Validation
bits per byte (BPB) are computed on 10,485,760 tokens.  Every result uses seed 1.
This setting differs from the preceding MuonW scaling-rule sweep
(\Cref{app:muonw_batch_transfer}) and the independently retuned optimizer
comparison (\Cref{sec:batch_size_comparison}).  Those experiments use a
12-layer, 768-wide model on FineWeb with approximately 1.7B training tokens.

The reference global batch contains 262,144 tokens, which we abbreviate as
256K.  Each optimizer is tuned at a horizon of eight training tokens per
scaling parameter.  This horizon contains 469,762,048 tokens and takes 1,792
optimization steps at the reference batch.  We call each optimizer's
selected configuration at this batch and horizon its anchor.
The selected hyperparameter configurations are shown in
\Cref{tab:nanochat_anchor_recipes}.

We follow the notation of \Cref{app:s2-protocol,app:llm_experiment_details}.
The matrix learning rate and weight decay are $\eta_{\rm M}$ and
$\lambda_{\rm M}$, and auxiliary weight decay is $\lambda_{\rm A}$.
Muon momentum is $\mu$; the other matrix optimizers use
$\beta_1^{\rm M},\beta_2^{\rm M}$, while auxiliary Adam uses $\beta_1,\beta_2$.
KL-SOAP's preconditioner coefficient is $\beta_{\rm sh}$; ShampooW uses
$\beta_2^{\rm M}$ for its preconditioner.

The embeddings and output projection use AdamW for every optimizer.
Their learning rates $\eta_{\rm Emb.}$ and $\eta_{\rm Out.}$ have base values
0.3 and 0.004.  Both rates are multiplied by the
auxiliary multiplier in \Cref{tab:nanochat_anchor_recipes} and by the common
width factor $(512/768)^{-1/2}$.  Auxiliary AdamW uses
reference coefficients $\beta_1=0.8$ and $\beta_2=0.95$, $\epsilon=10^{-10}$,
and weight decay $\lambda_{\rm A}=0$.
The cooldown fraction is $f_{\rm cd}=0.4$; all learning rates decay linearly
to zero over this final fraction of each run.

\begin{table}[H]
\centering
\small
\setlength{\tabcolsep}{4pt}
\renewcommand{\arraystretch}{1.06}
\begin{tabular}{@{}lrrrrrr@{}}
\toprule
& \multicolumn{5}{c}{Matrix hyperparameter configuration} & \\
\cmidrule(lr){2-6}
Optimizer & $\eta_{\rm M}$ & $\lambda_{\rm M}$ & $\mu$ / $\beta_1^{\rm M}$ & $\beta_2^{\rm M}$ & $\beta_{\rm sh}$ & Aux. LR mult. \\
\midrule
MuonW & 0.02 & 0.0473004 & 0.95 & -- & -- & 1.41421 \\
KL-SOAP-W & 0.0127279 & 0.05 & 0.95 & 0.9 & 0.9 & 1.41421 \\
ShampooW & 0.005 & 0.2 & 0.9 & 0.9 & -- & 2 \\
AdamW & 0.00178381 & 0.6 & 0.9 & 0.95 & -- & 1 \\
LionW & 0.000336359 & 12 & 0.9 & 0.99 & -- & 1.41421 \\
\bottomrule
\end{tabular}
\caption{Final anchor hyperparameter configurations at a global batch of 256K tokens and eight tokens per scaling parameter. The first-moment column uses $\mu$ for MuonW and $\beta_1^{\rm M}$ for the other optimizers. The coefficient $\beta_{\rm sh}$ applies only to KL-SOAP-W; ShampooW uses $\beta_2^{\rm M}$ for its preconditioner. The auxiliary learning rate multiplier acts on both $\eta_{\rm Emb.}$ and $\eta_{\rm Out.}$.}
\label{tab:nanochat_anchor_recipes}
\end{table}

\subsection{Anchor search}

To select each optimizer's anchor, we first search a grid of matrix learning
rates and matrix weight decays at 256K tokens and eight training tokens per
scaling parameter.  We then
vary one hyperparameter at a time, extending a search range when its best
observed value lies at a boundary.  This single-seed search produced 178
unique completed observations with the same training implementation and
data.  \Cref{tab:nanochat_anchor_cd} reports the number of observations for
each optimizer and the improvement from the initial matrix grid to the
selected anchor.
These search-stage observations are distinct from the observations used
for the reference fits and batch comparisons in \Cref{tab:nanochat_ce_raw},
including their 256K, eight-token-per-parameter measurements.

\begin{table}[H]
\centering
\small
\renewcommand{\arraystretch}{1.06}
\begin{tabular}{@{}lrrrr@{}}
\toprule
Optimizer & Completed points & Initial BPB & Final BPB & Improvement \\
\midrule
MuonW & 47 & 1.0730 & 1.0711 & 0.0019 \\
KL-SOAP-W & 23 & 1.0715 & 1.0712 & 0.0003 \\
ShampooW & 40 & 1.0777 & 1.0750 & 0.0027 \\
AdamW & 25 & 1.1094 & 1.1080 & 0.0014 \\
LionW & 43 & 1.1510 & 1.1369 & 0.0141 \\
\bottomrule
\end{tabular}
\caption{Anchor search observations. Initial BPB is the selected result after the first matrix learning rate and weight decay grid. Final BPB is the selected result after the subsequent coordinate searches and boundary extensions.}
\label{tab:nanochat_anchor_cd}
\end{table}

\subsection{Transfer across the token horizon}

To estimate a reference loss--compute curve for each optimizer, we vary the
training token budget while holding the global batch at 256K tokens.
Let $N=58{,}720{,}256$ denote the parameter count used for this budget and let
$\tau$ denote the number of training tokens.  We evaluate
\[
    \frac{\tau}{N}\in\{1,2,4,8,12,16,24\}.
\]
The anchor has $\tau/N=8$.  For the relative horizon $r=(\tau/N)/8$, we multiply
every peak learning rate by $r^{-0.32}$ and matrix weight decay by
$r^{-0.16}$.

We obtain this scaling by combining two empirical scaling
results.  For peak learning rate $\eta$, \citet{bjorck2025scalinghorizons}
fit $\eta\propto \tau^{-0.32}$ at fixed model size in their large-model
experiments.  For batch size $B$, matrix peak learning rate $\eta_{\rm M}$,
and matrix weight decay $\lambda_{\rm M}$,
the nominal parameter-averaging timescale relative to the training horizon
is $\tau_{\text{Power Lines}}=B/(\eta_{\rm M}\lambda_{\rm M}\tau)$
(\Cref{eq:powerline_timescale}).  \citet{bergsma2025powerlines} fit this
timescale as proportional to $(\tau/N)^{-0.520}$ for AdamW.
Combining these two prescriptions at fixed $N$ and $B$ gives
\[
    \lambda_{\rm M}\propto
    \frac{1}{\eta_{\rm M}\tau\tau_{\text{Power Lines}}}
    \propto \tau^{0.32-1+0.520}=\tau^{-0.16}.
\]

We have not established that their combination is the best scaling rule
across token horizons or optimizers in our setting.  The reference curves
therefore measure performance under this prescribed transfer, without
independent retuning at each token budget.

\subsection{Scaling moving-average coefficients across batch sizes}

\textbf{KL-SOAP-W and ShampooW favor scaling only auxiliary AdamW $\beta_2$.}
For both optimizers, the best tested coefficient-transfer rule shared across
512K, 1M, and 2M scales auxiliary AdamW $\beta_2$ while keeping all other
moving-average coefficients fixed.  We select this common rule by its mean
validation-loss gap to the best tested rule at each batch.
The selected pattern matches our earlier MuonW sweep, which keeps Muon
momentum and auxiliary Adam $\beta_1$ fixed and scales auxiliary Adam
$\beta_2$ (\Cref{app:muonw_batch_transfer}).  We describe the comparison
and the resulting transfer protocol below.

\textbf{Batch comparison.}
We fix $\tau/N=8$ and compare global batches of 256K, 512K, 1M, and 2M
tokens.  Let $B$ denote the global batch in tokens, $B_0=262{,}144$ the
reference batch, and $\kappaBatch=B/B_0$ their ratio.  The update count
$S(B)=\lceil\tau/B\rceil$ is 1,792, 896, 448, and 224, respectively.
For every matrix or auxiliary peak learning rate
$\eta\in\{\eta_{\rm M},\eta_{\rm Emb.},\eta_{\rm Out.}\}$, we use
\[
    \eta(B)=\eta(B_0),
    \qquad
    \lambda_{\rm M}(B)=\kappaBatch\lambda_{\rm M}(B_0).
\]
Thus, peak learning rates remain fixed across batches and matrix weight
decay increases linearly.  Learning rates still decay to zero over the
final 40 percent of each run.

These two choices follow the common rule with the smallest mean batch-size
regret in the earlier MuonW sweep (\Cref{app:muonw_batch_transfer}).
That rule attains the grid minimum at 1M and 2M; the individual winners at
256K and 512K instead use square-root learning-rate scaling.
The fixed-learning-rate, linear-weight-decay choice also matches the
corresponding Power Lines prescription \citep{bergsma2025powerlines}.
We apply these two choices to all five optimizers in the different nanochat
setting described above, without independently retuning learning rates or
weight decay at each batch.

\textbf{Coefficient-transfer search.}
For a moving-average coefficient $q_0$ at the reference batch, let $q(B)$
denote its value at the target batch.  We test two choices,
\[
    q(B)=q_0
    \qquad\text{or}\qquad
    q(B)=q_0^{\kappaBatch},
\]
called fixed and EMA transfer, respectively.  EMA transfer preserves the
moving average's decay over the same number of processed tokens.
For KL-SOAP-W, we independently choose between these transfers for
$\beta_1^{\rm M}$, $\beta_2^{\rm M}$, $\beta_{\rm sh}$, $\beta_1$, and
$\beta_2$.  Its $2^5$ choices at three target batches give 96 runs.
ShampooW shares $\beta_2^{\rm M}$ between its matrix second moment and
preconditioner, so its $2^4$ choices give 48 runs.  The search therefore
contains 144 runs in total.

For each optimizer, a rule's batch-size (BSZ) regret is its validation BPB
minus the lowest BPB among that optimizer's tested rules at the same
batch.  We select one rule for all three target batches by minimizing
mean BSZ regret across 512K, 1M, and 2M.  Both optimizers select
\emph{auxiliary-only EMA transfer}, which applies EMA transfer only to
auxiliary AdamW $\beta_2$.  Its mean BSZ regret is 0.001233 BPB for
KL-SOAP-W and 0.000400 BPB for ShampooW.

\textbf{Transferring matrix coefficients as well gives higher loss at 2M.}
We compare the selected rule with an invariant-preserving rule that also
applies EMA transfer to $\beta_2^{\rm M}$ and, for KL-SOAP-W,
$\beta_{\rm sh}$.  Both rules keep $\beta_1^{\rm M}$ and auxiliary
$\beta_1$ fixed.  The invariant-preserving rule maintains the decay of
second-moment and preconditioner moving averages in token units.
Nevertheless, auxiliary-only EMA transfer gives lower loss for both
optimizers at 2M (\Cref{tab:nanochat_beta_comparison}).
\Cref{tab:nanochat_beta_leaderboard} reports all 48 coefficient-transfer
rules.

\begin{table}[H]
\centering
\small
\setlength{\tabcolsep}{5pt}
\renewcommand{\arraystretch}{1.06}
\begin{tabular}{@{}llrrr@{}}
\toprule
Optimizer & Transfer rule & 512K BPB & 1M BPB & 2M BPB \\
\midrule
KL-SOAP-W & \textbf{Auxiliary-only EMA} & \textbf{1.0812} & \textbf{1.1035} & \textbf{1.1614} \\
 & Invariant-preserving & 1.0867 & 1.1594 & 1.3735 \\
\addlinespace[2pt]
ShampooW & \textbf{Auxiliary-only EMA} & \textbf{1.0847} & \textbf{1.1039} & \textbf{1.1525} \\
 & Invariant-preserving & 1.0844 & 1.1044 & 1.6558 \\
\bottomrule
\end{tabular}
\caption{Coefficient transfer across batch sizes. Auxiliary-only EMA transfers auxiliary Adam $\beta_2$ and keeps all other coefficients fixed; it minimizes mean BSZ regret across the three target batches for each optimizer. The invariant-preserving rule also transfers matrix $\beta_2^{\rm M}$ and KL-SOAP-W's preconditioner coefficient $\beta_{\rm sh}$, preserving their decay in token units. Both rules keep $\beta_1^{\rm M}$ and $\beta_1$ fixed. EMA transfer means $q(B)=q_0^{\kappaBatch}$, where $\kappaBatch=B/B_0$, with $B_0=256\mathrm K$ tokens.}
\label{tab:nanochat_beta_comparison}
\end{table}

\textbf{Coefficient transfers for the remaining optimizers.}
For MuonW, we keep $\mu$ and auxiliary Adam $\beta_1$ fixed and apply EMA
transfer to auxiliary Adam $\beta_2$.  For AdamW and LionW, we keep
$\beta_1^{\rm M}$ and auxiliary $\beta_1$ fixed and apply EMA transfer to
both $\beta_2^{\rm M}$ and auxiliary $\beta_2$.
These are prescribed choices for the remaining optimizers; the exhaustive
coefficient-transfer search above covers KL-SOAP-W and ShampooW.

\subsection{Compute efficiency metric}

We measure training compute as the number of training tokens multiplied by
352,321,536 floating point operations per token.  This common compute proxy
excludes optimizer-specific update costs and does not measure wall-clock
time.  As in \Cref{app:llm_experiment_details}, $L_{\rm val}$ denotes
terminal validation loss, measured here in BPB.
Let $C$ denote compute and $L_{{\rm val},o}(C)$ the fitted validation loss
for optimizer $o$ at a batch of 256K.  With a fitted
asymptotic loss $E_o$, positive amplitude $A_o$, and positive exponent
$\alpha_o$, the reference curve is
\[
    L_{{\rm val},o}(C)=E_o+A_o\left(\frac{C}{10^{18}}\right)^{-\alpha_o}.
\]
The fitted values are reported in \Cref{tab:nanochat_ce_fits}.
Each fit uses that optimizer's seven reference observations, whose losses
decrease monotonically with compute.

\begin{table}[H]
\centering
\small
\renewcommand{\arraystretch}{1.06}
\begin{tabular}{@{}lrrrr@{}}
\toprule
Optimizer & Irreducible BPB & Amplitude & Exponent & RMSE \\
\midrule
MuonW & 0.983714 & 0.025145 & 0.672383 & 0.002624 \\
KL-SOAP-W & 0.992348 & 0.018555 & 0.805057 & 0.001106 \\
ShampooW & 1.011685 & 0.011108 & 0.907607 & 0.006164 \\
AdamW & 0.987890 & 0.030050 & 0.778845 & 0.005971 \\
LionW & 0.918750 & 0.095158 & 0.492865 & 0.013589 \\
\bottomrule
\end{tabular}
\caption{Parameters of the fitted compute loss curves.}
\label{tab:nanochat_ce_fits}
\end{table}

For a larger batch result with loss $L_{\rm val}$, equivalent compute is the compute
on the fitted reference curve that reaches the same loss.  We denote this
quantity by $C_{o,\mathrm{eq}}(L_{\rm val})$ and obtain it by inverting the fitted
curve for $L_{\rm val}>E_o$,
\[
    C_{o,\mathrm{eq}}(L_{\rm val})
    =10^{18}\left(\frac{A_o}{L_{\rm val}-E_o}\right)^{1/\alpha_o}.
\]
Let $C_{\mathrm{actual}}$ be the compute used by the observed run.
We define its compute efficiency $\mathrm{CE}_o$ as the ratio of equivalent
compute to actual compute, and its penalty $P_o$ as one minus this ratio,
\[
    \mathrm{CE}_o(L_{\rm val})
    =\frac{C_{o,\mathrm{eq}}(L_{\rm val})}{C_{\mathrm{actual}}},
    \qquad
    P_o(L_{\rm val})=1-\mathrm{CE}_o(L_{\rm val}).
\]
A penalty of zero means that the larger batch is as efficient as the 256K
reference at the same loss.  A positive penalty means that the larger batch
uses more compute to reach that loss.  For example, a penalty of 0.6 means
that the reference curve reaches the observed loss with 40 percent of the
run's compute.  In the reported comparisons, each optimizer's own 256K
anchor is assigned a penalty of zero by convention; the fitted curve need
not pass exactly through that observation.

\begin{figure}[H]
    \vspace{-1em}
    \centering
    \includegraphics[width=0.78\textwidth]{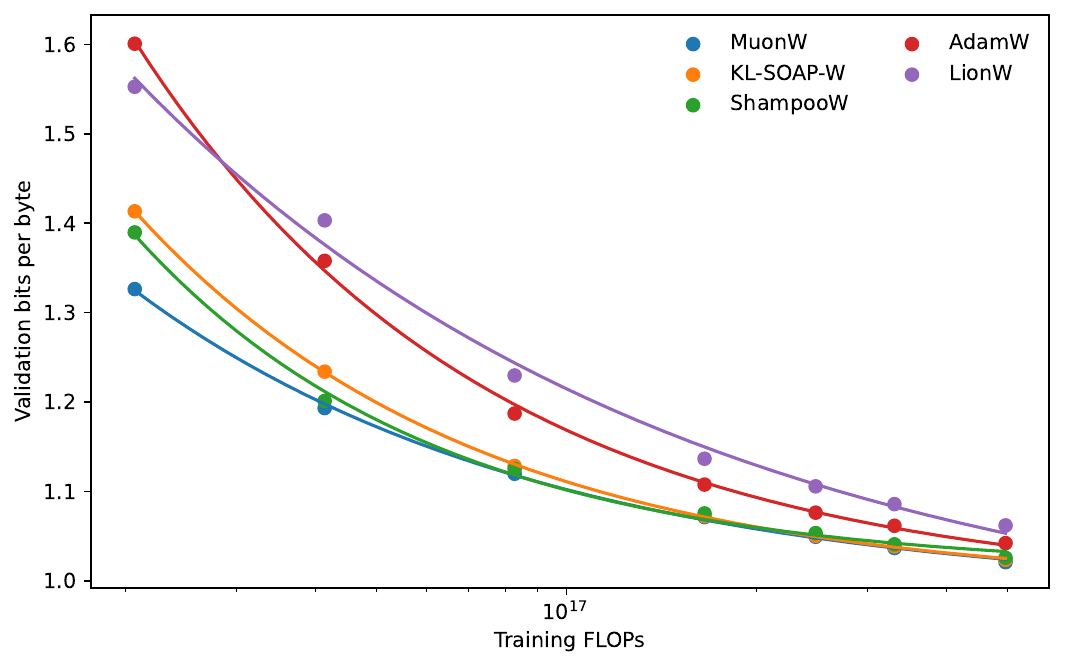}
    \caption{Validation losses across token budgets when the batch isze is 256K and the fitted loss--compute curve
    for each optimizer.  Each curve is fit only to that optimizer's seven
    observations.}
    \label{fig:nanochat_loss_compute}
\end{figure}

Using each optimizer's own 256K curve measures the efficiency lost when
transferring that optimizer to a larger batch.  Because the reference
changes with the optimizer, these penalties do not rank absolute
performance across optimizers.  For a common comparison, we instead invert
every observed loss through the MuonW 256K curve and divide by the run's
actual compute.  At the fixed budget used here, this shared reference
preserves the ordering of absolute losses.  Only the MuonW 256K anchor is
assigned zero penalty by convention in the shared comparison; the other
optimizers' 256K results are evaluated through the fitted MuonW curve.

\subsection{Results}

\begin{figure}[H]
    \vspace{-1em}
    \centering
    \includegraphics[width=\textwidth]{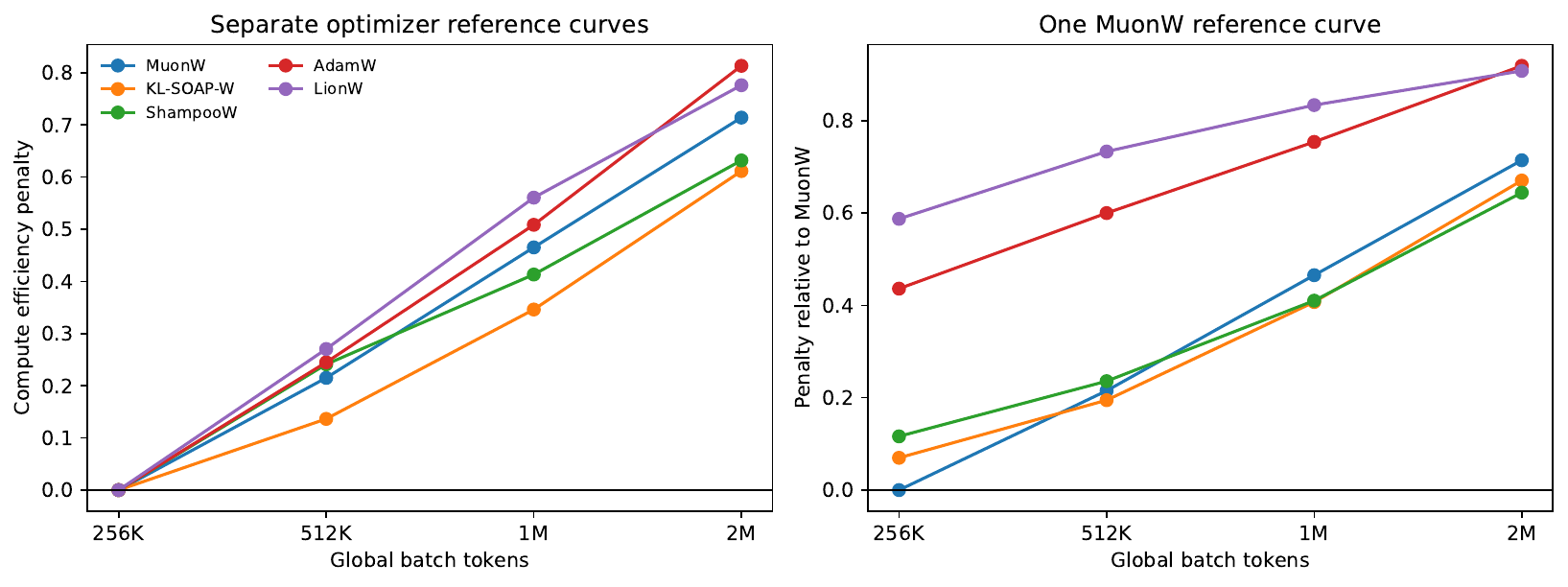}
    \caption{Compute efficiency penalties at a fixed horizon of eight tokens
    per scaling parameter.  The left panel fits and uses a separate reference
    curve for every optimizer.  The right panel maps every loss through the
    MuonW curve and therefore retains absolute loss differences.  Lower is
    better.}
    \label{fig:nanochat_ce_penalties}
\end{figure}

\textbf{KL-SOAP-W loses the least efficiency relative to its own reference.}
Its penalties are the smallest at every larger batch, reaching
0.1365 at 512K, 0.3462 at 1M, and
0.6116 at 2M.  ShampooW is close at 2M with a penalty of 0.6320.  MuonW reaches
0.7141, LionW reaches 0.7761, and AdamW reaches 0.8133 at the same batch.
The fitted KL-SOAP-W reference therefore reaches its 2M-batch loss using
about 39 percent of the compute spent by that run.

\textbf{ShampooW achieves the lowest loss at the largest batch.}
KL-SOAP-W has
the smallest common reference penalty at 512K and 1M.  At 2M, ShampooW has
penalty 0.6440 and KL-SOAP-W has penalty 0.6702.  Their absolute validation
losses are 1.1525 and 1.1614.  Thus, retaining more efficiency relative to
one's own reference does not imply achieving the lowest absolute loss.

This change in winner agrees qualitatively with the KL-SOAP-to-Shampoo
crossover in \Cref{sec:batch_size_comparison}.  The present experiment
provides evidence in a different setting under the specified transfer
protocol.  Its penalties measure the combined outcome of changing batch
size and transferring hyperparameters, rather than the efficiency achievable
after independent retuning at every batch.

\begin{table}[H]
\centering
\small
\setlength{\tabcolsep}{5pt}
\renewcommand{\arraystretch}{1.04}
\begin{tabular}{@{}rlrrr@{}}
\toprule
& & & \multicolumn{2}{c}{Compute-efficiency penalty} \\
\cmidrule(lr){4-5}
$B$ & Optimizer & $L_{\rm val}$ (BPB) & Own reference & Muon reference \\
\midrule
256K & MuonW & 1.0712 & 0.0000 & 0.0000 \\
 & KL-SOAP-W & 1.0722 & 0.0000 & 0.0699 \\
 & ShampooW & 1.0753 & 0.0000 & 0.1163 \\
 & AdamW & 1.1076 & 0.0000 & 0.4361 \\
 & LionW & 1.1365 & 0.0000 & 0.5872 \\
\addlinespace
512K & MuonW & 1.0829 & 0.2151 & 0.2151 \\
 & KL-SOAP-W & 1.0812 & 0.1365 & 0.1947 \\
 & ShampooW & 1.0847 & 0.2412 & 0.2358 \\
 & AdamW & 1.1397 & 0.2449 & 0.5997 \\
 & LionW & 1.1885 & 0.2705 & 0.7330 \\
\addlinespace
1M & MuonW & 1.1121 & 0.4653 & 0.4653 \\
 & KL-SOAP-W & 1.1035 & 0.3462 & 0.4072 \\
 & ShampooW & 1.1039 & 0.4133 & 0.4101 \\
 & AdamW & 1.2000 & 0.5086 & 0.7538 \\
 & LionW & 1.2651 & 0.5607 & 0.8335 \\
\addlinespace
2M & MuonW & 1.1793 & 0.7141 & 0.7141 \\
 & KL-SOAP-W & 1.1614 & 0.6116 & 0.6702 \\
 & ShampooW & 1.1525 & 0.6320 & 0.6440 \\
 & AdamW & 1.4387 & 0.8133 & 0.9185 \\
 & LionW & 1.4016 & 0.7761 & 0.9076 \\
\bottomrule
\end{tabular}
\caption{Compute efficiency penalties at the eight token per parameter horizon. Own curve penalty uses the compute loss curve fitted separately for each optimizer. Muon curve penalty maps every observed loss through the single MuonW curve.}
\label{tab:nanochat_compute_efficiency}
\end{table}

All optimizer specific inversions in \Cref{tab:nanochat_compute_efficiency}
fall within the loss range covered by their respective seven reference
observations.  Under the common MuonW curve, the AdamW and LionW results at 2M lie
outside the directly observed MuonW loss range and are extrapolations.  The
remaining common reference values are interpolations.

\subsection{Complete results}

\Cref{tab:nanochat_ce_raw} contains the 50 observations used for the reference
fits and batch comparisons.  \Cref{tab:nanochat_beta_leaderboard} gives the
complete ranking of coefficient-transfer rules for KL-SOAP-W and ShampooW.

\clearpage
\begingroup
\footnotesize
\renewcommand{\arraystretch}{0.96}
\setlength{\tabcolsep}{6pt}
\setlength{\LTpre}{0pt}
\setlength{\LTpost}{0pt}
\setlength{\LTleft}{\fill}
\setlength{\LTright}{\fill}
\setlength{\LTcapwidth}{\textwidth}
\begin{longtable}{@{}lrrrr@{}}
\caption{Complete observations across token budgets and batch sizes used in the compute efficiency analysis.}\label{tab:nanochat_ce_raw}\\
\toprule
Optimizer & $\tau/N$ & $B$ (tokens) & $S(B)$ & $L_{\rm val}$ (BPB) \\
\midrule
\endfirsthead
\multicolumn{5}{c}{\tablename\ \thetable\ continued}\\
\toprule
Optimizer & $\tau/N$ & $B$ (tokens) & $S(B)$ & $L_{\rm val}$ (BPB) \\
\midrule
\endhead
\midrule
\multicolumn{5}{r}{Continued on next page}\\
\endfoot
\bottomrule
\endlastfoot
MuonW & 1 & 262,144 & 224 & 1.3264 \\
 & 2 & 262,144 & 448 & 1.1932 \\
 & 4 & 262,144 & 896 & 1.1197 \\
 & 8 & 262,144 & 1,792 & 1.0712 \\
 & 12 & 262,144 & 2,688 & 1.0492 \\
 & 16 & 262,144 & 3,584 & 1.0366 \\
 & 24 & 262,144 & 5,376 & 1.0208 \\
 & 8 & 524,288 & 896 & 1.0829 \\
 & 8 & 1,048,576 & 448 & 1.1121 \\
 & 8 & 2,097,152 & 224 & 1.1793 \\
\addlinespace[3pt]
KL-SOAP-W & 1 & 262,144 & 224 & 1.4134 \\
 & 2 & 262,144 & 448 & 1.2340 \\
 & 4 & 262,144 & 896 & 1.1285 \\
 & 8 & 262,144 & 1,792 & 1.0722 \\
 & 12 & 262,144 & 2,688 & 1.0503 \\
 & 16 & 262,144 & 3,584 & 1.0384 \\
 & 24 & 262,144 & 5,376 & 1.0234 \\
 & 8 & 524,288 & 896 & 1.0812 \\
 & 8 & 1,048,576 & 448 & 1.1035 \\
 & 8 & 2,097,152 & 224 & 1.1614 \\
\addlinespace[3pt]
ShampooW & 1 & 262,144 & 224 & 1.3898 \\
 & 2 & 262,144 & 448 & 1.2012 \\
 & 4 & 262,144 & 896 & 1.1251 \\
 & 8 & 262,144 & 1,792 & 1.0753 \\
 & 12 & 262,144 & 2,688 & 1.0533 \\
 & 16 & 262,144 & 3,584 & 1.0407 \\
 & 24 & 262,144 & 5,376 & 1.0257 \\
 & 8 & 524,288 & 896 & 1.0847 \\
 & 8 & 1,048,576 & 448 & 1.1039 \\
 & 8 & 2,097,152 & 224 & 1.1525 \\
\addlinespace[3pt]
AdamW & 1 & 262,144 & 224 & 1.6010 \\
 & 2 & 262,144 & 448 & 1.3580 \\
 & 4 & 262,144 & 896 & 1.1871 \\
 & 8 & 262,144 & 1,792 & 1.1076 \\
 & 12 & 262,144 & 2,688 & 1.0762 \\
 & 16 & 262,144 & 3,584 & 1.0615 \\
 & 24 & 262,144 & 5,376 & 1.0421 \\
 & 8 & 524,288 & 896 & 1.1397 \\
 & 8 & 1,048,576 & 448 & 1.2000 \\
 & 8 & 2,097,152 & 224 & 1.4387 \\
\addlinespace[3pt]
LionW & 1 & 262,144 & 224 & 1.5528 \\
 & 2 & 262,144 & 448 & 1.4033 \\
 & 4 & 262,144 & 896 & 1.2297 \\
 & 8 & 262,144 & 1,792 & 1.1365 \\
 & 12 & 262,144 & 2,688 & 1.1056 \\
 & 16 & 262,144 & 3,584 & 1.0857 \\
 & 24 & 262,144 & 5,376 & 1.0619 \\
 & 8 & 524,288 & 896 & 1.1885 \\
 & 8 & 1,048,576 & 448 & 1.2651 \\
 & 8 & 2,097,152 & 224 & 1.4016 \\
\end{longtable}
\endgroup

\clearpage
\begingroup
\scriptsize
\renewcommand{\arraystretch}{1.02}
\setlength{\tabcolsep}{2.2pt}
\setlength{\LTpre}{0pt}
\setlength{\LTpost}{0pt}
\setlength{\LTleft}{\fill}
\setlength{\LTright}{\fill}
\setlength{\LTcapwidth}{\textwidth}
\begin{longtable}{@{}rcccccrrrr@{}}
\caption{Complete beta-transfer rule rankings. F keeps a coefficient fixed, and E applies EMA transfer. For ShampooW, the preconditioner shares the matrix $\beta_2^{\rm M}$ coefficient, so no separate preconditioner choice is shown. Mean BSZ regret averages the gap to the best tested rule for the same optimizer and batch size over the three target batches.}\label{tab:nanochat_beta_leaderboard}\\
\toprule
Rank & $\beta_1^{\rm M}$ & $\beta_2^{\rm M}$ & $\beta_{\rm sh}$ & $\beta_1$ & $\beta_2$ & 512K & 1M & 2M & Mean BSZ regret \\
\midrule
\endfirsthead
\multicolumn{10}{c}{\tablename\ \thetable\ continued}\\
\toprule
Rank & $\beta_1^{\rm M}$ & $\beta_2^{\rm M}$ & $\beta_{\rm sh}$ & $\beta_1$ & $\beta_2$ & 512K & 1M & 2M & Mean BSZ regret \\
\midrule
\endhead
\midrule
\multicolumn{10}{r}{Continued on next page}\\
\endfoot
\bottomrule
\endlastfoot
\multicolumn{10}{l}{\bfseries KL-SOAP-W} \\
\textbf{1} & \textbf{F} & \textbf{F} & \textbf{F} & \textbf{F} & \textbf{E} & \textbf{1.0812} & \textbf{1.1035} & \textbf{1.1614} & \textbf{0.00123} \\
2 & F & F & F & F & F & 1.0787 & 1.1033 & 1.1650 & 0.00153 \\
3 & F & E & F & F & F & 1.0821 & 1.1042 & 1.1659 & 0.00327 \\
4 & F & E & F & F & E & 1.0829 & 1.1045 & 1.1660 & 0.00367 \\
5 & F & F & F & E & E & 1.0793 & 1.1069 & 1.2031 & 0.01563 \\
6 & F & E & F & E & E & 1.0809 & 1.1067 & 1.2054 & 0.01687 \\
7 & F & E & F & E & F & 1.0801 & 1.1087 & 1.2173 & 0.02123 \\
8 & F & F & F & E & F & 1.0803 & 1.1119 & 1.2171 & 0.02230 \\
9 & E & E & F & F & E & 1.0786 & 1.1097 & 1.2297 & 0.02520 \\
10 & E & E & E & F & E & 1.0791 & 1.1126 & 1.2302 & 0.02650 \\
11 & E & E & E & F & F & 1.0799 & 1.1136 & 1.2304 & 0.02717 \\
12 & E & E & F & F & F & 1.0780 & 1.1115 & 1.2345 & 0.02720 \\
13 & E & E & F & E & E & 1.0777 & 1.1175 & 1.2868 & 0.04653 \\
14 & E & E & E & E & E & 1.0794 & 1.1209 & 1.2907 & 0.04953 \\
15 & E & E & E & E & F & 1.0799 & 1.1204 & 1.2935 & 0.05047 \\
16 & E & E & F & E & F & 1.0798 & 1.1191 & 1.2994 & 0.05197 \\
17 & E & F & F & F & E & 1.0794 & 1.1372 & 1.3338 & 0.06933 \\
18 & F & F & E & F & E & 1.0831 & 1.1271 & 1.3433 & 0.07037 \\
19 & E & F & F & F & F & 1.0788 & 1.1380 & 1.3478 & 0.07407 \\
20 & E & F & F & E & E & 1.0808 & 1.1429 & 1.3607 & 0.08067 \\
21 & F & E & E & F & F & 1.0871 & 1.1519 & 1.3797 & 0.09210 \\
22 & F & F & E & F & F & 1.0820 & 1.1449 & 1.3918 & 0.09210 \\
23 & F & E & E & F & E & 1.0867 & 1.1594 & 1.3735 & 0.09240 \\
24 & E & F & F & E & F & 1.0818 & 1.1460 & 1.3972 & 0.09420 \\
25 & E & F & E & F & E & 1.0795 & 1.1449 & 1.4224 & 0.10147 \\
26 & E & F & E & E & E & 1.0807 & 1.1505 & 1.4273 & 0.10537 \\
27 & E & F & E & F & F & 1.0806 & 1.1521 & 1.4477 & 0.11267 \\
28 & F & F & E & E & E & 1.0807 & 1.1799 & 1.4467 & 0.12163 \\
29 & E & F & E & E & F & 1.0807 & 1.1594 & 1.5018 & 0.13317 \\
30 & F & E & E & E & E & 1.0844 & 1.1929 & 1.5375 & 0.15747 \\
31 & F & F & E & E & F & 1.0812 & 1.2512 & 1.5237 & 0.17123 \\
32 & F & E & E & E & F & 1.0854 & 1.2774 & 1.6103 & 0.21023 \\
\addlinespace[3pt]
\multicolumn{10}{l}{\bfseries ShampooW} \\
\textbf{1} & \textbf{F} & \textbf{F} & \textbf{--} & \textbf{F} & \textbf{E} & \textbf{1.0847} & \textbf{1.1039} & \textbf{1.1525} & \textbf{0.00040} \\
2 & F & F & -- & F & F & 1.0845 & 1.1039 & 1.1568 & 0.00177 \\
3 & F & F & -- & E & E & 1.0848 & 1.1083 & 1.1885 & 0.01390 \\
4 & F & F & -- & E & F & 1.0846 & 1.1091 & 1.2043 & 0.01937 \\
5 & E & E & -- & F & E & 1.0842 & 1.1150 & 1.1991 & 0.01947 \\
6 & E & E & -- & F & F & 1.0835 & 1.1161 & 1.2009 & 0.02020 \\
7 & E & E & -- & E & E & 1.0851 & 1.1208 & 1.2316 & 0.03253 \\
8 & E & E & -- & E & F & 1.0849 & 1.1227 & 1.2399 & 0.03587 \\
9 & E & F & -- & F & E & 1.0877 & 1.1461 & 1.2772 & 0.05703 \\
10 & E & F & -- & E & E & 1.0877 & 1.1453 & 1.3130 & 0.06870 \\
11 & E & F & -- & F & F & 1.0869 & 1.1445 & 1.3184 & 0.06997 \\
12 & E & F & -- & E & F & 1.0879 & 1.1536 & 1.3394 & 0.08033 \\
13 & F & E & -- & F & F & 1.0842 & 1.1046 & 1.4766 & 0.10850 \\
14 & F & E & -- & E & E & 1.0841 & 1.1060 & 1.6090 & 0.15307 \\
15 & F & E & -- & F & E & 1.0844 & 1.1044 & 1.6558 & 0.16823 \\
16 & F & E & -- & E & F & 1.0847 & 1.1073 & 2.0020 & 0.28470 \\
\end{longtable}
\endgroup

\endgroup


\newpage



\begin{thebibliography}{67}
\providecommand{\natexlab}[1]{#1}
\providecommand{\url}[1]{\texttt{#1}}
\expandafter\ifx\csname urlstyle\endcsname\relax
  \providecommand{\doi}[1]{doi: #1}\else
  \providecommand{\doi}{doi: \begingroup \urlstyle{rm}\Url}\fi

\bibitem[Amari et~al.(2021)Amari, Ba, Grosse, Li, Nitanda, Suzuki, Wu, and
  Xu]{amari2021preconditioning}
Shun-ichi Amari, Jimmy Ba, Roger Grosse, Xuechen Li, Atsushi Nitanda, Taiji
  Suzuki, Denny Wu, and Ji~Xu.
\newblock When does preconditioning help or hurt generalization?
\newblock In \emph{International Conference on Learning Representations}, 2021.
\newblock URL \url{https://openreview.net/forum?id=S724o4_WB3}.

\bibitem[Bergsma et~al.(2025)Bergsma, Dey, Gosal, Gray, Soboleva, and
  Hestness]{bergsma2025powerlines}
Shane Bergsma, Nolan Dey, Gurpreet Gosal, Gavia Gray, Daria Soboleva, and Joel
  Hestness.
\newblock {Power Lines}: Scaling laws for weight decay and batch size in {LLM}
  pre-training.
\newblock In \emph{Advances in Neural Information Processing Systems}, 2025.
\newblock URL \url{https://openreview.net/forum?id=bFXbLQzRoZ}.

\bibitem[Bjorck et~al.(2025)Bjorck, Benhaim, Chaudhary, Wei, and
  Song]{bjorck2025scalinghorizons}
Johan Bjorck, Alon Benhaim, Vishrav Chaudhary, Furu Wei, and Xia Song.
\newblock Scaling optimal {LR} across token horizons.
\newblock In \emph{International Conference on Learning Representations}, 2025.
\newblock URL \url{https://arxiv.org/abs/2409.19913}.

\bibitem[Bu et~al.(2026)Bu, Xu, and Mao]{bu2026convexdominance}
Zhiqi Bu, Shiyun Xu, and Jialin Mao.
\newblock Convex dominance in deep learning {I}: A scaling law of loss and
  learning rate.
\newblock In \emph{International Conference on Learning Representations}, 2026.
\newblock URL \url{https://openreview.net/forum?id=dSdLqg02tx}.

\bibitem[Cesista \& Wen(2025)Cesista and Wen]{cesista2025sdcbs}
Franz~Louis Cesista and Kaiyue Wen.
\newblock Critical batch size for steepest descent under arbitrary norms,
  November 2025.
\newblock URL
  \url{https://leloykun.github.io/ponder/steepest-descent-crit-bz/}.
\newblock Blog post, November 22.

\bibitem[Chen et~al.(2023)Chen, Liang, Huang, Real, Wang, Pham, Dong, Luong,
  Hsieh, Lu, and Le]{chen2024lion}
Xiangning Chen, Chen Liang, Da~Huang, Esteban Real, Kaiyuan Wang, Hieu Pham,
  Xuanyi Dong, Thang Luong, Cho-Jui Hsieh, Yifeng Lu, and Quoc~V. Le.
\newblock Symbolic discovery of optimization algorithms.
\newblock In \emph{Advances in Neural Information Processing Systems},
  volume~36, pp.\  49205--49233, 2023.
\newblock URL
  \url{https://proceedings.neurips.cc/paper_files/paper/2023/hash/9a39b4925e35cf447ccba8757137d84f-Abstract-Conference.html}.

\bibitem[Choi et~al.(2019)Choi, Shallue, Nado, Lee, Maddison, and
  Dahl]{choi2019empirical}
Dami Choi, Christopher~J. Shallue, Zachary Nado, Jaehoon Lee, Chris~J.
  Maddison, and George~E. Dahl.
\newblock On empirical comparisons of optimizers for deep learning, 2019.
\newblock URL \url{https://arxiv.org/abs/1910.05446}.

\bibitem[Dahl et~al.(2023)Dahl, Schneider, Nado, Agarwal, Sastry, Hennig,
  Medapati, Eschenhagen, Kasimbeg, Suo, Bae, Gilmer, Peirson, Khan, Anil,
  Rabbat, Krishnan, Snider, Amid, Chen, Maddison, Vasudev, Badura, Garg, and
  Mattson]{dahl2023algoperf}
George~E. Dahl, Frank Schneider, Zachary Nado, Naman Agarwal,
  Chandramouli~Shama Sastry, Philipp Hennig, Sourabh Medapati, Runa
  Eschenhagen, Priya Kasimbeg, Daniel Suo, Juhan Bae, Justin Gilmer, Abel~L.
  Peirson, Bilal Khan, Rohan Anil, Mike Rabbat, Shankar Krishnan, Daniel
  Snider, Ehsan Amid, Kongtao Chen, Chris~J. Maddison, Rakshith Vasudev, Michal
  Badura, Ankush Garg, and Peter Mattson.
\newblock Benchmarking neural network training algorithms, 2023.
\newblock URL \url{https://arxiv.org/abs/2306.07179}.

\bibitem[{Essential AI} et~al.(2025){Essential AI}, Shah, Polloreno, Stratos,
  Monk, Chaluvaraju, Hojel, Ma, Thomas, Tanwer, Shah, Nguyen, Smith, Callahan,
  Pust, Parmar, Rushton, Mazarakis, Kapila, Srivastava, Singla, Romanski,
  Vanjani, and Vaswani]{shah2025practicalefficiency}
{Essential AI}, Ishaan Shah, Anthony~M. Polloreno, Karl Stratos, Philip Monk,
  Adarsh Chaluvaraju, Andrew Hojel, Andrew Ma, Anil Thomas, Ashish Tanwer,
  Darsh~J. Shah, Khoi Nguyen, Kurt Smith, Michael Callahan, Michael Pust, Mohit
  Parmar, Peter Rushton, Platon Mazarakis, Ritvik Kapila, Saurabh Srivastava,
  Somanshu Singla, Tim Romanski, Yash Vanjani, and Ashish Vaswani.
\newblock Practical efficiency of {Muon} for pretraining, 2025.
\newblock URL \url{https://arxiv.org/abs/2505.02222}.

\bibitem[Everett et~al.(2024)Everett, Xiao, Wortsman, Alemi, Novak, Liu, Gur,
  Sohl-Dickstein, Kaelbling, Lee, and Pennington]{everett2024scalingexponents}
Katie Everett, Lechao Xiao, Mitchell Wortsman, Alexander~A. Alemi, Roman Novak,
  Peter~J. Liu, Izzeddin Gur, Jascha Sohl-Dickstein, Leslie~Pack Kaelbling,
  Jaehoon Lee, and Jeffrey Pennington.
\newblock Scaling exponents across parameterizations and optimizers.
\newblock In \emph{Proceedings of the 41st International Conference on Machine
  Learning}, volume 235, 2024.
\newblock URL \url{https://arxiv.org/abs/2407.05872}.

\bibitem[Goyal et~al.(2017)Goyal, Doll{\'a}r, Girshick, Noordhuis, Wesolowski,
  Kyrola, Tulloch, Jia, and He]{goyal2017accurate}
Priya Goyal, Piotr Doll{\'a}r, Ross Girshick, Pieter Noordhuis, Lukasz
  Wesolowski, Aapo Kyrola, Andrew Tulloch, Yangqing Jia, and Kaiming He.
\newblock Accurate, large minibatch {SGD}: Training {ImageNet} in 1 hour, 2017.
\newblock URL \url{https://arxiv.org/abs/1706.02677}.

\bibitem[Granziol et~al.(2022)Granziol, Zohren, and
  Roberts]{granziol2022learningrates}
Diego Granziol, Stefan Zohren, and Stephen Roberts.
\newblock Learning rates as a function of batch size: A random matrix theory
  approach to neural network training.
\newblock \emph{Journal of Machine Learning Research}, 23\penalty0
  (173):\penalty0 1--65, 2022.
\newblock URL \url{https://jmlr.org/papers/v23/20-1258.html}.

\bibitem[Gupta et~al.(2018)Gupta, Koren, and Singer]{gupta2018shampoo}
Vineet Gupta, Tomer Koren, and Yoram Singer.
\newblock Shampoo: Preconditioned stochastic tensor optimization.
\newblock In \emph{Proceedings of the 35th International Conference on Machine
  Learning}, volume~80 of \emph{Proceedings of Machine Learning Research}, pp.\
   1842--1850, 2018.
\newblock URL \url{https://proceedings.mlr.press/v80/gupta18a.html}.

\bibitem[Hilton et~al.(2022)Hilton, Cobbe, and
  Schulman]{hilton2022batchsizeinvariance}
Jacob Hilton, Karl Cobbe, and John Schulman.
\newblock Batch size-invariance for policy optimization.
\newblock In \emph{Advances in Neural Information Processing Systems},
  volume~35, pp.\  17086--17098, 2022.
\newblock URL \url{https://arxiv.org/abs/2110.00641}.

\bibitem[Islamov et~al.(2026)Islamov, Machacek, Lucchi, Silveti-Falls,
  Gorbunov, and Cevher]{islamov2026role}
Rustem Islamov, Roman Machacek, Aurelien Lucchi, Antonio Silveti-Falls, Eduard
  Gorbunov, and Volkan Cevher.
\newblock On the role of batch size in stochastic conditional gradient methods,
  2026.
\newblock URL \url{https://arxiv.org/abs/2603.21191}.

\bibitem[Jastrz{\k{e}}bski et~al.(2017)Jastrz{\k{e}}bski, Kenton, Arpit,
  Ballas, Fischer, Bengio, and Storkey]{jastrzebski2017three}
Stanis{\l}aw Jastrz{\k{e}}bski, Zachary Kenton, Devansh Arpit, Nicolas Ballas,
  Asja Fischer, Yoshua Bengio, and Amos Storkey.
\newblock Three factors influencing minima in {SGD}, 2017.
\newblock URL \url{https://arxiv.org/abs/1711.04623}.

\bibitem[Johnson et~al.(2020)Johnson, Agrawal, Gu, and
  Guestrin]{johnson2020adascale}
Tyler~B. Johnson, Pulkit Agrawal, Haijie Gu, and Carlos Guestrin.
\newblock {AdaScale SGD}: A user-friendly algorithm for distributed training.
\newblock In \emph{Proceedings of the 37th International Conference on Machine
  Learning}, volume 119, pp.\  4911--4920, 2020.
\newblock URL \url{https://proceedings.mlr.press/v119/johnson20a.html}.

\bibitem[Jordan(2026)]{jordan2026moddednanogpt}
Keller Jordan.
\newblock {Modded-NanoGPT Optimization Benchmark}, 2026.
\newblock URL
  \url{https://github.com/KellerJordan/modded-nanogpt/tree/master/records/track_3_optimization}.

\bibitem[Jordan et~al.(2024)Jordan, Jin, Boza, You, Cesista, Newhouse, and
  Bernstein]{jordan2024muon}
Keller Jordan, Yuchen Jin, Vlado Boza, Jiacheng You, Franz Cesista, Laker
  Newhouse, and Jeremy Bernstein.
\newblock Muon: An optimizer for hidden layers in neural networks, 2024.
\newblock URL \url{https://kellerjordan.github.io/posts/muon/}.

\bibitem[Khona et~al.(2026)Khona, Vavre, Wang, Fu, Wu, Chrzanowski, Catanzaro,
  Mudigere, Pool, Lightstone, Shoeybi, Patwary, Tajbakhsh, and
  Blankevoort]{khona2026soapmuonbeyond}
Mikail Khona, Aditya Vavre, Boxiang Wang, Deyu Fu, Hao Wu, Mike Chrzanowski,
  Bryan Catanzaro, Dheevatsa Mudigere, Jeff Pool, Michael Lightstone, Mohammad
  Shoeybi, Mostofa Patwary, Nima Tajbakhsh, and Tijmen Blankevoort.
\newblock {SOAP}, {Muon}, and beyond: Pushing {LLM} pretraining scales, July
  2026.
\newblock URL \url{https://arxiv.org/abs/2607.20548}.

\bibitem[Kingma \& Ba(2015)Kingma and Ba]{kingma2015adam}
Diederik~P. Kingma and Jimmy Ba.
\newblock Adam: A method for stochastic optimization.
\newblock In \emph{International Conference on Learning Representations}, 2015.
\newblock URL \url{https://arxiv.org/abs/1412.6980}.

\bibitem[Kovalev(2025)]{kovalev2025understanding}
Dmitry Kovalev.
\newblock Understanding gradient orthogonalization for deep learning via
  non-{E}uclidean trust-region optimization, 2025.
\newblock URL \url{https://arxiv.org/abs/2503.12645}.

\bibitem[Kunstner et~al.(2019)Kunstner, Balles, and
  Hennig]{kunstner2019limitations}
Frederik Kunstner, Lukas Balles, and Philipp Hennig.
\newblock Limitations of the empirical fisher approximation for natural
  gradient descent.
\newblock In \emph{Advances in Neural Information Processing Systems},
  volume~32, 2019.
\newblock URL
  \url{https://proceedings.neurips.cc/paper/2019/hash/46a558d97954d0692411c861cf78ef79-Abstract.html}.

\bibitem[Kunstner et~al.(2023)Kunstner, Chen, Lavington, and
  Schmidt]{kunstner2023noisenotmain}
Frederik Kunstner, Jacques Chen, Jonathan~Wilder Lavington, and Mark Schmidt.
\newblock Noise is not the main factor behind the gap between {SGD} and {Adam}
  on transformers, but sign descent might be.
\newblock In \emph{International Conference on Learning Representations}, 2023.
\newblock URL \url{https://openreview.net/forum?id=a65YK0cqH8g}.

\bibitem[Lee et~al.(2022)Lee, Cheng, Paquette, and
  Paquette]{lee2022trajectoryminibatch}
Kiwon Lee, Andrew~N. Cheng, Elliot Paquette, and Courtney Paquette.
\newblock Trajectory of mini-batch momentum: Batch size saturation and
  convergence in high dimensions.
\newblock In \emph{Advances in Neural Information Processing Systems},
  volume~35, 2022.
\newblock URL
  \url{https://proceedings.neurips.cc/paper_files/paper/2022/hash/efcb76ac1df9231a24893a957fcb9001-Abstract-Conference.html}.

\bibitem[Li et~al.(2018)Li, Tai, and E]{li2018stochastic}
Qianxiao Li, Cheng Tai, and Weinan E.
\newblock Stochastic modified equations and dynamics of stochastic gradient
  algorithms {I:} mathematical foundations.
\newblock \emph{CoRR}, abs/1811.01558, 2018.
\newblock URL \url{http://arxiv.org/abs/1811.01558}.

\bibitem[Li et~al.(2024)Li, Zhao, Zhang, Sun, Wu, Jiao, Wang, Liu, Fang, Xue,
  Tao, Cui, and Wang]{li2024surge}
Shuaipeng Li, Penghao Zhao, Hailin Zhang, Xingwu Sun, Hao Wu, Dian Jiao, Weiyan
  Wang, Chengjun Liu, Zheng Fang, Jinbao Xue, Yangyu Tao, Bin Cui, and Di~Wang.
\newblock Surge phenomenon in optimal learning rate and batch size scaling.
\newblock In \emph{Advances in Neural Information Processing Systems},
  volume~37, 2024.
\newblock URL
  \url{https://proceedings.neurips.cc/paper_files/paper/2024/hash/ef74413c7bf1d915c3e45c72e19a5d32-Abstract-Conference.html}.

\bibitem[Li(2018)]{li2018preconditioned}
Xi-Lin Li.
\newblock Preconditioned stochastic gradient descent.
\newblock \emph{IEEE Transactions on Neural Networks and Learning Systems},
  29\penalty0 (5):\penalty0 1454--1466, 2018.
\newblock \doi{10.1109/TNNLS.2017.2672978}.
\newblock URL \url{https://arxiv.org/abs/1512.04202}.

\bibitem[Li et~al.(2021)Li, Malladi, and Arora]{li2021validity}
Zhiyuan Li, Sadhika Malladi, and Sanjeev Arora.
\newblock On the validity of modeling {SGD} with stochastic differential
  equations ({SDEs}).
\newblock In \emph{Advances in Neural Information Processing Systems},
  volume~34, 2021.
\newblock URL
  \url{https://papers.nips.cc/paper/2021/hash/69f62956429865909921fa916d61c1f8-Abstract.html}.

\bibitem[Lin et~al.(2026)Lin, Lowe, Dangel, Eschenhagen, Xu, and
  Grosse]{lin2026klshampoo}
Wu~Lin, Scott~C. Lowe, Felix Dangel, Runa Eschenhagen, Zikun Xu, and Roger~B.
  Grosse.
\newblock Understanding and improving {Shampoo} and {SOAP} via
  {Kullback--Leibler} minimization.
\newblock In \emph{International Conference on Learning Representations}, 2026.
\newblock URL \url{https://openreview.net/forum?id=pQQuC1nIQq}.

\bibitem[Lu et~al.(2026)Lu, Bian, Wen, Wu, Chen, Wang, and
  Li]{lu2026momentumrivervalley}
Miao Lu, Zeyu Bian, Kaiyue Wen, Beining Wu, Siyu Chen, Tianhao Wang, and
  Zhiyuan Li.
\newblock Towards understanding momentum acceleration in river-valley loss
  landscape, 2026.
\newblock URL \url{https://arxiv.org/abs/2609.30957}.

\bibitem[Malladi et~al.(2022)Malladi, Lyu, Panigrahi, and
  Arora]{malladi2022sdes}
Sadhika Malladi, Kaifeng Lyu, Abhishek Panigrahi, and Sanjeev Arora.
\newblock On the {SDE}s and scaling rules for adaptive gradient algorithms.
\newblock In \emph{Advances in Neural Information Processing Systems},
  volume~35, 2022.
\newblock URL
  \url{https://proceedings.neurips.cc/paper_files/paper/2022/hash/32ac710102f0620d0f28d5d05a44fe08-Abstract-Conference.html}.

\bibitem[Marek et~al.(2025)Marek, Lotfi, Somasundaram, Wilson, and
  Goldblum]{marek2025smallbatch}
Martin Marek, Sanae Lotfi, Aditya Somasundaram, Andrew~Gordon Wilson, and Micah
  Goldblum.
\newblock Small batch size training for language models: When vanilla {SGD}
  works, and why gradient accumulation is wasteful.
\newblock In \emph{Advances in Neural Information Processing Systems}, 2025.
\newblock URL \url{https://arxiv.org/abs/2507.07101}.

\bibitem[McCandlish et~al.(2018)McCandlish, Kaplan, Amodei, and {OpenAI Dota
  Team}]{mccandlish2018empirical}
Sam McCandlish, Jared Kaplan, Dario Amodei, and {OpenAI Dota Team}.
\newblock An empirical model of large-batch training, 2018.
\newblock URL \url{https://arxiv.org/abs/1812.06162}.

\bibitem[Merrill et~al.(2025)Merrill, Arora, Groeneveld, and
  Hajishirzi]{merrill2025criticalbatchsizerevisited}
William Merrill, Shane Arora, Dirk Groeneveld, and Hannaneh Hajishirzi.
\newblock Critical batch size revisited: A simple empirical approach to
  large-batch language model training.
\newblock In \emph{Advances in Neural Information Processing Systems}, 2025.
\newblock URL \url{https://openreview.net/forum?id=XUKUx7Xu89}.

\bibitem[Meterez et~al.(2026)Meterez, Nair, Morwani, Pehlevan, Kakade, and
  Damian]{meterez2026defensequadraticmodel}
Alexandru Meterez, Pranav~Ajit Nair, Depen Morwani, Cengiz Pehlevan, Sham
  Kakade, and Alex Damian.
\newblock A defense of the quadratic model, 2026.
\newblock URL \url{https://arxiv.org/abs/2607.21716}.

\bibitem[Mlodozeniec et~al.(2026)Mlodozeniec, Ablin, B{\'e}thune, Busbridge,
  Klein, Ramapuram, and Cuturi]{mlodozeniec2025completed}
Bruno~Kacper Mlodozeniec, Pierre Ablin, Louis B{\'e}thune, Dan Busbridge,
  Michal Klein, Jason Ramapuram, and Marco Cuturi.
\newblock Completed hyperparameter transfer across modules, width, depth, batch
  and duration.
\newblock In \emph{International Conference on Learning Representations}, 2026.
\newblock URL
  \url{https://proceedings.iclr.cc/paper_files/paper/2026/hash/4b0c1645f3d6a1730931e65ecbf91ac3-Abstract-Conference.html}.

\bibitem[Monzio~Compagnoni et~al.(2025)Monzio~Compagnoni, Liu, Islamov, Proske,
  Orvieto, and Lucchi]{compagnoni2025adaptive}
Enea Monzio~Compagnoni, Tianlin Liu, Rustem Islamov, Frank~Norbert Proske,
  Antonio Orvieto, and Aurelien Lucchi.
\newblock Adaptive methods through the lens of {SDE}s: Theoretical insights on
  the role of noise.
\newblock In \emph{International Conference on Learning Representations}, 2025.
\newblock URL \url{https://arxiv.org/abs/2411.15958}.

\bibitem[Morwani et~al.(2025)Morwani, Shapira, Vyas, Malach, Kakade, and
  Janson]{morwani2024shampooperspective}
Depen Morwani, Itai Shapira, Nikhil Vyas, Eran Malach, Sham~M. Kakade, and
  Lucas Janson.
\newblock A new perspective on {Shampoo}'s preconditioner.
\newblock In \emph{International Conference on Learning Representations}, 2025.
\newblock URL \url{https://arxiv.org/abs/2406.17748}.

\bibitem[Nakkiran et~al.(2021)Nakkiran, Neyshabur, and
  Sedghi]{nakkiran2021deepbootstrap}
Preetum Nakkiran, Behnam Neyshabur, and Hanie Sedghi.
\newblock The deep bootstrap framework: Good online learners are good offline
  generalizers.
\newblock In \emph{International Conference on Learning Representations}, 2021.
\newblock URL \url{https://openreview.net/forum?id=guetrIHLFGI}.

\bibitem[Penedo et~al.(2024)]{penedo2024fineweb}
Guilherme Penedo et~al.
\newblock The {FineWeb} datasets: Decanting the web for the finest text data at
  scale, 2024.
\newblock URL \url{https://arxiv.org/abs/2406.17557}.

\bibitem[Pethick et~al.(2025)Pethick, Xie, Antonakopoulos, Zhu, Silveti-Falls,
  and Cevher]{pethick2025training}
Thomas Pethick, Wanyun Xie, Kimon Antonakopoulos, Zhenyu Zhu, Antonio
  Silveti-Falls, and Volkan Cevher.
\newblock Training deep learning models with norm-constrained {LMO}s.
\newblock In \emph{Proceedings of the 42nd International Conference on Machine
  Learning}, volume 267 of \emph{Proceedings of Machine Learning Research},
  pp.\  49069--49104, 2025.
\newblock URL \url{https://proceedings.mlr.press/v267/pethick25a.html}.

\bibitem[Qiu et~al.(2025)Qiu, Chen, Phan, Lei, and
  Wilson]{qiu2025matrixpreconditioned}
Shikai Qiu, Zixi Chen, Hoang Phan, Qi~Lei, and Andrew~Gordon Wilson.
\newblock Hyperparameter transfer enables consistent gains of
  matrix-preconditioned optimizers across scales.
\newblock In \emph{Advances in Neural Information Processing Systems}, 2025.
\newblock URL \url{https://arxiv.org/abs/2512.05620}.

\bibitem[Sato et~al.(2025)Sato, Naganuma, and
  Iiduka]{sato2025muoncriticalbatch}
Naoki Sato, Hiroki Naganuma, and Hideaki Iiduka.
\newblock Convergence bound and critical batch size of {Muon} optimizer, 2025.
\newblock URL \url{https://arxiv.org/abs/2507.01598}.

\bibitem[Schaipp et~al.(2025)Schaipp, H{\"a}gele, Taylor, Simsekli, and
  Bach]{schaipp2025surprising}
Fabian Schaipp, Alexander H{\"a}gele, Adrien Taylor, Umut Simsekli, and Francis
  Bach.
\newblock The surprising agreement between convex optimization theory and
  learning-rate scheduling for large model training.
\newblock In \emph{Proceedings of the 42nd International Conference on Machine
  Learning}, volume 267, pp.\  53267--53294, 2025.
\newblock URL \url{https://proceedings.mlr.press/v267/schaipp25a.html}.

\bibitem[Schaul et~al.(2013)Schaul, Zhang, and LeCun]{schaul2013nomorepesky}
Tom Schaul, Sixin Zhang, and Yann LeCun.
\newblock No more pesky learning rates.
\newblock In \emph{Proceedings of the 30th International Conference on Machine
  Learning}, volume~28, pp.\  343--351, 2013.
\newblock URL \url{https://proceedings.mlr.press/v28/schaul13.html}.

\bibitem[Schmidt et~al.(2021)Schmidt, Schneider, and
  Hennig]{schmidt2021crowdedvalley}
Robin~M. Schmidt, Frank Schneider, and Philipp Hennig.
\newblock Descending through a crowded valley: Benchmarking deep learning
  optimizers.
\newblock In \emph{Proceedings of the 38th International Conference on Machine
  Learning}, volume 139, pp.\  9367--9376, 2021.
\newblock URL \url{https://proceedings.mlr.press/v139/schmidt21a.html}.

\bibitem[Semenov et~al.(2025)Semenov, Pagliardini, and
  Jaggi]{semenov2025benchmarkingoptimizers}
Andrei Semenov, Matteo Pagliardini, and Martin Jaggi.
\newblock Benchmarking optimizers for large language model pretraining, 2025.
\newblock URL \url{https://arxiv.org/abs/2509.01440}.

\bibitem[Sfyraki \& Wang(2025)Sfyraki and Wang]{sfyraki2025lions}
Maria-Eleni Sfyraki and Jun-Kun Wang.
\newblock Lions and muons: Optimization via stochastic frank-wolfe, 2025.
\newblock URL \url{https://arxiv.org/abs/2506.04192}.

\bibitem[Shallue et~al.(2019)Shallue, Lee, Antognini, Sohl-Dickstein, Frostig,
  and Dahl]{shallue2019measuring}
Christopher~J. Shallue, Jaehoon Lee, Joseph Antognini, Jascha Sohl-Dickstein,
  Roy Frostig, and George~E. Dahl.
\newblock Measuring the effects of data parallelism on neural network training.
\newblock \emph{Journal of Machine Learning Research}, 20\penalty0
  (112):\penalty0 1--49, 2019.
\newblock URL \url{https://jmlr.org/papers/v20/18-789.html}.

\bibitem[Shulgin et~al.(2026)Shulgin, von R{\"u}tte, Zhang, Ajroldi,
  Sch{\"o}lkopf, and Orvieto]{shulgin2026deriving}
Egor Shulgin, Dimitri von R{\"u}tte, Tianyue~H. Zhang, Niccol{\`o} Ajroldi,
  Bernhard Sch{\"o}lkopf, and Antonio Orvieto.
\newblock Deriving hyperparameter scaling laws via modern optimization theory,
  2026.
\newblock URL \url{https://arxiv.org/abs/2603.15958}.

\bibitem[Smith \& Le(2018)Smith and Le]{smith2018bayesian}
Samuel~L. Smith and Quoc~V. Le.
\newblock A {B}ayesian perspective on generalization and stochastic gradient
  descent.
\newblock In \emph{International Conference on Learning Representations}, 2018.
\newblock URL \url{https://arxiv.org/abs/1710.06451}.

\bibitem[Smith et~al.(2018)Smith, Kindermans, Ying, and Le]{smith2018dont}
Samuel~L. Smith, Pieter-Jan Kindermans, Chris Ying, and Quoc~V. Le.
\newblock Don't decay the learning rate, increase the batch size.
\newblock In \emph{International Conference on Learning Representations}, 2018.
\newblock URL \url{https://openreview.net/forum?id=B1Yy1BxCZ}.

\bibitem[Sre{\'c}kovi{\'c} et~al.(2025)Sre{\'c}kovi{\'c}, Geiping, and
  Orvieto]{sreckovic2025batchsize}
Teodora Sre{\'c}kovi{\'c}, Jonas Geiping, and Antonio Orvieto.
\newblock Is your batch size the problem? {R}evisiting the {Adam}-{SGD} gap in
  language modeling, 2025.
\newblock URL \url{https://arxiv.org/abs/2506.12543}.

\bibitem[Vyas et~al.(2025)Vyas, Morwani, Zhao, Shapira, Brandfonbrener, Janson,
  and Kakade]{vyas2024soap}
Nikhil Vyas, Depen Morwani, Rosie Zhao, Itai Shapira, David Brandfonbrener,
  Lucas Janson, and Sham~M. Kakade.
\newblock {SOAP}: Improving and stabilizing {Shampoo} using {Adam} for language
  modeling.
\newblock In \emph{International Conference on Learning Representations}, 2025.
\newblock URL \url{https://openreview.net/forum?id=IDxZhXrpNf}.

\bibitem[Wadia et~al.(2021)Wadia, Duckworth, Schoenholz, Dyer, and
  Sohl-Dickstein]{wadia2021whitening}
Neha Wadia, Daniel Duckworth, Samuel~S. Schoenholz, Ethan Dyer, and Jascha
  Sohl-Dickstein.
\newblock Whitening and second order optimization both make information in the
  dataset unusable during training, and can reduce or prevent generalization.
\newblock In \emph{Proceedings of the 38th International Conference on Machine
  Learning}, volume 139, pp.\  10617--10629, 2021.
\newblock URL \url{https://proceedings.mlr.press/v139/wadia21a.html}.

\bibitem[Wang et~al.(2024)Wang, Malladi, Wang, Lyu, and
  Li]{wang2024marginalmomentum}
Runzhe Wang, Sadhika Malladi, Tianhao Wang, Kaifeng Lyu, and Zhiyuan Li.
\newblock The marginal value of momentum for small learning rate {SGD}.
\newblock In \emph{International Conference on Learning Representations}, 2024.
\newblock URL \url{https://openreview.net/forum?id=3JjJezzVkT}.

\bibitem[Wang \& Aitchison(2024)Wang and Aitchison]{wang2024batchinvariant}
Xi~Wang and Laurence Aitchison.
\newblock Batch size invariant {Adam}, 2024.
\newblock URL \url{https://arxiv.org/abs/2402.18824}.

\bibitem[Wen et~al.(2025)Wen, Li, Wang, Hall, Liang, and
  Ma]{wen2025riverValley}
Kaiyue Wen, Zhiyuan Li, Jason Wang, David Hall, Percy Liang, and Tengyu Ma.
\newblock Understanding warmup-stable-decay learning rates: A river valley loss
  landscape view.
\newblock In \emph{International Conference on Learning Representations}, 2025.
\newblock URL \url{https://arxiv.org/abs/2410.05192}.

\bibitem[Wen et~al.(2026{\natexlab{a}})Wen, Dang, Lyu, Ma, and
  Liang]{wen2026hyperball}
Kaiyue Wen, Xingyu Dang, Kaifeng Lyu, Tengyu Ma, and Percy Liang.
\newblock Fantastic pretraining optimizers and where to find them {II}:
  Hyperball optimization, 2026{\natexlab{a}}.
\newblock URL \url{https://arxiv.org/abs/2606.16899}.

\bibitem[Wen et~al.(2026{\natexlab{b}})Wen, Hall, Ma, and
  Liang]{wen2025fantastic}
Kaiyue Wen, David Hall, Tengyu Ma, and Percy Liang.
\newblock Fantastic pretraining optimizers and where to find them.
\newblock In \emph{International Conference on Learning Representations},
  2026{\natexlab{b}}.
\newblock URL \url{https://openreview.net/forum?id=2J51qUZ0iG}.

\bibitem[Wu et~al.(2018)Wu, Ren, Liao, and Grosse]{wu2018shorthorizon}
Yuhuai Wu, Mengye Ren, Renjie Liao, and Roger Grosse.
\newblock Understanding short-horizon bias in stochastic meta-optimization.
\newblock In \emph{International Conference on Learning Representations}, 2018.
\newblock URL \url{https://openreview.net/forum?id=H1MczcgR-}.

\bibitem[Yang et~al.(2021)Yang, Hu, Babuschkin, Sidor, Liu, Farhi, Ryder,
  Pachocki, Chen, and Gao]{yang2021tensorprogramsv}
Greg Yang, Edward~J. Hu, Igor Babuschkin, Szymon Sidor, Xiaodong Liu, David
  Farhi, Nick Ryder, Jakub Pachocki, Weizhu Chen, and Jianfeng Gao.
\newblock Tensor programs {V}: Tuning large neural networks via zero-shot
  hyperparameter transfer.
\newblock In \emph{Advances in Neural Information Processing Systems},
  volume~34, 2021.
\newblock URL \url{https://arxiv.org/abs/2203.03466}.

\bibitem[Zhang et~al.(2026)Zhang, Islamov, Monzio~Compagnoni, Pang, Lucchi, and
  Orvieto]{zhang2026beyond}
Chenxiang Zhang, Rustem Islamov, Enea Monzio~Compagnoni, Jun Pang, Aurelien
  Lucchi, and Antonio Orvieto.
\newblock Beyond a single explanation of the {Adam}--{SGD} gap, 2026.
\newblock URL \url{https://arxiv.org/abs/2606.14259}.

\bibitem[Zhang et~al.(2019)Zhang, Li, Nado, Martens, Sachdeva, Dahl, Shallue,
  and Grosse]{zhang2019algorithmicchoicesmatterbatch}
Guodong Zhang, Lala Li, Zachary Nado, James Martens, Sushant Sachdeva,
  George~E. Dahl, Christopher~J. Shallue, and Roger Grosse.
\newblock Which algorithmic choices matter at which batch sizes? {I}nsights
  from a noisy quadratic model.
\newblock In \emph{Advances in Neural Information Processing Systems},
  volume~32, 2019.
\newblock URL
  \url{https://proceedings.neurips.cc/paper/2019/hash/e0eacd983971634327ae1819ea8b6214-Abstract.html}.

\bibitem[Zhang et~al.(2025)Zhang, Morwani, Vyas, Wu, Zou, Ghai, Foster, and
  Kakade]{zhang2024criticalbatchsize}
Hanlin Zhang, Depen Morwani, Nikhil Vyas, Jingfeng Wu, Difan Zou, Udaya Ghai,
  Dean~P. Foster, and Sham~M. Kakade.
\newblock How does critical batch size scale in pre-training?
\newblock In \emph{International Conference on Learning Representations}, 2025.
\newblock URL \url{https://arxiv.org/abs/2410.21676}.

\bibitem[Zhao et~al.(2025)Zhao, Morwani, Brandfonbrener, Vyas, and
  Kakade]{zhao2025deconstructing}
Rosie Zhao, Depen Morwani, David Brandfonbrener, Nikhil Vyas, and Sham~M.
  Kakade.
\newblock Deconstructing what makes a good optimizer for autoregressive
  language models.
\newblock In \emph{International Conference on Learning Representations}, 2025.
\newblock URL \url{https://arxiv.org/abs/2407.07972}.

\end{thebibliography}
\end{document}